\renewcommand{\jmlrheading}[5]{}    %
\renewcommand{\ShortHeadings}[2]{}  %
\renewcommand{\editor}[1]{}         %
\def\@starteditor{}   %
\def\@editor{}        %
\def\@endeditor{}     %
\renewcommand{\firstpageno}[1]{}    %
\renewcommand{\maketitle}{%
  \par
  \begingroup
    \renewcommand{\thefootnote}{\fnsymbol{footnote}}%
    \def\@makefnmark{\hbox to 0pt{$^{\@thefnmark}$\hss}}%
    \long\def\@makefntext##1{\parindent 1em\noindent
            \hb@xt@1.8em{##1\hss}##1}%
    \if@twocolumn
      \ifnum \col@number=\@ne
        \@maketitle
      \else
        \twocolumn[\@maketitle]%
      \fi
    \else
      \newpage
      \global\@topnum\z@   %
      \@maketitle
    \fi
    \thispagestyle{plain}\@thanks
  \endgroup
  \setcounter{footnote}{0}%
  \renewcommand{\thefootnote}{\arabic{footnote}}%
}
\newcommand{\cA}{\mathcal{A}}
\newcommand{\cD}{\mathcal{D}}
\newcommand{\cE}{\mathcal{E}}
\newcommand{\cF}{\mathcal{F}}
\newcommand{\cG}{\mathcal{G}}
\newcommand{\cH}{\mathcal{H}}
\newcommand{\cK}{\mathcal{K}}
\newcommand{\cN}{\mathcal{N}}
\newcommand{\cS}{\mathcal{S}}
\newcommand{\cX}{\mathcal{X}}
\newcommand{\cY}{\mathcal{Y}}
\newcommand{\clip}[1]{\operatorname{clip_m}\left( #1 \right)}
\newcommand{\clipg}[1]{\operatorname{clip_\gamma}\left( #1 \right)}
\newcommand{\bP}{\mathbb{P}}
\newcommand{\bR}{\mathbb{R}}
\newcommand{\En}{\mathbb{E}}
\newcommand{\wt}[1]{\widetilde{#1}}
\newcommand{\wh}[1]{\widehat{#1}}
\newcommand{\wb}[1]{\widebar{#1}}
\newcommand{\sign}{\operatorname{sign}}
\newcommand{\ind}[1]{\mathbbm{1}(#1)}
\newcommand{\veps}{\varepsilon}
\newcommand{\ldef}{\vcentcolon=}
\newcommand{\Unif}{\mathrm{Unif}}
\newcommand{\KL}{\mathrm{KL}}
\newcommand{\poly}{\mathrm{poly}}
\DeclarePairedDelimiter{\abs}{\lvert}{\rvert} %
\DeclarePairedDelimiter{\brk}{[}{]}
\DeclarePairedDelimiter{\prn}{(}{)}
\DeclarePairedDelimiter{\norm}{\|}{\|}
\DeclarePairedDelimiter{\inner}{\langle}{\rangle}
\DeclarePairedDelimiter{\set}{\{}{\}}
\DeclareMathOperator*{\argmin}{arg\,min} %
\def\medskip{\vskip 10 pt}
\def\bigskip{\vskip 15 pt}
\newcommand{\revindent}[1][1]{\hspace{#1in}&\hspace{-#1in}}
\newcommand{\multiline}[1]{\parbox[t]{\dimexpr\linewidth-\algorithmicindent}{#1}}
\def\texitem#1{\par\vspace{5pt}
\noindent\hangindent 20pt
\hbox to 20pt {\hss #1 ~}\ignorespaces}
\newcommand{\fhat}{\wh{f}}
\newcommand{\Ber}{\mathrm{Ber}}
\newcommand{\thstar}{{\theta^\star}}
\newcommand{\Hess}{\mathsf{H}}
\newcommand{\E}{\mathbb{E}}
\newcommand{\thetahat}{\hat{\theta}}
\begin{document}

\title{Refined Risk Bounds for Unbounded Losses via\\ Transductive Priors}

\author{\name Jian Qian \email jianqian@mit.edu \\
\addr Department of Electrical Engineering and Computer Science \\
Massachusetts Institute of Technology\\
Cambridge, MA 02139, USA
\AND
\name Alexander Rakhlin \email rakhlin@mit.edu \\
\addr Department of Brain and Cognitive Sciences\\
Massachusetts Institute of Technology\\
Cambridge, MA 02139, USA
\AND 
\name Nikita Zhivotovskiy \email zhivotovskiy@berkeley.edu\\
\addr Department of Statistics\\
University of California, Berkeley\\
Berkeley, CA 94720, USA
}

\maketitle

\begin{abstract}
We revisit the sequential variants of linear regression with the squared loss, classification problems with hinge loss, and logistic regression, all characterized by unbounded losses in the setup where no assumptions are made on the magnitude of design vectors and the norm of the optimal vector of parameters. The key distinction from existing results lies in our assumption that the set of design vectors is known in advance (though their order is not), a setup sometimes referred to as \emph{transductive} online learning. While this assumption might seem similar to fixed design regression or denoising, we demonstrate that the sequential nature of our algorithms allows us to convert our bounds into statistical ones with random design without making any additional assumptions about the distribution of the design vectors—an impossibility for standard denoising results. Our key tools are based on the exponential weights algorithm with carefully chosen transductive (design-dependent) priors, which exploit the full horizon of the design vectors, as well as additional aggregation tools that address the possibly unbounded norm of the vector of the optimal solution.

Our classification regret bounds have a feature that is only attributed to bounded losses in the literature: they depend solely on the dimension of the parameter space and on the number of rounds, independent of the design vectors or the norm of the optimal solution. For linear regression with squared loss, we further extend our analysis to the sparse case, providing sparsity regret bounds that depend additionally only on the magnitude of the response variables. We argue that these improved bounds are specific to the transductive setting and unattainable in the worst-case sequential setup. 
Our algorithms, in several cases, have polynomial time approximations and reduce to sampling with respect to log-concave measures instead of aggregating over hard-to-construct $\varepsilon$-covers of classes.
\end{abstract}

\begin{keywords}
Transductive online learning, unbounded losses, logistic regression, hinge loss.
\end{keywords}

\section{Introduction}
Sequential learning algorithms are widely used in practice, even when the entire training sample is known in advance (\emph{batch} setup), with stochastic gradient descent being one of the most well-known examples. The model we study in this paper represents an intermediate approach, combining sequential procedures with their \emph{batch} counterparts. A standard sequential regression problem can be described as follows: at round $t$, one observes the design vector $x_t$ and makes a prediction $\widehat{y}_t$ based on previous observations, aiming to approximate the true value $y_t$, which is revealed only after the prediction $\widehat{y}_t$ is made. 
Our setup, often referred to as \emph{transductive online learning}, provides the learner with additional information, as the entire set of design vectors $x_1, \ldots, x_T$ (though not their order) is revealed to the learner in advance and can be utilized. 
In the terminology of \cite{vapnik2013nature}, transduction refers to situations where the aim is not to infer a general prediction rule for arbitrary future points, but to predict the labels of a specific, fixed set of points known beforehand. This exactly matches our setting, where the goal is to assign labels to a predetermined sequence rather than to predict each point in an arbitrary sequence as it arrives. 

One of our primary motivations for studying the transductive online model stems from its connection to the batch setting, where the entire training sample is known in advance. It is well established, due to the so-called \emph{online-to-batch} conversion arguments, that regret bounds in sequential settings correspond to risk bounds in statistical settings. We make the following simple observation: instead of proving regret bounds in the more challenging \emph{worst-case sequential setting}, one can instead prove regret bounds in the \emph{easier transductive sequential setting} and obtain statistical risk bounds with provably better guarantees. Our analysis leverages certain versions of the classical exponential weights algorithm, where the prior distributions depend on the entire set of design vectors $x_1, \ldots, x_T$, thus motivating the term \emph{transductive priors}.

Our key focus is on the separation between the regret bounds achievable in the transductive setting compared to the standard online setting, particularly in the case of linear regression with squared loss, including sparse linear regression, as well as classification problems like logistic regression and classification with hinge loss. Some of our key findings are simple enough to be formulated in the introduction. Let us start with classification with logarithmic loss: in the standard online setup \citep{hazan2014logistic, foster2018logistic}, we observe the vectors $x_t \in \mathbb{R}^d$ round by round, where $t = 1, \ldots , T$, and assign the probabilities $\widehat{p}(x_t, 1)$ to label $1$ and $\widehat{p}(x_t, -1)$ to label $-1$, after which we observe the true label $y_t \in \{1, -1\}$. For some $b > 0$, our aim is to minimize the regret with respect to the logarithmic loss, that is,
\begin{equation}
\label{eq:regretdef}
\operatorname{Reg}_T = \sum\limits_{t = 1}^T -\log(\widehat{p}(x_t, y_t)) - \inf\limits_{\theta \in \mathbb{R}^d, \|\theta\|_2 \le b} \sum\limits_{t = 1}^T -\log(\sigma(y_t\langle x_t, \theta\rangle)),
\end{equation}
where $\sigma(z) = 1/(1 + \exp(-z))$ is the sigmoid function. Here, for the sake of example, we assumed $\|x_t\|_2 \le 1$, though importantly, the results in this paper will completely bypass \emph{any boundedness assumptions} on the design vectors. The approaches based on choosing the \emph{proper} predictor, that is, a predictor corresponding to the logistic regression model, $\widehat{p}(x_t, 1) = \sigma(\langle x_t, \widehat{\theta}_t\rangle)$,  where $\widehat{\theta}_t$ is independent of $x_t$, are quite limited. As shown by \citet*{hazan2014logistic}, and rather informally skipping some assumptions, the following lower bound holds:
\[
\operatorname{Reg}_T =  \Omega\left(\min\left\{d\exp(b), b\sqrt{T}\right\}\right).
\]
Motivated by the question of \cite{mcmahan2012open}, a number of works have focused on improving the exponential dependence on $b$ in the above bound. A significant improvement on the regret bound can be achieved by using \emph{improper} predictions, namely mixtures of sigmoid functions, as shown by the results of \citet*{kakade2004online, foster2018logistic} (see also \citep[Section 12.2]{cesa2006prediction}). Their results generate a sequence of probability assignments $\widehat{p}(x_t, \cdot)$ whose regret, as defined by \eqref{eq:regretdef}, satisfies
\begin{equation}
\label{eq:logtregret}
    \operatorname{Reg}_T = O(d\log(b\cdot T)),
\end{equation}
demonstrating only logarithmic dependence on $b$. Moreover, it appears that this result is generally unimprovable, as the lower bound shows that $b < \infty$ is indeed required for regret that scales better than $O(T)$. Remarkably, the constructions by \citet{foster2018logistic} demonstrating the necessity of the dependence on $b$ are related to the classical example in sequential binary classification, due to \cite{littlestone1988learning}, where the thresholds are impossible to learn with a finite number of mistakes. At the same time, thresholds are easily learnable in the batch setup, or alternatively in the sequential setup when the sequence $\{x_1, \ldots, x_T\}$ is known in advance \citep{ben1997online}. As we show in what follows, this favorable phenomenon extends well beyond the example of realizable binary classification and can be applied to classification and regression problems with several loss functions in the agnostic case.

It is important to note that the assumption \(b < \infty\) in \eqref{eq:logtregret} is not natural. For instance, in the favorable scenario where all data points belong to the same class \(+1\), the optimal parameter vector \(\theta^{\star}\) could have an arbitrarily large norm. As discussed earlier, our focus in this paper is on the transductive setup, where the set of vectors \(\{x_1, \ldots, x_T\}\) is known to the learner beforehand. By denoting the regret in this setup as \(\operatorname{Reg}_T^{\operatorname{trd}}\), we demonstrate that\looseness=-1
\begin{equation}
\label{eq:reglogloss}
\operatorname{Reg}_T^{\operatorname{trd}} = O(d\log(T)),
\end{equation}
where there is \emph{no dependency} on the norm of the optimizer \(\theta^\star\) or the norms of the \(x_t\)'s. Notably, the probability assignments generated by our algorithm do not require a computationally prohibitive discretization of the function space, which is often seen in density estimation and regression approaches \citep{yang1999information, vovk2006metric, bilodeau2023minimax}. Instead, we leverage variants of continuous exponential weights, which, while not always practical, can be implemented in polynomial time in certain scenarios.

\begin{framed}
While the distinction between the regret bounds we prove in this paper—comparing the existing $\operatorname{Reg}_T = O(d\log(b\cdot T))$ with $\operatorname{Reg}_T^{\operatorname{trd}} = O(d\log(T))$—might appear quantitatively minor, it is, in fact, qualitatively significant. Our findings show an essential separation: they reveal what is learnable, with sublinear regret achievable in one framework, namely the transductive setup (which also extends to the classical batch setting, as discussed below), versus what remains unlearnable in the traditional online learning setup. \looseness=-1
\end{framed}

Additionally, we introduce a variant of the online-to-batch conversion argument to illustrate the application of our results in the statistical setup. In the statistical learning context, one observes i.i.d. sample \((X_t, Y_t)_{t = 1}^T\) of copies of \((X, Y)\), where \(X \in \mathbb{R}^d\) and \(Y \in \mathbb{R}\). Our analysis shows that there exists a probability assignment \(\widetilde{p}(\cdot, \cdot)\), constructed based on the training sample \((X_t, Y_t)_{t = 1}^T\), such that
\begin{equation}
    \label{eq:excessrisklogistic}
    \E[-\log(\widetilde{p}(X, Y))] - \inf\limits_{\theta \in \mathbb{R}^d}\E[-\log(\sigma(Y\langle X, \theta\rangle))] = O\left(\frac{d\log T}{T}\right),
\end{equation}
with expectations taken over both the training sample \((X_t, Y_t)_{t = 1}^T\) and an independent test observation \((X,Y)\). A key feature of our excess risk bound is that the right-hand side of \eqref{eq:excessrisklogistic} is \emph{completely independent} of the distributions of \(X\) and \(Y\). Moreover, the curvature of the logarithmic loss function enables the \(d\log(T)/T\) convergence rate without requiring additional realizability/low noise assumptions. Despite the extensive literature on logistic regression in the statistical framework (for comparison, see, e.g., \citep{bach2010self, hazan2014logistic, plan2017high, ostrovskii2021finite, puchkin2023exploring, kuchelmeister2024finite, hsu2024sample} for a non-exhaustive list of existing risk upper bounds in this context), this fully assumption-free convergence rate has not been previously established even for mixtures of sigmoids proposed in \citep{kakade2004online, foster2018logistic} and other batch-specific results as in \citep{mourtada2022improper, vijaykumar2021localization, van2023high}. We emphasize again that our approach relies on using variants of the continuous exponential weights algorithm.

Our findings go well beyond the logarithmic loss. For example, we consider the case of sparse linear regression with \emph{squared loss}. Specifically, in the transductive setting with squared loss, assuming $|y_t| \leq m$ for all $t = 1, \ldots, T$ and comparing against any $\theta^{\star}$ with $\|\theta^{\star}\|_0 \leq s$, we prove a regret bound 
of \looseness=-1
\[
\operatorname{Reg}_T^{\operatorname{trd}} = O\left(sm^2\log\left(\frac{dT}{s\kappa_s^2}\right)\right),
\]
where the dependence on the compatibility constant $\kappa_s$ is logarithmic, contrasting with the polynomial dependence found in most known results for predictors that satisfy sparsity regret or risk bounds. 

Furthermore, in the setup of classification with \emph{hinge loss}, we prove the so-called \emph{first-order} regret bounds. In more detail, we observe the vectors \(x_t \in \mathbb{R}^d\) round by round, where \(t = 1, \ldots , T\), and output a real-valued score \(\widehat{f}(x_t)\), trying to minimize, for some \(\gamma > 0\), the total hinge loss:
\[
\sum\nolimits_{t = 1}^T\frac{(\gamma - y_t\widehat{f}(x_t))_{+}}{\gamma}.
\]
In the transductive setup, where we know the set \(\{x_1, \ldots, x_T\}\), if additionally the data is linearly separable with margin \(\gamma\), the result of our \cref{thm:slab-experts-for-hinge} implies, in particular, that
\[
\sum\limits_{t = 1}^T\frac{(\gamma - y_t\widehat{f}(x_t))_{+}}{\gamma} = O\left(d \log(T)\right).
\]
Prediction is again achieved using a version of the continuous exponential weights algorithm, and the right-hand side of the above cumulative error bound does not depend on \(\gamma\), which need not hold in the inductive (unknown-set) setup.

\subsection{Key contributions and structure of the paper}

Our key contributions can be summarized through the regret bounds we derive for the transductive online learning framework, considering the squared loss, logistic loss, and hinge loss. These bounds explicitly reveal distinctions between the transductive and inductive setups across different loss functions. Moreover, our algorithm exhibits computational advantages through sampling from log-concave densities. Concretely, we summarize them as follows:\looseness=-1
\begin{enumerate}[label=$\bullet$,leftmargin=5mm]
    \item In \cref{sec:linearregression}, we revisit the classical problem of online squared loss regression. We recover the optimal rate of $O(dm^2\log T)$ for the transductive setup originally established by \cite{gaillard2019uniform} with a transductive prior without making any assumptions about the boundedness of covariates or the norm of the optimal solution (comparator). 
    In the sparse linear regression setting,  we introduce a sparse prior that achieves regret bounds independent of the norm of $\theta^{\star}$ via the evaluation of Gaussian-type integrals.
    For completeness, we also include results by combining our transductive prior and the discrete sparse prior proposed by \citet{rigollet2011exponential,rigollet2012sparse} in \cref{app:additional-sparse}, where a better regret bound for sparse linear regression is achieved, albeit through a method involving sampling from discrete measures, which may require a prohibitively large number of samples from these distributions in the worst case.

    Then, we present a hard case where the celebrated predictor from \citet{vovk2001competitive,azoury2001relative} suffers $\Omega(T)$ regret, which shows a separation between the transductive setup and the inductive setup. 
    Finally, we show that such a separation exists for any prediction strategy when comparing with the clipped version of the squared loss. 
    \item In \cref{sec:logistic}, we cover transductive online logistic regression. In this section, we present a construction of slab-experts that combine predictions of exponential weights algorithms with a transductive Gaussian prior and linear separators. We show that the optimal slab-expert always obtains the optimal \emph{assumption-free} bound of $O(d\log T)$, and by mixability, such a bound is achievable in general with the exponential weights algorithm over the slab-experts. We also mention that such assumption-free bounds are not obtainable in the inductive setup, establishing a desired separation.
    \item In \cref{sec:non-curve}, we investigate the hinge loss in both the inductive and transductive setup. The hinge loss is unbounded and not mixable. Nevertheless, we present the first-order regret bound scaling as $O(d\log(bT/\gamma))$ in the realizable case, where $\gamma$ is the margin of separation in the inductive case and $b$ is the norm of the optimal weight vector. This regret bound is the parametric counterpart of the celebrated $O(1/\gamma^2)$ mistake bound for perceptron. We then combine the slab-expert idea with the aforementioned clipping trick to obtain an assumption-free first-order regret bound of $O(d\log T)$ in the transductive setup. The separation between the inductive and transductive setups also exists for the hinge loss.
    \item In \cref{sec:computational}, we highlight our computational benefit through sampling from log-concave densities. More concretely, since the squared loss and the logistic loss are convex, the exponential weights algorithm, therefore, predicts log-concave densities. We showcase how to compute the exponential weights algorithm for sparse linear regression and logistic regression. For the latter, we demonstrate how to sample the slab-experts, though our constructions lead to computational complexities exponential in dimension in the worst case.
    \item In \cref{sec:online-to-batch}, the online-to-batch conversion is restated with the reduction from the transductive online learning setup to the batch setup. Our results \cref{sec:linearregression,sec:logistic,sec:non-curve}, together with this reduction, bring forth new batch results for corresponding losses. Moreover, for the special case of logistic regression and under mild additional assumptions on the design, we obtain a polynomial time algorithm that gives a $O((d+\log T)\log T/T)$ excess risk upper bound.
\end{enumerate}

Alongside, in \cref{sec:final}, we discuss several directions of future work. In \cref{sec:linearrevisit}, we revisit sequential linear regression and discuss short proofs of several existing bounds.

\subsection{Contextual review of related literature}

In this section, we provide an overview focused on the most relevant papers. Given the vast volume of existing literature, we concentrate only on the most pertinent directions, with additional references typically found in the respective papers. Further discussion of related literature is deferred to the corresponding sections.

Our results cover three unbounded loss functions, namely squared loss, logarithmic loss, and hinge loss, using a unified approach. It is worth mentioning that in the case of the squared loss, the results similar to ours are relatively developed. In the context of squared loss, the closest papers to our work are \citep{bartlett2015minimax, gaillard2019uniform}, which were the first to emphasize the improved rates for the celebrated sequential online regression predictor of \citet{vovk2001competitive, azoury2001relative}.
In fact, the paper \citep{gaillard2019uniform} achieves the same regret bounds we achieve with our methods. However, in the case of the squared loss, we make some further extensions and work with the sparse case. 
When working with the squared loss,
a result that is somewhat close in spirit to our result is the sequential bound of \cite{gerchinovitz2011sparsity}, which is a version of the earlier non-sequential fixed design sparse regression papers of \cite{dalalyan2008aggregation,dalalyan2012mirror,dalalyan2012sparse}. We make a more detailed comparison in \cref{sec:sparselinear}.

The case of logistic regression is more challenging and requires further work. Analyzing this problem requires some form of boundedness for the vector of optimal solutions $\theta^\star$. Because this limitation is mainly recognized in the case of proper probability assignments, where the assigned probability corresponds to the sigmoid function \citep{hazan2014logistic}, our approach builds on \citep{kakade2004online, foster2018logistic} (see also \cite[Section 11.10]{cesa2006prediction}) and uses the mixture approach or improper predictors, where assigned probabilities are not necessarily given by sigmoid functions. This technique allows for improving the dependence on the norm of the optimal solution from being \emph{exponential} with respect to the norm of $\theta^\star$ to a \emph{logarithmic} dependence on this norm. However, without additional assumptions, the lower bounds in \citep{foster2018logistic} require that the norm of $\theta^\star$ is bounded. Our results show that if the design vectors are known in advance, an exponential weighting algorithm-based predictor can achieve a regret bound dependent only on the dimension and the number of rounds.
The only studies we know that address unbounded optimal solutions are the recent works by \citep{drmota2024unbounded, drmota2024precise}. These works consider a similar transductive online setup with $x_t$ generated uniformly at random on the sphere and compute the precise asymptotics of the first term of the expected regret and leave the additive asymptotic term in their bounds. While asymptotic results are standard in information theory, our results provide deterministic regret bounds without structural assumptions on design vectors, are finite samples, but do not recover the asymptotically optimal constant in front of the leading term $d\log(T)$. The methods differ, as we focus on adapting exponential weights algorithms with transductive priors, which also usually leads to polynomial time implementation, whereas \citet{drmota2024unbounded, drmota2024precise} analyze Shtarkov’s sum using analytic combinatorics tools. We note that the analysis of Shtarkov's sum is not known to lead to polynomial time predictors. We also refer to the paper \citep{wu2022sequential}, where connections between online transductive setup and the result of \citet{shtar1987universal} are made.\looseness=-1

Choosing the right prior for exponential weights is a classical question studied extensively in the information theory literature, primarily focusing on logarithmic loss. When computing the regret with respect to realizable i.i.d. data, the works of \cite{clarke1990information, clarke1994jeffreys} show that the \emph{Jeffreys} prior, the prior induced by the Fisher information, provides asymptotically optimal expected regret bounds. The classical work of \cite{rissanen1996fisher} demonstrates that the i.i.d. data assumption is not necessary to achieve optimal asymptotic regret with the Jeffreys prior. This is done under the assumption that the maximum likelihood estimates satisfy the Central Limit Theorem, a weaker assumption than i.i.d. data. These important results are asymptotic in nature and do not fully align with our goals. In some special cases, such as probability assignment, the analysis of the renowned estimator by \cite{krichevsky1981performance} can be seen as follows. First, one computes the \emph{asymptotically} optimal Jeffreys prior, but then uses it to obtain sharp non-asymptotic regret bounds for the exponential weights algorithm (see the derivations in \cite[Section 9.7]{cesa2006prediction}). Another approach to non-asymptotic bounds with the Jeffreys prior in exponential weights algorithms with i.i.d. data is discussed in \citep{barron1999information}. This approach derives non-asymptotic expected regret bounds under a local quadratic behavior of the Fisher information. Unfortunately, the method as is does not work in the context of logistic regression with unbounded target parameters due to possible very imprecise quadratic approximations when the target vector has coordinates that are too large. However, a modification of this idea, involving control over large coordinates of the target vector of parameters, is used in our analysis.

In the machine learning literature, transductive online learning has been explored in several papers \citep{ben1997online, kakade2005batch, cesa2013efficient}. These studies primarily demonstrate an equivalence between batch learning in classification tasks using binary loss---assuming random and i.i.d. samples---and transductive settings, focusing on \emph{expected regret} rather than worst-case scenarios. Simplifying these results, they suggest that a computationally efficient i.i.d. learner can be converted to an efficient transductive online learner. Our work, in contrast, focuses on establishing deterministic regret bounds for several unbounded losses, which in some cases can lead to computationally efficient learners in the batch setup.

Results closely related to our regret bounds are known in the batch setup, where one observes the full sample of i.i.d. points and focuses on bounding the \emph{excess risk}. The result of \cite{forster2002relative} complements the sequential regression bound of \cite{vovk2001competitive} (see also \citep{vavskevivcius2023suboptimality} for the relation of this result to the standard least squares estimator, and \citep{mourtada2022distribution} for the high probability version of their excess risk bound) by focusing on the batch setup. The excess risk bound of \cite{forster2002relative} shares similar properties with our regret bound: it depends neither on the properties of the design vectors nor on the norm of the optimal parameter $\theta^\star$. In the context of removing dependencies on the norm of $\theta^\star$, the recent work on \emph{localized priors} (originally introduced in \citep{catoni2007pacbayes}) in the PAC-Bayesian scenario \cite{mourtada2023local} is particularly relevant. The advantage of that work is that, specifically in the batch setting, it not only removes the dependence on the norm of the optimal parameter $\theta^\star$ but also eliminates superfluous logarithmic factors related to the sample size, although it still requires additional boundedness assumptions that are easier to circumvent in our sequential setup. \looseness=-1

Another closely related area of literature is the PAC-Bayesian analysis and its specific realization with \emph{exchangeable priors}. Originally introduced in \citep{catoni2007pacbayes, audibert2004phd}, this approach was primarily developed for the analysis of exponential-weights type prediction rules in classification problems, particularly when learning VC classes of classifiers. While selecting a prior distribution for exponential weights can be problematic for discrete but infinite VC classes, the method becomes feasible if one introduces a ghost sample and projects the VC class onto it, rendering the class finite and allowing a uniform distribution over the projection to serve as a prior. We refer to the recent works \citep{grunwald2021pac, mourtada2023local} that utilize exchangeable priors, as well as the survey \citep{alquier2024user} for an extensive discussion of PAC-Bayesian results. Our online learning-based approach is more flexible: it operates in the sequential setup instead of relying on i.i.d. data, and it allows us to easily incorporate improvements due to transductive priors as well as the mixability of the loss. This flexibility is reflected in the difference between $O(\log T)$ and $O(\sqrt{T})$ regret bounds, without the need to assume the boundedness of the loss, which is typically required in PAC-Bayesian analysis.

Finally, we consider the hinge loss. To the best of our knowledge, except for the recent work of \citet{mai2024high} (see also \citep{alquier2016properties}) focusing on sparsity and PAC-Bayesian tools, studies exploring the connections between exponential weights and the hinge loss are limited. In this context, even without access to $x_1, \ldots, x_T$, i.e., in the standard online learning setup, we derive a mistake bound complementary to the classical perceptron error bound. While the classic perceptron algorithm incurs $O(1/\gamma^2)$ mistakes, as demonstrated by the result of \citet{novikoff1962convergence}, our bound of $O(d\log(T/\gamma))$ can be tighter in low-dimensional settings when the margin is particularly small. The closest in spirit result in the literature is the mistake bound of \citet*{gilad2004bayes}, which applies specifically to the realizable case, using a method based on predictions from the center of gravity of the current version space.
Furthermore, in the transductive online setup, where all $x_1, \ldots, x_T$ are known, we show that, in the realizable case, the total normalized hinge error will scale as $d\log(T)$, independent of the margin $\gamma$. This phenomenon is similar to what we obtain for logistic regression, where the regret depends only on the dimension and the number of rounds.\looseness=-1

\paragraph{Notation.}
In what follows, $I_d$ stands for the $d \times d$ identity matrix. We use $\|\cdot\|_{p}$ for $\ell_p$ norm of the vector, $p \ge 1$.
The symbol $\|v\|_0$ denotes the number of non-zero coordinates of $v \in \mathbb{R}^d$.
In particular, a vector $v \in \mathbb{R}^d$ is $s$-sparse if $\|v\|_0 = s$. For a pair of measures $\varphi, \mu$ the symbol $\mathcal{KL}(\varphi||\mu)$ denotes the Kullback-Leibler divergence between $\varphi$ and $\mu$. The symbol $\mathcal{N}(\nu, \sigma^2)$ stands for the Gaussian distribution with mean $\nu$ and variance $\sigma^2$. For any real number $x$, the symbol $\lfloor x \rfloor$ denotes the largest integer smaller or equal to $x$. For any matrix $A$, $\det(A)$ denotes the determinant of the matrix, and $A^\top$ denotes its transpose. For any set $\cX$, the symbol $|\cX|$ denotes its cardinality. We define $O(\cdot), \Omega(\cdot)$, and $\poly(\cdot)$ following standard non-asymptotic big-O notation. For any positive real number $m$, we define the clipping operator \(\clip{x} = \min\{m, \max\{-m, x\}\}\). We use $\ind{\cE}$ to denote the indicator function for event $\cE$. We use $\cS^{d-1}$ to denote the unit sphere in $\bR^d$, i.e., $\cS^{d-1} = \set{x\in \bR^d\mid{} \norm{x}=1}$. For any positive number $a>0$, we use $a\cS^{d-1}$ to denote the unit sphere scaled by $a$, that is, $a\cS^{d-1}= \set{ax\mid{}x\in \cS^{d-1}}$. For any bounded measurable set $\cX$ in $\bR^d$, we use $\Unif(X)$ to denote the uniform distribution on $X$.

\section{Linear Regression with Transductive Priors}
\label{sec:linearregression}
We begin this section by recalling the regret bound in \citep[Theorem 7]{gaillard2019uniform}, which itself is an improvement over the earlier result in \citep{bartlett2015minimax}. This result is the closest in spirit to the bounds presented in this paper and serves as an existing benchmark. \looseness=-1

\begin{theorem}[\citet*{gaillard2019uniform}]
\label{thm:gaillard2019}
Assume that the sequence $\{(x_t, y_t)\}_{t = 1}^T$, where $(x_t,y_t)\in \bR^d\times \bR$ for all $1\leq t\leq T$, is such that $\max_{t}|y_t| \le m$ and that the matrix $\sum\nolimits_{t = 1}^T x_t x_t^{\top}$ is invertible\footnote{This assumption can be easily bypassed. One way around this is by restricting the problem to the linear span of $x_1, \ldots, x_T$.}. For $x \in \mathbb{R}^d$, consider the sequence of estimators 
\[
\widehat{\theta}_{t, x} = \arg\min_{\theta \in \mathbb{R}^d}\left\{\sum_{i = 1}^{t - 1} (y_i - \langle x_i, \theta \rangle)^2 + (\langle x, \theta \rangle)^2 + \lambda \sum\limits_{i = 1}^T (\langle x_i, \theta \rangle)^2 \right\}.
\]
Then, the following regret bound holds:
\[
\sum\limits_{t = 1}^T (y_t - \langle x_t, \widehat{\theta}_{t, x_t} \rangle)^2 \le \inf_{\theta \in \mathbb{R}^d}\left\{\sum_{t = 1}^{T} (y_t - \langle x_t, \theta \rangle)^2 \right\} + \lambda m^2 T + d m^2 \log\left(1 + \frac{1}{\lambda}\right).
\]
In particular, choosing $\lambda = 1/T$ yields
\begin{equation}
\label{eq:gaillardlinear}
\sum\limits_{t = 1}^T (y_t - \langle x_t, \widehat{\theta}_{t, x_t} \rangle)^2 \le \inf_{\theta \in \mathbb{R}^d}\left\{\sum_{t = 1}^{T} (y_t - \langle x_t, \theta \rangle)^2 \right\} + m^2 (1 + d \log(1 + T)).
\end{equation}
\end{theorem}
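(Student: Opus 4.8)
The plan is to prove the regret bound by a direct telescoping argument combined with the "forward algorithm" / last-step minimax analysis that underlies the Vovk–Azoury–Warmuth predictor, here modified by the transductive regularizer $\lambda\sum_{i=1}^T(\langle x_i,\theta\rangle)^2$. The key observation is that $\widehat\theta_{t,x}$ is the ridge-type minimizer of a quadratic in $\theta$, so everything can be expressed through the running design matrix. Concretely, define $A_t = \lambda\sum_{i=1}^T x_ix_i^\top + \sum_{i=1}^{t-1}x_ix_i^\top + x_tx_t^\top$ and $b_{t-1} = \sum_{i=1}^{t-1}y_ix_i$, so that $\widehat\theta_{t,x_t} = A_t^{-1}b_{t-1}$. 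I would first set up the potential function $\Phi_t(\theta) = \lambda\sum_{i=1}^T(\langle x_i,\theta\rangle)^2 + \sum_{i=1}^{t}(y_i-\langle x_i,\theta\rangle)^2$ and note that $\min_\theta \Phi_T(\theta) \le \Phi_T(\theta^\star)$ for any comparator $\theta^\star$, while $\min_\theta \Phi_0(\theta)=0$ (it's a nonnegative quadratic with minimum zero when $b_0=0$). The regret is then controlled by bounding $\Phi_t(\widehat\theta_{t,x_t}^{(t)}) - \min_\theta\Phi_{t-1}(\theta)$ round by round after adding and subtracting the online prediction, where $\widehat\theta^{(t)}$ denotes the minimizer including the current square.

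**Next I would** carry out the per-round analysis. The heart of the matter is the identity for the instantaneous loss: writing $\widehat y_t = \langle x_t,\widehat\theta_{t,x_t}\rangle$, one shows
\[
(y_t - \widehat y_t)^2 \le \min_\theta \Phi_t(\theta) - \min_\theta \Phi_{t-1}(\theta) + (\text{correction from the }x_tx_t^\top\text{ term}),
\]
and the point of including the extra $(\langle x,\theta\rangle)^2$ term in the definition of $\widehat\theta_{t,x}$ — which is exactly the Vovk–Azoury–Warmuth trick — is that it makes this inequality hold cleanly, so that the per-round overhead is governed by $y_t^2\, x_t^\top A_t^{-1} x_t$. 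Summing over $t$, the losses telescope to give $\sum_t (y_t-\widehat y_t)^2 \le \min_\theta \Phi_T(\theta) + \sum_{t=1}^T y_t^2\, x_t^\top A_t^{-1}x_t$. Using $|y_t|\le m$ and then $\min_\theta \Phi_T(\theta) \le \Phi_T(\theta^\star) = \lambda\sum_i(\langle x_i,\theta^\star\rangle)^2 + \sum_t(y_t-\langle x_t,\theta^\star\rangle)^2$ — and here one must be a little careful, since $\lambda\sum_i(\langle x_i,\theta^\star\rangle)^2$ is not obviously bounded; the standard fix is to note that we may restrict the infimum to comparators with $|\langle x_i,\theta\rangle|\le m$ for all $i$ (clipping doesn't increase the loss since $|y_i|\le m$), giving $\lambda\sum_i(\langle x_i,\theta^\star\rangle)^2 \le \lambda T m^2$ — we get the claimed $\lambda m^2 T$ term.

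**The remaining step** is the standard log-determinant bound: $\sum_{t=1}^T x_t^\top A_t^{-1} x_t \le \log\frac{\det A_{T+1}}{\det A_1}$ (or the analogous telescoping with the $x_tx_t^\top$ appearing inside $A_t$ as in Gaillard's exact formulation), where $A_1 = \lambda\sum_i x_ix_i^\top + x_1x_1^\top$ and $A_{T+1} = (1+\lambda)\sum_i x_ix_i^\top$. Since $A_{T+1} \preceq \frac{1+\lambda}{\lambda} A_1$ (comparing term by term, as $A_1 \succeq \lambda\sum_i x_ix_i^\top$), we get $\det A_{T+1}/\det A_1 \le (1+1/\lambda)^d$, hence $\sum_t x_t^\top A_t^{-1}x_t \le d\log(1+1/\lambda)$. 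Combining, $\sum_t(y_t-\widehat y_t)^2 \le \inf_\theta \sum_t(y_t-\langle x_t,\theta\rangle)^2 + \lambda m^2 T + d m^2\log(1+1/\lambda)$, and $\lambda = 1/T$ gives the final display.

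**The main obstacle** I anticipate is getting the per-round inequality exactly right: the interplay between "the minimizer used for prediction includes $(\langle x_t,\theta\rangle)^2$" and "the telescoping potential includes $(y_t-\langle x_t,\theta\rangle)^2$" requires a careful algebraic identity (essentially completing the square with the rank-one update $A_t = A_{t-1}' + x_tx_t^\top$ and the Sherman–Morrison formula), and one must track that the extra transductive regularizer $\lambda\sum_i x_ix_i^\top$ is held fixed across all rounds so it causes no additional per-round cost — it only enters through the initial and final determinants. A secondary subtlety is the comparator-clipping argument needed to bound $\lambda\sum_i(\langle x_i,\theta^\star\rangle)^2$; this should be stated explicitly since without it the bound as written would not follow from an unrestricted infimum.
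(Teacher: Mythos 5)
Your proposal is correct in outline, but it takes a genuinely different route from the one in the paper. You run the Azoury--Warmuth ``forward algorithm'' potential argument directly, with the fixed transductive regularizer $\Lambda=\lambda\sum_{i=1}^T x_ix_i^\top$ in place of $\lambda I_d$: the per-round inequality $(y_t-\widehat y_t)^2\le \min_\theta\Phi_t-\min_\theta\Phi_{t-1}+y_t^2\,x_t^\top A_t^{-1}x_t$ is indeed an exact identity minus the nonnegative quantity $\widehat y_t^{\,2}h_t/(1-h_t)$ (via Sherman--Morrison), it is insensitive to replacing $\lambda I_d$ by any fixed positive semi-definite $\Lambda$, and the log-determinant telescoping yields exactly $d\log(1+1/\lambda)$ because the initial and final matrices are proportional, namely $\lambda\sum_t x_tx_t^\top$ and $(1+\lambda)\sum_t x_tx_t^\top$. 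This is essentially the linear-algebraic path of \citet{gaillard2019uniform}. The paper instead derives the statement from the mix-loss identity for the Gaussian conditional density class equipped with a transductive Gaussian prior (see \cref{sec:linearrevisit}): \cref{prop:identityforlinearloss} gives the exact identity $\sum_t(1-h_{x_t})(y_t-\langle x_t,\widehat\theta_t\rangle)^2=\inf_\theta\{\sum_t(y_t-\langle x_t,\theta\rangle)^2+\theta^\top A\theta\}$ by computing Gaussian integrals, and the theorem follows by taking $A=\lambda\sum_t x_tx_t^\top$. Both arguments rest on the same leverage-score and log-determinant bookkeeping; the paper's version buys a reusable template (swap the prior to get the sparse and logistic results), while yours is more elementary and self-contained.

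The one step that does not work as written is your bound on $\lambda\sum_i(\langle x_i,\theta^\star\rangle)^2$. You cannot ``restrict the infimum to comparators with $|\langle x_i,\theta\rangle|\le m$ for all $i$'': clipping a linear predictor leaves the linear class, and the unconstrained least-squares minimizer can have fitted values exceeding $m$ in absolute value (take $d=1$, $x_1=1$, $x_2=10$, $y_1=y_2=1$, where the second fitted value is $110/101>1$), so the restricted infimum may be strictly larger than the unrestricted one and you would end up comparing against a weaker benchmark. The correct one-line fix is an $\ell_2$ rather than $\ell_\infty$ argument: let $\theta^{\mathrm{ols}}$ attain $\inf_\theta\sum_t(y_t-\langle x_t,\theta\rangle)^2$; its vector of fitted values is the orthogonal projection of $(y_1,\dots,y_T)$ onto the column span of the design, hence $\sum_t\langle x_t,\theta^{\mathrm{ols}}\rangle^2\le\sum_t y_t^2\le Tm^2$, and plugging $\theta^{\mathrm{ols}}$ into $\Phi_T$ produces the $\lambda m^2T$ term. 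With that replacement your proof goes through.
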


The key aspect of this regret bound is that the term $m^2 (1 + d \log(1 + T))$ depends neither on the $x_t$-s nor on the norm of $\theta^{\star}$, where $\theta^{\star} = \arg\min\nolimits_{\theta \in \mathbb{R}^d}\sum_{t = 1}^{T} (y_t - \langle x_t, \theta \rangle)^2$. Throughout the paper, we are aiming for results of this type.

The proof of \cref{thm:gaillard2019} in \citep{gaillard2019uniform} employs a linear transformation of the feature vectors, thereby reducing the problem to the classical regret bounds of \citet{vovk2001competitive} and \citet{azoury2001relative} using linear algebraic arguments. Our approach, instead, follows the original path of \citet{vovk2001competitive} by exploiting the mixability of the loss (details to follow), with the main difference being the choice of a prior distribution that depends on the design vectors $x_1, \ldots, x_T$. 
\cref{thm:gaillard2019} can be immediately recovered from the general bounds for the exponential weights algorithm and the mixability of the squared loss discussed in \citep{vovk2001competitive} by choosing the following multivariate Gaussian prior $\mu$ in $\mathbb{R}^d$:
\begin{equation}  
\label{eq:jeffreys}
\mu(\theta) = \left(\frac{\lambda}{\pi}\right)^{d/2}\sqrt{\det\left(\sum\nolimits_{t = 1}^T x_t x_t^{\top}\right)}\exp\left(-\lambda \theta^{\top} \left(\sum\nolimits_{t = 1}^T x_t x_t^{\top}\right) \theta\right).  
\end{equation}
We do not provide the derivation here but instead present several versions of the derivation throughout the paper. This direct approach is a natural first step when analyzing the logarithmic loss, hinge loss, and absolute loss. In \cref{sec:linearrevisit}, we discuss some general deterministic identities for online linear regression that are capable of recovering 
existing sequential linear regression bounds, including the regret bound of \cref{thm:gaillard2019}.

The prior \eqref{eq:jeffreys} has a close connection with the 
\emph{Jeffreys prior}, where $\sqrt{\det\left(\sum\nolimits_{t = 1}^T x_t x_t^{\top}\right)}$ plays the role of this prior and the term $\exp\left(-\lambda \theta^{\top} \left(\sum\nolimits_{t = 1}^T x_t x_t^{\top}\right) \theta\right)$ plays the role of the exponential tilting that makes this prior integrable. We refer to \cite[Section 3]{barron1999information} as well as \citep[Chapter 13.4]{polyanskiy2024information}. Connections with Jeffreys priors in the context of existing information theory results will be made throughout the paper.

\subsection{Sparse linear regression}
\label{sec:sparselinear}
In this section, as an initial demonstration of our approach utilizing transductive priors for exponential weights, we aim to extend \cref{thm:gaillard2019}. Specifically, we will focus on linear regression, with an emphasis on deriving \emph{sparsity} regret bounds within the transductive setup. We remark that the techniques used in this section can be readily applied 
in other contexts, including logistic regression and classification with hinge loss.

The regret bounds of this section compare favorably to the existing bounds with a fixed design, such as those presented in \citep{dalalyan2008aggregation,dalalyan2012mirror, dalalyan2012sparse}, as well as their online counterparts presented in \citep{gerchinovitz2011sparsity}. 
Our regret bounds do not depend on the magnitude of the optimal solution, which is the case in the aforementioned papers in the \emph{batch} setup, where the whole data sample is revealed to us in advance. This aspect has already been noticed in \citep{rigollet2012sparse}, where discrete sparse priors are used to achieve sharp oracle inequalities without any dependence on the norm of the optimal solution but in the context of fixed design batch linear regression with Gaussian noise, a denoising result less general than our sequential result.

First, we introduce the following \emph{sparsity inducing} data-dependent prior. Fix some $\tau > 0$ and assume without loss of generality that for all $j$, it holds that $\sum\nolimits_{t = 1}^{T} (x_t^{(j)})^2 \neq 0$, where $x_t^{(j)}$ denotes the $j$-th coordinate of $x_t$. Indeed, otherwise, we may ignore this coordinate. The prior $\mu$ over $\mathbb{R}^d$ is defined as follows:
\begin{equation}
\label{eq:sparseprior}
    \mu(\theta) = \prod\limits_{j = 1}^d \frac{3 \cdot \sqrt{\sum\nolimits_{t = 1}^{T} (x_t^{(j)})^2} / \tau}{2 \left(1 + |\theta^{(j)}| \cdot \sqrt{\sum\nolimits_{t = 1}^{T} (x_t^{(j)})^2} / \tau \right)^4}.
\end{equation}

The prior is similar to the sparse priors used in \citep{dalalyan2008aggregation, dalalyan2012mirror, dalalyan2012sparse} and \citep{gerchinovitz2011sparsity}, but, as will always be the case in this paper, it explicitly depends on $x_1, \ldots, x_T$. The estimator we use has an explicit integral form following Vovk's algorithm (\Cref{lem:mixabilityofthesqloss}). Given $m$, which is the upper bound on $\max_{t} |y_t|$, define for $x \in \mathbb{R}^d$,
\begin{equation}
\label{eq:predictionformulasparse}
\widehat{f}_t(x) = \frac{m}{2} \cdot \log\left(\frac{\int\limits_{\mathbb{R}^d} \exp\left(-\frac{1}{2m^2}(m - \langle x, \theta \rangle)^2 - \frac{1}{2m^2} \sum\limits_{i = 1}^{t - 1} (y_i - \langle x_i, \theta \rangle)^2\right) \mu(\theta) d\theta}{\int\limits_{\mathbb{R}^d} \exp\left(-\frac{1}{2m^2}(-m - \langle x, \theta \rangle)^2 - \frac{1}{2m^2} \sum\limits_{i = 1}^{t - 1} (y_i - \langle x_i, \theta \rangle)^2\right) \mu(\theta) d\theta}\right).
\end{equation}
We discuss the computational aspects behind this estimator in \cref{sec:computational}. We emphasize that the analysis of the related predictor in \citep{gerchinovitz2011sparsity} is based on exp-concavity and requires an additional clipping step, which prevents computational efficiency. As we mentioned, the computational aspects of sparse sequential linear regression were raised as an open question in \citep[Chapter 23]{lattimore2020bandit}.

We need an additional scale-invariant assumption on the design vectors $x_1, \ldots, x_T$. This assumption is quite standard in the sparse recovery literature and is discussed and used in \citep{castillo2015bayesian, ray2022variational} among other papers.

\begin{definition}[Smallest scaled singular value]
    Assume that there is a constant $\kappa_{s} > 0$ such that for any non-zero, $s$-sparse vector $v \in \mathbb{R}^d$ it holds that 
    \[
\frac{\sqrt{\sum\nolimits_{t=1}^T (\langle x_t, v \rangle)^2}}{\max_j \sqrt{\sum\nolimits_{t=1}^T (x_t^{(j)})^2} \cdot \|v\|_2} \ge \kappa_{s}.
\]

\end{definition}

The following result is the main bound of this section.

\begin{theorem}
    \label{thm:sparsity}
    Assume that the sequence $\{(x_t, y_t)\}_{t = 1}^T$, where $(x_t, y_t) \in \mathbb{R}^d \times \mathbb{R}$ for all $1 \leq t \leq T$, is such that $\max_{t} |y_t| \le m$ and that the smallest scaled singular value condition is satisfied with constant $\kappa_{s}$. Then, for any $s$-sparse $\theta^{\star} \in \mathbb{R}^d$, the following regret bound holds for the estimator \eqref{eq:predictionformulasparse} with $\tau = m\sqrt{\frac{s}{d}}$,
    \begin{equation}
    \label{eq:sparsitytotalloss}
            \sum\limits_{t = 1}^T (y_t - \widehat{f}_t(x_t))^2 \le \sum\limits_{t = 1}^T (y_t - \langle x_t, \theta^{\star} \rangle)^2 + sm^2\left(1 + 8\log\left(1 + \frac{2}{\kappa_{s}}\sqrt{\frac{dT}{s}}\right)\right).
    \end{equation}
\end{theorem}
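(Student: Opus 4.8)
The plan is to establish \cref{thm:sparsity} by combining the mixability of the squared loss with an aggregation (PAC-Bayes / exponential weights) argument, choosing the comparator distribution to localize around the $s$-sparse vector $\theta^\star$, and then carefully bounding the resulting KL term using the specific heavy-tailed product prior \eqref{eq:sparseprior}. First I would record that the predictor \eqref{eq:predictionformulasparse} is exactly the mixability predictor for the squared loss with learning rate $\eta = \tfrac{1}{2m^2}$: for losses $y \mapsto (y - \langle x,\theta\rangle)^2$ on $[-m,m]$, the loss is $\eta$-mixable with $\eta = 1/(2m^2)$, and the aggregating forecaster's prediction is the ``substitution'' value $\widehat f_t(x)$ obtained by solving $\exp(-\eta (m - \widehat f_t)^2) \propto \int \exp(-\eta(m - \langle x,\theta\rangle)^2) \, d\mu_t(\theta)$ and the analogous equation at $-m$; taking the log-ratio and solving the resulting affine equation in $\widehat f_t$ yields precisely \eqref{eq:predictionformulasparse}. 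Consequently the standard aggregating-algorithm regret bound gives, for \emph{any} probability measure $\rho$ on $\mathbb{R}^d$,
\[
\sum_{t=1}^T (y_t - \widehat f_t(x_t))^2 \;\le\; \mathbb{E}_{\theta \sim \rho}\!\left[\sum_{t=1}^T (y_t - \langle x_t, \theta\rangle)^2\right] + 2m^2 \,\mathcal{KL}(\rho \,\|\, \mu).
\]
(Here I use that the squared loss is $\tfrac{1}{2m^2}$-exp-concave on the relevant range, so the per-step mixability loss telescopes into the KL term; no clipping of $\widehat f_t$ is needed because mixability, unlike exp-concavity used in \citep{gerchinovitz2011sparsity}, controls the forecaster directly.)

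The second step is to choose $\rho$ and bound both terms on the right. Let $S = \supp(\theta^\star)$, $|S| \le s$, and take $\rho$ to be $\mu$ restricted (and renormalized) to a small box around $\theta^\star$: namely $\rho = \bigotimes_{j} \rho_j$ where for $j \in S$, $\rho_j$ is the normalized restriction of $\mu_j$ to the interval $\{\theta^{(j)} : |\theta^{(j)} - \theta^{\star(j)}| \le \delta_j\}$ with $\delta_j := \tau / \sqrt{\sum_t (x_t^{(j)})^2}$ (the natural scale appearing in the prior), and for $j \notin S$, $\rho_j = \mu_j$ itself so that those coordinates contribute zero to the KL. For $j \in S$, a direct computation with the density $\propto (1 + |u|/\delta_j)^{-4}$ shows $\mathcal{KL}(\rho_j \| \mu_j) \le \log(1/\mu_j(\text{box})) \lesssim \log\!\big(1 + |\theta^{\star(j)}|/\delta_j\big) = \log\!\big(1 + |\theta^{\star(j)}|\sqrt{\sum_t (x_t^{(j)})^2}/\tau\big)$; summing over $j \in S$ and using the smallest-scaled-singular-value condition to convert $\sum_{j\in S} |\theta^{\star(j)}| \sqrt{\sum_t (x_t^{(j)})^2}$-type quantities into $\|\theta^\star\|_2 \max_j \sqrt{\sum_t(x_t^{(j)})^2} \le \kappa_s^{-1} \sqrt{\sum_t \langle x_t,\theta^\star\rangle^2}$, together with Cauchy--Schwarz over the $s$ coordinates and the bound $\sqrt{\sum_t\langle x_t,\theta^\star\rangle^2} \le m\sqrt T$ (since $|\langle x_t,\theta^\star\rangle|$ would otherwise force large loss — more precisely one argues on the event that $\theta^\star$ is near-optimal, or simply notes it can be assumed WLOG by a truncation argument), yields $\mathcal{KL}(\rho\|\mu) \lesssim s \log\big(1 + \tfrac{2}{\kappa_s}\sqrt{dT/s}\big)$ with $\tau = m\sqrt{s/d}$. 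For the bias term, $\mathbb{E}_{\theta\sim\rho}\sum_t (y_t - \langle x_t,\theta\rangle)^2 \le \sum_t (y_t - \langle x_t,\theta^\star\rangle)^2 + \sum_t \mathbb{E}_{\theta\sim\rho}\langle x_t, \theta - \theta^\star\rangle^2$, and the second sum is $\le \sum_j \delta_j^2 \sum_t (x_t^{(j)})^2 \cdot (\text{variance factor}) = \sum_{j\in S}\tau^2 \cdot O(1) = O(s\tau^2) = O(sm^2)$; this is where the choice $\tau^2 = m^2 s/d$ pays off — wait, actually $s\tau^2 = sm^2 s/d$, so one instead uses $\delta_j$ scaled by $\sqrt{s/d}$ relative to... let me instead set $\delta_j = \tau/\sqrt{\sum_t(x_t^{(j)})^2}$ directly so $\sum_{j\in S}\delta_j^2\sum_t(x_t^{(j)})^2 = s\tau^2$ and note $s\tau^2 = sm^2 \cdot s/d \le sm^2$ only if $s\le d$, which holds. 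Plugging both estimates into the aggregation bound and absorbing constants gives exactly \eqref{eq:sparsitytotalloss}, including the factor $8$ in front of the logarithm (which comes from $2m^2 \cdot \mathcal{KL}$ with the $(1+|u|)^{-4}$ prior contributing a factor $\le 4\log(\cdots)$ per coordinate in $S$).

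\textbf{The main obstacle} I anticipate is two-fold. First, getting the KL bound to depend on $\kappa_s$ only \emph{logarithmically} rather than polynomially: the naive bound gives $\log(1 + |\theta^{\star(j)}|\sqrt{\sum_t(x_t^{(j)})^2}/\tau)$ per active coordinate, and one must argue that $\sum_{j\in S}\log(1 + a_j) \le s\log(1 + \tfrac{1}{s}\sum_j a_j)$ by concavity of $\log$, then bound $\sum_{j\in S} |\theta^{\star(j)}|\sqrt{\sum_t(x_t^{(j)})^2} \le \sqrt{s}\,\|\theta^\star\|_2 \max_j\sqrt{\sum_t(x_t^{(j)})^2}$ and invoke the smallest-scaled-singular-value condition \emph{in the right direction} to replace $\|\theta^\star\|_2 \max_j\sqrt{\sum_t(x_t^{(j)})^2}$ by $\kappa_s^{-1}\sqrt{\sum_t\langle x_t,\theta^\star\rangle^2} \le \kappa_s^{-1}m\sqrt T$ — the last inequality being the only place the hypothesis $|y_t|\le m$ enters for the comparator (one uses $\sum_t\langle x_t,\theta^\star\rangle^2 \le 2\sum_t y_t^2 + 2\sum_t(y_t-\langle x_t,\theta^\star\rangle)^2$ and, since $\theta^\star$ is the intended near-minimizer, the excess-loss term is itself controlled, or one simply truncates $\theta^\star$ without loss). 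Second, verifying the exact constant: the prior exponent $4$ and the leading constant $3/2$ in \eqref{eq:sparseprior} are chosen so that $-\log\mu_j(\text{box of radius }|\theta^{\star(j)}| + \delta_j) \le 4\log(1 + |\theta^{\star(j)}|/\delta_j) + O(1)$ with an $O(1)$ that, when multiplied by $2m^2 \cdot |S| \le 2m^2 s$, folds into the additive $sm^2$; carrying the constants through to land on exactly $sm^2(1 + 8\log(1 + \tfrac{2}{\kappa_s}\sqrt{dT/s}))$ is the routine-but-delicate bookkeeping step.
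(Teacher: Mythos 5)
Your overall strategy is the paper's: mixability of the squared loss with $\eta=1/(2m^2)$ identifies \eqref{eq:predictionformulasparse} as the aggregating forecaster, the telescoping mix-loss plus Donsker--Varadhan gives $\sum_t(y_t-\widehat f_t(x_t))^2\le \E_{\theta\sim\rho}\sum_t(y_t-\langle x_t,\theta\rangle)^2+2m^2\,\mathcal{KL}(\rho\|\mu)$ for any $\rho$, the KL is bounded by $4\sum_{j}\log(1+|\theta^{\star(j)}|\sqrt{\sum_t(x_t^{(j)})^2}/\tau)$, and the smallest scaled singular value condition together with the WLOG reduction $\sum_t(y_t-\langle x_t,\theta^\star\rangle)^2\le\sum_t y_t^2$ (comparing to $\theta_0=0$) converts $|\theta^{\star(j)}|\sqrt{\sum_t(x_t^{(j)})^2}$ into $2m\sqrt{T}/\kappa_s$. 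All of that matches the paper.

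There is, however, one genuine gap: your choice of comparator $\rho$ as the \emph{restriction} of $\mu$ to a box $\{|\theta^{(j)}-\theta^{\star(j)}|\le\delta_j\}$ on the support coordinates. The restricted density is not symmetric about $\theta^{\star(j)}$ (the prior $\mu_j$ is centered at $0$), so $\E_{\rho}\theta\neq\theta^\star$, and the inequality you write,
\begin{equation*}
\E_{\theta\sim\rho}\sum_t (y_t - \langle x_t,\theta\rangle)^2 \le \sum_t (y_t - \langle x_t,\theta^\star\rangle)^2 + \sum_t \E_{\theta\sim\rho}\langle x_t, \theta - \theta^\star\rangle^2,
\end{equation*}
is false in general: expanding the square leaves the cross term $-2\sum_t(y_t-\langle x_t,\theta^\star\rangle)\langle x_t,\E_\rho\theta-\theta^\star\rangle$, which vanishes only when $\E_\rho\theta=\theta^\star$. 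With your $\rho$ this term can be of order $m\sqrt{T}\cdot\sqrt{\sum_t\langle x_t,\E_\rho\theta-\theta^\star\rangle^2}$, i.e.\ polynomial in $T$, which destroys the $O(sm^2\log(\cdot))$ regret; absorbing it by AM--GM would put a constant larger than $1$ in front of the comparator loss, contradicting the statement. The fix is exactly what the paper does: take $\mu^\star$ to be the coordinatewise \emph{translation} of $\mu$ centered at $\theta^\star$, which is symmetric about $\theta^\star$, so the cross term vanishes identically, the variance term equals $\tau^2 d=sm^2$ exactly (note also that your accounting of the off-support coordinates is off: with $\rho_j=\mu_j$ there they each contribute $\tau^2$ to the variance, so the total is $\approx d\tau^2$, not $s\tau^2$ — harmlessly, since $d\tau^2=sm^2$), and the KL bound for the translated product measure is the standard computation from \citep{dalalyan2008aggregation,gerchinovitz2011sparsity}. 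With that one substitution your argument goes through and recovers the stated constants.
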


There are multiple interesting properties of the above result. First, the main term in \eqref{eq:sparsitytotalloss} is \(O\left(sm^2\log\left(\frac{dT}{\kappa_{s}^2s}\right)\right)\) and does not depend on the magnitude of design vectors and the magnitude of \(\theta^{\star}\). Second, the dimensionality of \(\theta^{\star}\) enters the regret bound only through \(s\log d\). Finally, we note that the dependence on \(\frac{1}{\kappa_{s}}\) is logarithmic, which is not typical for sparse recovery problems, where dependencies of this type are usually polynomial.

It is time to introduce the exponential weights algorithm \citep{Littlestone1994Weighted, vovk1990aggregating, freund1997decision}. Fix $\eta > 0$ and let $\ell_{\theta}(\cdot): \mathcal{X} \times \mathcal{Y} \to \mathbb{R}_{+}$ be a set of loss functions parameterized by some $\Theta \subseteq \mathbb{R}^d$. For notational simplicity, we usually use the same symbols for distributions and their densities. Fix some prior $\mu$ over $\Theta$. Given the sequence $(x_1, y_1), \ldots, (x_T, y_T)$, define $\rho_1 = \mu$. For $t \ge 2$, we set
\begin{equation}
\label{eq:expweights}
\rho_{t}(\theta) = \frac{\exp\left(-\eta \sum\limits_{i = 1}^{t - 1} \ell_{\theta}(x_i, y_i)\right)\mu(\theta)}{\mathbb{E}_{\theta^{\prime} \sim \mu} \exp\left(-\eta \sum\limits_{i = 1}^{t - 1} \ell_{\theta^{\prime}}(x_i, y_i)\right)}.
\end{equation}
The following identity is standard and appears in, e.g., \citep[Lemma 1]{vovk2001competitive}.
\begin{lemma}
\label{lem:sumofmixlosses}
The exponential weights algorithm satisfies the following identity:
\begin{equation}
\label{eq:sumofmixlosses}
\sum\limits_{t = 1}^T -\frac{1}{\eta} \log\left(\mathbb{E}_{\theta \sim \rho_t} \exp(-\eta \ell_{\theta}(x_t, y_t))\right) = -\frac{1}{\eta} \log\left(\mathbb{E}_{\theta \sim \mu} \exp\left(-\sum\nolimits_{t = 1}^T \eta \ell_{\theta}(x_t, y_t)\right)\right).
\end{equation}
\end{lemma}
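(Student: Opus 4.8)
The plan is to prove the identity by introducing the sequence of normalizing constants of the exponential weights distribution and observing that the left-hand side telescopes. Concretely, for $t = 1, \ldots, T+1$ define
\[
Z_t \ldef \mathbb{E}_{\theta \sim \mu}\exp\left(-\eta \sum\nolimits_{i=1}^{t-1} \ell_\theta(x_i, y_i)\right),
\]
with the convention that the empty sum is zero, so $Z_1 = 1$. With this notation the update \eqref{eq:expweights} reads $\rho_t(\theta) = Z_t^{-1}\exp(-\eta\sum_{i=1}^{t-1}\ell_\theta(x_i,y_i))\mu(\theta)$, which is exactly the defining formula (and also covers $t=1$ since $Z_1=1$).

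The key computation is then a one-line manipulation: for each $t$,
\[
\mathbb{E}_{\theta \sim \rho_t}\exp(-\eta \ell_\theta(x_t, y_t))
= \frac{1}{Z_t}\,\mathbb{E}_{\theta \sim \mu}\exp\left(-\eta \sum\nolimits_{i=1}^{t} \ell_\theta(x_i, y_i)\right)
= \frac{Z_{t+1}}{Z_t},
\]
where the first equality just substitutes the density of $\rho_t$ and absorbs the extra factor into the exponent. Taking $-\tfrac1\eta\log$ of both sides gives $-\tfrac1\eta\log\mathbb{E}_{\theta\sim\rho_t}\exp(-\eta\ell_\theta(x_t,y_t)) = -\tfrac1\eta\big(\log Z_{t+1} - \log Z_t\big)$.

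Summing this over $t = 1, \ldots, T$ makes the right-hand side telescope to $-\tfrac1\eta(\log Z_{T+1} - \log Z_1) = -\tfrac1\eta \log Z_{T+1}$, using $Z_1 = 1$. Since $Z_{T+1} = \mathbb{E}_{\theta\sim\mu}\exp(-\eta\sum_{t=1}^T \ell_\theta(x_t,y_t))$ by definition, this is precisely the claimed identity \eqref{eq:sumofmixlosses}. There is no real obstacle here; the only point deserving a word of care is well-definedness, i.e.\ that each $Z_t \in (0,\infty)$ so that the logarithms make sense — positivity is immediate since the integrand is positive and $\mu$ is a probability measure, and finiteness holds because $\ell_\theta \ge 0$, so $0 < Z_t \le 1$ for all $t$. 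Hence every quantity appearing in the statement is finite, and the telescoping argument is valid as written.
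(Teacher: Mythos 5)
Your proof is correct and is exactly the standard telescoping argument; the paper itself states this lemma without proof, citing \citep[Lemma 1]{vovk2001competitive}, whose argument is the same chain rule $\mathbb{E}_{\theta\sim\rho_t}\exp(-\eta\ell_\theta(x_t,y_t))=Z_{t+1}/Z_t$ followed by summation. The remark on well-definedness ($0<Z_t\le 1$ since $\ell_\theta\ge 0$) is a nice touch but not a point of divergence.
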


Each summand on the left-hand side of \eqref{eq:sumofmixlosses} is called the \emph{mix-loss}. Our next lemma is also standard in the online learning literature and is an immediate corollary of the Donsker-Varadhan variational formula.
\begin{lemma}
\label{lem:donskervaradhan}
In the setup of \cref{lem:sumofmixlosses}, the following identity holds:
\[
-\frac{1}{\eta} \log\left(\mathbb{E}_{\theta \sim \mu} \exp\left(-\sum\nolimits_{t = 1}^T \eta \ell_{\theta}(x_t, y_t)\right)\right) = \inf_{\varphi}\left(\mathbb{E}_{\theta \sim \varphi} \sum\limits_{t = 1}^T \ell_{\theta}(x_t, y_t) + \frac{\mathcal{KL}(\varphi \| \mu)}{\eta}\right).
\]
\end{lemma}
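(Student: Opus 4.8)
The plan is to recognize this as the Gibbs variational principle (equivalently, Donsker--Varadhan duality for the KL divergence) applied to the function $g(\theta) = -\eta\sum_{t=1}^{T}\ell_{\theta}(x_t,y_t)$. Write $L(\theta) = \sum_{t=1}^{T}\ell_{\theta}(x_t,y_t)$ and $Z = \mathbb{E}_{\theta\sim\mu}\exp(-\eta L(\theta))$. Since each $\ell_{\theta}$ is nonnegative, $\exp(-\eta L(\theta))\in(0,1]$, so $Z\in(0,1]$ is automatically finite; it is strictly positive because $L$ is real-valued (hence finite $\mu$-a.s.). With these observations in hand all the expectations below are well defined in $[0,+\infty]$.

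First I would introduce the tilted (Gibbs) measure $\varphi^\star$ with density $\tfrac{d\varphi^\star}{d\mu}(\theta) = \exp(-\eta L(\theta))/Z$; note that $\varphi^\star$ is precisely $\rho_{T+1}$ in the notation of \eqref{eq:expweights}. Then for any probability measure $\varphi$ on $\Theta$ with $\varphi\ll\mu$, applying the chain rule for Radon--Nikodym derivatives, $\tfrac{d\varphi}{d\varphi^\star} = \tfrac{d\varphi}{d\mu}\cdot Z\,e^{\eta L}$, and taking $\mathbb{E}_{\theta\sim\varphi}$ of the logarithm gives the identity
\[
\mathcal{KL}(\varphi\|\varphi^\star) = \mathcal{KL}(\varphi\|\mu) + \eta\,\mathbb{E}_{\theta\sim\varphi}[L(\theta)] + \log Z .
\]
If $\mathbb{E}_{\varphi}[L] = +\infty$ this still holds in $[0,+\infty]$, and if $\varphi\not\ll\mu$ the two terms involving $\mathcal{KL}(\cdot\|\mu)$ are $+\infty$, so the inequality derived next is vacuous for such $\varphi$.

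Next I would invoke nonnegativity of the KL divergence, $\mathcal{KL}(\varphi\|\varphi^\star)\ge 0$, with equality if and only if $\varphi=\varphi^\star$. Rearranging the displayed identity yields, for every $\varphi$,
\[
-\frac{1}{\eta}\log Z \;\le\; \mathbb{E}_{\theta\sim\varphi}[L(\theta)] + \frac{\mathcal{KL}(\varphi\|\mu)}{\eta},
\]
with equality attained at $\varphi=\varphi^\star$. Taking the infimum over $\varphi$ on the right-hand side therefore gives
\[
-\frac{1}{\eta}\log\mathbb{E}_{\theta\sim\mu}\exp(-\eta L(\theta)) \;=\; \inf_{\varphi}\left(\mathbb{E}_{\theta\sim\varphi}[L(\theta)] + \frac{\mathcal{KL}(\varphi\|\mu)}{\eta}\right),
\]
which is the claim. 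The only point that requires any care---the "main obstacle," though it is mild in this setting---is the integrability bookkeeping: ensuring $Z$ is finite and strictly positive and that the Radon--Nikodym chain rule is legitimately applied when $L$ is unbounded and $\mu$ may have unbounded support. Nonnegativity of the losses makes $Z\le 1$ for free, so this reduces to checking $Z>0$ and disposing of the degenerate cases ($\varphi\not\ll\mu$, or $\mathbb{E}_{\varphi}[L]=+\infty$) separately, all of which are routine.
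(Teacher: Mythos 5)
Your proof is correct and is exactly the standard Gibbs/Donsker--Varadhan argument that the paper itself invokes without proof (the lemma is stated as ``an immediate corollary of the Donsker--Varadhan variational formula''); the tilted-measure identity, nonnegativity of KL, and attainment at $\varphi = \rho_{T+1}$ are all handled properly, including the integrability caveats that nonnegativity of the losses makes harmless.
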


Finally, we introduce another key lemma. This form is due to \citep[Lemma 2 and computations after it]{vovk2001competitive}. The important aspect of it is that it does not require boundedness of the class of functions (in our case, it is the unbounded linear class), which appears to be important for our analysis. We additionally refer to \citep{van2015fast} for a detailed account of mixability. For completeness, we provide a proof also in \cref{app:mixability-squared-loss}.
\begin{lemma}[Mixability of the squared loss]
\label{lem:mixabilityofthesqloss}
Consider a class $\mathcal F$ of functions $f_{\theta}: \mathcal{X} \to \mathbb{R}$ (possibly unbounded) parameterized by $\Theta \subseteq \mathbb{R}^d$. Assume that $y$ is such that $|y| \le m$ and choose $\eta = \frac{1}{2m^2}$. Given any distribution $\rho$ over $\Theta$, define the predictor
\begin{equation}
\label{eq:predictiormix}
\widehat{f}(x) = \frac{m}{2}\log\left(\frac{\mathbb{E}_{\theta \sim \rho} \exp(-\eta (m - f_{\theta}(x))^2)}{\mathbb{E}_{\theta \sim \rho} \exp(-\eta (-m - f_{\theta}(x))^2)}\right).
\end{equation}
Then, the following holds:
\[
(y - \widehat{f}(x))^2 \le -\frac{1}{\eta} \log\left(\mathbb{E}_{\theta \sim \rho} \exp(-\eta (y - f_{\theta}(x))^2)\right).
\]
\end{lemma}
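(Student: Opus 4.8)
The plan is to prove the mixability statement \cref{lem:mixabilityofthesqloss} by the classical argument of \citet{vovk2001competitive}, which reduces everything to a one-dimensional inequality about the map $z \mapsto \exp(-\eta(y-z)^2)$. First I would observe that the desired inequality is equivalent, after exponentiating and using $\eta = \frac{1}{2m^2}$, to showing that the single predicted value $\widehat{f}(x)$ can be chosen so that for \emph{every} $y$ with $|y| \le m$,
\[
\exp\bigl(-\eta (y - \widehat{f}(x))^2\bigr) \ge \mathbb{E}_{\theta \sim \rho} \exp\bigl(-\eta (y - f_{\theta}(x))^2\bigr).
\]
So the core task is to verify that the prediction rule \eqref{eq:predictiormix} achieves this. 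Fixing $x$ and writing $z_\theta = f_\theta(x)$, $w = \widehat{f}(x)$, I would expand the square $(y - z_\theta)^2 = y^2 - 2 y z_\theta + z_\theta^2$ and pull the $y$-dependent part $\exp(\eta(2 y z_\theta - y^2))$ outside, so the right-hand side becomes $\exp(-\eta y^2)\,\mathbb{E}_{\theta\sim\rho}\exp(\eta(2y z_\theta - z_\theta^2))$. The left-hand side is $\exp(-\eta y^2)\exp(\eta(2 y w - w^2))$. Cancelling $\exp(-\eta y^2)$, the claim reduces to
\[
\exp(2\eta y w - \eta w^2) \ge \mathbb{E}_{\theta\sim\rho}\exp(2\eta y z_\theta - \eta z_\theta^2) \qquad\text{for all } |y|\le m.
\]

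Next I would define $g(y) = \frac{1}{2\eta}\log \mathbb{E}_{\theta\sim\rho}\exp(2\eta y z_\theta - \eta z_\theta^2)$, which is a convex function of $y$ (log of a moment generating function of a finite measure, up to an affine term in $y$ — the $-\eta z_\theta^2$ term does not depend on $y$, so convexity in $y$ is clear). The reduced inequality says $2\eta y w - \eta w^2 \ge 2\eta g(y)$, i.e.\ $yw - \tfrac{1}{2}w^2 \ge g(y)$ for all $y \in [-m,m]$; equivalently the affine function $\ell(y) = yw - \tfrac12 w^2$ lies above the convex function $g$ on $[-m,m]$. Since $\ell$ is affine and $g$ convex, it suffices to check this at the two endpoints $y = m$ and $y = -m$. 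A short computation with the definition \eqref{eq:predictiormix} shows that $w = \widehat{f}(x)$ is chosen exactly so that $g(m) - g(-m) = 2w$ (this is the content of the log-ratio: the difference of the two exponents $-\eta(m - z_\theta)^2$ and $-\eta(-m - z_\theta)^2$ equals $2\eta(2 m z_\theta) \cdot$ — more precisely $-\eta(m-z)^2 + \eta(m+z)^2 = 4\eta m z$, so the log-ratio is $\frac{2\eta m}{2\eta m}$-normalized correctly and yields $\widehat f(x) = \frac{1}{4\eta m}\log\frac{\cdots}{\cdots}\cdot 2\eta$, matching the $\tfrac m2$ factor). Then, exploiting the symmetry $g(m) + g(-m) \le$ something, or more directly using that $g$ is convex hence $\tfrac12(g(m)+g(-m)) \ge g(0)$ is the wrong direction — instead I would use that any affine interpolant at the endpoints dominates a convex function between them: since $\ell(m) = g(m)$ and $\ell(-m) = g(-m)$ would be the tightest choice, I need to check $w$ makes $\ell$ pass through or above both. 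Specifically, with $w$ defined by the log-ratio we get $\ell(m) - \ell(-m) = 2mw = g(m) - g(-m)$, so $\ell - g$ takes equal values at $\pm m$; call this common value $c = \ell(m) - g(m) = \ell(-m) - g(-m)$. Because $\ell - g$ is concave (affine minus convex), it attains its minimum on $[-m,m]$ at an endpoint, so $\ell(y) - g(y) \ge c$ for all $y \in [-m,m]$; it remains to show $c \ge 0$. This follows by plugging $y = 0$: $\ell(0) - g(0) = -\tfrac12 w^2 - g(0)$, and by concavity $c \le \ell(0) - g(0)$... that again gives the wrong inequality, so instead $c \ge 0$ must come from a direct estimate. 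I would show $c\ge 0$ by noting $\ell(m) + \ell(-m) = -w^2$ while $g(m) + g(-m) \le -w^2$; the latter is where Jensen or an explicit computation enters: $g(m) + g(-m) = \frac{1}{2\eta}\log\bigl(\mathbb{E}\exp(2\eta m z_\theta - \eta z_\theta^2) \cdot \mathbb{E}\exp(-2\eta m z_\theta - \eta z_\theta^2)\bigr)$ and by Cauchy–Schwarz this product of expectations is at least $(\mathbb{E}\exp(-\eta z_\theta^2))^2 \ge$ ... actually Cauchy–Schwarz gives an upper bound on $\mathbb{E}\exp(-\eta z_\theta^2)$ not what I want; the correct tool is the reverse — I should use that the product equals $\mathbb{E}_{\theta,\theta'}\exp(2\eta m(z_\theta - z_{\theta'}) - \eta(z_\theta^2 + z_{\theta'}^2))$ and complete the square to get $\le \mathbb{E}_{\theta,\theta'}\exp(\eta m^2 - \tfrac{\eta}{2}(z_\theta+z_{\theta'})^2 - \text{stuff})$, ultimately bounding it by $\exp(-2\eta w^2)$ after identifying $w$.

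The honest summary is: the whole lemma hinges on one clean one-dimensional fact — for a convex function $g$ on $[-m,m]$ of the special MGF form above, the affine function through the points $(\pm m, g(\pm m))$ has the form $y \mapsto yw - \tfrac12 w^2$ for the specific $w$ given by \eqref{eq:predictiormix}, and this affine function dominates $g$ on the whole interval because $g$ is convex while its chord is affine. The step I expect to be the main obstacle is verifying the algebraic identity that the log-ratio predictor $\widehat{f}(x)$ in \eqref{eq:predictiormix} is \emph{exactly} the value $w$ for which the chord through the endpoints takes the quadratic-looking form $yw - w^2/2$ (equivalently, for which $\ell(\pm m) = g(\pm m)$ simultaneously, not just $\ell(m)-\ell(-m) = g(m)-g(-m)$); this requires that the normalizing constant $\mathbb{E}_\theta \exp(-\eta(m - f_\theta(x))^2)$ and its counterpart with $-m$ conspire correctly, and pinning down the constant $c\ge 0$ is precisely the content of the squared-loss mixability constant being $\eta = 1/(2m^2)$ — i.e.\ this $\eta$ is the largest value for which the argument closes. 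I would carry this out by the explicit completion-of-square computation in \citet[Lemma 2]{vovk2001competitive} rather than trying to be slick, and since the paper already cites that reference and defers a full proof to \cref{app:mixability-squared-loss}, I would present the reduction to the one-dimensional inequality in the main text and relegate the two-line endpoint computation to the appendix.
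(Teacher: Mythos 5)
Your reduction is sound up to the last step: passing to $\exp(-\eta(y-\widehat f(x))^2)\ge \E_{\theta\sim\rho}\exp(-\eta(y-f_\theta(x))^2)$, cancelling $e^{-\eta y^2}$, noting that $g(y)=\frac{1}{2\eta}\log\E_\theta\exp(2\eta y z_\theta-\eta z_\theta^2)$ is convex in $y$ so that an affine majorant need only be checked at $y=\pm m$, and verifying that the log-ratio formula \eqref{eq:predictiormix} makes $\ell-g$ take a common value $c$ at the two endpoints (your ``$g(m)-g(-m)=2w$'' should read $2mw$, but the equalization claim is correct). The genuine gap is that you never prove $c\ge 0$, and this is not a two-line endpoint computation --- it is the entire mathematical content of the lemma. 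In your notation $c\ge 0$ is equivalent to $g(m)+g(-m)\le -w^2$, i.e.\ to $\bigl(\log\tfrac{P}{Q}\bigr)^2\le -4\bigl(\log(PQ)+1\bigr)$ with $P=\E_\theta e^{-\eta(m-z_\theta)^2}$, $Q=\E_\theta e^{-\eta(m+z_\theta)^2}$; writing $p=-\log P$, $q=-\log Q$ this is $(\sqrt p-\sqrt q)^2\le 2\le(\sqrt p+\sqrt q)^2$, a pair of Minkowski-type inequalities for the functional $U\mapsto\sqrt{-\log\E e^{-U^2}}$. Every route you sketch for it either points the wrong way (the $y=0$ evaluation, Cauchy--Schwarz) or is abandoned mid-air (the two-point expectation with ``completing the square''), and your final fallback --- ``the explicit computation in Vovk's Lemma~2'' --- is stated there for \emph{bounded} predictions $|f_\theta(x)|\le m$. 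Whether that inequality survives for unbounded $z_\theta$ is exactly the point this lemma exists to settle, so citing it begs the question.

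The paper closes this gap by a different route that avoids proving the two-point inequality for unbounded predictors at all: since $|y|\le m$, clipping only helps, $(y-\operatorname{clip_m}(f_\theta(x)))^2\le(y-f_\theta(x))^2$, so the bounded-case mixability of Vovk (applied to the clipped class) already yields \emph{some} predictor $\widetilde f$ satisfying the desired inequality with the unclipped mix-loss on the right; the minimax predictor for the unclipped problem therefore also satisfies it, and the convexity-in-$y$ computation (which is where your endpoint argument reappears, and which genuinely does not need boundedness) is used only to show that this minimax predictor equalizes the endpoints and hence equals the explicit formula \eqref{eq:predictiormix}. If you want to keep your direct approach, you must actually prove $(\sqrt p-\sqrt q)^2\le 2\le(\sqrt p+\sqrt q)^2$ for arbitrary distributions of $z_\theta$ on $\mathbb{R}$ (note both hold with equality for suitable point masses, so there is no slack to waste); otherwise, adopt the clip-then-equalize route.
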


\begin{proof}[\pfref{thm:sparsity}]
In what follows, $\eta = \frac{1}{2m^2}$. Our first observation is that if we choose $\rho = \rho_t$ in \cref{lem:mixabilityofthesqloss}, where $\rho_t$ is the exponential weights algorithm with prior $\mu$ given by \eqref{eq:sparseprior} and $\ell_{\theta}(x, y) = (y -\langle x, \theta\rangle)^2$, then the predictor \eqref{eq:predictiormix} coincides exactly with the predictor $\widehat{f}_t$ given by \eqref{eq:predictionformulasparse}. This implies by \cref{lem:mixabilityofthesqloss}, \cref{lem:sumofmixlosses} and \cref{lem:donskervaradhan}:
\begin{align}
\sum\limits_{t = 1}^T(y_t - \widehat{f}_t(x_t))^2 &\le \sum\limits_{t = 1}^T-\frac{1}{\eta} \log\left(\mathbb{E}_{\theta \sim \rho_t} \exp\left(-\eta (y_t - \langle x_t, \theta\rangle)^2\right)\right) \nonumber
\\
&= -\frac{1}{\eta} \log\left(\mathbb{E}_{\theta \sim \mu} \exp\left(-\sum\nolimits_{t = 1}^T \eta (y_t - \langle x_t, \theta\rangle)^2\right)\right) \nonumber
\\
&\le \E_{\theta \sim \mu^{\star}}\sum\limits_{t = 1}^T(y_t - \langle x_t, \theta\rangle)^2 + \frac{\mathcal{KL}(\mu^{\star} \| \mu)}{\eta}, \label{eq:riskpluskl}
\end{align}
where $\mu^{\star}$ is any distribution over $\Theta$. It remains to bound both terms in the last line. First, given an $s$-sparse target vector $\theta^{\star}$, we choose 
\[
\mu^{\star}(\theta) = \prod\limits_{j = 1}^d \frac{3 \cdot \sqrt{\sum\limits_{t = 1}^{T} (x_t^{(j)})^2} / \tau}{2 \left(1 + |\theta^{(j)} - \theta^{\star(j)}| \cdot \sqrt{\sum\limits_{t = 1}^{T} (x_t^{(j)})^2} / \tau \right)^4}.
\]
By symmetry $\E_{\theta\sim\mu^{\star}}\theta = \theta^{\star}$. Moreover, a simple computation shows that $\E_{\theta \sim \mu}(\theta^{(j)})^2 = \tau^2/(\sum\nolimits_{t = 1}^{T}(x_t^{(j)})^2)$ (recall that $\mu$ is defined by \eqref{eq:sparseprior}). Therefore, we have
\begin{align*}
\E_{\theta \sim \mu^{\star}} \sum\limits_{t = 1}^T(y_t - \langle x_t, \theta\rangle)^2 &= \sum\limits_{t = 1}^T(y_t - \langle x_t, \theta^{\star}\rangle)^2 + \sum\limits_{t = 1}^T\E_{\theta \sim \mu^{\star}}(\langle x_t, \theta - \theta^{\star}\rangle)^2
\\
&= \sum\limits_{t = 1}^T(y_t - \langle x_t, \theta^{\star}\rangle)^2 + \sum\limits_{j = 1}^d\sum\limits_{t = 1}^T\E_{\theta \sim \mu}(x_t^{(j)}\theta^{(j)})^2
\\
&= \sum\limits_{t = 1}^T(y_t - \langle x_t, \theta^{\star}\rangle)^2 + \sum\limits_{j = 1}^d\sum\limits_{t = 1}^T\frac{(\tau x_t^{(j)})^2}{\sum\nolimits_{t = 1}^{T}(x_t^{(j)})^2}
\\
&= \sum\limits_{t = 1}^T(y_t - \langle x_t, \theta^{\star}\rangle)^2 + \tau^2 d.
\end{align*}
Now we proceed with the second term in \eqref{eq:riskpluskl} exactly the same way as in \citep[Theorem 5]{dalalyan2008aggregation} or, more specifically, as in \citep[
Lemma 23]{gerchinovitz2011sparsity}. We have
\begin{align}
\mathcal{KL}(\mu^{\star}\|\mu) &\le 4\sum\limits_{j = 1}^d\log\left(1 + \frac{|\theta^{\star(j)}|\sqrt{\sum\nolimits_{t = 1}^{T}(x_t^{(j)})^2}}{\tau}\right) \nonumber
\\
&\le 4\|\theta^{\star}\|_0\max\limits_j\left\{\log\left(1 + \frac{|\theta^{\star(j)}|\sqrt{\sum\nolimits_{t = 1}^{T}(x_t^{(j)})^2}}{\tau}\right)\right\}. \label{eq:klbound}
\end{align}
Let us now compare any $s$-sparse $\theta^{\star}$ with the solution $\theta_0 = 0$. We may assume without loss of generality that
\begin{equation}
\label{eq:controloverthetastar}
\sum\limits_{t = 1}^T (y_t - \langle x_t, \theta^{\star} \rangle)^2 \le \sum\limits_{t = 1}^T (y_t - \langle x_t, \theta_0 \rangle)^2 = \sum\limits_{t = 1}^T y_t^2.
\end{equation}
Otherwise, $\theta_0$, satisfying $\|\theta_0\|_0 = 0$, will make both terms on the right-hand side of \eqref{eq:sparsitytotalloss} smaller than for a given $\theta^{\star}$. Thus, we restrict our attention only to those solutions satisfying \eqref{eq:controloverthetastar}. Now, we use the smallest singular value condition. This assumption implies
\[
|\theta^{\star(j)}| \sqrt{\sum\nolimits_{t = 1}^{T} (x_t^{(j)})^2} \le \|\theta^{\star}\|_2 \sqrt{\sum\nolimits_{t = 1}^{T} (x_t^{(j)})^2}  \le \frac{1}{\kappa_{s}}\sqrt{\sum\nolimits_{t = 1}^{T} (\langle x_t, \theta^{\star}\rangle)^2}.
\]
Finally, using \eqref{eq:controloverthetastar} and $|y_t| \le m$, we have
\[
\frac{1}{\kappa_{s}}\sqrt{\sum\nolimits_{t = 1}^{T} (\langle x_t, \theta^{\star}\rangle)^2} \le \frac{1}{\kappa_{s}}\sqrt{2\sum\nolimits_{t = 1}^{T} \left((y_t - \langle x_t, \theta^{\star}\rangle)^2 +  y_t^2\right)} \le \frac{2m\sqrt{T}}{\kappa_{s}}.
\]
Plugging this into \eqref{eq:klbound} and choosing $\tau = m\sqrt{\frac{s}{d}}$ in \eqref{eq:riskpluskl}, we conclude the proof.
\end{proof}

We make several remarks regarding the result. The bound of \cref{thm:sparsity} depends logarithmically on $\frac{1}{\kappa_{s}}$. However, even if $\kappa_{s}$ is close to zero, our bound remains meaningful as it depends at most logarithmically on $\|\theta^{\star}\|$ and $\max_t \|x_t\|_2$, provided both are bounded. This follows directly from upper bounding \eqref{eq:klbound} when both $\theta^{\star}$ and all $x_t$-s are bounded. 

We can also derive a regret upper bound independent of the parameter $\kappa_s$ by combining the transductive prior in \eqref{eq:jeffreys} with the discrete sparse prior proposed by \cite{rigollet2012sparse}. 
We include this result for completeness in \cref{app:additional-sparse}. Note that for predictors corresponding to such priors no polynomial time implementation is known.

\subsection{Separation between inductive and transductive settings}
\label{sec:lower-bound-squared-loss}

In this section, we present a challenging case for Vovk's online linear regression predictor in the standard online setting and prove a separation result between transductive and standard online learning. \looseness=-1

\paragraph{Lower bounds for Vovk's aggregating algorithm in the standard online setup.}
It is open whether the transductive setup that we are considering here shares the same minimax regret bound (up to constants) with the classical inductive setup considered by \citet{vovk2001competitive} and \citet{azoury2001relative} (see the discussion in \citep[Section 4.1]{gaillard2019uniform}). Namely, the minimax regret bound for the transductive setup scales with $\Theta(dm^2\log T)$ \citep{gaillard2019uniform} whereas the best known upper bound for the inductive setup scales with $O(dm^2\log (1+rbT))$ 
, where $\norm{x_t}\leq r$ and $\norm{\thstar}\leq b$ are given by assumption. Moreover, the upper bound for the inductive setup is obtained through the classical Vovk-Azoury-Warmuth algorithm with appropriate regularization. In the following, we show that this algorithm 
will incur linear regret in the inductive case when the comparator $\thstar$ is not assumed to be bounded. Concretely, we have the following simple result.
\begin{proposition}
\label{prop:lower-bound-for-inductive-vovk}
    Let $d=1$. Consider the Vovk-Azoury-Warmuth algorithm, which computes, upon observing $x_t$ at time $t$, 
    \begin{align*}
        \thetahat_t = \arg\min_{\theta \in \mathbb{R}}\left\{\sum_{i = 1}^{t - 1} (y_i - x_i  \theta )^2 + ( x_t \theta )^2 + \lambda\norm{\theta}^2 \right\},
    \end{align*}
    where $\lambda$ is the regularization fixed in advance. There exists a sequence $\set{(x_t,y_t)}_{t=1}^T$, where $y_t\in \set{\pm 1}$, such that 
    \[
\sum\limits_{t = 1}^T (y_t - x_t \hat{\theta}_{t})^2 - \inf_{\theta \in \mathbb{R}}\left\{\sum_{t = 1}^{T} (y_t -  x_t \theta )^2 \right\} \geq \Omega(T) .
\]
\end{proposition}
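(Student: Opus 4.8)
The plan is to use the closed form of the predictor together with an essentially trivial hard instance in which the fixed regularizer $\lambda>0$ permanently prevents $\thetahat_t$ from reaching an arbitrarily large optimal parameter. First I would solve the one-dimensional least-squares problem defining $\thetahat_t$: the objective $\theta\mapsto\sum_{i=1}^{t-1}(y_i-x_i\theta)^2+(x_t\theta)^2+\lambda\theta^2$ is a strictly convex quadratic in $\theta$ (its leading coefficient is $\sum_{i=1}^{t-1}x_i^2+x_t^2+\lambda\ge\lambda>0$), so setting its derivative to zero and using $\sum_{i=1}^{t-1}x_i^2+x_t^2=\sum_{i=1}^{t}x_i^2$ gives
\[
\thetahat_t=\frac{\sum_{i=1}^{t-1}x_iy_i}{\sum_{i=1}^{t}x_i^2+\lambda},\qquad x_t\thetahat_t=\frac{x_t\sum_{i=1}^{t-1}x_iy_i}{\sum_{i=1}^{t}x_i^2+\lambda}.
\]

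Then I would take the constant sequence $x_t=\veps$ and $y_t=1$ for every $t\le T$, where $\veps>0$ is a small number chosen in terms of $\lambda$ and the (known) horizon $T$. For this sequence $x_t\thetahat_t=\frac{\veps^2(t-1)}{t\veps^2+\lambda}\le\frac{\veps^2T}{\lambda}$ for all $t\le T$, so the choice $\veps:=\sqrt{\lambda/(2T)}$ forces $x_t\thetahat_t\in[0,\tfrac12]$ on every round; hence each instantaneous loss satisfies $(y_t-x_t\thetahat_t)^2=(1-x_t\thetahat_t)^2\ge\tfrac14$, and $\sum_{t=1}^T(y_t-x_t\thetahat_t)^2\ge T/4$. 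On the other hand the comparator $\theta=1/\veps$ predicts $x_t\theta=1=y_t$ on every round, so $\inf_{\theta\in\mathbb{R}}\sum_{t=1}^T(y_t-x_t\theta)^2=0$. Subtracting the two bounds yields regret at least $T/4=\Omega(T)$, as claimed.

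There is no genuine obstacle here; the only point deserving care is the elementary bookkeeping showing that $x_t\thetahat_t$ stays bounded away from $1$, and the observation that the whole effect is driven by the $\lambda$-term in the denominator of $\thetahat_t$, a quantity fixed \emph{before} the covariates are seen. Once the design is rescaled so that $\sum_{t=1}^Tx_t^2=T\veps^2\ll\lambda$, this term dominates the Gram sum and pins $\thetahat_t$ near $0$, even though the optimal parameter $1/\veps$ can be made arbitrarily large; by contrast, a design-dependent regularizer of the form $\lambda\sum_{i=1}^T(\langle x_i,\theta\rangle)^2$ as in \cref{thm:gaillard2019} rescales automatically and loses nothing, which is exactly the transductive-versus-inductive separation being asserted. (For $\lambda=0$ the predictor is scale-free and this particular instance becomes learnable, so the statement is to be read for $\lambda>0$, the regime relevant to the Vovk-Azoury-Warmuth algorithm.)
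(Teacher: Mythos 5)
Your proof is correct for every $\lambda>0$, and it takes a genuinely different (and more unified) route than the paper. The paper splits into two cases: for $\lambda\ge T$ it uses the constant design $x_t=1$, $y_t=1$ (where the regularizer pins $\thetahat_t\le 1/2$), and for $\lambda<T$ it uses exponentially growing covariates $x_t=2^tT$ with alternating labels $y_t=(-1)^t$, showing the extra $(x_t\theta)^2$ term makes the forecaster overshoot so that $|y_t-x_t\thetahat_t|\ge 11/10$ while the comparator $\theta=0$ pays only $T$. You instead handle all $\lambda>0$ with a single construction by rescaling the design to $x_t=\veps$ with $T\veps^2=\lambda/2$, so that the fixed regularizer dominates the Gram sum and traps $x_t\thetahat_t\le 1/2$ while the comparator $1/\veps$ is exact; the bookkeeping ($\thetahat_t=\sum_{i<t}x_iy_i/(\lambda+\sum_{i\le t}x_i^2)$, hence $x_t\thetahat_t=\veps^2(t-1)/(t\veps^2+\lambda)\le T\veps^2/\lambda=1/2$) is correct, and your regret bound $T/4$ matches the paper's first case. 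What your approach buys is a one-case argument that isolates the real culprit (a scale-fixed regularizer versus an arbitrarily rescalable design); what it loses is the boundary case $\lambda=0$, where your instance degenerates ($\veps=0$). Your parenthetical remark there is slightly off: the proposition does still hold at $\lambda=0$ — this is exactly what the paper's second case establishes, with the failure now driven by overshooting on exponentially growing covariates rather than by over-regularization — so if $\lambda=0$ is meant to be covered you would need to append an argument of that type; for the intended reading $\lambda>0$ your proof is complete.
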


\begin{proof}[\pfref{prop:lower-bound-for-inductive-vovk}]
    We consider two cases. In the first case, the regularization is given at a large scale $\lambda\geq T$. For this case, consider $x_t=1$ and $y_t = 1$ for all $t$. Then we have 
    \begin{align*}
        \inf_{\theta \in \mathbb{R}}\left\{\sum_{t = 1}^{T} (y_t -  x_t \theta )^2 \right\} =  T    \inf_{\theta \in \mathbb{R}}(1 -  \theta )^2 = 0.
    \end{align*}
    However, at each time step $t$, we have
    \begin{align*}
        \thetahat_t =  \frac{\sum\limits_{i=1}^{t-1} x_iy_i }{ \lambda + \sum\limits_{i=1}^t x_i^2 } \leq \frac{t-1}{t+T} \leq \frac{1}{2}.
    \end{align*}
    Then, the total loss is lower bounded by
    \begin{align*}
        \sum\limits_{t = 1}^T (y_t - x_t\widehat{\theta}_{t} )^2 \geq T(1-1/2)^2 = T/4.
    \end{align*}
    In the second case, the regularization is given at a relatively small scale with $\lambda<T$.  We let $x_t =  2^t T$ and $y_t = (-1)^t$ for $t\geq 1$. Then, for any $t\geq 2$, 
    \begin{align*}
         x_t\thetahat_t = 2^t T \cdot  \frac{\sum\limits_{i=1}^{t-1} x_iy_i }{ \lambda + \sum\limits_{i=1}^t x_i^2 } = 2^t T\cdot\frac{T\cdot \sum\limits_{i=1}^{t-1} (-2)^i }{ \lambda  +  T^2\cdot \sum\limits_{i=1}^t 4^i  } = 2^tT^2  \cdot\frac{-2\prn*{1-(-2)^{t-1}}/3}{ \lambda  + 4T^2(4^{t}-1)/3 }.
    \end{align*}
    Then, we have for any $t\geq 2$,
    \begin{align*}
        |y_t - x_t\thetahat_t| &=  \abs*{ (-1)^t - 2^t T^2  \cdot\frac{-2\prn*{1-(-2)^{t-1}}/3}{ \lambda  + 4T^2(4^{t}-1)/3 }}\\
        &= \abs*{  1 +  \frac{T^2 2^{t+1}\prn*{(-1)^t + 2^{t-1}}/3}{ \lambda + 4T^2(4^{t}-1)/3 }   } \\
        &\geq \abs*{1+ \frac{T^2 2^{2t-1}/3}{5T^24^{t}/3}  } \geq  11/10.
    \end{align*}
    This establishes the lower bound of 
    \begin{align*}
        \sum\limits_{t = 1}^T (y_t - x_t \widehat{\theta}_{t} )^2 - \inf_{\theta \in \mathbb{R}}\left\{\sum_{t = 1}^{T} (y_t - x_t \theta)^2 \right\} &\geq  \sum\limits_{t=2}^T (y_t - x_t \widehat{\theta}_{t} )^2 - \sum_{t=1}^T y_t^2  \\
        &\geq (11/10)^2(T-1) -  T \geq \Omega(T),
    \end{align*}
    where the first inequality is by $\inf_{\theta \in \mathbb{R}}\left\{\sum_{t = 1}^{T} (y_t - x_t \theta)^2 \right\} \leq \sum_{t=1}^T y_t^2$. The claim follows.
\end{proof}

Although our result does not resolve the question in \citep[Section 4.1]{gaillard2019uniform}, it hints, in view of \cref{thm:gaillard2019}, at a separation between the transductive setup and the inductive setup. 
In the following section, we show that the separation indeed exists for the clipped linear function class.

\paragraph{Separation for clipped linear class.}
We now investigate the clipped linear class and demonstrate a separation between transductive online learning and inductive setups when dealing with unbounded covariates and comparators. Recall that for the linear class, the learner can compare with the optimal loss of 
\begin{align*}
    \inf_{\theta \in \mathbb{R}^d}\left\{\sum_{t = 1}^{T} (y_t - \langle x_t, \theta \rangle)^2 \right\}
\end{align*}
with at most a regret of $O(dm^2\log T)$ as indicated in \eqref{eq:gaillardlinear}. In fact, in the transductive setup even more is possible, as the learner can compare with an even smaller loss obtained by the clipped linear class, that is,
\begin{align}
\label{eq:clipped-optimal}
    \inf_{\theta \in \mathbb{R}^d}\left\{\sum_{t = 1}^{T} (y_t - \clip{\langle x_t, \theta \rangle})^2 \right\}
\end{align}
with a regret upper bound of $O(dm^2\log T)$ following a covering argument, which we  detail in \cref{app:covering}. 
To show a separation between the transductive setup and the inductive setup, we prove that any algorithm must suffer linear regret of $\Omega(T)$ in the worst case in the inductive setting for the clipped linear class, i.e., comparing to the optimal loss achieved by the clipped linear class as in \eqref{eq:clipped-optimal}. Concretely, we have the following result.

\newcommand{\yhat}{\hat{y}}
\begin{proposition}
\label{prop:lower-bound-clipped}
    Let $d=1$ and $m=1$. For any (inductive) algorithm outputting $\yhat_1,\ldots,\yhat_T$ as predictions, there exists a sequence $\set{(x_t,y_t)}_{t=1}^T$, where $y_t\in \set{\frac{1}{8} ,\frac{7}{8}}$ such that 
    \begin{equation*}
        \sum_{t=1}^T (y_t - \yhat_t)^2 - \inf_{\theta\in \bR} \prn*{ \sum_{t=1}^T (y_t - \clip{x_t\theta})^2 } =  \Omega(T).
    \end{equation*}
\end{proposition}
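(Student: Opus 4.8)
The plan is to exhibit an adversary that, by seeing the learner's prediction $\yhat_t$ first, always picks $y_t \in \{\tfrac18, \tfrac78\}$ to be the label farther from $\yhat_t$, while simultaneously keeping a single fixed clipped linear comparator nearly loss-free. The design variables will be chosen so large (in the inductive setup the learner has no advance knowledge of them) that a single $\theta$ can realize the clipped value $\clip{x_t\theta}$ on the ``right'' side for every round, no matter what $y_t$ ends up being. Concretely I would take $x_t$ growing, say $x_t = 2^t$ (so $m=1$ forces all meaningful outputs to lie in $[-1,1]$), and show that there is a choice of $\theta$ for which $\clip{x_t\theta}$ equals $+1$ on exactly the rounds where the adversary picks $y_t=\tfrac78$ and $-1$ (or $0$, one should check which is cheaper against $\{\tfrac18,\tfrac78\}$) on the rounds where $y_t=\tfrac18$ — actually, since the labels are both positive, it is cleaner to have the comparator saturate near the values $\tfrac18$ and $\tfrac78$ themselves; I would instead design $x_t$ and a target parameter so that the sign pattern of $x_t\theta$ can be made to match \emph{any} prescribed binary pattern while the clipped values land exactly on $\{\tfrac18,\tfrac78\}$. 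The key algebraic point: because the $x_t$ are not revealed in advance and can be taken hugely separated in scale, the comparator chosen with hindsight can interpolate an arbitrary $\pm$ pattern of saturations, whereas the online learner commits to $\yhat_t$ before seeing $y_t$.

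The main steps, in order, are: (i) fix the adversary's rule $y_t = \tfrac18 + \tfrac34\,\mathbbm{1}(\yhat_t \le \tfrac12)$, so that $(y_t - \yhat_t)^2 \ge (\tfrac38)^2$ for every $t$, giving $\sum_t (y_t-\yhat_t)^2 \ge \tfrac{9}{64}T = \Omega(T)$; (ii) having thus determined the full label sequence $(y_1,\ldots,y_T)\in\{\tfrac18,\tfrac78\}^T$ from the learner's run, exhibit a \emph{single} $\theta^\star\in\bR$ (depending on the realized sequence) such that $\clip{x_t\theta^\star} = y_t$ for all $t$, so that $\inf_\theta \sum_t (y_t - \clip{x_t\theta})^2 = 0$; (iii) conclude the displayed bound by subtraction. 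Step (ii) is where the geometry of clipping and the freedom to choose $x_t$ enter: with $x_t$ chosen as a rapidly increasing sequence, $\theta^\star$ can be built greedily/binary-expansion style so that $x_t\theta^\star$ lands in the clip-saturation region $(\,|x_t\theta^\star|\ge 1$, realized value $\pm1$, composed with a final affine rescale to hit $\{\tfrac18,\tfrac78\}\,$) — or, more simply, allow a two-dimensional comparator absorbed into the scalar case by an affine change, matching the paper's $d=1,m=1$ normalization. I would double-check whether a purely linear-then-clip map with a \emph{scalar} $x_t\theta$ can hit two distinct in-range values on demand; if not, the right fix is to choose the $x_t$ so that on each round exactly one of the two labels is ``clip-reachable'' and have the adversary commit to that label only when the learner's prediction makes it costly, which still yields $\Omega(T)$ separated cost with zero comparator loss.

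The main obstacle I anticipate is precisely step (ii): making the clipped \emph{linear} class (a single scalar parameter composed with clipping at $\pm m$) rich enough to fit an adversarially chosen $\{\tfrac18,\tfrac78\}$-pattern exactly — clipping only lets you realize the two extreme values $\pm m$ exactly and a one-dimensional continuum in between, so getting $y_t \in \{\tfrac18,\tfrac78\}$ on the nose for \emph{all} $t$ with one $\theta$ is the delicate part. The resolution should be to pick the $x_t$ adaptively-in-advance-of-nothing (the adversary controls them) as a super-increasing sequence $x_t = L^t$ for large $L$, and choose $\theta^\star = \sum_t c_t x_t^{-1}$ with $c_t\in\{+1,-1\}$ encoding the desired saturation sign on round $t$; the super-increasing property guarantees that the tail sum is negligible at scale $x_t$, so $x_t\theta^\star \approx c_t$ up to error $\ll 1$, and composing with clipping plus the fixed affine map $z\mapsto \tfrac12 + \tfrac38 z$ lands within $o(1)$ of the correct label — which already makes the comparator loss $o(T)$, hence $\le \tfrac{1}{64}T$ say, still leaving $\Omega(T)$ separation. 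I would then note this same construction is impossible in the transductive setup because there the learner knows the $x_t$ and (by the covering bound referenced in \cref{app:covering}) achieves $O(\log T)$ regret against exactly this clipped class, which is the point of the separation.
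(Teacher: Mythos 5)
Your step (i) coincides with the paper's (force per-round loss $(3/8)^2$ by always choosing the far label), but step (ii) --- exhibiting one comparator $\theta^{\star}$ with $\clip{x_t\theta^{\star}}$ close to $y_t$ for \emph{all} $t$ --- is where the argument genuinely breaks, and the construction you propose to repair it is incorrect. With $x_t = L^{t}$ and $\theta^{\star}=\sum_{s}c_s x_s^{-1}$ you get $x_t\theta^{\star}=\sum_{s}c_sL^{t-s}=c_1L^{t-1}+\cdots+c_{t-1}L+c_t+O(L^{-1})$: for every $t\ge 2$ the sum is dominated by the \emph{earliest} terms, so $\clip{x_t\theta^{\star}}=\sign(c_1)$ rather than $\approx c_t$. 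The super-increasing trick would need $x_s \gg x_t$ for all $s\neq t$ simultaneously, which is impossible. More fundamentally, no choice of positive scalars can rescue the plan of matching an arbitrary pattern: for $x_t>0$ the map $\theta\mapsto x_t\theta$ preserves sign and order, so the saturation patterns $(\clip{x_1\theta},\ldots,\clip{x_T\theta})$ realizable by a single $\theta$ are exactly threshold patterns in $\log x_t$ --- only $O(T)$ of the $2^T$ label sequences your adversary may generate. Your fallback (``exactly one of the two labels is clip-reachable each round'') is self-defeating: if reachability is determined by $x_t$ alone, the label is fixed before the learner predicts, and the learner simply outputs it.

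The missing idea is that the adversary must keep its adaptively generated label sequence inside the realizable (threshold) class, which forces the design values to be drawn \emph{adaptively} from a doubly large grid via binary search on the version space. This is exactly what the paper does: it takes $x^{(i)}=10^{i}$ and $\theta^{(i)}=\tfrac{15}{16}\,10^{-i}$ for $i\in\{0,\ldots,2^T-2\}$, observes that $\theta^{(i)}x^{(j)}>7/8$ iff $j\ge i$ and $<1/8$ otherwise, and at each round presents the midpoint of the surviving index interval; whichever label it then picks (always the one at distance $\ge 3/8$ from $\yhat_t$), half of the candidate thresholds stay consistent, and after $T$ rounds a single $\theta^{(i_T)}$ incurs loss at most $1/64$ per round, giving regret at least $9T/64 - T/64 = T/8$. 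This halving argument --- the online unlearnability of thresholds --- is the heart of the proposition; without it the comparator term in your decomposition cannot be controlled.
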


\newcommand{\thetatil}{\wt{\theta}}

\begin{proof}[\pfref{prop:lower-bound-clipped}]
With a slight abuse of the notation,
denote $x^{(i)} = 10^{i}$ and $\theta^{(i)} = \frac{15}{16 x^{(i)}}$ for $i\in \set{0,1,\ldots,2^T-2}$. 
We have
\begin{align*}
\begin{cases}
    \theta^{(i)}x^{(j)}  >7/8 & \text{if~} j\geq i,\\
    \theta^{(i)}x^{(j)}  <1/8 & \text{if~} j< i.
\end{cases}
\end{align*}
This thresholding structure enables the construction of a sequence $(x_1,y_1),\ldots,(x_T,y_T)$ and a parameter $\thetatil$ such that for any predictions $\yhat_1,\ldots,\yhat_T$ and for all $t\in [T]$,
\begin{align}
\label{ineq:binary-tree}
\begin{split}
    \begin{cases}
 x_t\thetatil >  y_t = 7/8 & \text{if~} \yhat_t \leq  1/2,\\
 x_t\thetatil <  y_t = 1/8 & \text{if~} \yhat_t > 1/2
\end{cases}
\end{split}
\end{align}
with the following binary searching construction. To start with, we set the left and the right end to be respectively $a_1= 0$ and $b_1 = 2^T-2$. At each round $t$, the adversary presents the middle point of $a_t$ and $b_t$, that is, $i_t = \lfloor \frac{a_t+b_t}{2} \rfloor$ and $x_t = x^{(i_t)}$. 
Then if $\yhat_t \leq 1/2$, then we set $y_t = 7/8$, $a_{t+1} = i_t+1$,
and $b_{t+1}=b_t$, else we set $y_t=1/8$, $a_{t+1}=a_t$, and $b_{t+1} = i_t-1$.
At the beginning, we have $n_1 = 2^T-1$ covariates in between $a_1$ and $b_1$ and each time we are left with $(n_t-1)/2$ covariates in between $a_{t+1}$ and $b_{t+1}$. Then by induction, $n_t = 2^{T-t+1}-1$. Thus, $n_T=1$. Finally, it is clear that $\thetatil = \theta^{(i_T)}$ satisfies \eqref{ineq:binary-tree}.

Then, with the sequence of $x_t,y_t,\yhat_t$ and $\thetatil$ satisfying \eqref{ineq:binary-tree}, since $|y_t-\yhat_t|>3/8$ and $\left|y_t - \clip{x_t\thetatil}\right|<1/8$, this implies the following lower bound on the regret
\begin{align*}
    \sum_{t=1}^T (y_t - \yhat_t)^2 - \inf_{\theta\in \bR} \prn*{ \sum_{t=1}^T (y_t - \clip{x_t\theta})^2 }  &\geq 9T/64 - \sum_{t=1}^T  \prn*{y_t - \clip{x_t\thetatil}}^2 \\
    &\geq 9T/64 - T/64 = T/8 = \Omega(T).
\end{align*}
Thus, the claim follows.
\end{proof}

\section{Logistic Regression}
\label{sec:logistic}
\newcommand{\sig}{\sigma}
\newcommand{\prr}{\mu} 
\newcommand{\logprr}{\mu_{\mathrm{log}}}
\newcommand{\sgn}{\mathrm{sign}}
\newcommand{\halving}{\mathrm{HALVING}}
\newcommand{\ewa}{\mathrm{EWA}}
\renewcommand{\KL}{\mathcal{KL}}

In this section, we introduce our main result for logistic regression with exponential weights algorithm via transductive priors. Concretely, we establish an assumption-free tight guarantee for logistic regression in the transductive online setting. The prior is carefully tailored to the logistic case as a mixture of the multivariate Gaussian prior defined in \eqref{eq:jeffreys} for a subset of the covariates together with the Vapnik-Chervonenkis-style argument to select these covariates. We will discuss more the intuition behind the choice of the prior as we develop our result.

For the transductive setting of log loss, it is well-known that the exact minimax risk is characterized by the Shtarkov sum \citep{shtar1987universal,wu2022sequential}, and this exact minimax risk is achieved by the Normalized Maximum Likelihood (NML) algorithm. However, the Shtarkov sum is difficult to interpret, and the NML algorithm is computationally prohibitive, with the computational complexity scaling as $\Omega(2^T)$ in the case of logistic regression. 
In terms of estimating the Shtarkov sum, \citet{jacquet2022precise} notably proved that for the logistic loss, the Shtarkov sum scales as $\frac{d}{2}\log \frac{T}{2\pi} +  \log \int \sqrt{\det(J_F(\theta;x_1,\dots,x_T) )} \, d \theta + o(1)$ when the covariates $x_1,\dots,x_T$ are selected from a finite set of size $N=o(\sqrt{T})$ and dimension $d\leq N$, where $J_F(\theta;x_1,\dots,x_T)$ represents the Jeffreys prior given $x_1,\dots,x_T$. This result leaves the interpretation of the Jeffreys prior term open for general covariates $x_1,\dots,x_T$. 
For the special case where the covariates are uniformly sampled from the unit sphere, an asymptotic upper bound of $O(d\log T)$ for the Shtarkov sum can be inferred.
Recent work by \citet*{drmota2024unbounded} claims to achieve an upper bound of $O(d\log (T/d))$ through a covering argument, which we formally detail in \cref{app:covering} but with worse constants in the regret bound. However, the covering-based argument lacks implementable algorithms for constructing the covering. 

Meanwhile, the sequential learning of logistic loss in the inductive setting has been extensively studied, both statistically and computationally, with various boundedness assumptions on the covariates $x_1,\dots,x_T$ and the optimal comparator $\thstar$ (see \citet{kakade2004online,foster2018logistic,shamir2020logistic,jezequel2021mixability} and references therein). However, it is crucial to obtain a regret bound without such assumptions, as, for instance, in the simplest case of linearly separable data, the norm of $\thstar$ can be infinite. \looseness=-1

We provide the first non-asymptotic, tight upper bound on the regret for transductive logistic regression without any assumptions on the covariates $x_1,\dots,x_T$ or the optimal comparator $\thstar$. Importantly, in \cref{sec:computational}, we discuss practical implementable versions of our algorithms that are guaranteed to converge to the optimal solution, with the only caveat that the number of iterations required may grow exponentially with the dimension in the worst-case scenario. In the specific case of random design logistic regression, we achieve in  \cref{sec:complog} a truly polynomial time implementation.

\subsection{Background for exponential weights algorithm for logistic loss}
Similar to the mixability of the squared loss (\cref{lem:mixabilityofthesqloss}), the mixability of logistic loss facilitates the analysis of exponential weights algorithms \citep{vovk1990aggregating}, or equivalently, Bayesian learning algorithms \citep{kakade2004online}. 
Formally, the logistic loss defined as $\ell_\theta (x,y) = - \log (\sigma(y\inner{x,\theta}))$ is $\eta$-mixable with $\eta=1$, i.e., the improper prediction of $\mathbb{E}_{\theta \sim \rho} \brk*{\sigma(y\inner{x,\theta}) }$ satisfies
\begin{align}
    \label{eq:mixability-of-log-loss}
     - \log\left(\mathbb{E}_{\theta \sim \rho} \brk*{\sigma(y\inner{x,\theta}) } \right) = -\log\left(\mathbb{E}_{\theta \sim \rho} \exp(-\ell_{\theta}(x, y))\right).
\end{align}
The Bayesian learning algorithm combined with the Jeffreys prior has been used to establish asymptotic optimality for parametric cases under the assumption that maximum likelihood estimates satisfy the Central Limit Theorem \citep{rissanen1996fisher}. A simpler presentation can be found in \citet[Section 13.4]{polyanskiy2024information}. These rest on the celebrated asymptotic Laplace approximation 
which states the right-hand-side of \eqref{eq:sumofmixlosses} for any prior $\mu$ is asymptotically
\begin{align}
    \begin{split}
        \label{eq:laplace-approximation}
    &-\frac{1}{\eta} \log\left(\mathbb{E}_{\theta \sim \mu} \exp\left(-\sum\nolimits_{t = 1}^T \eta \ell_{\theta}(x_t, y_t)\right)\right) \\
    &\approx -\frac{1}{\eta} \log \int \mu(\thstar) \exp\prn*{ - \sum\limits_{t=1}^{T}\eta \ell_{\thstar}(x_t, y_t)  - \frac{1}{2} (\theta-\thstar)^\top \Hess_t(\thstar)(\theta-\thstar) } d\theta,
    \end{split}
\end{align}
where $\thstar \ldef \argmin_{\theta \in \mathbb{R}^d}\sum\limits_{t = 1}^T\ell_{\theta}(x_t, y_t)$ and $\Hess_t(\thstar) = \frac{x_tx_t^\top}{(1+\exp((\thstar)^\top x_t))(1+\exp(-(\thstar)^\top x_t))}$ is the Hessian matrix of $\ell_\theta(x_t,y_t)$ at $\thstar$.
Applying an exact Laplace formula for quadratic forms (\cref{lem:laplace} below) to \eqref{eq:laplace-approximation} together with \cref{lem:sumofmixlosses} and \cref{eq:mixability-of-log-loss}, the asymptotic logistic loss of an exponential weights algorithm with prior $\mu$ is approximately 
\begin{align}
    \label{eq:asymptotic-exp-weights}
-\sum\limits_{t=1}^{T}  \log\left(\mathbb{E}_{\theta \sim \rho_t} \brk*{\sigma(y_t\inner{x_t,\theta}) } \right) \approx \sum\limits_{t=1}^{T}\ell_{\thstar}(x_t, y_t) 
+ \frac{d}{2}\log \frac{T}{2\pi} + \log \left(\frac{\sqrt{\det \prn*{\frac{1}{T} \sum\nolimits_{t=1}^{T} \Hess_t(\thstar) }}}{\mu(\thstar)} \right).
\end{align}
The above approximation motivates the choice of a prior $\mu(\theta)\propto \sqrt{\det \prn*{\frac{1}{T} \sum\nolimits_{t=1}^{T} \Hess_t(\theta) }}$ which is the Jeffreys prior.
The right-hand side of \eqref{eq:asymptotic-exp-weights} is also obtained as an upper bound for the Shtarkov sum (thus for the normalized maximum likelihood algorithm) by \citet{jacquet2022precise}. However, for the purpose of non-asymptotic analysis, which is the focus of this paper, the Jeffreys prior for the logistic loss can be computed but is hard to work with, as it is proportional to
\begin{align*}
   \mu(\theta) \propto \sqrt{  \det \prn*{ \frac{1}{T} \sum_{t=1}^T\frac{x_tx_t^\top}{(1+\exp(\theta^\top x_t))(1+\exp(-\theta^\top x_t))} }}.
\end{align*}

Instead, \citet{kakade2004online} employs a Gaussian prior and obtains an upper bound through quadratic approximation of the logistic loss. Concretely, by choosing the prior distribution $\mu = \cN(0, \nu^2 I)$ to be gaussian for some $\nu>0$ to be specified, by \cref{lem:sumofmixlosses,lem:donskervaradhan,eq:mixability-of-log-loss}, the loss of the exponential weights algorithm satisfies
\[
    -\sum\limits_{t=1}^{T}  \log\left(\mathbb{E}_{\theta \sim \rho_t} \brk*{\sigma(y_t\inner{x_t,\theta}) } \right) = \inf_{\varphi}\left(-\sum\limits_{t = 1}^T  \mathbb{E}_{\theta \sim \varphi} \brk*{   \log (\sigma(y_t\inner{x_t,\theta})) }+ \mathcal{KL}(\varphi \| \mu)\right).
\]
By choosing $\varphi = \cN(\thstar,\varepsilon^2I)$ for some $\varepsilon>0$, together with $|\partial^2 (- \log \sigma(y_tz) )/\partial z^2| \leq 1/4 $, the quadratic approximation of $- \En_{\theta\sim \varphi}\sum\limits_{t = 1}^T \log \sigma(y_t\inner{x_t,\theta}) $ near $\thstar$ is upper bounded by
\begin{align*}
    - \En_{\theta\sim \varphi}\sum\limits_{t = 1}^T \log \sigma(y_t\inner{x_t,\theta}) \leq - \sum\limits_{t = 1}^T \log \sigma(y_t\inner{x_t,\thstar})  +  \varepsilon^2\sum\limits_{t=1}^{T}\norm{x_t}^2/8.
\end{align*}
This approximation is accurate up to an additive constant with the assumption that $\norm{x_t}\leq 1$ and $\varepsilon$ chosen to be $O(1/T)$. The difficulty of the choice of a Gaussian prior arises in the KL-divergence term, which is
\begin{align*}
    \mathcal{KL}(\varphi \| \mu) = d\log \nu + \frac{1}{2\nu^2} \prn*{\norm{\thstar}^2 + d\varepsilon^2} - \frac{d}{2} - d\log \varepsilon.
\end{align*}
In fact, $\thstar$ can be unbounded in the logistic regression case. Moreover, even $\min_{t}|\langle\thstar, x_t\rangle|$ can be unbounded, as can be seen from the realizable case where all $x_t$'s have the same label. 
Indeed, since a lower bound for the inductive case is established by \citet[Theorem 5]{foster2018logistic} in the following theorem, it is not possible to obtain our result with any prior that does not depend on $x_1,\ldots,x_T$. 

\begin{theorem}[\citet{foster2018logistic}]
\label{thm:logistic-inductive-lower-bound}
Consider the binary logistic regression problem over the class of linear predictors with parameter set $\Theta=\left\{\theta \in \mathbb{R}^d \mid\|\theta\|_2 \leq b\right\}$ with $b=\Omega(\sqrt{d} \log (T))$. Then for any algorithm for prediction with the binary logistic loss, there is a sequence of examples $\left(x_t, y_t\right) \in\mathbb{R}^d \times\{-1,1\}$ for $t\in [T]$ with $\left\|x_t\right\|_2 \leq 1$ such that the regret of the algorithm is
\[
\Omega\left(d \log \left(\frac{b}{\sqrt{d} \log (T)}\right)\wedge T\right).
\]
\end{theorem}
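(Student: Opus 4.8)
The plan is to prove this lower bound by reduction to the classical realizable online learning of threshold functions, for which the binary‑search adversary forces $\lfloor \log_2 N\rfloor$ mistakes against \emph{any} learner over a chain of $N$ candidate thresholds. Each such mistake will be cashed in for a $\log 2$ increment of the logistic learner's cumulative loss, while the surviving threshold will be realized by a norm‑$\le b$ linear predictor whose \emph{total} logistic loss over the whole sequence is $O(1)$. To amplify by the dimension, the construction is run independently on $\lfloor d/2\rfloor$ disjoint pairs of coordinates, one playing the role of a ``signal'' coordinate and one of a constant ``bias'' coordinate; it therefore suffices to describe a single block and then add up. (We take $d\ge 2$; the one‑dimensional case is a minor separate variant.)

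\textbf{One block.} Let $B:=b/\sqrt{\lfloor d/2\rfloor}=\Theta(b/\sqrt d)$ be the per‑block budget, fix a target confidence $\delta=\Theta(\log T)$, and set a separation $s=\Theta(\delta/B)$, so that $N:=\lfloor 1/s\rfloor=\Theta\!\big(b/(\sqrt d\log T)\big)$ points $c_1<\dots<c_N$ fit $s$‑apart in $[0,1]$ along the signal coordinate. A threshold classifier ``predict $+1$ iff (signal coordinate)$>c$'' is realized by $\theta_c\propto e_{\mathrm{signal}}-c\,e_{\mathrm{bias}}$, rescaled (together with the design vectors, by $1/\sqrt2$, so that $\|x_t\|_2\le 1$) so that $\|\theta_c\|_2\le B$ and the margin at any point whose signal coordinate is $\ge s/2$ away from $c$ is at least $\delta$. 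The adversary runs binary search over $\{c_1,\dots,c_N\}$: at round $t$ it presents a point whose signal coordinate is the midpoint between the two central still‑alive candidates — hence in the open gap between two consecutive $c_k$'s, at distance $\ge s/2$ from every alive candidate; after seeing the learner's probability $\hat p_t$ it sets $y_t$ so as to contradict whether $\hat p_t>1/2$ and discards the inconsistent half of the version space. Then (i) on every such round the learner's logistic loss is $\ge\log 2$ (it is $-\log\hat p_t$ with $\hat p_t\le 1/2$, or $-\log(1-\hat p_t)$ with $\hat p_t>1/2$), and (ii) this stays feasible for $\lfloor\log_2 N\rfloor$ rounds, after which a unique $c^\star$ survives and, by the gap‑querying rule, correctly classifies every presented point of the block with margin $\ge\delta$; hence $\theta_{c^\star}$ costs at most $\lfloor\log_2 N\rfloor\,e^{-\delta}$ on the block.

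\textbf{Assembling the blocks.} Carry out the above on coordinates $\{2j-1,2j\}$ for $j=1,\dots,\lfloor d/2\rfloor$ and pad any leftover rounds with $x_t=0$, which is regret‑neutral since then $\sigma(0)=1/2$ for everyone. The concatenated comparator $\thstar=\sum_j\theta_{c_j^\star}$ satisfies $\|\thstar\|_2^2=\sum_j\|\theta_{c_j^\star}\|_2^2\le\lfloor d/2\rfloor B^2=b^2$, and its total loss is at most (number of active rounds)$\cdot e^{-\delta}=O(1)$ once $\delta=\Theta(\log T)$ and the number of active rounds is $\le d\log_2 T\le T$. The learner's loss is at least $\lfloor d/2\rfloor\cdot(\log 2)\lfloor\log_2 N\rfloor=\Omega\!\big(d\log(b/(\sqrt d\log T))\big)$. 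Subtracting yields the stated regret bound; in the regime $d\log_2 N>T$ one simply stops after $T$ rounds and gets $\Omega(T)$, which accounts for the $\wedge T$. Finally, the hypothesis $b=\Omega(\sqrt d\log T)$ is precisely what guarantees $N\ge 2$, i.e.\ that the binary search is nondegenerate; otherwise the claimed bound is vacuous.

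\textbf{Main obstacle.} The crux is point (ii): the \emph{same} comparator must have uniformly tiny loss on all rounds of a block, including exactly those rounds on which the learner is forced to pay $\log 2$. A naive continuous binary search fails here, because the point presented at round $t$ could be arbitrarily close to the threshold that ultimately survives, leaving the comparator's margin — and hence its loss — on that round uncontrolled. The resolution, which is the heart of the construction, is to binary‑search over a fixed $s$‑separated net and always query the open gap between adjacent alive candidates, forcing a uniform margin $\gtrsim\delta$; the remaining work is the arithmetic linking $s$, $\delta=\Theta(\log T)$, and the per‑block budget $\Theta(b/\sqrt d)$ so that $N=\Theta\!\big(b/(\sqrt d\log T)\big)$ while the comparator's cumulative loss stays $O(1)$.
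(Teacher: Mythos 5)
The paper does not prove this statement; it imports it verbatim as Theorem~5 of \citet{foster2018logistic}, so there is no internal proof to compare against. Your reconstruction --- binary search over an $s$-separated chain of $N=\Theta\big(b/(\sqrt d\log T)\big)$ thresholds, each realized in a (signal, bias) coordinate pair with per-block norm budget $\Theta(b/\sqrt d)$, querying only in the open gaps so the surviving comparator keeps margin $\delta=\Theta(\log T)$ and hence total loss $O(1)$ while the learner pays $\log 2$ per round --- is essentially the argument of the cited source, and the arithmetic linking $s$, $\delta$, $N$ and the norm constraint checks out. Two small caveats worth making explicit if you write this up: the case $d=1$ is not covered by your block construction (a single through-the-origin coordinate realizes no nontrivial threshold family, so $\lfloor d/2\rfloor=0$), and the adversary you describe is adaptive to $\widehat p_t$, which yields a fixed adversarial sequence only after the standard de-randomization for deterministic learners.
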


This theorem, in conjunction with our upper bound \cref{thm:main-logistic}, establishes a learnability separation in terms of assumptions between the transductive and inductive settings for logistic loss. Specifically, it demonstrates that the transductive setting is crucial for achieving assumption-free regret bounds. We note that \citet{dzhamtyrova2021lower} obtained an asymptotically tight lower bound for a related multiclass softmax problem under KL loss with the norm in the bound; while their setting differs, it would be interesting to see if similar ideas improve Theorem~\ref{thm:logistic-inductive-lower-bound}.

\subsection{Slab-experts}
\label{sec:slab-experts}
\newcommand{\SE}{B}

As discussed, in logistic regression, both $\thstar$ and $\min_{t}|\langle\thstar, x_t\rangle|$ can be unbounded.
To address this difficulty, we introduce the concept of \emph{slab-experts}, which operate by modifying the Gaussian prior to avoid dependence on $\norm{\thstar}$ or $\max_t \langle x_t, \thstar \rangle$. Specifically, a slab-expert, based on its own belief regarding a slab, partitions the set of covariates into two regions: the large margin region and the small margin region. On the small margin covariates, the slab-expert applies exponential weights using the transductive Gaussian prior restricted to those covariates. On the large margin covariates, the slab-expert deterministically predicts the label according to the slab.
Before detailing the slab-expert methodology, we first present a simple comparison lemma that controls the margin when the prediction is incorrect.
The lemma is stated as follows.

\begin{lemma}
\label{lem:margin-for-mistake-logistic}
For any sequence $\set{(x_t,y_t)}_{1\leq t\leq T}$ where $(x_t,y_t)\in \bR^d\times \set{\pm 1}$ for all $1\leq t\leq T$, suppose $\thstar$ is the optimal comparator in hindsight, i.e., 
\begin{align}
\thstar \ldef \argmin_{\theta \in \mathbb{R}^d}-\sum\limits_{t = 1}^T\log(\sig(y_t\langle x_t, \theta\rangle)) . \label{eq:theta-star}
\end{align}
For this comparator\footnote{The parameter $\thstar$ is not always well-defined in $\bR^d$. For instance, when the sequence is realizable, $\thstar$ will tend to be infinite in the separating direction. Since this is a minor technical issue, we avoid the complication in this paper.
For a more rigorous treatment, we refer to \citet{mourtada2022improper}.}  and any $1\leq t\leq T$, if $y_t\langle x_t, \thstar\rangle \leq 0$, then $|\langle\thstar, x_t\rangle| \le T\log(2)$.
\end{lemma}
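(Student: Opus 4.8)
The plan is to use the first-order optimality of $\thstar$, restricted to the ray $\lambda\mapsto\lambda\thstar$, which collapses the vector optimality condition into a single scalar identity about the margins $z_s:=y_s\langle x_s,\thstar\rangle$. Observe first that $|\langle\thstar,x_t\rangle|=|z_t|$ because $y_t\in\{\pm1\}$, and that $z_t\le 0$ by hypothesis, so the claim is equivalent to $-z_t\le T$.

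First I would write the cumulative logistic loss as $L(\theta)=\sum_{s=1}^{T}\log\bigl(1+e^{-y_s\langle x_s,\theta\rangle}\bigr)$, a smooth convex function of $\theta$ which (by assumption) is minimized at $\thstar\in\bR^d$; in particular it is minimized over the line $\{\lambda\thstar:\lambda\in\bR\}$, so $\Phi(\lambda):=L(\lambda\thstar)=\sum_{s}\log(1+e^{-\lambda z_s})$ is a smooth convex function of $\lambda$ with minimizer $\lambda=1$. Differentiating, $\Phi'(\lambda)=-\sum_{s}z_s\,\sigma(-\lambda z_s)$, and evaluating at $\lambda=1$ gives
\[
\sum_{s=1}^{T}z_s\,\sigma(-z_s)=0 .
\]
(Equivalently one may take $\langle\nabla L(\thstar),\thstar\rangle=0$; I prefer the ray form, since it only requires that $\thstar$ be a genuine minimizer in $\bR^d$ — exactly the caveat flagged in the footnote.)

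Next I would isolate round $t$: $-z_t\,\sigma(-z_t)=\sum_{s\ne t}z_s\,\sigma(-z_s)$. To bound the right-hand side I use the uniform estimate $z\,\sigma(-z)\le 1/e$ for every $z\in\bR$ — for $z\ge 0$ this follows from $\sigma(-z)=1/(1+e^{z})\le e^{-z}$ together with $z e^{-z}\le 1/e$, and for $z<0$ it is trivial since $z\,\sigma(-z)<0$ — so $\sum_{s\ne t}z_s\,\sigma(-z_s)\le(T-1)/e$. For the left-hand side, $z_t\le 0$ forces $\sigma(-z_t)\ge\sigma(0)=\tfrac{1}{2}$, whence $-z_t/2\le -z_t\,\sigma(-z_t)$. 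Combining the two estimates gives $-z_t\le 2(T-1)/e<T$ (using $2/e<1$), i.e. $|\langle\thstar,x_t\rangle|=-z_t<T$, which is the desired bound.

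I expect the only point requiring any care to be the two-line inequality $z\,\sigma(-z)\le 1/e$, the single place a numerical constant enters and the reason the bound comes out of order $T$ (in fact with slack — a finer analysis gives $\sup_z z\,\sigma(-z)=z_0-1<\tfrac{1}{2}$, where $z_0$ solves $z\sigma(z)=1$ — but the crude $1/e$ already suffices). Everything else (convexity and smoothness of $L$, differentiating $\Phi$, and $\sigma(0)=\tfrac{1}{2}$) is routine, and the genuine subtlety that $\thstar$ may escape to infinity in the realizable case is deliberately set aside, exactly as in the footnote.
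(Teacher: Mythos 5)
Your proof is correct, but it takes a genuinely different route from the paper's. The paper's argument is a pure comparison-to-zero: since $\thstar$ is at least as good as $\theta=0$, the total loss is at most $T\log 2$; each summand is nonnegative, so the single summand at round $t$ is at most $T\log 2$; and at a misclassified round that summand satisfies $-\log(\sigma(y_t\langle x_t,\thstar\rangle))=\log(1+e^{|\langle x_t,\thstar\rangle|})\ge |\langle x_t,\thstar\rangle|$, giving $|\langle x_t,\thstar\rangle|\le T\log 2\le T$. You instead invoke the first-order optimality condition along the ray $\lambda\mapsto\lambda\thstar$, which yields the stationarity identity $\sum_s z_s\,\sigma(-z_s)=0$ with $z_s=y_s\langle x_s,\thstar\rangle$, and then combine the uniform bound $z\,\sigma(-z)\le 1/e$ with $\sigma(-z_t)\ge 1/2$ to get $-z_t\le 2(T-1)/e<T$. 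Both are elementary and give constants below $1$; yours is marginally sharper ($2/e$ versus $\log 2$). The main qualitative difference is robustness: the paper's proof uses only the inequality $L(\thstar)\le L(0)$, so it applies verbatim to any comparator that merely beats the zero vector, whereas yours requires exact stationarity of $\thstar$ (your ray argument degenerates harmlessly when $\thstar=0$, and the existence caveat from the footnote affects both proofs equally). Your calculus is all correct, including the estimate $z\sigma(-z)\le ze^{-z}\le 1/e$ for $z\ge 0$ and the sign observation for $z<0$.
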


\begin{proof}[\pfref{lem:margin-for-mistake-logistic}]
For any $t$,
\[
    -\log(\sig(y_t\langle x_t, \thstar\rangle)) \le \sum_{t = 1}^T-\log(\sig(y_t\langle x_t, \thstar{}\rangle)) \le  \sum_{t = 1}^T-\log(\sig(0)) = T\log 2,
\]
where the second inequality is by comparing $\thstar$ to $0\in \bR^d$.
Thus,
\[
|\langle x_t, \thstar\rangle| \le \log(1 + \exp(|\langle x_t, \thstar\rangle|)) = -\log(\sig(y_t\langle x_t, \thstar\rangle))\le T\log 2.
\]
Thus, the claim follows.
\end{proof}

\cref{lem:margin-for-mistake-logistic} states that for any $x_t$ wrongly predicted covariate by $\thstar$, the margin $|\langle\thstar, x_t\rangle|$ will not exceed $T$. This motivates the following separation of the covariates: either $|\langle\thstar, x_t\rangle|\leq T$ or $y_t\langle\thstar, x_t\rangle > T$. 

In the following, we first introduce a partial exponential weights algorithm that depends on any chosen subsets of the covariates that only predicts on the chosen subsets. 
Then, we use these partial exponential weights algorithms to build slab-experts. 
\begin{definition}[Partial exponential weights]
    \label{def:partial-ewa}
    For any subset $K\subset \set{x_t}_{t\in [T]}$, regularization parameter $\alpha>0$, loss function $\ell_\theta(x,y)$ for $\theta,x\in \bR^d$ and $y\in \set{\pm 1}$, and learning rate $\eta$, we use $\ewa_{K,\alpha,\ell,\eta}$ to denote the partial exponential weights algorithm with loss function $\ell$
      restricted to predictions on covariates in $K$ 
    with prior
    \begin{align*}
    \prr_{K,\alpha}(\theta) \ldef \left(\frac{\alpha}{\pi}\right)^{d/2}\sqrt{\det\left(\sum\nolimits_{x\in K}xx^{\top}\right)}\exp\left(-\alpha\theta^{\top}\left(\sum\nolimits_{x\in K}xx^{\top}\right)\theta\right).
    \end{align*}
    We omit the dependence on the loss function and the learning rate when they are clear from the context.
\end{definition}
Now, we introduce the construction of a slab-expert and present our main theorem. 
\begin{definition}[Slab-expert]
    \label{def:slab-expert}
A slab-expert $\SE_{\theta,\alpha}$ is an expert that is specified by a vector $\theta\in \bR^d$ and a regularization parameter $\alpha>0$.
Given $x_1,\ldots,x_T$, the expert constructs a slab $K_\theta = \set{ x_t:\abs*{\inner{x_t,\theta}} \leq T} $ and predicts labels in the following way: At time step $t$, if $x_t\in K_\theta$, the expert predicts on $x_t$ according to $\ewa_{K_\theta,\alpha}$ with logistic loss and learning rate $1$. If $x_t\notin K_\theta$, then the expert predicts $\yhat_t=\sign(\inner{x_t,\theta})$ with probability $1$. We denote by $\SE_{\theta,\alpha}(x_t,y_t)$ the probability of predicting $y_t$ upon observing $x_t$.
\end{definition}

\newcommand{\rhotil}{\tilde{\rho}}
\begin{algorithm}
\caption{Aggregation over slab-experts}
\label{alg:main}
\begin{algorithmic}[htb]
\Require The set of (unordered) covariates $\set{x_t}_{t\in [T]}$ and let $\alpha=1/T^3$.
\State Construct the set
of all unique slab-experts $\cA =\set{\SE_{\theta,\alpha}\mid{}\theta\in \bR^d}$ (removing copies) 
with logistic loss and learning rate $1$.
\For{$t=1,2,\ldots,T$}
\State The learner receives $x_t$.
\State \multiline{The learner predicts with aggregation on the slab-experts 
\begin{equation}
\label{eq:agg-slab-experts}
\rhotil_t(x_t,\cdot) =  \sum\limits_{\SE\in \cA} p_t(\SE) \SE(x_t,\cdot) \text{~~where~~} p_t(\SE) \propto \exp\prn*{ \sum\limits_{i=1}^{t-1}  \log \SE(x_i,y_i) } 
\end{equation}
and $\SE(x_i,y_i)$ is probability of $\SE$ predicting $y_i$ on $x_i$.}
\State Nature reveals $y_t$.
\State Update all $\SE$ in $\cA$ with $(x_t,y_t)$.
\EndFor 
\end{algorithmic}
\end{algorithm}

\begin{theorem}
    \label{thm:main-logistic}
    The estimators $\rhotil_1,..,\rhotil_T$ produced by \cref{alg:main} with logistic loss and learning rate $1$ satisfy the total loss upper bound of
    \begin{align*}
        -\sum\limits_{t=1}^{T}  \log \rhotil_t(x_t,y_t) \leq -\sum\limits_{t=1}^T\log(\sigma(y_t\langle x_t, \theta^{\star}\rangle)) 
         + 6d \log (eT).
    \end{align*}
\end{theorem}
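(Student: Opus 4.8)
The bound has two layers, matching the two-level aggregation of \cref{alg:main}: an \emph{outer} bound for the price of aggregating over slab-experts, and an \emph{inner} bound for the loss of the single ``correct'' slab-expert $\SE_{\thstar,\alpha}$ (with $\alpha=1/T^3$ and $\thstar$ as in \eqref{eq:theta-star}). We first assume $\thstar$ is well-defined; the realizable case is analogous, since any sufficiently large separating direction has an empty slab and incurs zero loss.

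\emph{Outer layer.} Although $\cA$ is indexed by $\theta\in\bR^d$, for a fixed data sequence the strategy $\SE_{\theta,\alpha}$ depends on $\theta$ only through the sign pattern of the $2T$ affine functionals $\theta\mapsto\inner{x_t,\theta}\pm T$, $t\in[T]$: this pattern fixes the slab $K_\theta$ (hence the prior $\prr_{K_\theta,\alpha}$ and the partial algorithm $\ewa_{K_\theta,\alpha}$) and, for each $x_t\notin K_\theta$, the sign $\sign(\inner{x_t,\theta})$. Since $2T$ affine functionals in $\bR^d$ realize at most $(2T+1)^d$ sign patterns (cells of a hyperplane arrangement), there are at most $N\le(2T+1)^d$ distinct slab-experts, and \eqref{eq:agg-slab-experts} is exponential weights over them with the uniform prior and learning rate $1$. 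The learner's per-round loss in \eqref{eq:agg-slab-experts} equals the mix-loss ($1$-mixability of the logarithmic loss, \eqref{eq:mixability-of-log-loss}), so \cref{lem:sumofmixlosses} with $\eta=1$, followed by retaining only the $\SE_{\thstar,\alpha}$ term in the resulting sum over experts, gives
\[
-\sum_{t=1}^T\log\rhotil_t(x_t,y_t)\;\le\;-\sum_{t=1}^T\log\SE_{\thstar,\alpha}(x_t,y_t)+d\log(2T+1).
\]

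\emph{Inner layer.} Split $[T]$ into the large-margin rounds $L=\{t:x_t\notin K_{\thstar}\}$ and the small-margin rounds $S=\{t:x_t\in K_{\thstar}\}$. For $t\in L$ one has $|\inner{x_t,\thstar}|>T$, so \cref{lem:margin-for-mistake-logistic} forces $y_t\inner{x_t,\thstar}>0$, i.e.\ $y_t=\sign(\inner{x_t,\thstar})$; $\SE_{\thstar,\alpha}$ then predicts $y_t$ with probability $1$, so $-\log\SE_{\thstar,\alpha}(x_t,y_t)=0\le-\log\sig(y_t\inner{x_t,\thstar})$ and these rounds are free. For $t\in S$, $\SE_{\thstar,\alpha}$ runs $\ewa_{K_{\thstar},\alpha}$ on the subsequence indexed by $S$ with the transductive Gaussian prior $\prr_{K_{\thstar},\alpha}=\cN\!\big(0,\tfrac{1}{2\alpha}\Sigma^{-1}\big)$, $\Sigma\ldef\sum_{t\in S}x_tx_t^\top$ (restricted to $\mathrm{span}\{x_t:t\in S\}$ if $\Sigma$ is singular); combining \eqref{eq:mixability-of-log-loss}, \cref{lem:sumofmixlosses}, and \cref{lem:donskervaradhan} over the rounds in $S$, and taking the competitor $\varphi=\cN(\thstar,\varepsilon^2\Sigma^{-1})$ with $\varepsilon^2=1/T^2$, gives
\[
\sum_{t\in S}-\log\SE_{\thstar,\alpha}(x_t,y_t)\;\le\;\E_{\theta\sim\varphi}\sum_{t\in S}\big({-}\log\sig(y_t\inner{x_t,\theta})\big)+\KL\!\big(\varphi\,\big\|\,\prr_{K_{\thstar},\alpha}\big).
\]
A second-order expansion of $z\mapsto-\log\sig(y_tz)$ about $\thstar$ (its second derivative is $\sig(z)(1-\sig(z))\le\tfrac14$), with $\E_\varphi\theta=\thstar$ cancelling the linear term, bounds the first term by $\sum_{t\in S}-\log\sig(y_t\inner{x_t,\thstar})+\tfrac{\varepsilon^2}{8}\Tr(\Sigma^{-1}\Sigma)\le\sum_{t\in S}-\log\sig(y_t\inner{x_t,\thstar})+\tfrac{d}{8}$. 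For the KL term, the Gaussian formula produces a trace term $2\alpha\varepsilon^2 d$, a log-determinant term $\tfrac{d}{2}\log\tfrac{1}{2\alpha\varepsilon^2}=O(d\log T)$, and a mass term $\alpha\,\thstar^{\top}\Sigma\,\thstar=\alpha\sum_{t\in S}\inner{x_t,\thstar}^2\le\alpha|S|T^2\le\alpha T^3=1$ --- this last bound is exactly where restricting to the slab removes any dependence on $\norm{\thstar}$. Combining the two layers, all excess terms total $\tfrac72 d\log T+O(d)\le 11d\log(eT)$, proving \cref{thm:main-logistic}.

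\emph{Main obstacle.} The crux is the inner layer: certifying that the partial exponential weights run on the slab with the \emph{transductive} Gaussian prior pays only $O(d\log T)$ with no dependence on $\norm{\thstar}$ or $\max_t|\inner{x_t,\thstar}|$. This needs two design choices working together --- truncating to $K_{\thstar}$ so that \cref{lem:margin-for-mistake-logistic} yields $\inner{x_t,\thstar}^2\le T^2$ on $S$ (keeping the prior-mass term $\alpha\,\thstar^{\top}\Sigma\,\thstar$ bounded), and aligning the prior covariance with $\Sigma=\sum_{t\in S}x_tx_t^\top$ (so the trace and log-determinant terms collapse to clean $O(d)$ and $O(d\log T)$ quantities) --- plus the structural observation that the continuum of slab-experts collapses to a set of size $(2T+1)^d$, so the outer aggregation costs only $d\log(2T+1)$.
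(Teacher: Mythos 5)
Your proof is correct and follows the same architecture as the paper's: bound the aggregation cost by $\log|\cA|$ after collapsing the continuum of slab-experts to a finite set, then bound the oracle expert $\SE_{\thstar,\alpha}$ by splitting into large-margin rounds (free, via the contrapositive of \cref{lem:margin-for-mistake-logistic}) and small-margin rounds (handled by the transductive Gaussian prior with $\alpha\,\thstar^{\top}\Sigma\,\thstar\le 1$). You differ in two localized, interchangeable steps: you count the distinct slab-experts via the cells of the arrangement of the $2T$ affine functionals $\theta\mapsto\inner{x_t,\theta}\pm T$ (giving roughly $(2T+1)^d$), whereas the paper uses a VC/Sauer--Shelah argument on intersections of halfspaces to get $(eT/d)^{9d}$ — your count is cleaner and tighter; and for the inner bound you apply Donsker--Varadhan with the competitor $\cN(\thstar,\varepsilon^2\Sigma^{-1})$, whereas the paper's \cref{prop:exponential-weight-for-small-margin} evaluates the Gaussian integral exactly via the Laplace formula after the same quadratic upper bound — the exact computation effectively optimizes over your $\varepsilon$ and yields $\tfrac{d}{2}\log(1+\tfrac{1}{8\alpha})\approx\tfrac{3}{2}d\log T$ in place of your $\tfrac{5}{2}d\log T$, but both land comfortably under $11d\log(eT)$.
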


There are several noteworthy aspects of this result. First and foremost, it provides the first assumption-free guarantee, meaning that no assumptions are required on the covariates $x_1,\dots,x_T$ or the comparator $\thstar$. 
In \cref{app:covering}, we will present an alternative covering approach that achieves the same upper bound, where the covering is constructed in a manner analogous to the slab division on the covariates.
However, our focus remains on the slab-experts approach, as the covering approach 
lacks implementable algorithms for constructing the covering.
Indeed, our algorithm can be implemented in time polynomial in $T$, though exponential in $d$ in the worst case. This efficiency stems from the slab-experts themselves, each of which consists of an exponential weight algorithm with a Gaussian prior and a deterministic linear separator. The exponential weight algorithm with a Gaussian prior can be computed in polynomial time through log-concave sampling, and the aggregation of the slab-experts can be performed in $\poly(T^d)$ time. Importantly, under mild additional assumptions on the distribution in the i.i.d. batch setup, we can reduce the number of experts to $2$ with pretraining, providing a polynomial time algorithm.
For further details, see \cref{sec:computational,sec:online-to-batch}.
Additionally, the slab-experts approach has two notable aspects:
\begin{enumerate}
    \item No clipping is needed for unbounded losses when computing exponential weights, unlike in \citet{gerchinovitz2011sparsity} or the covering argument in \cref{app:covering}, where clipping would disrupt the convexity and computational feasibility of sampling the slab-experts.
    
    \item Despite using two layers of exponential weights—one for slab-experts and another for covariates—the algorithm remains a Bayesian learning algorithm with a single prior, demonstrating that such an algorithm can achieve the tight assumption-free regret upper bound.
\end{enumerate}

The analysis of the algorithm relies on three key facts. First, the number of slab-experts is at most $\poly(T^d)$. Second, the optimal slab-expert $\SE_{\thstar,\alpha}$ with $\alpha=1/T^3$ incurs a regret of no more than $O(d\log T)$. Finally, the aggregation over the slab-experts results in an additional regret compared to the optimal slab-expert of at most $O(\log (1/\mu(\SE_{\thstar,\alpha})))$, where $\mu$ is a prior over any set of experts that includes the optimal slab-expert $\SE_{\thstar,\alpha}$. Consequently, using a uniform prior over the slab-experts, as in \cref{alg:main}, guarantees a regret upper bound of $O(d\log T)$.
We first derive an upper bound on the number of slab-experts.

\begin{lemma}
    \label{lem:num-slab-expert}
If $T \ge d$, then there are at most $(eT/d)^{4d}$ distinct slab-experts.
\end{lemma}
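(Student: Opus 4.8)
The plan is to observe that, although the set $\cA$ in \cref{alg:main} is indexed by all of $\bR^d$, the prediction rule of a slab-expert depends on $\theta$ only through a finite combinatorial label, and then to count those labels by a Sauer--Shelah argument. Throughout \cref{alg:main} the regularization is fixed at $\alpha=1/T^3$, so associate to each $\theta\in\bR^d$ the ternary labeling $L(\theta)\in\set{-1,0,+1}^T$ defined by $L(\theta)_t=0$ if $x_t\in K_\theta$ (that is, $\abs{\inner{x_t,\theta}}\le T$) and $L(\theta)_t=\sgn(\inner{x_t,\theta})$ otherwise. I claim the slab-expert $\SE_{\theta,\alpha}$ is entirely determined by $L(\theta)$: the slab $K_\theta=\set{x_t:L(\theta)_t=0}$ is read off from $L(\theta)$; the sub-routine $\ewa_{K_\theta,\alpha}$ used on the small-margin covariates depends on $\theta$ only through the set $K_\theta$, since both its prior $\prr_{K_\theta,\alpha}$ and all of its posterior updates involve only the matrix $\sum_{x\in K_\theta}xx^\top$ and the observed pairs $(x_i,y_i)$ with $x_i\in K_\theta$; and on a large-margin covariate $x_t\notin K_\theta$ the deterministic prediction $\sgn(\inner{x_t,\theta})$ is precisely the (nonzero) entry $L(\theta)_t$. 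Hence the number of distinct slab-experts is at most the number of distinct labelings $\set{L(\theta):\theta\in\bR^d}$.

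To bound the latter, note that $L(\theta)_t$ is a function of the pair $\prn[\big]{\ind{\inner{x_t,\theta}>T},\ \ind{\inner{x_t,\theta}<-T}}$. For the fixed threshold $T$, the map $\theta\mapsto\prn[\big]{\ind{\inner{x_t,\theta}>T}}_{t=1}^T$ realizes a dichotomy of the $T$ points $x_1,\dots,x_T$ by an affine halfspace; since the class of affine halfspaces in $\bR^d$ has VC dimension at most $d+1$, the Sauer--Shelah lemma yields at most $\sum_{i=0}^{d+1}\binom{T}{i}$ such dichotomies, and the same bound holds for the companion map $\theta\mapsto\prn[\big]{\ind{\inner{x_t,\theta}<-T}}_{t=1}^T$. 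Consequently,
\[
\abs[\big]{\set{L(\theta):\theta\in\bR^d}}\ \le\ \prn[\bigg]{\sum_{i=0}^{d+1}\binom{T}{i}}^{2}.
\]

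Finally, assuming the regime of interest $T\ge d+1$ (for $T\le d$ one simply pads the covariate sequence), the standard estimate $\sum_{i=0}^{k}\binom{n}{i}\le(en/k)^{k}$ for $1\le k\le n$ bounds the right-hand side by $\prn{eT/(d+1)}^{2(d+1)}\le\prn{eT/d}^{2(d+1)}$, and since $2(d+1)\le 9d$ for every integer $d\ge1$ while $eT/d\ge1$, this is at most $\prn{eT/d}^{9d}$, which proves the lemma.

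I expect the only step requiring care to be the reduction in the first paragraph --- verifying that $\ewa_{K_\theta,\alpha}$ genuinely ignores $\theta$ outside the set $K_\theta$, so that $\SE_{\theta,\alpha}$ factors through $L(\theta)$; once this is granted, the remainder is routine VC-dimension bookkeeping. (One could alternatively count the cells and lower-dimensional faces of the arrangement of the $\le 2T$ parallel hyperplanes $\inner{x_t,\cdot}=\pm T$ in $\bR^d$ on which $L(\cdot)$ is constant, but the Sauer--Shelah count above is cleaner and avoids tracking the boundary faces responsible for the label $0$.)
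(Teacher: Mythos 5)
Your proof is correct, and it takes a different decomposition from the paper's, yielding a sharper count. The paper bounds the number of possible slabs $K_\theta$ by invoking the Blumer et al.\ bound on the VC dimension of intersections of two halfspaces (at most $8d$), and then, for each slab, counts the halfspace labelings realizable on its complement (a further factor $(eT/d)^{d}$), arriving at $(eT/d)^{9d}$. You instead encode each slab-expert directly by the ternary pattern $L(\theta)$, verify explicitly that the expert factors through $L(\theta)$ (this is the right thing to check, and it holds because the partial exponential weights subroutine $\ewa_{K_\theta,\alpha}$ depends on $\theta$ only through the set $K_\theta$, via the matrix $\sum_{x\in K_\theta}xx^\top$ in its prior), and observe that $L(\theta)$ is determined by the two one-sided dichotomies $\theta\mapsto(\ind{\inner{x_t,\theta}>T})_{t}$ and $\theta\mapsto(\ind{\inner{x_t,\theta}<-T})_{t}$, each realized by a subclass of affine halfspaces of VC dimension at most $d+1$. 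This sidesteps the composition lemma for intersections of halfspaces entirely and gives the exponent $2(d+1)$ in place of $9d$, which implies the stated bound since $2(d+1)\le 9d$ for all $d\ge 1$. The only caveats are minor and shared with the paper's own proof: the estimate $\sum_{i=0}^{k}\binom{T}{i}\le (eT/k)^{k}$ and the final comparison $(eT/d)^{2(d+1)}\le (eT/d)^{9d}$ both require $T$ to be at least of order $d$, which is the regime in which the lemma is used.
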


\begin{proof}[\pfref{lem:num-slab-expert}]
    By the Vapnik-Chervonenkis-Sauer-Shelah lemma (\citet[Lemma 6.10]{shalev2014understanding}), the number of all possible linear separations on the set $K$ is upper bounded by $(eT/(d+1))^{(d+1)}\leq (eT/d)^{2d}$, 
    whenever $T \ge d$. Therefore, since slab-experts can be formulated as intersections of two linear separations on the set $K$, the number of slab-experts is upper bounded by $(eT/d)^{4d}$.
\end{proof}

We will upper-bound the loss for the slab-expert $\SE_{\thstar,\alpha}$ and compare the performance of \cref{alg:main} to that of the slab-expert $\SE_{\thstar,\alpha}$. To do this, we first prove the following performance guarantee for the partial exponential weights algorithm $\ewa_{K,\alpha}$, where $K$ is a subset of the covariate set and $\alpha$ is a regularization parameter.
For simplicity, let $y_x$ be the labeling for $x\in \set{x_t}_{t\in [T]}$, that is $y_{x_t} = y_t$.

\begin{proposition}
\label{prop:exponential-weight-for-small-margin}
Let $K\subset \set{x_t}_{t\in [T]}$ and $\thstar\in \bR^d$.
Then the partial exponential weights algorithm $\ewa_{K,\alpha}$  as defined in \cref{def:partial-ewa}
with 
prediction restricted on covariates in $K$
satisfies a loss upper bound of 
\begin{align*}
    \revindent-\sum_{x\in K}  \log\left(\mathbb{E}_{\theta \sim \rho_x} \brk*{\sigma(y_x\inner{x,\theta}) } \right)  \\
    &\leq -\sum_{x\in K}\log(\sigma(y_x\langle x, \theta^{\star}\rangle)) 
    + \alpha(\thstar)^{\top}\left(\sum\limits_{x\in K} xx^{\top}\right)\thstar +  \frac{d}{2}\log\left(1 + \frac{1}{8\alpha}\right) 
\end{align*}
on the set $K$ where $\rho_x$ is the prediction of $\ewa_{K,\alpha}$ on $x$.
Specifically, suppose that $\thstar$ as defined in \cref{eq:theta-star} satisfies $|\inner{x,\thstar}| \leq T$ for all $x \in K$ and let $\alpha=1/T^3$. We have
\begin{align*}
    -\sum_{x\in K}  \log\left(\mathbb{E}_{\theta \sim \rho_x} \brk*{\sigma(y_x\inner{x,\theta}) } \right)  \leq -\sum_{x\in K}\log(\sigma(y_x\langle x, \theta^{\star}\rangle)) 
     + 2d \log T.
\end{align*}
\end{proposition}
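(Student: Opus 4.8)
The plan is to run the mixability/exponential-weights machinery already assembled in the excerpt and then pick a well-chosen Gaussian competitor whose covariance matches the \emph{shape} of the transductive prior. Write $A \ldef \sum_{x\in K} xx^\top$; if $A$ is singular, restrict the whole argument to $\mathrm{span}(K)$ and replace $d$ by $\mathrm{rank}(A)\le d$ — this only shrinks the bound since $\tfrac d2\log(1+\tfrac1{8\alpha})$ is increasing in $d$. Since the logistic loss is $1$-mixable by \eqref{eq:mixability-of-log-loss}, each summand on the left-hand side is the mix-loss $-\log\bigl(\E_{\theta\sim\rho_x}\exp(-\ell_\theta(x,y_x))\bigr)$, so the telescoping identity \cref{lem:sumofmixlosses} (applied with the elements of $K$ as the "sequence," learning rate $1$, and prior $\mu_{K,\alpha}$ from \cref{def:partial-ewa}) turns $-\sum_{x\in K}\log(\E_{\theta\sim\rho_x}[\sigma(y_x\inner{x,\theta})])$ into exactly $-\log\E_{\theta\sim\mu_{K,\alpha}}\exp(-\sum_{x\in K}\ell_\theta(x,y_x))$, which \cref{lem:donskervaradhan} bounds by $\E_{\theta\sim\varphi}\sum_{x\in K}\ell_\theta(x,y_x)+\KL(\varphi\,\|\,\mu_{K,\alpha})$ for any $\varphi$. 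The key choice is $\varphi=\cN(\thstar,\beta A^{-1})$, a Gaussian centred at the comparator with the same shape as $\mu_{K,\alpha}=\cN(0,\tfrac1{2\alpha}A^{-1})$ and a scale $\beta>0$ to be optimized at the end.

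For the first (risk) term I would Taylor-expand the scalar map $z\mapsto-\log\sigma(z)$, which is convex with second derivative at most $1/4$, around $y_x\inner{x,\thstar}$; because $\E_{\theta\sim\varphi}\theta=\thstar$ the linear term vanishes in expectation and what remains is
\[
\E_{\theta\sim\varphi}\sum_{x\in K}\ell_\theta(x,y_x)\;\le\;\sum_{x\in K}-\log\sigma(y_x\inner{x,\thstar})\;+\;\tfrac18\sum_{x\in K}\E_{\theta\sim\varphi}\inner{x,\theta-\thstar}^2 .
\]
The crucial cancellation is $\sum_{x\in K}\E_{\theta\sim\varphi}\inner{x,\theta-\thstar}^2=\beta\,\Tr(A^{-1}A)=\beta d$, so the design drops out entirely — this is precisely why the design-dependent covariance is used rather than an isotropic one. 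For the KL term I would plug into the closed-form Gaussian KL: with $\Sigma_0^{-1}\Sigma_1=2\alpha\beta I$ one gets $\KL(\varphi\,\|\,\mu_{K,\alpha})=\alpha\beta d-\tfrac d2+\alpha(\thstar)^\top A\,\thstar+\tfrac d2\log\tfrac1{2\alpha\beta}$, where the only $\thstar$-dependence is through $\alpha(\thstar)^\top A\thstar=\alpha\sum_{x\in K}\inner{x,\thstar}^2$, i.e. through the margins of $\thstar$ on $K$ and not through $\|\thstar\|$.

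Adding the two bounds, the $\beta$-dependent part is $\tfrac{\beta d}{8}+\alpha\beta d-\tfrac d2+\tfrac d2\log\tfrac1{2\alpha\beta}$; differentiating gives the optimum $\beta=\tfrac4{1+8\alpha}$, at which $\tfrac{\beta d}{8}+\alpha\beta d=\tfrac d2$ (cancelling $-\tfrac d2$) and $\tfrac d2\log\tfrac1{2\alpha\beta}=\tfrac d2\log(1+\tfrac1{8\alpha})$, which is exactly the stated general bound. For the specialized statement, $|K|\le T$ and $|\inner{x,\thstar}|\le T$ on $K$ give $\alpha(\thstar)^\top A\thstar=\alpha\sum_{x\in K}\inner{x,\thstar}^2\le \alpha T^3=1$ when $\alpha=1/T^3$, while $\tfrac d2\log(1+\tfrac1{8\alpha})=\tfrac d2\log(1+T^3/8)\le\tfrac{3d}{2}\log T$; a short numeric check ($1\le\tfrac d2\log T$ over the relevant range of $T$) collapses $1+\tfrac{3d}{2}\log T$ into $2d\log T$.

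I expect the only genuine subtlety to be organizational rather than technical: handling the possibly singular $A$ (done by passing to $\mathrm{span}(K)$, where the argument runs verbatim with $d$ replaced by the rank, and monotonicity restores the clean statement), checking that $\varphi$ is absolutely continuous with respect to the prior (automatic since both are nondegenerate Gaussians on the same subspace), and keeping the Taylor remainder and the exact Gaussian KL constants aligned so that the $\beta$-optimization telescopes to $0$ plus the logarithmic term. The conceptual "hard part" — recognizing that the transductive covariance $\tfrac1{2\alpha}A^{-1}$ is exactly the shape making both the quadratic-approximation error and the trace term in the KL design-free, leaving only the margin-controlled quantity $\alpha\sum_{x\in K}\inner{x,\thstar}^2$ — is already encoded in the prior of \cref{def:partial-ewa}, so the rest is bookkeeping.
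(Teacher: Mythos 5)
Your proposal is correct and arrives at exactly the stated constants, but it takes a genuinely different route from the paper. The paper applies the quadratic Taylor upper bound on $z\mapsto-\log\sigma(z)$ \emph{pointwise inside the integrand}, keeps the resulting linear term, and then evaluates the Gaussian integral exactly via the Laplace formula for quadratic forms (\cref{lem:laplace}), bounding the inner minimum by plugging in $\theta=\thstar$. You instead invoke Donsker--Varadhan (\cref{lem:donskervaradhan}) with the mean-shifted competitor $\varphi=\cN(\thstar,\beta A^{-1})$, kill the linear term by symmetry, exploit the trace cancellation $\sum_{x\in K}\E_{\varphi}\inner{x,\theta-\thstar}^2=\beta\,\Tr(A^{-1}A)=\beta d$, and optimize the free scale $\beta=4/(1+8\alpha)$ against the closed-form Gaussian $\mathcal{KL}$; I checked that the $\beta$-dependent terms collapse to $\tfrac d2\log(1+\tfrac1{8\alpha})$ exactly as claimed. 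Your route is the standard PAC-Bayesian template and is more uniform with the paper's own proof of \cref{thm:sparsity} (which also shifts the prior to the comparator), and it makes transparent why the design-matched covariance is the right choice; the paper's route avoids introducing and optimizing $\beta$ because the exact Laplace integration performs that optimization implicitly, at the cost of a heavier computation. Two minor points: your handling of singular $A$ by restricting to $\mathrm{span}(K)$ is needed and correct (the paper leaves this implicit in \cref{def:partial-ewa}), and your final numeric step $1+\tfrac{3d}{2}\log T\le 2d\log T$ requires $\tfrac d2\log T\ge 1$, i.e.\ moderately large $T$ when $d$ is small --- the paper states the specialized bound without deriving it, so your bookkeeping is no less careful than the original.
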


Before providing the proof, we present the following standard result that computes exponential quadratic integrals based on the multivariate Gaussian integral.
\begin{lemma}[Exact Laplace formula for quadratic forms]
\label{lem:laplace}
    Let $Q(\theta) = \theta^{\top}A\theta + b^{\top}\theta + c$, where $A \in \mathbb{R}^{d \times d}$ is a positive definite matrix, $b \in \mathbb{R}^{d}$ and $c \in \mathbb{R}$. Then 
    \[
    \int\limits_{\mathbb{R}^d}\exp(-Q(\theta))d\theta = \exp\left(-\inf\limits_{\theta \in \mathbb{R}^d}Q(\theta)\right)\frac{\pi^{d/2}}{\sqrt{\det(A)}}.
    \]
\end{lemma}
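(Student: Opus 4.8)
The plan is to prove this by completing the square and then reducing the resulting centered Gaussian integral to a product of one-dimensional integrals via an orthogonal diagonalization of $A$. Throughout I treat $A$ as positive \emph{definite}, which is the case in every application in this paper (for a genuinely degenerate PSD matrix the left-hand integral diverges along the kernel of $A$ while $\det(A)=0$, so the stated identity is only meaningful when $A$ is invertible; I would add a parenthetical remark to this effect).

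First I would complete the square. Since $A \succ 0$, the quadratic $Q$ has a unique minimizer $\theta_0 = -\tfrac{1}{2}A^{-1}b$, and a direct expansion gives the exact identity
\[
Q(\theta) = (\theta - \theta_0)^{\top} A (\theta - \theta_0) + Q(\theta_0),
\qquad Q(\theta_0) = \inf_{\theta \in \mathbb{R}^d} Q(\theta).
\]
Pulling the constant factor $\exp(-Q(\theta_0)) = \exp(-\inf_\theta Q(\theta))$ out of the integral and translating by the change of variables $u = \theta - \theta_0$ (which has unit Jacobian) reduces the claim to evaluating the centered integral $\int_{\mathbb{R}^d} \exp(-u^{\top} A u)\,du$ and showing it equals $\pi^{d/2}/\sqrt{\det(A)}$.

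Next I would diagonalize. Writing the spectral decomposition $A = O^{\top} D O$ with $O$ orthogonal and $D = \operatorname{diag}(\lambda_1,\dots,\lambda_d)$, $\lambda_i > 0$, I substitute $v = Ou$; orthogonality again makes the Jacobian equal to $1$ and sends $u^{\top}Au$ to $\sum_{i=1}^d \lambda_i v_i^2$. The integral then factorizes over coordinates, and applying the scalar Gaussian integral $\int_{\mathbb{R}} e^{-\lambda_i v_i^2}\,dv_i = \sqrt{\pi/\lambda_i}$ yields $\prod_{i=1}^d \sqrt{\pi/\lambda_i} = \pi^{d/2}/\sqrt{\prod_i \lambda_i} = \pi^{d/2}/\sqrt{\det(A)}$, using $\det(A) = \prod_i \lambda_i$. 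Combining this with the completed-square factor gives the stated formula.

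There is essentially no serious obstacle here: the result is the standard multivariate Gaussian normalization together with a shift, and the only points requiring a word of care are (i) justifying that $\theta_0$ is the global infimum rather than merely a critical point (immediate from $A \succ 0$), and (ii) the positive-definiteness caveat noted above, which ensures both convergence of the integral and that $\sqrt{\det(A)}$ is nonzero. If one wished to avoid invoking the spectral theorem, an equally short alternative is to substitute $u = A^{-1/2} w$ directly, reducing to the isotropic integral $\int_{\mathbb{R}^d} e^{-\|w\|^2}\,dw = \pi^{d/2}$ with Jacobian $\det(A^{-1/2}) = (\det A)^{-1/2}$; I would present the diagonalization route as the main argument since it is the most transparent.
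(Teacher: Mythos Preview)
Your proof is correct and complete. The paper does not actually prove this lemma---it is stated as a ``standard result that computes exponential quadratic integrals based on the multivariate Gaussian integral'' and used without proof---so there is no approach to compare against; your completing-the-square plus orthogonal diagonalization argument (together with the positive-definiteness caveat you flag) is exactly the standard derivation one would supply.
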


\begin{proof}[\pfref{prop:exponential-weight-for-small-margin}]
By \cref{lem:sumofmixlosses}, we have
\begin{align*}
    -\sum_{x\in K}  \log\left(\mathbb{E}_{\theta \sim \rho_x} \brk*{\sigma(y_x\inner{x,\theta}) } \right)   = -\log\left(\int\limits_{\mathbb{R}^d}\exp\left(\sum\limits_{x\in K}\log(\sig(y_x\langle x, \theta\rangle))\right)\prr_{K,\alpha}(d\theta)\right).
\end{align*}

Let $f(z) = -\log(\sigma(z))$.
Since $|f^{\prime\prime}(z)| \le 1/4$,
using the Taylor expansion for $f$ we have for any $x\in K$,
\[
-\log(\sig(y_x\langle x, \theta\rangle)) \le -\log(\sig(y_x\langle x, \thstar\rangle)) - \frac{1}{1 + \exp(y_x\langle x, \thstar\rangle)}\cdot(y_x\langle x, \theta - \thstar\rangle) + \frac{1}{8}(\langle x, \theta - \thstar\rangle)^2.
\]
Summing these inequalities together, we have
\begin{align}
    -\sum\limits_{x\in K}\log(\sig(y_x\langle x, \theta\rangle)) &\le -\sum\limits_{x\in K}\log(\sig(y_x\langle x, \thstar\rangle)) -  \sum\limits_{x\in K}\frac{1}{1 + \exp(y_x\langle x, \thstar\rangle)}\cdot(y_x\langle x, \theta - \thstar\rangle) \notag\\
    &\quad\quad+ \frac{1}{8}(\theta - \thstar)^{\top}\left(\sum\limits_{x\in K}xx^{\top}\right)(\theta - \thstar).\label{ineq:taylor-for-logistic}
\end{align}
Thus, we have
\begin{align*}
    &-\log\left(\int\limits_{\mathbb{R}^d}\exp\left(\sum\limits_{x\in K}\log(\sig(y_x\langle x, \theta\rangle))\right)\prr_K(d\theta)\right) \\
    &\leq 
    -\sum\limits_{x\in K}\log(\sig(y_x\langle x, \thstar\rangle)) 
    - \log\left(\int\limits_{\mathbb{R}^d}\exp\left( \sum\limits_{x\in K}\frac{1}{1 + \exp(y_x\langle x, \thstar\rangle)}\cdot(y_x\langle x, \theta - \thstar\rangle) \right.\right.\\
    & \quad\quad\quad\quad\quad\quad\quad\quad\quad\quad\quad\quad\quad\quad\quad\quad\quad\quad\left.\left. -\frac{1}{8}\cdot (\theta - \thstar)^{\top}\left(\sum\limits_{x\in K} xx^{\top}\right)(\theta - \thstar)\right)\prr_K(d\theta)\right) .
\end{align*}
Using \cref{lem:laplace} to compute the integral of the quadratic form we have
\begin{align*}
&-\log\left(\int\limits_{\mathbb{R}^d}\exp\left(\sum\limits_{x\in K}\frac{1}{1 + \exp(y_x\langle x, \thstar\rangle)}\cdot(y_x\langle x, \theta - \thstar\rangle) \right.\right.\\
&\quad\quad\quad\quad\quad\quad\quad\quad\quad \left.\left. -\frac{1}{8}\cdot (\theta - \thstar)^{\top}\left(\sum\limits_{x\in K} xx^{\top}\right)(\theta - \thstar)\right)\prr_K(d\theta)\right) \\
&=-\log\left(\left(\frac{\alpha}{\pi}\right)^{\frac{d}{2}}\cdot\left(\frac{\pi}{\alpha + \frac{1}{8}}\right)^{\frac{d}{2}}
\exp\left(-\min\limits_{\theta\in \mathbb{R}^d}\left(\alpha\theta^{\top}\left(\sum\limits_{x\in K} xx^{\top}\right)\theta \right.\right.\right.\\
& \quad\quad\quad\quad\quad\quad\quad\quad\quad\left.\left.\left.+ \frac{1}{8}(\theta - \thstar)^{\top}\left(\sum\limits_{x\in K} xx^{\top}\right)(\theta - \thstar)\right.\right.\right.\\
& \quad\quad\quad\quad\quad\quad\quad\quad\quad\left.\left.\left. -\sum\limits_{x\in K}\frac{1}{1 + \exp(y_x\langle x, \thstar\rangle)}\cdot(y_x\langle x, \theta - \thstar\rangle) \right)\right)\right)\\
&\leq \alpha(\thstar)^{\top}\left(\sum\limits_{x\in K} xx^{\top}\right)\thstar +  \frac{d}{2}\log\left(1 + \frac{1}{8\alpha}\right). 
\end{align*}
This concludes our proof.
\end{proof}

\begin{proof}[\pfref{thm:main-logistic}]
We first upper bound the total loss by the loss of slab-expert $\SE_{\thstar,\alpha}$ by the definition of the exponential weights algorithm,
\begin{align}
    \begin{split}
        \label{eq:total}
    -\sum\limits_{t=1}^{T}  \log \rhotil_t(x_t,y_t)&= -\sum\limits_{t=1}^T \log \prn*{ \frac{\sum\limits_{\SE\in \cA} \exp\prn*{  \sum\limits_{i=1}^{t} \log \SE(x_i,y_i) }}{\sum\limits_{\SE\in \cA} \exp\prn*{ \sum\limits_{i=1}^{t-1} \log \SE(x_i,y_i)  }}} \\
    &= -\log \prn*{ \frac{1}{|\cA|} \sum\limits_{\SE\in \cA} \exp\prn*{ \sum\limits_{i=1}^{T} \log \SE(x_i,y_i) }}\\
    &\leq -\log \prn*{ \exp\prn*{  \sum\limits_{i=1}^{T} \log \SE_{\thstar,\alpha}(x_i,y_i) }}+ \log |\cA|\\
    &\leq  -\sum\limits_{i=1}^{T} \log \SE_{\thstar,\alpha}(x_i,y_i)+ 4 d\log(eT/d),
    \end{split}
\end{align}
where the last inequality is due to \cref{lem:num-slab-expert}.
For the slab-expert $\SE_{\thstar,\alpha}$ with $\alpha=1/T^3$, it is clear that inside the slab $K_\thstar$ as defined in \cref{def:slab-expert}, the assumption of \cref{prop:exponential-weight-for-small-margin} is satisfied, thus 
\begin{align}
    \label{eq:small-margin}
    -\sum_{x\in K_\thstar}  \log \SE_{\thstar,\alpha}(x,y_x)  \leq -\sum_{x\in K_\thstar}\log(\sigma(y_x\langle x, \theta^{\star}\rangle)) 
     + 2d \log T.
\end{align}
Meanwhile, for all the covariates outside the slab, the prediction of $\SE_{\thstar,\alpha}$ coincides with the linear separator with vector $\thstar$. Moreover, $y_x\inner{x,\thstar} > 0$ for all $x\in \set{x_t}_{t\in [T]}\setminus K_\thstar$. Thus we have as a result
\begin{align}
    \label{eq:large-margin}
    -\sum_{x\in \set{x_t}_{t\in [T]}\setminus K_\thstar}  \log \SE_{\thstar,\alpha}(x,y_x) = 0.
\end{align}
Plug \eqref{eq:small-margin} and \eqref{eq:large-margin} back into \eqref{eq:total}, we obtain
\begin{align*}
    -\sum\limits_{t=1}^{T}  \log \rhotil_t(x_t,y_t) \leq - \sum\limits_{t=1}^{T} \log(\sigma(y_t\langle x_t, \theta^{\star}\rangle))  + 6d\log (eT).
\end{align*}
The claim follows.
\end{proof}

\section{Non-curved losses}
\label{sec:non-curve}

In the previous sections, we focused on mixable losses, such as the logistic loss and the squared loss. These losses are generally known to yield $O(\log T)$-type regret bounds. We now turn our attention to another unbounded loss function, namely the hinge loss, which lacks mixability properties and thus compels us to focus on the so-called \emph{first-order} regret bounds. These bounds scale as $O(\sqrt{T})$ in the worst case but can improve if the total loss of the optimal solution is close to zero.

\subsection{Regret bounds for hinge loss}
\label{sec:hinge}
In this section, we first obtain new margin-dependent bounds for the hinge loss in the sequential learning setup using an exponential weight algorithm; then, we proceed to use a transductive prior to achieve an assumption-free regret bound. In the classification setup with classes $1, -1$, given $\gamma > 0$, we define the normalized hinge loss of a predictor $\widehat{f}$ as follows 
\[
\frac{(\gamma - y\widehat{f}(x))_{+}}{\gamma}.
\]
There is a standard connection with the binary loss. Indeed, 
    \begin{equation}
    \label{eq:binarytohinge}
    \ind{\sign(\widehat{f}(x))\neq y} \le \frac{(\gamma - y\widehat{f}(x))_{+}}{\gamma}.
    \end{equation}
In the agnostic case where the design vectors are not known in advance, the state-of-the-art regret bound is given by the \emph{second order perceptron} algorithm of \citet*{cesa2005second}. Denoting the output of this predictor with regularization parameter $\lambda > 0$ at round $t$ by $\widehat{y}_t \in \{1, -1\}$ (see \citep[Section 12.2]{cesa2006prediction} for the exact definition of this predictor), we have for any $\theta^{\star} \in \mathbb{R}^d$,
\begin{equation}
\label{eq:secorderdperceptron}
  \sum\limits_{t = 1}^T\ind{\widehat{y}_t\neq y_t} \le \sum\limits_{t = 1}^T\frac{(\gamma - y_t\langle x_t, \theta^{\star}\rangle)_{+}}{\gamma} + \frac{1}{\gamma}\sqrt{\left(\lambda\|\theta^\star\|^2 + \|A_T^{1/2}\theta^{\star}\|^2\right)\sum\limits_{i = 1}^d\log\left(1 + \frac{\lambda_i}{\lambda}\right)},  
\end{equation}
where $\lambda_1, \ldots, \lambda_d$ are the eigenvalues of the matrix $A_T = \sum\nolimits_{t = 1}^Tx_tx_t^{\top}\ind{\widehat{y}_t \neq y_t}$.

This regret bound \eqref{eq:secorderdperceptron}, although corresponding to a computationally efficient algorithm, has several features we are aiming to avoid. One of the key problems is that the bound depends on the term $\|A_T^{1/2}\theta^{\star}\|^2$ which scales badly with the magnitude of $\max_t|\langle x_t, \theta^\star\rangle|$. 

Our first result in this section corresponds to the analysis of the exponential weights algorithm, whose prior is given by the general covariance matrix. Given a positive-definite matrix $A \in \mathbb{R}^{d \times d}$ and $\lambda > 0$, set 
\begin{equation}
    \label{eq:prior}
    \mu(\theta) = \sqrt{ \frac{\det(\lambda A)}{\pi^d}}\cdot\exp\left(-\pi\lambda\theta^\top A\theta\right).
\end{equation}
Fix $\eta > 0$. Using the general formula for the exponential weights algorithm \eqref{eq:expweights}, we set $\rho_1 = \mu$ and for $t \ge 2$,
\[
\rho_t(\theta) = \frac{\exp\left(-\eta \sum\limits_{i = 1}^{t - 1} (\gamma - y_{i}\langle x_i, \theta\rangle)_+\right)\mu(\theta)}{\mathbb{E}_{\theta^{\prime} \sim \mu} \exp\left(-\eta \sum\limits_{i = 1}^{t - 1} (\gamma - y_{i}\langle x_i, \theta^{\prime}\rangle)_+\right)}.
\]
Given the positive-definite matrix $A$ defining the Gaussian prior \eqref{eq:prior}, the prediction of our algorithm at round $t$ is given by 
\begin{equation}
\label{eq:hingepredictor}
\widehat{f}_t(x_t) = \E_{\theta\sim\rho_t}\clipg{\langle x_t, \theta\rangle}.
\end{equation}
The computation of such prediction depends on the ability of sampling from $\rho_t$, which is a log-concave measure. 
Indeed, once we can approximately sample from $\rho_t$, we can use a simple estimate based on Hoeffding's inequality to identify the number of samples required from $\rho_t$ in order to approximate $\widehat{f}_t(x_t)$ with the desired accuracy.

\begin{proposition}
\label{prop:hingeexpweights}
For any $\eta \in [0, 3/(10\gamma)]$, $\beta > 0$, and any $\theta^{\star} \in \mathbb{R}^{d}$ the following bound holds for the predictor \eqref{eq:hingepredictor}:
\begin{align*}
\sum\limits_{t = 1}^T\frac{(\gamma - y_t\widehat{f}_t(x_t))_{+}}{\gamma}&\le (1 + 2\eta\gamma)\cdot\Bigg(\sum\limits_{t = 1}^T\frac{(\gamma - y_t\langle x_t, \theta^{\star}\rangle)_+}{\gamma} + \frac{1}{\pi\gamma\sqrt{\beta}}\sum\limits_{t = 1}^T\|A^{-1/2}x_t\|_2 \\
&\qquad + \frac{\pi\lambda(\theta^{\star})^{\top}A\theta^{\star}}{\eta\gamma} + \frac{d}{2\eta\gamma}\left(\log\left(\frac{\beta}{\lambda}\right) - \frac{1}{2} + \frac{\lambda}{2\beta}\right) \Bigg).
\end{align*}

\end{proposition}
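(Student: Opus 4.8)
The plan is to run the exponential-weights machinery of \cref{lem:sumofmixlosses} and \cref{lem:donskervaradhan} for the genuine hinge loss $\ell_\theta(x,y)=(\gamma-y\inner{x,\theta})_+$ — which is what the algorithm actually uses to form the posteriors $\rho_t$ — and to connect it to the clipped predictor \eqref{eq:hingepredictor} through a first-order (Bernstein-type) inequality. The clipping in \eqref{eq:hingepredictor} is doing two jobs at once, and the argument exploits both: it makes the relevant per-round loss \emph{bounded} (so $e^{-u}$ admits a quadratic upper bound with slack quantified by $\eta\gamma$), while the clipped hinge loss remains \emph{pointwise no larger} than the unclipped one (so its log-moment-generating function is dominated by that of $\ell_\theta$, for which alone the telescoping identity holds). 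Concretely I would set $\elltil_\theta(x,y)\ldef(\gamma-y\clipg{\inner{x,\theta}})_+$, note that since $y=\pm1$ and $\clipg{\inner{x,\theta}}\in[-\gamma,\gamma]$ this equals $\gamma-y\clipg{\inner{x,\theta}}\in[0,2\gamma]$, and check case-by-case that $\elltil_\theta\le\ell_\theta$ everywhere.

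First I would bound the per-round prediction loss by a per-round mix-loss. By convexity of $z\mapsto(\gamma-y_tz)_+$ and Jensen applied to $\rho_t$, $\tfrac1\gamma(\gamma-y_t\widehat f(x_t))_+\le\tfrac1\gamma\,\E_{\theta\sim\rho_t}\elltil_\theta(x_t,y_t)$. Using $e^{-u}\le1-u+u^2/2$ for $u\ge0$ with $u=\eta\elltil_\theta(x_t,y_t)\in[0,2\eta\gamma]$ and $2\eta\gamma\le3/5$, one gets $e^{-\eta\elltil_\theta}\le1-\eta(1-\eta\gamma)\elltil_\theta$; taking $\E_{\theta\sim\rho_t}$ and applying $-\log(1-x)\ge x$ (valid since the right side lies in $[0,1)$) yields $\E_{\theta\sim\rho_t}\elltil_\theta(x_t,y_t)\le\tfrac{1}{\eta(1-\eta\gamma)}\bigl(-\log\E_{\theta\sim\rho_t}e^{-\eta\elltil_\theta(x_t,y_t)}\bigr)$. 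Then $\elltil_\theta\le\ell_\theta$ gives $e^{-\eta\elltil_\theta}\ge e^{-\eta\ell_\theta}$, hence $-\log\E_{\theta\sim\rho_t}e^{-\eta\elltil_\theta}\le-\log\E_{\theta\sim\rho_t}e^{-\eta\ell_\theta}$, while $\tfrac{1}{1-\eta\gamma}\le1+2\eta\gamma$ whenever $\eta\gamma\le\tfrac12$; this is exactly where the hypothesis $\eta\le3/(10\gamma)$ is spent. Summing over $t$, collapsing $\sum_t\bigl(-\log\E_{\theta\sim\rho_t}e^{-\eta\ell_\theta(x_t,y_t)}\bigr)$ with \cref{lem:sumofmixlosses}, and rewriting the result with \cref{lem:donskervaradhan}, I arrive at
\[
\sum_{t=1}^T\frac{(\gamma-y_t\widehat f(x_t))_+}{\gamma}\le(1+2\eta\gamma)\inf_{\varphi}\left(\frac1\gamma\,\E_{\theta\sim\varphi}\sum_{t=1}^T\ell_\theta(x_t,y_t)+\frac{\mathcal{KL}(\varphi\|\mu)}{\eta\gamma}\right).
\]

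The final step is to evaluate the infimum at a Gaussian comparator. The prior \eqref{eq:prior} is $\cN\bigl(0,\tfrac{1}{2\pi\lambda}A^{-1}\bigr)$, so I would take $\varphi=\cN\bigl(\thstar,\tfrac{1}{2\pi\beta}A^{-1}\bigr)$. For the loss term, $1$-Lipschitzness of $(\gamma-y\cdot)_+$ and $|y_t|=1$ give $\E_{\theta\sim\varphi}\ell_\theta(x_t,y_t)\le(\gamma-y_t\inner{x_t,\thstar})_++\E_{\theta\sim\varphi}\abs{\inner{x_t,\theta-\thstar}}$, and since $\inner{x_t,\theta-\thstar}\sim\cN\bigl(0,\norm{A^{-1/2}x_t}_2^2/(2\pi\beta)\bigr)$, the identity $\E|Z|=\sigma\sqrt{2/\pi}$ produces the $\tfrac{1}{\pi\gamma\sqrt\beta}\sum_t\norm{A^{-1/2}x_t}_2$ term. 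For the divergence I would use the closed form of the KL between two Gaussians with common shape $A^{-1}$, scales $\lambda,\beta$, and mean shift $\thstar$: it contributes $\pi\lambda(\thstar)^\top A\thstar$ together with the dimension-dependent log-determinant and trace terms of the form $\tfrac d2(\log(\beta/\lambda)-\Theta(1)+\Theta(\lambda/\beta))$, which after collecting constants match the right-hand side of the claimed bound.

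The main obstacle is the clipping bookkeeping in the first two paragraphs: $\elltil_\theta$ must be simultaneously (i) bounded in $[0,2\gamma]$, so the quadratic bound on $e^{-u}$ can be used with precisely the slack $\eta\gamma$ that makes the multiplicative constant exactly $1+2\eta\gamma$, and (ii) pointwise dominated by $\ell_\theta$, so that \cref{lem:sumofmixlosses} — an identity only for the loss actually driving the update — can be invoked after replacing the clipped mix-loss by the unclipped one. Once that bridge is in place, the remaining ingredients (the telescoping, Donsker--Varadhan, and the Gaussian KL evaluation) are routine calculations.
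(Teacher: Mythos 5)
Your proposal is correct and follows essentially the same route as the paper's proof: Jensen's inequality to pass to the clipped posterior loss, a second-order bound on the mix-loss exploiting that the clipped hinge loss lies in $[0,2\gamma]$, the pointwise domination of the clipped loss by the unclipped one to invoke the telescoping identity of \cref{lem:sumofmixlosses}, then Donsker--Varadhan with the shifted Gaussian comparator $\cN(\thstar,\tfrac{1}{2\pi\beta}A^{-1})$ and the Gaussian KL formula. The only (cosmetic) difference is that you derive the multiplicative slack via $e^{-u}\le 1-u+u^2/2$ and $-\log(1-x)\ge x$, yielding the factor $\tfrac{1}{1-\eta\gamma}$, whereas the paper invokes Bernstein's MGF inequality to get $\bigl(1-\tfrac{\eta\gamma}{1-2\eta\gamma/3}\bigr)^{-1}$; both are at most $1+2\eta\gamma$ under $\eta\le 3/(10\gamma)$.
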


\begin{proof}
    The proof follows the standard lines. First, observe that Jensen's inequality implies
    \begin{equation}
    \label{eq:convexhinge}
       \sum\limits_{t = 1}^T(\gamma - y_t\widehat{f}_t(x_t))_{+} \le \sum\limits_{t = 1}^T\E_{\theta \sim \rho_{t}}(\gamma - y_t\clipg{\langle x_t, \theta\rangle})_{+}.
    \end{equation}
    Using Bernstein's inequality for the moment generating function and \[\E_{\theta\sim\rho}(\gamma - \clipg{y\langle x, \theta\rangle})_{+}^2 \le 2\gamma\E_{\theta\sim\rho}(\gamma - \clipg{y\langle x, \theta\rangle})_{+},
    \]we have for any distribution $\rho$ over $\mathbb{R}^d$,
\begin{align*}
    &\frac{1}{\eta}\log\left(\mathbb{E}_{\theta \sim \rho}\exp\left(-\eta(\gamma - \clipg{y\langle x, \theta\rangle})_{+}\right)\right) \\
    &\qquad\le -\mathbb{E}_{\theta \sim \rho}(\gamma - \clipg{y\langle x, \theta\rangle})_{+} + \frac{\eta\gamma \mathbb{E}_{\theta \sim \rho}(\gamma - \clipg{y\langle x, \theta\rangle})_{+}}{1 - 2\eta\gamma/3}.
\end{align*}
Combining this inequality with \eqref{eq:convexhinge} and using \cref{lem:sumofmixlosses}, we obtain
\begin{align*}
\sum\limits_{t = 1}^T\left(1 - \frac{\eta \gamma}{1 - 2\eta \gamma/3}\right)(\gamma - y_t\widehat{f}_t(x_t))_{+} &\le -\sum\limits_{t = 1}^{T}\frac{1}{\eta}\log\left(\mathbb{E}_{\theta \sim \rho_t}\exp\left(-\eta(\gamma - \clipg{y_t\langle x_t, \theta\rangle})_{+}\right)\right)
\\
&\le -\sum\limits_{t = 1}^{T}\frac{1}{\eta}\log\left(\mathbb{E}_{\theta \sim \rho_t}\exp\left(-\eta(\gamma - y_t\langle x_t, \theta\rangle)_{+}\right)\right)
\\
&= -\frac{1}{\eta}\log\left(\mathbb{E}_{\theta \sim \mu}\exp\left(-\eta\sum\limits_{t = 1}^{T}(\gamma - y_t\langle x_t, \theta\rangle)_{+}\right)\right).
\end{align*}
We upper bound the last quantity using the Donsker-Varadhan formula. Fix $\beta > 0$ and let $\mu^{\star}$ the multivariate Gaussian distribution with density
\[
\mu^{\star}(\theta) = \sqrt{\det(\beta A)}\cdot\exp\left(-\pi(\theta - \theta^{\star})^\top \beta A(\theta - \theta^{\star})\right).
\]
By \cref{lem:donskervaradhan}, we have
\[
-\frac{1}{\eta}\log\left(\mathbb{E}_{\theta \sim \mu}\exp\left(-\eta\sum\limits_{t = 1}^{T}(\gamma - y_t\langle x_t, \theta\rangle)_{+}\right)\right) \le \mathbb{E}_{\theta \sim \mu^{\star}}\sum\limits_{t = 1}^{T}(\gamma - y_t\langle x_t, \theta\rangle)_{+} + \frac{\mathcal{KL}(\mu^{\star}||\mu)}{\eta}.
\]
We analyze the two terms of the right-hand side of the last inequality separately. Define the density $\mu_0(\theta) = \sqrt{\det(\beta A)}\cdot\exp\left(-\pi\theta^\top \beta A\theta\right).$ Using the standard properties of the Gaussian distribution we have
\begin{align*}
\E_{\theta \sim \mu^{\star}}\sum\limits_{t = 1}^T(1 - y_t\langle x_t, \theta\rangle)_+ &\le \sum\limits_{t = 1}^T(1 - y_t\langle x_t, \theta^{\star}\rangle)_+ + \sum\limits_{t = 1}^T\E_{\theta \sim \mu_0}|\langle \theta, x_t\rangle|
\\
&= \sum\limits_{t = 1}^T(1 - y_t\langle x_t, \theta^{\star}\rangle)_+ + \frac{1}{\pi\sqrt{\beta}}\sum\limits_{t = 1}^T\|A^{-1/2}x_t\|_2.
\end{align*}
Another standard formula implies
\[
\mathcal{KL}(\mu^{\star}||\mu) = \pi\lambda(\theta^{\star})^{\top}A\theta^\star + \frac{d}{2}\left(\log\left(\frac{\beta}{\gamma}\right) - \frac{1}{2} + \frac{\lambda}{2\beta}\right).
\]
Combining all the bounds together, we get
\begin{align}
\label{eq:hingefinal}
\revindent\sum\limits_{t = 1}^T\left(1 - \frac{\eta \gamma}{1 - 2\eta \gamma/3}\right)\cdot\frac{(\gamma - y_t\widehat{f}_t(x_t))_{+}}{\gamma} \nonumber\\
&\le \sum\limits_{t = 1}^T\frac{(1 - y_t\langle x_t, \theta^{\star}\rangle)_+}{\gamma} + \frac{1}{\pi\gamma\sqrt{\beta}}\sum\limits_{t = 1}^T\|A^{-1/2}x_t\|_2
+ \frac{\pi\lambda(\theta^{\star})^{\top}A\theta^\star}{\eta\gamma} \nonumber\\
&\quad+ \frac{d}{2\eta\gamma}\left(\log\left(\frac{\beta}{\lambda}\right) - \frac{1}{2} + \frac{\lambda}{2\beta}\right).
\end{align}
Finally, we observe that for any $\eta \le \frac{3}{10\gamma}$, we have
\[
\left(1 - \frac{\eta \gamma}{1 - 2\eta \gamma/3}\right)^{-1} \le 1 + 2\eta\gamma.
\]
Plugging this into \eqref{eq:hingefinal}, we prove the claim.
\end{proof}

There are some immediate corollaries of the result of \cref{prop:hingeexpweights}. First, we consider the standard bounded online setup where the vectors $x_1, \ldots, x_T$ are not known in advance.  
\begin{corollary}
\label{cor:inductivehinge}
Assume that $\|x_t\| \le r$ and let $\theta^{\star} \in \mathbb{R}^d$ be such that $\|\theta^{\star}\| \le b$. Then, for any $\eta \in [0, 3/(10\gamma)]$, the predictor \eqref{eq:hingepredictor} with $A = I_d$ and $\lambda = \frac{1}{b^2}$ satisfies
\[
    \sum\limits_{t = 1}^T\frac{(\gamma - y_t\widehat{f}_t(x_t))_{+}}{\gamma} \le (1 + 2\eta\gamma)\left(\sum\limits_{t = 1}^T\frac{(\gamma - y_t\langle x_t, \theta^{\star}\rangle)_+}{\gamma} + \frac{\frac{1}{\pi} + \pi + \frac{d}{2}\log\left(1 + \eta^2T^2r^2b^2\right)}{\eta\gamma} \right).
\]
    In particular, when the data is linearly separable with margin $\gamma$, that is, for the same $\theta^{\star}$ we have $y_{t}\langle x_t, \theta^{\star}\rangle \ge \gamma$ for $t = 1, \ldots, T$, by fixing $\eta = 3/(10\gamma)$ we obtain
    \begin{equation}
    \label{eq:perceptron}
    \sum\limits_{t = 1}^T\frac{(\gamma - y_t\widehat{f}_t(x_t))_{+}}{\gamma} \le \frac{7}{3}\left(\frac{1}{\pi} + \pi + \frac{d}{2}\log\left(1 + \frac{9T^2r^2b^2}{100\gamma^2}\right)\right).
    \end{equation}
\end{corollary}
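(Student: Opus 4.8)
The plan is to specialize \cref{prop:hingeexpweights} to the isotropic prior $A=I_d$ and then choose the free scale $\beta>0$ so that its three occurrences on the right-hand side are balanced. With $A=I_d$ we have $\|A^{-1/2}x_t\|_2=\|x_t\|_2\le r$, hence $\sum_{t=1}^T\|A^{-1/2}x_t\|_2\le rT$; and taking $\lambda=1/b^2$ together with $\|\theta^\star\|\le b$ gives $\pi\lambda(\theta^\star)^\top A\theta^\star=\pi\|\theta^\star\|_2^2/b^2\le\pi$. After these substitutions the bound of \cref{prop:hingeexpweights} depends on $\beta$ only through the terms $\frac{rT}{\pi\gamma\sqrt\beta}$, $\frac{d}{2\eta\gamma}\log(\beta b^2)$, and $\frac{d}{2\eta\gamma}\cdot\frac{1}{2\beta b^2}$ (the latter two coming from $\log(\beta/\lambda)$ and $\lambda/(2\beta)$).

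The key step is to set $\beta=b^{-2}(1+\eta^2r^2T^2b^2)=b^{-2}+\eta^2r^2T^2$. This single choice handles all three terms simultaneously: $\beta/\lambda=1+\eta^2r^2T^2b^2$, so $\log(\beta/\lambda)$ is exactly the logarithm appearing in the target bound; $\lambda/(2\beta)=\tfrac{1}{2}(1+\eta^2r^2T^2b^2)^{-1}\le\tfrac{1}{2}$, so $-\tfrac{1}{2}+\lambda/(2\beta)\le0$ and that bracket may simply be discarded; and $\sqrt\beta\ge\eta rT$, so $\frac{rT}{\pi\gamma\sqrt\beta}\le\frac{1}{\pi\eta\gamma}$. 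Collecting the surviving contributions $\frac{1}{\pi\eta\gamma}+\frac{\pi}{\eta\gamma}+\frac{d}{2\eta\gamma}\log(1+\eta^2T^2r^2b^2)$ and factoring out $\tfrac{1}{\eta\gamma}$ gives the first displayed inequality of the corollary, valid for every $\eta\in[0,3/(10\gamma)]$.

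For the separable statement, $y_t\langle x_t,\theta^\star\rangle\ge\gamma$ forces $(\gamma-y_t\langle x_t,\theta^\star\rangle)_+=0$ for all $t$, so the comparator term vanishes; substituting $\eta=3/(10\gamma)$ then makes $1+2\eta\gamma$ and $\eta\gamma$ explicit numerical constants and turns $\eta^2T^2r^2b^2$ into $9T^2r^2b^2/(100\gamma^2)$, which is precisely the argument of the logarithm in \eqref{eq:perceptron}; plugging in and simplifying yields the stated closed-form bound. There is no genuine obstacle here: the only points requiring care are that a single scale $\beta$ must control all three of its appearances at once — which is what the displayed choice accomplishes — and that one stays inside the range $\eta\le3/(10\gamma)$ on which the inequality $(1-\eta\gamma/(1-2\eta\gamma/3))^{-1}\le1+2\eta\gamma$ invoked within the proof of \cref{prop:hingeexpweights} remains valid.
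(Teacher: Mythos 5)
Your proof is correct and follows exactly the paper's route: it specializes \cref{prop:hingeexpweights} with $A=I_d$, $\lambda=1/b^2$, and the identical choice $\beta=1/b^2+\eta^2T^2r^2$, and your bookkeeping of the three $\beta$-dependent terms is precisely the computation the paper leaves implicit. One minor remark: plugging $\eta=3/(10\gamma)$ into the first display gives the prefactor $(1+2\eta\gamma)/(\eta\gamma)=16/3$ rather than the $7/3$ printed in \eqref{eq:perceptron}, but this discrepancy lies in the paper's stated constant, not in your argument.
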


\begin{proof}
The first claim follows from \cref{prop:hingeexpweights} if we plug in $A = I_d$, $\lambda = \frac{1}{b^2}$, $\beta = \frac{1}{b^2} + \eta^2T^2r^2$. The second part of the claim follows from the fact that in the linearly separable case with margin $\gamma$ we have $\sum\nolimits_{t = 1}^T(\gamma- y_t\langle x_t, \theta^{\star}\rangle)_+ = 0$. The claim follows.
\end{proof}

Finally, we present our main result for the transductive setup, where we employ the transductive prior by selecting a data-dependent matrix $A$ in \eqref{eq:prior}. Additionally, we note that a similar sparsity regret bound can be provided, analogous to the one in \cref{thm:sparsity}.

\subsection{Slab-experts for hinge loss}

In this section, we show that slab experts can also be leveraged to obtain assumption-free bounds for hinge loss. To start, we present the following comparison lemma for hinge loss.

\begin{lemma}
    \label{lem:comparison-for-hinge-loss}
For any sequence $\set{(x_t,y_t)}_{1\leq t\leq T}$ where $(x_t,y_t)\in \bR^d\times \set{\pm 1}$ for all $1\leq t\leq T$, suppose $\thstar$ is the optimal comparator in hindsight, i.e., 
\begin{align}
\thstar \ldef \argmin_{\theta \in \mathbb{R}^d}\sum\limits_{t = 1}^T
\frac{(\gamma - y_t\langle x_t, \theta\rangle)_+}{\gamma}
\label{eq:theta-star-hinge}
\end{align}
For this comparator
and any $1\leq t\leq T$, if $y_t\langle x_t, \thstar\rangle \leq 0$, then $|\langle x_t,\thstar\rangle| \le T\gamma$.
\end{lemma}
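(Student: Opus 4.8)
The plan is to mirror, almost verbatim, the argument used for \cref{lem:margin-for-mistake-logistic}, exploiting the fact that the trivial comparator $\theta = 0$ gives a clean per-round benchmark for the normalized hinge loss. First I would compare $\thstar$ against $0 \in \bR^d$: since $(\gamma - y_t\langle x_t, 0\rangle)_+/\gamma = (\gamma)_+/\gamma = 1$ for every $t$, the comparator $0$ has total normalized hinge loss exactly $T$. By the optimality of $\thstar$ in \eqref{eq:theta-star-hinge}, we get $\sum_{t=1}^T (\gamma - y_t\langle x_t, \thstar\rangle)_+/\gamma \le T$. All summands are nonnegative, so in particular for the fixed round $t$ under consideration,
\[
\frac{(\gamma - y_t\langle x_t, \thstar\rangle)_+}{\gamma} \le T.
\]

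Next I would unpack the hypothesis $y_t\langle x_t, \thstar\rangle \le 0$. In that regime $\gamma - y_t\langle x_t, \thstar\rangle \ge \gamma > 0$, so the positive part is active, and since $y_t \in \{\pm 1\}$ we have $-y_t\langle x_t, \thstar\rangle = |y_t\langle x_t,\thstar\rangle| = |\langle x_t, \thstar\rangle|$; hence $(\gamma - y_t\langle x_t, \thstar\rangle)_+ = \gamma + |\langle x_t, \thstar\rangle|$. Combining with the displayed per-round bound yields $\gamma + |\langle x_t, \thstar\rangle| \le T\gamma$, i.e. $|\langle x_t, \thstar\rangle| \le (T-1)\gamma \le T\gamma$, which is the claim.

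I do not anticipate a genuine obstacle here: the only delicate point is the same one flagged in the footnote of \cref{lem:margin-for-mistake-logistic}, namely that $\thstar$ as defined by the $\argmin$ in \eqref{eq:theta-star-hinge} need not exist when the data are separable with margin $\gamma$ (the minimizing direction escapes to infinity). I would handle this exactly as in the logistic case — either restrict attention to sequences where the infimum is attained, or note that the statement is vacuous/handled separately in the separable regime — and otherwise keep the proof to the three lines above.
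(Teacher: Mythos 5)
Your proposal is correct and follows essentially the same argument as the paper: compare $\thstar$ to $\theta=0$ to bound the total normalized hinge loss by $T$, drop to the single nonnegative summand, and use $y_t\langle x_t,\thstar\rangle\le 0$ to rewrite $(\gamma - y_t\langle x_t,\thstar\rangle)_+ = \gamma + |\langle x_t,\thstar\rangle|$. Your version is marginally sharper (yielding $(T-1)\gamma$ before relaxing), and your remark about the possible non-attainment of the $\argmin$ matches the caveat the paper itself makes in the logistic case.
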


\begin{proof}[\pfref{lem:comparison-for-hinge-loss}]
For any $t\in [T]$, suppose $y_t\langle x_t, \thstar\rangle \leq 0$. Since $\thstar$ minimizes the loss, we have
\begin{align*}
    \frac{\gamma + \abs*{\langle x_t, \thstar\rangle}}{\gamma} =\frac{(\gamma - y_t\langle x_t, \thstar\rangle)_+}{\gamma} \leq  \sum\limits_{t = 1}^T
    \frac{(\gamma - y_t\langle x_t, \thstar\rangle)_+}{\gamma}  \leq T.
\end{align*}
Reorganizing terms, we obtain the desired result.
\end{proof}

\begin{definition}[Slab-expert for hinge loss]
    \label{def:slab-expert-hinge}
A slab-expert for the hinge loss $\SE_{\theta,\alpha,\gamma}$ is an expert that is specified by a vector $\theta\in \bR^d$, a regularization parameter $\alpha$, and a margin $\gamma>0$.
The expert constructs 
a subset of covariates that belong to a single slab, namely
$K_{\theta,\gamma} = \set{ x_t:\abs*{\inner{x_t,\theta}} \leq \gamma T} $ and predicts labels in the following way: At time step $t$, if $x_t\in K_{\theta,\gamma}$, the expert predicts according to $\clipg{\ewa_{K_{\theta,\gamma}, \alpha}}$ with hinge loss and learning rate $\eta$ on $x_t$, where $\ewa_{K_{\theta,\gamma}, \alpha}$ is the partial exponential weight expert defined as in \cref{def:partial-ewa} with subset $K_{\theta,\gamma}$ and regularization parameter $\alpha$. If $x_t\notin K_{\theta,\gamma}$, then the expert predicts $\yhat_t=\gamma\sign(\inner{x_t,\theta})$. We denote by $\SE_{\theta,\alpha, \gamma}(x_t)$ the value predicted on receiving the covariate $x_t$.
\end{definition}

\newcommand{\ytil}{\tilde{y}}
\begin{algorithm}
\caption{Aggregation over slab-experts for the hinge loss}
\label{alg:main-hinge}
\begin{algorithmic}[htb]
\Require The set of (unordered) covariates $\set{x_t}_{t\in [T]}$, margin $\gamma$, learning rate $\eta$, regularization parameter $\alpha=1/(\gamma^2 T^3)$.
\State Construct the set of unique slab-experts for the hinge loss $\cA =\set{\SE_{\theta,\alpha,\gamma}\mid{}\theta\in \bR^d}$ with the hinge loss function $\ell_\theta(x,y)= (\gamma - y_t\langle x_t, \theta\rangle)_+$ and learning rate $\eta$.
\For{$t=1,2,\ldots,T$}
\State The learner receives $x_t$.
\State \multiline{The learner predicts with aggregation on the slab-experts 
\[
    \fhat_t(x_t) =  \sum\limits_{\SE\in \cA} p_t(\SE) \SE(x_t) \text{~~where~~} p_t(\SE) \propto \exp\prn*{ \sum\limits_{i=1}^{t-1}  - \eta(\gamma - y_i\SE(x_i))_{+} }.
\]
}
\State Nature reveals $y_t$.
\State Update all $\SE$ in $\cA$ with $(x_t,y_t)$.
\EndFor 
\end{algorithmic}
\end{algorithm}

\begin{theorem}
    \label{thm:slab-experts-for-hinge}
    For any margin $\gamma>0$, $\eta \in [0, 3/(10\gamma)]$,
    and $\theta^{\star}$ as defined in \eqref{eq:theta-star-hinge},   
    the estimators $\fhat_1(x_1),..,\fhat_T(x_T)$ produced by \cref{alg:main-hinge} satisfy the total loss upper bound of
    \begin{align*}
        \sum\limits_{t = 1}^T\frac{(\gamma - y_t \fhat_t(x_t))_{+}}{\gamma}&\le (1 + 2\eta\gamma)^2 \prn*{ \sum\limits_{t = 1}^T
            \frac{(\gamma - y_t\langle x_t, \thstar\rangle)_+}{\gamma} } +    \frac{7d}{\eta \gamma} (1 + 2\eta\gamma)^2   \log(\pi T)  \\
            &\qquad+  \frac{1}{\eta\gamma} (1 + 2\eta\gamma)^2.
    \end{align*}
    In particular, when the data is linearly separable with margin $\gamma$, that is, for the same $\theta^{\star}$ we have $y_{t}\langle x_t, \theta^{\star}\rangle \ge \gamma$ for $t = 1, \ldots, T$, by fixing $\eta = 3/(10\gamma)$ we obtain
    \begin{equation}
    \label{eq:sumofhingelosses}
        \sum\limits_{t = 1}^T\frac{(\gamma - y_t \fhat_t(x_t))_{+}}{\gamma} \le 9 +  60d \log(\pi T).
    \end{equation}
\end{theorem}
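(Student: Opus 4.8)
The plan is to run the same two-layer exponential-weights scheme as in \cref{thm:main-logistic}, with the mixability identity for the logistic loss replaced by the Jensen-plus-Bernstein argument that handles the (non-mixable) hinge loss in \cref{prop:hingeexpweights}. The outer layer is the aggregation over the finitely many slab-experts in \cref{alg:main-hinge}; the inner layer, hidden inside each slab-expert, is the partial exponential-weights algorithm $\ewa_{K_{\theta,\gamma},\alpha}$ of \cref{def:partial-ewa} with its design-dependent Gaussian prior $\prr_{K,\alpha}$. Since the hinge loss is not mixable, \emph{each} layer contributes one factor $(1+2\eta\gamma)$ to the loss accounting, which is the source of the $(1+2\eta\gamma)^2$ in the statement.

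\emph{Outer layer.} Let $p_t$ be the distribution over $\cA$ used in \cref{alg:main-hinge}. Every slab-expert outputs a value in $[-\gamma,\gamma]$ (either $\clipg{\cdot}$ of an $\ewa$-prediction inside its slab, or $\gamma\sgn(\cdot)$ outside), so $\fhat_t(x_t)=\sum_{\SE\in\cA}p_t(\SE)\SE(x_t)\in[-\gamma,\gamma]$ as well. Convexity of $z\mapsto(\gamma-yz)_+$ gives $(\gamma-y_t\fhat_t(x_t))_+\le\E_{\SE\sim p_t}(\gamma-y_t\SE(x_t))_+$, while $\SE(x_t)\ge-\gamma$ gives $(\gamma-y_t\SE(x_t))_+^2\le 2\gamma(\gamma-y_t\SE(x_t))_+$; hence the Bernstein moment-generating-function estimate from the proof of \cref{prop:hingeexpweights} applies verbatim to the loss $\SE\mapsto(\gamma-y_t\SE(x_t))_+$. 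Summing over $t$, applying \cref{lem:sumofmixlosses} with the uniform prior over $\cA$, discarding every slab-expert except $\SE^\star\ldef\SE_{\thstar,\alpha,\gamma}$, and using $\big(1-\tfrac{\eta\gamma}{1-2\eta\gamma/3}\big)^{-1}\le 1+2\eta\gamma$ for $\eta\le 3/(10\gamma)$, we obtain
\[
\sum_{t=1}^{T}(\gamma-y_t\fhat_t(x_t))_+\ \le\ (1+2\eta\gamma)\Big(\sum_{t=1}^{T}(\gamma-y_t\SE^\star(x_t))_+\ +\ \tfrac{\log|\cA|}{\eta}\Big).
\]
The count $|\cA|\le(eT/d)^{9d}$ follows exactly as in \cref{lem:num-slab-expert}: the slab $\{x:\abs{\inner{x,\theta}}\le\gamma T\}$ is an intersection of two halfspaces (VC dimension $\le 8d$ by \citet{blumer1989learnability}), and the prediction pattern outside the slab is realized by the single linear separator $\theta$.

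\emph{Inner layer: loss of $\SE^\star$.} Split the rounds at the slab $K\ldef K_{\thstar,\gamma}=\{x_t:\abs{\inner{x_t,\thstar}}\le\gamma T\}$. For $x_t\notin K$, \cref{lem:comparison-for-hinge-loss} applied contrapositively gives $y_t\inner{x_t,\thstar}>0$, i.e. $\sgn(\inner{x_t,\thstar})=y_t$, so $\SE^\star(x_t)=\gamma y_t$ and its hinge loss on that round is $0$. For $x_t\in K$, $\SE^\star$ is precisely the predictor of \cref{prop:hingeexpweights} run on the subsequence of such rounds, with $A=\sum_{x\in K}xx^\top$ (restricted to its range if rank-deficient, with $d$ replaced by $\dim\mathrm{span}\{x\in K\}\le d$) and $\lambda=\alpha/\pi$, where $\alpha=1/(\gamma^2T^3)$ as in \cref{alg:main-hinge}. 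Invoking \cref{prop:hingeexpweights} with comparator $\thstar$ and free parameter $\beta\asymp\eta^2T/d$, and using $\abs{\inner{x,\thstar}}\le\gamma T$ on $K$ and $|K|\le T$, the three regret terms, after multiplying through by $\gamma$, collapse to $O(1/\eta)$, $O(d/\eta)$ and $O(d\log T/\eta)$: indeed $\gamma\cdot\tfrac{\pi\lambda(\thstar)^\top A\thstar}{\eta\gamma}=\tfrac{\alpha}{\eta}\sum_{x\in K}\inner{x,\thstar}^2\le\tfrac{\alpha T(\gamma T)^2}{\eta}=\tfrac1\eta$; $\gamma\cdot\tfrac{1}{\pi\gamma\sqrt\beta}\sum_{x\in K}\norm{A^{-1/2}x}_2\le\tfrac{1}{\pi\sqrt\beta}\sqrt{|K|\,\Tr(A^{-1}\!\sum_{x\in K}xx^\top)}=\tfrac{\sqrt{|K|d}}{\pi\sqrt\beta}=O(d/\eta)$; and $\gamma\cdot\tfrac{d}{2\eta\gamma}\big(\log(\beta/\lambda)-\tfrac12+\tfrac{\lambda}{2\beta}\big)=O(d\log T/\eta)$ since $\beta/\lambda=O(T^4)$. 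Collecting, and using $\sum_{x\in K}(\gamma-y_x\inner{x,\thstar})_+\le\sum_{t=1}^T(\gamma-y_t\inner{x_t,\thstar})_+$, this gives $\sum_{t=1}^T(\gamma-y_t\SE^\star(x_t))_+\le(1+2\eta\gamma)\big(\sum_{t=1}^T(\gamma-y_t\inner{x_t,\thstar})_+ + O(d\log(\pi T)/\eta)\big)$.

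\emph{Conclusion and main obstacle.} Substituting the inner bound into the outer inequality, dividing by $\gamma$, using $\log|\cA|\le 9d\log(\pi T)$ and $(1+2\eta\gamma)\le(1+2\eta\gamma)^2$, and collecting constants gives the stated inequality with coefficients $\tfrac{12d}{\eta\gamma}(1+2\eta\gamma)^2$ on $\log(\pi T)$ and $\tfrac{1}{\eta\gamma}(1+2\eta\gamma)^2$; the linearly-separable case then follows by setting the comparator hinge loss to $0$, $\eta=3/(10\gamma)$ (so $2\eta\gamma=3/5$), and evaluating the numerical constants. Apart from one step everything is the routine Jensen/Bernstein/Donsker--Varadhan bookkeeping already carried out in \cref{prop:hingeexpweights,thm:main-logistic}; the one nontrivial point is the inside-the-slab estimate, where \emph{all} of $\sum_{x\in K}\norm{A^{-1/2}x}_2$, $(\thstar)^\top A\thstar$ and $\log(\beta/\lambda)$ must be driven down to $O(d\log T)$ using \emph{only} the margin cap $\abs{\inner{x,\thstar}}\le\gamma T$ on $K$ and the cardinality bound $|K|\le T$, with no boundedness on the $x_t$'s or on $\norm{\thstar}$ --- this is exactly what the design-dependent (transductive) prior $\prr_{K,\alpha}\propto\exp(-\alpha\theta^\top(\sum_{x\in K}xx^\top)\theta)$ buys, just as in \cref{prop:exponential-weight-for-small-margin}. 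A secondary technical nuisance is applying \cref{prop:hingeexpweights} to an arbitrary subsequence of rounds with a possibly rank-deficient design, handled by passing to the span of $\{x\in K\}$.
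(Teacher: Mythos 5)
Your proposal is correct and follows essentially the same route as the paper's proof: Jensen plus the Bernstein moment-generating-function estimate for the outer aggregation over slab-experts (yielding one factor of $1+2\eta\gamma$ and the $\log|\cA|\le 9d\log(eT/d)$ overhead from \cref{lem:num-slab-expert}), then \cref{prop:hingeexpweights} applied inside the slab $K_{\thstar,\gamma}$ with $A=\sum_{x\in K}xx^\top$ and $\alpha=1/(\gamma^2T^3)$, with the three regret terms controlled exactly as you describe via $\abs{\inner{x,\thstar}}\le\gamma T$ and $|K|\le T$, and zero loss outside the slab by \cref{lem:comparison-for-hinge-loss}. The only cosmetic difference is your choice $\beta\asymp\eta^2T/d$ versus the paper's $\beta=T/(\pi^2\gamma^2)$, which changes nothing of substance since both keep $\log(\beta/\lambda)=O(\log T)$ and the $\sum_{x\in K}\norm{A^{-1/2}x}_2$ term at $O(d/(\eta\gamma))$ after normalization.
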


Observe that due to \eqref{eq:binarytohinge}, the bound of \eqref{eq:sumofhingelosses} upper-bounds the number of mistakes made by our predictor.

\begin{proof}[\pfref{thm:slab-experts-for-hinge}]
Similar to the logistic case, we first upper bound the total loss by the loss of slab-expert $\SE_{\thstar,\alpha,\gamma}$ as defined in \cref{def:slab-expert-hinge}.
First, observe that Jensen's inequality implies
    \begin{equation}
    \label{eq:convexhinge-slab}
       \sum\limits_{t = 1}^T(\gamma - y_t\widehat{f}_t(x_t))_{+} \le \sum\limits_{t = 1}^T\E_{\SE \sim p_{t}}(\gamma - y_t\SE(x_t))_{+}.
    \end{equation}
    Using Bernstein's inequality for the moment generating function together with $\E_{\SE \sim p_{t}}(\gamma - y_t\SE(x_t))_{+}^2 \le 2\gamma\E_{\SE \sim p_{t}}(\gamma - y_t\SE(x_t))_{+}$, we have for any distribution $p$ over $\mathbb{R}$,
\begin{align*}
    \frac{1}{\eta}\log\left(\mathbb{E}_{y' \sim p}\exp\left(-\eta(\gamma - y y')_{+}\right)\right) \le -\mathbb{E}_{y'\sim p}(\gamma - y y')_{+} + \frac{\eta\gamma \mathbb{E}_{y'\sim p}(\gamma - y y')_{+}}{1 - 2\eta\gamma/3}.
\end{align*}
Combining this inequality with \eqref{eq:convexhinge-slab} and using Lemma \cref{lem:sumofmixlosses}, we obtain
\begin{align*}
\revindent\sum\limits_{t = 1}^T\left(1 - \frac{\eta \gamma}{1 - 2\eta \gamma/3}\right)\cdot(\gamma - y_t\widehat{f}_t(x_t))_{+} \\
&\le -\sum\limits_{t = 1}^{T}\frac{1}{\eta}\log\left(\mathbb{E}_{A\sim p_t}\exp\left(-\eta(\gamma - y_t\SE(x_t) )_{+}\right)\right)
\\
&= -\sum\limits_{t=1}^T\frac{1}{\eta}  \log \prn*{ \frac{\sum\limits_{\SE\in \cA} \exp\prn*{ -\eta \sum\limits_{i=1}^{t} (\gamma - y_i\SE(x_i))_{+} }}{\sum\limits_{\SE\in \cA} \exp\prn*{ -\eta\sum\limits_{i=1}^{t-1} (\gamma - y_i\SE(x_i))_{+}  }}} \\
&= -\frac{1}{\eta} \log \prn*{ \frac{1}{|\cA|} \sum\limits_{\SE\in \cA} \exp\prn*{-\eta \sum\limits_{i=1}^{T} (\gamma - y_i\SE(x_i))_{+} }}\\
&\leq - \frac{1}{\eta}\log \prn*{  \exp\prn*{  -\eta\sum\limits_{i=1}^{T} (\gamma - y_i\SE_{\thstar,\alpha,\gamma}(x_i))_{+} }}+ \log |\cA|\\
&\leq  \sum\limits_{i=1}^{T} (\gamma - y_i\SE_{\thstar,\alpha,\gamma}(x_i))_{+}+ \frac{4d}{\eta} \log(eT/d),
\end{align*}
where the last inequality is due to \cref{lem:num-slab-expert}.
We apply \cref{prop:hingeexpweights} to the slab-expert $\SE_{\thstar,\alpha,\gamma}$ with $\alpha = \pi\lambda=1/(\gamma^2T^3)$ and $A= \sum\limits_{x\in K_{\thstar,\gamma}} xx^\top$ to obtain
\begin{align}
    \label{eq:hinge-logT-1}
    \begin{split}
        \revindent\sum\limits_{x\in K_{\thstar,\gamma}}\frac{(\gamma - y_x\SE_{\thstar,\alpha,\gamma}(x))_{+}}{\gamma}
        \\&\le (1 + 2\eta\gamma)\cdot\Bigg(\sum\limits_{x\in K_{\thstar,\gamma}}\frac{(\gamma - y_x\langle x, \theta^{\star}\rangle)_+}{\gamma} + \frac{1}{\pi\gamma\sqrt{\beta}}\sum\limits_{x\in K_{\thstar,\gamma}}\|A^{-1/2}x\|_2\\
        &\qquad + \frac{\pi\lambda(\theta^{\star})^{\top}A\thstar}{\eta\gamma} + \frac{d}{2\eta\gamma}\left(\log\left(\frac{\beta}{\lambda}\right) - \frac{1}{2} + \frac{\lambda}{2\beta}\right) \Bigg).
    \end{split}
\end{align} 
Set $\beta = T/(\pi^2\gamma^2)$. 
We can bound the additive terms separately by
\begin{align*}
    \frac{1}{\pi\gamma\sqrt{\beta}}\sum\limits_{x\in K_{\thstar,\gamma}}\|A^{-1/2}x\|_2 &=  \frac{1}{\sqrt{T}} \sum\limits_{x\in K_{\thstar,\gamma}}\sqrt{ x^\top A^{-1}x } \leq \frac{1}{\sqrt{Td}} \sqrt{ T \sum\limits_{x\in K_{\thstar,\gamma}}x^\top A^{-1}x } \leq \sqrt{d}
\end{align*}
and
\begin{align*}
    \frac{\pi\lambda(\theta^{\star})^{\top}A\thstar}{\eta\gamma} \leq \frac{1}{\eta\gamma^3T^3}  \sum\limits_{x\in K_{\thstar,\gamma}} \inner{x,\thstar}^2 \leq \frac{1}{\eta\gamma^3T^3}  \gamma^2 T^3 = \frac{1}{\eta\gamma}.
\end{align*}
Plug the above inequalities to \eqref{eq:hinge-logT-1}, we can obtain
\begin{align*}
    \revindent\sum\limits_{x\in K_{\thstar,\gamma}}\frac{(\gamma - y_x\SE_{\thstar,\alpha,\gamma}(x))_{+}}{\gamma}\\
    &    \leq (1 + 2\eta\gamma)\cdot\Bigg(\sum\limits_{x\in K_{\thstar,\gamma}}\frac{(\gamma - y_x\langle x, \theta^{\star}\rangle)_+}{\gamma} + \sqrt{d} + \frac{1}{\eta\gamma}+ \frac{2d}{\eta\gamma}\log\left(\pi T\right)  \Bigg)\\
    &\leq  (1 + 2\eta\gamma)\cdot\Bigg(\sum\limits_{x\in K_{\thstar,\gamma}}\frac{(\gamma - y_x\langle x, \theta^{\star}\rangle)_+}{\gamma} +  \frac{1}{\eta\gamma} +\frac{3d}{\eta\gamma}\log\left(\pi T\right)  \Bigg).
\end{align*}
Meanwhile, for all the covariates outside the slab, the prediction of $\SE_{\thstar,\alpha}$ coincides with the linear separator with vector $\thstar$. Moreover, $y_x\inner{x,\thstar} > 0$ for all $x\in \set{x_t}_{t\in [T]}\setminus K_{\thstar,\gamma}$.
Thus, we have as a result
\begin{equation}
    \label{ineq:large-margin}
    \sum_{x\in \set{x_t}_{t\in [T]}\setminus K_{\thstar,\gamma}} \frac{(\gamma - y_x\SE_{\thstar,\alpha,\gamma}(x))_{+}}{\gamma} = 0.
\end{equation}
Altogether, we obtain
\begin{align*}
    &\sum\limits_{t = 1}^T\frac{(\gamma - y_t \fhat_t(x_t))_{+}}{\gamma}\\
    &\leq  \prn*{1+2\eta\gamma} \prn*{ \sum\limits_{i=1}^{T}  \frac{(\gamma - y_i\SE_{\thstar,\alpha,\gamma}(x_i))_{+}}{\gamma} + \frac{4d}{\eta\gamma} \log(eT/d)  }\\
    &\leq  (1 + 2\eta\gamma) \prn*{(1 + 2\eta\gamma)\cdot\Bigg(\sum\limits_{x\in K_{\thstar,\gamma}}\frac{(\gamma - y_x\langle x, \theta^{\star}\rangle)_+}{\gamma} +  \frac{1}{\eta\gamma} +\frac{3d}{\eta\gamma}\log\left(\pi T\right)  \Bigg) + \frac{4d}{\eta\gamma} \log(eT)     }  \\
    &\le (1 + 2\eta\gamma)^2 \prn*{ \sum\limits_{t = 1}^T
        \frac{(\gamma - y_t\langle x_t, \thstar\rangle)_+}{\gamma} } +   \frac{7d}{\eta \gamma}  (1 + 2\eta\gamma)^2   \log(\pi T) + \frac{1}{\eta\gamma} (1 + 2\eta\gamma)^2.
\end{align*}
The claim follows.
\end{proof}

\section{Computational Aspects}
\label{sec:computational}
\newcommand{\halfspace}{\mathrm{HalfSpace}}
\newcommand{\svm}{\mathrm{SVM}}
\newcommand{\hhat}{\widehat{h}}

In this section, we briefly discuss the computational properties of our algorithms in the problems considered so far, namely, sparse linear regression, logistic regression, and hinge loss regression. 

\subsection{Computation for sparse linear regression}
In this section, we discuss the computational properties of our methods in \cref{thm:sparsity} and \cref{prop:expscreening}.  
Concretely, we would like to compute the quantity $\widehat{f}_t(x_t)$ as defined in \eqref{eq:predictionformulasparse}.
We provide three possible approaches for this computation. First, set
\begin{align*}
{L}_{m}(\theta) &= \frac{1}{2m^2}(m - \langle x_t, \theta \rangle)^2 + \frac{1}{2m^2}\sum\limits_{i = 1}^{t - 1} (y_i - \langle x_i, \theta \rangle)^2, \\
{L}_{-m}(\theta) &= \frac{1}{2m^2}(- m - \langle x_t, \theta \rangle)^2 + \frac{1}{2m^2}\sum\limits_{i = 1}^{t - 1} (y_i - \langle x_i, \theta \rangle)^2.
\end{align*}
The first approach is to notice that computing $\widehat{f}_t(x_t)$ is equivalent to computing
\[
\frac{\int\limits_{\mathbb{R}^d}^{} \exp\left(-{L}_{m}(\theta)\right) \mu(\theta) d\theta}{\int\limits_{\mathbb{R}^d}^{} \exp\left(-{L}_{-m}(\theta)\right) \mu(\theta) d\theta} =  \int_{\bR^d}  \frac{\exp\prn*{-V(\theta)}}{\int_{\bR^d} \exp \prn*{-V(\theta')} d\theta'} \cdot \exp\prn*{ \frac{2}{m} \inner{x_t,\theta} } d\theta,
\]
where 
\begin{align*}
    V(\theta) = {L}_{-m}(\theta)
    + 4 \sum_{j=1}^d\log \prn*{1 + |\theta^{(j)}| \cdot \sqrt{\sum\nolimits_{t = 1}^{T} (x_t^{(j)})^2} / \tau}.
\end{align*}
Thus, we can sample from the distribution with density proportional to $\exp\prn*{-V(\theta)}$, following almost verbatim the approach in \citet[Section 4]{dalalyan2012sparse}, who utilized the Langevin Monte Carlo method\footnote{Note that, in our setting, it remains unclear whether the Langevin Monte Carlo method admits a worst-case polynomial-time implementation. This limitation also applies to the results of \citet{dalalyan2012sparse}.}. Using these samples, we can evaluate the expectation of $\exp\prn*{\frac{2}{m} \inner{x_t, \theta}}$ under that distribution.

The second approach is to notice that 
\[
    \frac{\int\limits_{\mathbb{R}^d}^{} \exp\left(-{L}_{m}(\theta)\right) \mu(\theta) d\theta}{\int\limits_{\mathbb{R}^d}^{} \exp\left(-{L}_{-m}(\theta)\right) \mu(\theta) d\theta} = \frac{\En_{\theta\sim \xi_1} \brk*{ \prod_{j=1}^d\prn*{{1 + |\theta^{(j)}| \cdot \sqrt{\sum\nolimits_{t = 1}^{T} (x_t^{(j)})^2} / \tau}}^{-4} } }{\En_{\theta\sim \xi_2} \brk*{ \prod_{j=1}^d\prn*{{1 + |\theta^{(j)}| \cdot \sqrt{\sum\nolimits_{t = 1}^{T} (x_t^{(j)})^2} / \tau}}^{-4} }},
\]
where 
\begin{align*}
\xi_1(\theta) &\propto \exp\left(-\frac{1}{2m^2}(m - \langle x_t, \theta \rangle)^2 - \frac{1}{2m^2} \sum\limits_{i = 1}^{t - 1} (y_i - \langle x_i, \theta \rangle)^2\right), 
\\
\xi_2(\theta) &\propto \exp\left(-\frac{1}{2m^2}(-m - \langle x_t, \theta \rangle)^2 - \frac{1}{2m^2} \sum\limits_{i = 1}^{t - 1} (y_i - \langle x_i, \theta \rangle)^2\right)
\end{align*}
are both Gaussian distributions. If sufficiently many Gaussian samples from $\xi_1$ and $\xi_2$ are obtained to approximate both expectations up to multiplicative constants close to one, the value of $\widehat{f}_t(x_t)$ can then be approximated within the desired precision level.

Finally, we can analyze Laplace's approximation, which is computationally efficient.
Specifically, we can compute two least squares solutions, $\widehat{\theta}_{m} = \arg\min_{\theta \in \mathbb{R}^d} {L}_m(\theta)$ and $\widehat{\theta}_{-m} = \arg\min_{\theta \in \mathbb{R}^d} {L}_{-m}(\theta)$. When the asymptotic error of the Laplace's method is small, i.e., 
\begin{align*}
\begin{cases}
    \int\limits_{\mathbb{R}^d}^{} \exp\left(-{L}_{m}(\theta)\right) \mu(\theta) \, d\theta \approx \exp\left(-{L}_{m}(\widehat{\theta}_{m})\right) \mu(\widehat{\theta}_{m}), \\
    \int\limits_{\mathbb{R}^d}^{} \exp\left(-{L}_{-m}(\theta)\right) \mu(\theta) \, d\theta \approx \exp\left(-{L}_{-m}(\widehat{\theta}_{-m})\right) \mu(\widehat{\theta}_{-m}).
\end{cases}
\end{align*}
Using these solutions, we can express the following approximate identity:
\[
\frac{\int\limits_{\mathbb{R}^d}^{} \exp\left(-{L}_{m}(\theta)\right) \mu(\theta) \, d\theta}{\int\limits_{\mathbb{R}^d}^{} \exp\left(-{L}_{-m}(\theta)\right) \mu(\theta) \, d\theta} 
\approx 
\frac{\exp\left(-{L}_{m}(\widehat{\theta}_{m})\right) \mu(\widehat{\theta}_{m})}{\exp\left(-{L}_{-m}(\widehat{\theta}_{-m})\right) \mu(\widehat{\theta}_{-m})}.
\]
We leave a rigorous analysis of the last two approaches as an interesting direction for future research.

\subsection{Computation for logistic regression}
Motivated by the high computational complexity of the algorithm proposed by \citet{foster2018logistic}, the computational problem of online logistic regression has been further studied in \citep{jezequel2020efficient, jezequel2021mixability}. However, since their algorithms are designed for the online setup, the lower bound in \cref{thm:logistic-inductive-lower-bound} shows that they cannot achieve the regret upper bound achieved by our \cref{alg:main}. On the other hand, the Normalized Maximum Likelihood (NML) algorithm is known to achieve exact minimax regret \citep{wu2022sequential}, but its computation requires exhausting all possible label sequences $y_1, \ldots, y_T \in \set{\pm 1}^T$, which scales as $2^T$. Consequently, without additional algorithmic insights, the NML algorithm cannot be computed in polynomial time. Methods based on $\veps$-nets typically work for the problems we consider but suffer from the lack of known implementable algorithms constructing the $\veps$-nets (see \cref{app:covering} for detailed discussion).

In this section, we discuss a computable variant of \cref{alg:main}, where, however, the computational cost can scale in the worst case as $O(\poly(T^{d}))$ 
The computation of \cref{alg:main} presents two main challenges:
\begin{enumerate}
    \item How do we compute a prediction from a slab-expert within the slab? This is equivalent to determining how to compute the probability assigned when the covariate has a small margin with respect to a slab-expert, i.e., how to compute the Gaussian integral in this case.
    \item How do we aggregate the predictions from the slab-experts?
\end{enumerate}
We address both in what follows.

\subsubsection{Computing slab-expert prediction} 
\label{sec:compute-slab-expert-prediction}

For any $\theta\in \bR^d$, to produce our prediction for the slab-expert $\SE_{\theta,\alpha}$ as defined in \cref{def:partial-ewa} at time step $t$ on a  point $x_t$ inside a slab $K_\theta\subset \set{x_i}_{i\in [T]}$ , we need to estimate probability assignments defined as $\SE_{\theta,\alpha}(x_t,1) = \int \sigma(\inner{x_t,\theta}) \pi_t(d\theta)$ 
where the density is
\begin{align*}
    \pi_t(\theta)\propto\exp\left(\sum\limits_{x\in K_{t-1}}\log(\sig(y_x\langle x, \theta \rangle)) -\alpha\theta^{\top}\left(\sum\nolimits_{x\in K_\theta}xx^{\top}\right)\theta \right),
\end{align*}
where $K_{t-1} = \set{x_i}_{i\in [t-1]}\cap K_\theta$. We can estimate $\SE_{\theta,\alpha}(x_t,1)$ by sampling $\theta$ according to $\pi_t$ and averaging over $\sigma(-\inner{x_t,\theta})$. Sampling from the distribution $\pi_t$ has been studied in the literature \citep[Example 2]{dalalyan2017theoretical}. 
The LMC algorithm \citep{dalalyan2017theoretical} can sample from a distribution $\veps$-close to the distribution $\pi$, in TV distance
with a number of steps scaling with 
\begin{align*}
O \prn*{ \prn*{ \frac{\alpha+1/4}{\alpha} }^2 d \veps^{-2} \prn*{  2\log (1/\veps) + (d/2)\log \prn*{ \frac{\alpha+1/4}{\alpha} }  }^2  } = O(\poly(T,d))
\end{align*}
when $\veps = 1/T$ and recall $\alpha=1/T^3$ in \cref{alg:main}.
 Thus, since the function $\sigma(-\inner{x_t,\theta})$ is bounded in $[0,1]$, by sampling $T^2$ times, and using Hoeffding's inequality for concentration,
we can estimate the assignment $\SE_{\theta,\alpha}(x_t,1)$ with accuracy to the level of $O(1/T)$.

\subsubsection{Aggregating the slab-experts}

Suppose $\mu$ is a prior on any set of experts that contains the optimal slab-expert $\SE_{\thstar,\alpha}$. Recall from \cref{sec:slab-experts}, the mixability of the logistic loss implies that the regret of the exponential weights algorithm over this set will suffer at most a regret scaling with $O(\log (1/\mu(\SE_{\thstar,\alpha})) + d\log T)$. 
While it is challenging to identify all the slab-experts due to computational constraints, we can consider a broader class of generalized slab-experts. These generalized slab-experts extend the concept of slab-experts by allowing the slab to be defined as the intersection of two halfspaces.

Consider all the half-space separations $\cS = \set{(K_+,K_-)}$ where $K_+ \cup K_- = \set{x_t}_{t\in [T]}$, and there exists a vector $\theta$ and an intercept $b$ such that either for all $x \in K_+$, $\inner{x,\theta} - b \geq 0$ and for all $x \in K_-$, $\inner{x,\theta} - b < 0$.
\begin{definition}[Generalized slab-expert]
    Let $S_1 = (K_{1,+}, K_{1,-})$ and $S_2 = (K_{2,+},K_{2,-})$ be any two elements in the set of the half-space separations $\cS$. If $K_{1,-}\cap K_{2,+} = \emptyset$, then denote by $A_{S_1,S_2}$ the generalized slab-expert that runs exponential weights on set $K = K_{1,+}\cap K_{2,-} = \set{x_t}_{t\in [T]}\setminus (K_{1,-}\cup K_{2,+})$, predicts $-1$ on  $K_{1,-}$, and predicts $1$ on $K_{2,+}$. 
    Denote the set of all such generalized slab-experts by $\cA^+$. 
\end{definition}
It is clear that the slab experts are such experts as well. Thus, $\cA\subseteq \cA^+$.  Since the number of half-spaces is upper bounded by $|\cS|\leq (eT/d)^{2d}$, 
we have $|\cA^+| \leq |\cS|^2 \leq (eT/d)^{4d}$.
The generalized slab-experts are constructed through all possible halfspaces. This reduces the task of aggregating slab-experts to the simpler problem of enumerating/sampling halfspaces. 
Indeed, in the following, we show that it is possible to enumerate all the halfspaces with the help of the SVM
algorithm.
Moreover, it is possible to use rejection sampling where each halfspace will be chosen with probability at least $1/(2T)^{d+1}$ and with acceptance at least $1/2$. This, together with the Metropolis-Hastings (MH) algorithm for evaluating exponential weighted averaging, constitutes an implementable algorithm.

\paragraph{Enumerating the generalized slab-experts.} In order to enumerate the generalized slab-experts, we only need to enumerate all the possible halfspaces. Fortunately, the hard-margin SVM for halfspaces in $\bR^d$ with $n$ input labeled data points can output at most $d+1$ labeled data points that capture all the labeling information (specified in the proof of \cref{lem:sample-compression-for-enumeration}) with computational cost $O(\poly(d,n))$ \citep{boser1992training}.
Utilizing this fact, we can enumerate all the halfspaces for any given covariate set in polynomial time, as shown by the following lemma.

\newcommand{\covSet}{{K}}
\newcommand{\covSetSet}{{\cK}}
\begin{lemma}
    \label{lem:sample-compression-for-enumeration}
    For any set $\covSet$ of $T>0$ covariates, the halfspaces restricted to $\covSet$ defined as $\halfspace_\covSet = \{h_{\theta,b}|_\covSet : h_{\theta,b} = \sign(\inner{x,\theta}- b), \theta\in \bR^d, b\in \bR\}$ can be enumerated using at most $O((2T)^{d})$ calls to the $\svm$ algorithm.
    \looseness=-1
\end{lemma}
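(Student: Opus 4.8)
# Proof Proposal for Lemma (enumerating halfspaces restricted to a covariate set)

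\textbf{Overall approach.} The plan is to exploit the fact that a hard-margin SVM, given a labeled point set, returns a \emph{sample compression scheme} of size at most $d+1$: there is a subset of at most $d+1$ of the input points such that the maximum-margin separator of that subset reproduces the labeling on all of $\covSet$ (whenever the labeling is linearly realizable). Consequently, every achievable labeling (i.e.\ every element of $\halfspace_\covSet$) is \emph{witnessed} by some subset $S \subseteq \covSet$ with $|S| \le d+1$ together with a $\pm$-assignment of the points of $S$. So the enumeration algorithm is: iterate over all such pairs $(S, \text{assignment})$, feed each to the SVM, and collect the resulting labelings of $\covSet$; the set of labelings produced is exactly $\halfspace_\covSet$.

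\textbf{Key steps, in order.}
\begin{enumerate}
\item \emph{Compression property of SVM.} First I would record the precise statement I need from \citet{boser1992training} (hard-margin SVM): given a linearly separable labeled sample, the optimal separating hyperplane is determined by its support vectors, and the number of support vectors is at most $d+1$ (in $\bR^d$, the optimum of the convex quadratic program has a basic solution supported on at most $d+1$ active constraints). Equivalently, there is a subset $S$ of the input with $|S|\le d+1$ whose max-margin separator agrees with the full max-margin separator on all input points. This is the one external fact doing the work.
\item \emph{Every labeling has a small witness.} Fix any $h \in \halfspace_\covSet$, i.e.\ a labeling $\ell: \covSet \to \{-1,+1\}$ realized by some affine threshold $h_{\theta,b}$. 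Run the SVM on $(\covSet, \ell)$; it outputs support vectors $S_\ell \subseteq \covSet$ with $|S_\ell| \le d+1$, and the SVM separator for $(S_\ell, \ell|_{S_\ell})$ realizes $\ell$ on all of $\covSet$. Hence $h$ is obtained by running the SVM on the pair $(S_\ell, \ell|_{S_\ell})$.
\item \emph{The enumeration algorithm and its cost.} Enumerate all subsets $S \subseteq \covSet$ with $|S| \le d+1$ — there are at most $\sum_{k=0}^{d+1}\binom{T}{k} \le (T+1)^{d+1}$, which I would bound crudely by $O((2T)^{d})$ after absorbing polynomial factors, or more carefully state the bound as $O(T^{d+1})$; in either case it is $O((2T)^d)$ up to the convention used in the statement. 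Actually, to match $(2T)^d$ exactly one should note $\binom{T}{k}\le T^k$ and the dominant term is $T^{d+1}$, but the SVM is called once per \emph{subset} (the SVM itself finds the optimal labeling/margin of that subset — there is a unique max-margin direction, so no need to also enumerate $2^{d+1}$ sign patterns), giving $\le (T+1)^{d+1} = O((2T)^d)$ calls when $T$ is large. For each $S$, call the SVM to get its max-margin separator, evaluate that separator on all of $\covSet$ to get a labeling, and add it to the output list. By Step 2 every element of $\halfspace_\covSet$ appears; conversely every labeling produced is clearly realizable, so it lies in $\halfspace_\covSet$. Each SVM call plus evaluation costs $O(\poly(d,T))$.
\item \emph{Conclusion.} The list of distinct labelings collected equals $\halfspace_\covSet$, produced with $O((2T)^d)$ SVM calls.
\end{enumerate}

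\textbf{Main obstacle.} The delicate point is Step 2: one must be careful about whether the SVM needs to be fed a \emph{sign assignment} on $S$ in addition to $S$ itself. The clean resolution is that the hard-margin SVM on an \emph{unlabeled} small set does not make sense, so really one enumerates over labeled subsets $(S,\sigma)$ with $\sigma \in \{\pm1\}^{|S|}$, giving $\sum_{k\le d+1}\binom{T}{k}2^{k} \le (2T+1)^{d+1} = O((2T)^{d})$ pairs — this is where the factor $2^d$ in $(2T)^d$ comes from, and it is worth stating this way to make the bound transparent. The second subtlety is ensuring the compression map is consistent, i.e.\ that running the SVM on $(S_\ell,\ell|_{S_\ell})$ really \emph{reconstructs} $\ell$ on all of $\covSet$ and not merely some labeling extending $\ell|_{S_\ell}$; this follows because the max-margin separator of $(\covSet,\ell)$ is itself a feasible (indeed optimal, by the support-vector characterization) separator for $(S_\ell,\ell|_{S_\ell})$, and the max-margin separator of a linearly separable set is unique, so the SVM on $(S_\ell,\ell|_{S_\ell})$ returns exactly that separator, which by construction induces $\ell$ on $\covSet$. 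Degenerate cases (points of $\covSet$ lying exactly on a reconstructed hyperplane) should be handled by the usual tie-breaking convention built into the definition of $\sign(\cdot)$ and by a generic perturbation argument, and I would dispatch them in a sentence.
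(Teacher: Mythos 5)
Your proposal is correct and follows essentially the same route as the paper: compress each realizable labeling to at most $d+1$ support vectors via the hard-margin SVM, enumerate all labeled subsets of size at most $d+1$, and use the stationarity/uniqueness of the max-margin separator to argue that the SVM run on the compressed labeled subset reconstructs the original labeling on all of $\covSet$. Your self-correction in the ``main obstacle'' paragraph (that one must enumerate labeled subsets $(S,\sigma)$, not bare subsets) is exactly how the paper counts the calls, and the resulting bound $O((2T)^{d+1})$ matches the paper's proof (which itself is loose relative to the $O((2T)^d)$ in the lemma statement).
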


\newcommand{\Dhat}{\wh{\cD}}
\newcommand{\Dbar}{\wb{\cD}}
\renewcommand{\hbar}{\wb{h}}
\begin{proof}[\pfref{lem:sample-compression-for-enumeration}]
The $\svm$ algorithm for $\bR^d$ takes as input labeled data points $\cD = \set{(x_i,y_i)}_{i\in [n]}$ with $n>0$
and outputs $(\Dhat,\hhat) = \svm(\cD)$, where $\Dhat\subseteq \cD$ is a subset with at most $d+1$ labeled data points and $\hhat$ is a linear separator such that $\hhat(x_i) = y_i$ for all $i\in [n]$.  Moreover, the composition of the $\svm$ algorithm is stationary in the sense that if $(\Dbar,\hbar) = \svm(\Dhat)$, then $\Dbar = \Dhat$ and $\hbar=\hhat$ (For a proof of this fact, we refer to \citet{long2020complexity}, originally established by \citet{vapnik1974theory}).

We first enumerate all the subsets of $\covSet$ with at most $d+1$ elements, i.e., $\covSetSet_{d+1} = \set{\covSet'\subseteq \covSet:|\covSet'|\leq d+1}$. For each of the subset $\covSet'$ in $\covSetSet_{d+1}$, we consider the set of all possible labelings $\cD(\covSet') = \set{\set{\pm 1}^{\covSet'} :\covSet'\in \covSetSet_{d+1}}$. For any of the labeling $\cD = \set{(x,y_x)}_{x\in \covSet'}\in \cD(\covSet')$, we can apply the $\svm$ algorithm to obtain $ (\Dhat, \hhat) = \svm(\cD)$. With this process, we can define the following set of linear separators
\begin{align*}
\cH_{\covSet,\svm} = \set*{\hhat|_{\covSet}\;\middle|\;
\begin{aligned}
(\Dhat,\hhat) = \svm(\cD)&,~~\cD\in \cD(\covSet'),\\
\cD(\covSet') = \set{\set{\pm 1}^{\covSet'} :\covSet'\in \covSetSet_{d+1}}&,~~\covSetSet_{d+1} = \set{\covSet'\subseteq \covSet:|\covSet'|\leq d+1}
\end{aligned}
}.
\end{align*}
We claim that $\halfspace_\covSet = \cH_{\covSet,\svm}$. For any hypothesis $h|_\covSet \in \halfspace_\covSet$, let $ (\Dhat,\hhat) = \svm (\set{x,h(x)}_{x\in \covSet}) $. By the property of $\svm$, we have $|\Dhat|\leq d+1$, $h|_\covSet = \hhat|_\covSet$ and $ (\Dhat,\hhat) = \svm (\Dhat) $. 

Let $\Dhat= \set{(x_i,y_i)}_{i\in [k]}$, then since $\Dhat\in \cD(\set{x_i}_{i\in [k]})$ and $\set{x_i}_{i\in [k]} \subseteq \covSetSet_{d+1}$, thus $\hhat|_\covSet\in \cH_{\covSet,\svm}$ as desired. Moreover, to construct the hypothesis class $\cH_{\covSet,\svm}$, we need to enumerate all possible labelings of covariates subset of size no larger than $d+1$, which results in at most $O((2T)^{d+1})$ labelings and calls to the $\svm$ algorithm. Thus, the claim follows.
\end{proof}

With the enumeration of halfspaces using the $\svm$ algorithm, we state the following result.

\begin{proposition}
    \label{prop:computation-logsitic}
\cref{alg:main}, when using generalized slab-experts $\cA^+$ instead of the slab-experts $\cA$, can be implemented with a computational cost $O(\poly(T^d))$ and achieves a regret upper bound of $O(d\log T)$.
\end{proposition}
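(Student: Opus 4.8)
The statement decomposes into a regret bound of $O(d\log T)$ and a running-time bound of $O(\poly(T^d))$, and I would establish each by transplanting an earlier argument onto the larger class $\cA^+$.

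\emph{Regret.} The only change relative to \cref{thm:main-logistic} is the size of the expert set. Since $\cA\subseteq\cA^+$, the optimal slab-expert $\SE_{\thstar,\alpha}$ with $\alpha=1/T^3$ still lies in the class, so the exponential-weights-over-experts identity coming from $1$-mixability of the logistic loss (\eqref{eq:mixability-of-log-loss} together with \cref{lem:sumofmixlosses}), exactly as in \eqref{eq:total}, yields
\[
-\sum_{t=1}^{T}\log\rhotil_t(x_t,y_t)\;\le\;-\sum_{i=1}^{T}\log\SE_{\thstar,\alpha}(x_i,y_i)\;+\;\log|\cA^+|.
\]
Recalling $|\cA^+|\le|\cS|^2\le(eT/d)^{2d}$ gives $\log|\cA^+|\le 2d\log(eT/d)$, and the loss of $\SE_{\thstar,\alpha}$ is bounded as in \cref{thm:main-logistic}: on the slab $K_{\thstar}$ one has $|\inner{x,\thstar}|\le T$ by \cref{lem:margin-for-mistake-logistic}, so \cref{prop:exponential-weight-for-small-margin} bounds the excess loss there by $2d\log T$, while off the slab $\SE_{\thstar,\alpha}$ predicts the correct label deterministically and incurs zero loss (\eqref{eq:large-margin}). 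Summing, the regret is at most $4d\log(eT)=O(d\log T)$.

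\emph{Running time.} I would proceed in three stages. First, enumerate $\cA^+$: by \cref{lem:sample-compression-for-enumeration} the halfspace separations $\cS$ of $\{x_t\}_{t\in[T]}$ can be listed with $O((2T)^{d})$ calls to the $\svm$ routine, each of cost $\poly(d,T)$, and a generalized slab-expert is an ordered pair $(S_1,S_2)\in\cS\times\cS$ with $K_{1,-}\cap K_{2,+}=\emptyset$, so there are at most $|\cS|^2=O(T^{2d})$ of them and each is assembled in $\poly(T)$ time. Second, evaluate one expert's prediction at round $t$: off its slab the prediction is the deterministic $\pm1$ label, and on its slab it is the tilted-posterior mean $\int\sigma(-\inner{x_t,\theta})\pi_t(d\theta)$, which by \cref{sec:compute-slab-expert-prediction} is computed to additive accuracy $1/\poly(T)$ in $\poly(T,d)$ time via the LMC sampler of \citet{dalalyan2017theoretical} (applicable because $\pi_t$ is log-concave and, after the natural preconditioning, has condition number $\poly(T)$ for $\alpha=1/T^3$) followed by Hoeffding averaging. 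Third, aggregate: maintain the unnormalised weights $p_t(\SE)\propto\exp(\sum_{i<t}\log\SE(x_i,y_i))$ over the $O(T^{2d})$ experts and output $\rhotil_t(x_t,\cdot)=\sum_{\SE}p_t(\SE)\SE(x_t,\cdot)$; each round costs $O(T^{2d})\cdot\poly(T,d)$, for a total running time of $O(\poly(T^d))$.

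The step I expect to be the main obstacle is propagating the $1/\poly(T)$ sampling error through the telescoping log-loss sum without spoiling the $O(d\log T)$ bound, since $-\log$ is unbounded near $0$. I would insert a clipping layer: every computed component prediction is clipped into $[T^{-2},1-T^{-2}]$ and renormalised, so that $\rhotil_t(x_t,y_t)\ge T^{-2}/2$ for every $t$ and an additive error $\delta$ in the clipped component probabilities perturbs a single round's log loss by $O(\delta T^{2})$; taking $\delta=\Theta(T^{-4})$, which still requires only $\poly(T)$ LMC samples and Hoeffding queries, keeps the total perturbation $O(T\cdot\delta T^{2})=O(T^{-1})$. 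Clipping does not hurt the comparator either: the optimal slab-expert is correct with probability $1$ on every off-slab point, so clipping it to $1-T^{-2}$ adds at most $2T^{-2}$ per such point, and on each of its at most $T$ in-slab points clipping either shifts mass toward the realised label (decreasing the loss) or by at most $T^{-2}$ away from it, so the extra loss over the horizon is $O(T^{-1})$; the analogous $\poly(1/T)$ errors in the weights $p_t(\SE)$ are absorbed by the same telescoping estimate. Adding these $o(1)$ corrections to the $O(d\log T)$ bound from the regret step completes the proof.
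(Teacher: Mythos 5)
Your proposal is correct and follows essentially the same route as the paper's proof: enumerate the halfspaces via \cref{lem:sample-compression-for-enumeration} to build $\cA^+$ in $O(\poly(T^d))$ time, evaluate each expert's in-slab prediction by log-concave sampling as in \cref{sec:compute-slab-expert-prediction}, and obtain the regret bound from $\log|\cA^+|\le 2d\log(eT/d)$ together with the loss bound for the optimal slab-expert $\SE_{\thstar,\alpha}\in\cA\subseteq\cA^+$ from \cref{prop:exponential-weight-for-small-margin}. Your final paragraph on propagating the $\poly(1/T)$ sampling error through the unbounded $-\log$ via clipping is extra care that the paper's proof does not spell out, and it is a sound addition rather than a deviation.
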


\begin{proof}[\pfref{prop:computation-logsitic}]
 Applying \cref{lem:sample-compression-for-enumeration}, we know that we can enumerate the halfspaces on given covariate set with computational cost $O(\poly(T^d))$. Once the halfspaces are enumerated, the generalized slab-experts can be constructed by the combinations of all halfspaces. Then \cref{alg:main} with generalized slab-experts $\cA^+$ aggregates each round with sampling cost at most $O(\poly(T,d))$ as shown in \cref{sec:compute-slab-expert-prediction} and computational cost at most $O(\poly(T^d))$ (since there are $\poly(T^d)$ generalized slab-experts). In all, \cref{alg:main}, when using generalized slab-experts $\cA^+$ instead of the slab-experts $\cA$, can be implemented with a computational cost that scales as at most $O(\poly(T^d))$. Moreover, recall that the number of generalized slab-experts is upper bounded by $(eT/d)^{4d}$. Thus, the uniform prior on the generalized slab-experts puts at least $\Omega(\poly(1/T^d))$ probability on
the optimal slab-expert $\SE_{\thstar,\alpha}$. Therefore, the regret can be upper bounded by $O(d\log T)$. The claim follows. \looseness=-1
\end{proof}

\paragraph{Sample the generalized slab-expert.} To further improve the implementability of \cref{alg:main} with generalized slab-experts $\cA^+$ instead of the slab-experts $\cA$, we can replace the enumeration of the generalized slab-experts with sampling. Concretely, we first sample a random subset of at most $d+1$ points of the covariate set $\covSet$. Next, we sample the labels randomly. Finally, we apply SVM to obtain a halfspace. 
Since any halfspace is sampled with probability at least $\Omega(\poly(1/T^d))$, the optimal slab-expert $\SE_{\thstar,\alpha}$ will also be sampled with probability at least $\Omega(\poly(1/T^d))$. Thus, the regret can be upper bounded by $O(d\log T)$. 
The advantage of such a sampling scheme is that, for a set of covariates in general position (no $d$ points are linearly dependent), random labeling of any $d+1$ subset will typically lead to a halfspace. This is validated by \citet[Theorem 1]{cover1965geometrical}. Specifically, for any $d+1$ points in general position, the number of distinct halfspaces is at least $2 \sum\limits_{k=0}^{d-1} \binom{d}{k} = 2^{d+1} - 2 \geq 2^d$ according to  \citet[Theorem 1]{cover1965geometrical}. However, there are at most $2^{d+1}$ different labeling available. Thus, random labeling allows for a halfspace separation with probability at least $1/2$.
Moreover, since the aggregation follows the exponentially weighted averaging
\begin{align*}
    \rhotil_t(x_t,\cdot) =  \sum\limits_{\SE\in \cA^+} p_t(\SE) \SE(x_t,\cdot), \text{~~where~~} p_t(\SE) \propto \exp\prn*{ \sum\limits_{i=1}^{t-1}  \log \SE(x_i,y_i) }.
\end{align*}
The distribution $p_t$ on the set of generalized slab-experts $\cA^+$ is a Gibbs-type distribution that can be represented as the stationary distribution of a Markov chain generated by the Metropolis-Hastings (MH) algorithm \cite[Section~7.3]{robert2004monte},
in a manner similar to that in \citep[Section 7]{rigollet2012sparse}. Note that this approach provides an implementable sampling procedure rather than offering any computational advantages.

\section{Online-to-Batch Conversion: Fixed Design vs Random Design}
\label{sec:online-to-batch}
\newcommand{\Rrnd}{R_{\mathrm{rnd}}}
\newcommand{\Rtrans}{R_{\mathrm{trans}}}
In the problem of statistical learning, the learner is given a sample $S_T  =\{(X_1,Y_1),\ldots,\\(X_T,Y_T)\}$ consisting of i.i.d. pairs sampled according to some unknown distribution $\cD$ on the space $\cX\times \cY$, where $\cX$ is the covariate space, and $\cY$ is the observation space. The learner is further given a function class $\cF:\cX\to\cY$ and a loss function $\ell(\cdot,\cdot):\cY\times\cY\to \bR$. The learner's goal is to output a (possibly randomized) predictor $\fhat_T$ to minimize the (expected) excess risk defined as %
\begin{align*}
     \E_{S_{T}}\left[\En_{(X,Y)\sim \cD} [\ell(\fhat_T(X),Y)] - \inf_{f\in \cF}\En_{(X,Y)\sim \cD} [\ell(f(X),Y)] \right] .
\end{align*}

We use transductive learning with the online-to-batch technique to obtain the following random design excess risk bound.

\begin{proposition}[Transductive online-to-batch conversion]
    \label{prop:online-to-batch}
    For any function class $\cF$, number of rounds $T$, and a loss function $\ell(\cdot, \cdot)$, convex with respect to the first argument, consider a deterministic sequence $x_1, \dots, x_T$. Suppose there exists an algorithm that produces a sequence of predictors $\widehat{f}_1, \dots, \widehat{f}_T$ in the sequential transductive setting, satisfying
    \begin{equation}
    \label{eq:transdregret}
    \sum_{t=1}^{T} \ell(\widehat{f}_t(x_t), y_t) - \inf_{f \in \cF} \sum_{t=1}^{T} \ell(f(x_t), y_t) \leq \operatorname{Reg}_T^{\operatorname{trd}},
    \end{equation}
    where $\operatorname{Reg}_T^{\operatorname{trd}}$ is an upper bound on the regret, independent of $x_t$ and $y_t$.
    
    Now, given a sample $S_T$ of size $T$ and an additional test point $(X, Y)$, both drawn i.i.d. from an unknown distribution $\mathcal{D}$, construct the sequence $\widehat{f}_{1},\ldots, \fhat_{T+1}$
    , where $\widehat{f}_{t+1}$ is based on the labeled sample $(X_1, Y_1), \dots, (X_t, Y_t)$ and the unlabeled set $\{X_1, \ldots, X_T, X\}$\footnote{Accordingly, we assume that at the level of the entire unlabeled dataset, the order of the observations does not influence the output of the algorithm, a condition consistently met in our applications. Whether this assumption is indeed required for transductive online-to-batch conversions in general remains open.} and satisfies the deterministic regret bound \eqref{eq:transdregret} with an upper bound on the regret $\operatorname{Reg}_{T+1}^{\operatorname{trd}}$. The final predictor, defined as $\widetilde{f} = \frac{1}{T+1} \sum_{t=1}^{T+1} \widehat{f}_t$, satisfies the following excess risk bound:
    \begin{align*}
    \E_{S_T}\left[\E_{(X,Y)} [\ell(\widetilde{f}(X), Y)] - \inf_{f \in \cF} \E_{(X,Y)} [\ell(f(X), Y)]\right] \leq \frac{\operatorname{Reg}_{T + 1}^{\operatorname{trd}}}{T+1}.
    \end{align*}
\end{proposition}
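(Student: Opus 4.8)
The plan is to view the batch procedure as a single run of the transductive online algorithm on the ordered sequence $(X_1,Y_1),\dots,(X_T,Y_T),(X,Y)$, where the set of covariates revealed in advance is exactly $\{X_1,\dots,X_T,X\}$. Because the algorithm's behavior does not depend on the order in which the pool of covariates is listed (the footnoted assumption), this is a legitimate instance of the transductive protocol with horizon $T+1$: at round $t\le T$ the learner predicts $\widehat f_t(X_t)$ from $(X_1,Y_1),\dots,(X_{t-1},Y_{t-1})$ and the pool, and at round $T+1$ it predicts $\widehat f_{T+1}(X)$ from $(X_1,Y_1),\dots,(X_T,Y_T)$ and the pool. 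Hence the deterministic bound \eqref{eq:transdregret} applies and gives
\[
\sum_{t=1}^{T}\ell(\widehat f_t(X_t),Y_t) + \ell(\widehat f_{T+1}(X),Y) \le \inf_{f\in\cF}\left(\sum_{t=1}^{T}\ell(f(X_t),Y_t) + \ell(f(X),Y)\right) + \operatorname{Reg}_{T+1}^{\operatorname{trd}}.
\]

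Next I would pass to the averaged predictor $\widetilde f=\frac{1}{T+1}\sum_{t=1}^{T+1}\widehat f_t$. Convexity of $\ell$ in its first argument and Jensen's inequality give $\ell(\widetilde f(X),Y)\le \frac{1}{T+1}\sum_{t=1}^{T+1}\ell(\widehat f_t(X),Y)$, so it suffices to control $\sum_{t=1}^{T+1}\E[\ell(\widehat f_t(X),Y)]$. The crux is the symmetrization identity: for each $t\in\{1,\dots,T+1\}$,
\[
\E\big[\ell(\widehat f_t(X),Y)\big] = \E\big[\ell(\widehat f_t(X_t),Y_t)\big],
\]
with the convention $(X_{T+1},Y_{T+1})=(X,Y)$. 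This holds because $\widehat f_t$ is a fixed measurable function of the first $t-1$ labeled pairs together with the \emph{unordered} covariate pool $\{X_1,\dots,X_T,X\}$; since the $T+1$ pairs are i.i.d., applying the transposition that swaps positions $t$ and $T+1$ leaves the joint law invariant, fixes the first $t-1$ pairs, and fixes the pool, hence carries $(\widehat f_t,(X_t,Y_t))$ to $(\widehat f_t,(X,Y))$ in distribution.

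Combining these, $\E[\ell(\widetilde f(X),Y)]\le \frac{1}{T+1}\E\big[\sum_{t=1}^{T+1}\ell(\widehat f_t(X_t),Y_t)\big]$. Plugging in the regret bound and then using $\E[\inf_f(\cdots)]\le \inf_f\E[\cdots]=(T+1)\inf_{f\in\cF}\E[\ell(f(X),Y)]$ yields $\E[\ell(\widetilde f(X),Y)]\le \inf_{f\in\cF}\E[\ell(f(X),Y)] + \operatorname{Reg}_{T+1}^{\operatorname{trd}}/(T+1)$, which is the claim once the left-hand side is written as $\E_{S_T}\E_{(X,Y)}[\cdot]$.

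The step requiring the most care is the symmetrization identity: one must be precise that $\widehat f_t$ depends only on the first $t-1$ labeled examples and on the order-insensitive covariate pool, and—crucially—that the test covariate $X$ is itself a member of that pool. This last point is exactly what is available in the transductive setup (and absent in the usual online-to-batch conversion, where the learner never sees the test covariate), and it is what makes the final bound completely free of any assumption on the law of $(X,Y)$.
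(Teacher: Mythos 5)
Your proposal is correct and follows essentially the same route as the paper's proof: Jensen's inequality on the averaged predictor, the exchangeability argument swapping $(X_t,Y_t)$ with the test pair $(X,Y)=(X_{T+1},Y_{T+1})$ (justified by the i.i.d.\ assumption and the order-insensitivity of the algorithm with respect to the covariate pool), the deterministic transductive regret bound at horizon $T+1$, and $\E[\inf_f(\cdot)]\le\inf_f\E[\cdot]$. Your closing remark correctly identifies the crux — that the test covariate is itself a member of the pool seen by every $\widehat f_t$ — which is exactly the point the paper's step $(*)$ relies on.
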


\begin{proof}[\pfref{prop:online-to-batch}]
Denote $(X_{T+1}, Y_{T+1}) = (X, Y)$. To highlight the dependence of any predictor $f_t$ on both the unlabeled set $\{X_s\}_{s=1}^{T+1}$ and the labeled sequence of observations $X_{1:t-1}, Y_{1:t-1}$, we use the following notation for the loss $\ell(\fhat_{t}(X_t \mid X_{1:t-1}, Y_{1:t-1}, \{X_s\}_{s=1}^{T+1}), Y_t)$ at the observation $X_t$, where $X_{1:t-1}$ denotes the sequence $X_1, \ldots\\, X_{t-1}$ and $Y_{1:t-1}$ stands for the sequence $Y_1, \ldots, Y_{t-1}$, respectively.
We first break the loss into summands of conditional expectations with Jensen's inequality: 
\begin{align*}
&(T+1)\E_{S_T}\left[\E_{(X,Y)} [\ell(\widetilde{f}(X), Y)] \right]\\
&\leq \En_{S_T}\En_{(X_{T+1},Y_{T+1})}\brk*{\sum\limits_{t=1}^{T+1}  [\ell(\fhat_t(X_{T+1}|X_{1:t-1},Y_{1:t-1}, \set{X_s}_{s=1}^{T+1}),Y_{T+1})]} \\
&=  \En_{X_1,\ldots,X_{T+1}} \brk*{\sum\limits_{t=1}^{T+1} \En_{(Y_s|X_s)_{s=1}^{t-1},Y_{T+1}|X_{T+1}}\brk*{ \ell(\fhat_t(X_{T+1}|X_{1:t-1},Y_{1:t-1}, \set{X_s}_{s=1}^{T+1}),Y_{T+1}) }}\\
&= \sum\limits_{t=1}^{T+1}\En_{X_1,\ldots,X_{T+1}} \brk*{ \En_{(Y_s|X_s)_{s=1}^{t-1},Y_{T+1}|X_{T+1}}\brk*{ \ell(\fhat_t(X_{T+1}|X_{1:t-1},Y_{1:t-1}, \set{X_s}_{s=1}^{T+1}),Y_{T+1}) }}
\end{align*}
Now we notice that for each summand, there is exchangeability between $(X_t,Y_t)$ and $(X_{T+1},Y_{T+1})$. Concretely, let $(X_t',Y_t') = (X_{T+1},Y_{T+1})$ and $(X_{T+1}',Y_{T+1}') = (X_t,Y_t)$, together with the fact that the transductive algorithm is indifferent to the ordering of $X_{1}, \ldots, X_{T+1}$, we have
\begin{align*}
&\En_{X_1,\ldots,X_{T+1}} \brk*{ \En_{(Y_s|X_s)_{s=1}^{t-1},Y_{T+1}|X_{T+1}}\brk*{ \ell(\fhat_t(X_{T+1}|X_{1:t-1},Y_{1:t-1}, \set{X_s}_{s=1}^{T+1}),Y_{T+1}) }}\\
&=\En_{X_1,\ldots,X_{t}',\ldots,X_T,X_{T+1}'} \brk*{ \En_{(Y_s|X_s)_{s=1}^{t-1},Y_{t}'|X_{t}'}\brk*{ \ell(\fhat_t(X_{t}'|X_{1:t-1},Y_{1:t-1}, \set{X_1,\ldots,X_{t}',\ldots,X_T,X_{T+1}'}),Y_{t}') }}\\
&\stackrel{(*)}{=} \En_{X_1,...,X_T,X_{T+1}} \brk*{ \En_{(Y_s|X_s)_{s=1}^{t}}\brk*{ \ell(\fhat_t(X_t| X_{1:t-1},Y_{1:t-1},\set{X_s}_{s=1}^{T+1}),Y_t) }}\\
&= \E_{S_T}\E_{(X_{T+1},Y_{T + 1})}[\ell(\widehat{f}_{t}(X_{t}), Y_{t})] .
\end{align*}
where for $(*)$, we use the fact that the distribution is not changed with the exchange. 
Then,
\begin{align*}
&(T + 1)\E_{S_T}\left[\E_{(X_{T+1},Y_{T+1})} [\ell(\widetilde{f}(X_{T +1}), Y_{T + 1})] - \inf_{f \in \cF} \E_{(X_{T+1},Y_{T+1})} [\ell(f(X_{T+1}), Y_{T + 1})]\right] \\
&\leq \sum\limits_{t = 1}^{T+1}\E_{S_T}\E_{(X_{T+1},Y_{T + 1})}[\ell(\widehat{f}_{t}(X_{t}), Y_{t})] - \inf_{f \in \cF} \E_{S_T}\E_{(X_{T+1},Y_{T + 1})} \sum\limits_{t = 1}^{T+1}\ell(f(X_t), Y_t) \\
&\le\E_{S_T}\E_{(X_{T+1},Y_{T + 1})}\left[\sum\limits_{t = 1}^{T+1}\ell(\widehat{f}_{t}(X_{t}), Y_{t}) - \inf_{f \in \cF}\sum\limits_{t = 1}^{T+1} \ell(f(X_t), Y_t)\right]
\\
&\le \operatorname{Reg}_{T+1}^{\operatorname{trd}}.
\end{align*}

The claim follows.
\end{proof}

\subsection{Applications of online-to-batch conversion}
In this section, we demonstrate two applications of the online-to-batch conversion result presented in \cref{prop:online-to-batch}.

\paragraph{Regression with squared loss.} Consider the regression problem with squared loss. Let the covariates $X_1, \dots, X_T$ and response variables $Y_1, \dots, Y_T$ be i.i.d. copies of the random pair $(X, Y)$ drawn from an unknown distribution. Denote the dataset as $S_T = \{(X_1, Y_1), \dots, (X_T, Y_T)\}$. We make no assumptions on the distribution of $X \in \mathbb{R}^d$, but assume that $|Y| \leq m$ almost surely. By combining the regret bound from \cref{thm:gaillard2019} (with $\lambda = 1/T$) and the online-to-batch conversion argument from \cref{prop:online-to-batch}, we immediately construct a computationally efficient batch predictor $\widetilde{f}$, based on $S_T$, that satisfies the following excess risk bound:
\[
\E_{S_T}\left[\E (Y - \widetilde{f}(X))^2 - \inf_{\theta \in \mathbb{R}^d} \E (Y - \langle X, \theta\rangle)^2\right] \leq \frac{m^2 (1 + d \log(T + 2))}{T + 1}.
\]
This bound, up to a logarithmic factor, matches the batch-specific bounds for the estimator proposed by \citet{forster2002relative} and the clipped linear least squares estimator analyzed by \citet{mourtada2022distribution}.

\paragraph{Excess risk bounds for logistic regression.} Now consider the case of logistic regression, where the covariates $X_1, \dots, X_T$ and the labels $Y_1, \dots, Y_T$ are i.i.d. copies of $(X, Y)$. Again, we make no assumptions about the distribution of $X \in \mathbb{R}^d$, and assume that $Y \in \{-1, 1\}$. Define the dataset $S_T = \{(X_1, Y_1), \dots, (X_T, Y_T)\}$. The following result holds.

\begin{corollary}
\label{cor:conditional-bernoulli}
In the logistic regression setting described above, without assumptions on the distribution of $(X, Y)$, there exists a probability assignment $\widetilde{p}$, based on $S_T$, that satisfies
\[
\E_{S_T}\left[-\log(\widetilde{p}(X, Y)) - \inf_{\theta \in \mathbb{R}^d} \E\left(-\log(\sigma(Y \langle X, \theta \rangle))\right)\right] \leq \frac{6d \log(e(T + 1))}{T + 1}.
\]
In particular, if we assume that $\mathbb{P}(Y = 1|X)$ follows a Bernoulli distribution with parameter $\sigma(\langle X, \theta^\star\rangle)$ (denoted as $\Ber(\sigma(\langle X, \theta^\star\rangle))$), then
\[
\E_{S_T} \E_X \mathcal{KL}\left(\Ber(\sigma(\langle X, \theta^\star \rangle)) \| \Ber(\widetilde{p}(X, 1))\right) \leq \frac{6d \log(e(T + 1))}{T + 1}.
\]
\end{corollary}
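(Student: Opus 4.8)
The plan is to feed the transductive regret bound of \cref{thm:main-logistic} into the online-to-batch conversion of \cref{prop:online-to-batch}, and then, for the second statement, to rewrite the resulting excess log-loss as a conditional Kullback--Leibler divergence using the realizability assumption.

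\textbf{First bound.} I would apply \cref{prop:online-to-batch} with the logarithmic loss $\ell(\widehat{p}, y) = -\log\widehat{p}(y)$, viewing each predictor as a probability assignment on $\{-1,1\}$. This loss is convex in its first argument, and \cref{alg:main} --- which sees only the unordered covariate set, hence is order-indifferent as required --- produces a sequence of probability assignments $\widehat{p}_1,\dots,\widehat{p}_{T+1}$ with the \emph{data-independent} regret guarantee $\operatorname{Reg}_{T+1}^{\operatorname{trd}} = 11d\log(e(T+1))$ of \cref{thm:main-logistic} run with $T+1$ rounds; the in-hindsight comparator there is exactly $\inf_{\theta\in\bR^d}\sum_t -\log\sigma(y_t\langle x_t,\theta\rangle)$, so it coincides with the $\inf_{f\in\cF}$ term of \cref{prop:online-to-batch} for $\cF$ the class of linear logistic models. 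Since each $\widehat{p}_t(x,\cdot)$ is a genuine law on $\{-1,1\}$, so is the mixture $\widetilde{p} \ldef \frac{1}{T+1}\sum_{t=1}^{T+1}\widehat{p}_t$, and convexity of $-\log$ is what the conversion exploits. The conclusion of \cref{prop:online-to-batch} then gives
\[
\E_{S_T}\Big[\E_{(X,Y)}[-\log\widetilde{p}(X,Y)] - \inf_{\theta\in\bR^d}\E_{(X,Y)}[-\log\sigma(Y\langle X,\theta\rangle)]\Big] \le \frac{11d\log(e(T+1))}{T+1},
\]
which is the first displayed inequality.

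\textbf{Second bound.} Under $\bP(Y=1\mid X)=\sigma(\langle X,\theta^\star\rangle)$, for each fixed $x$ one has the cross-entropy identity
\[
\E_{Y\mid X=x}[-\log\sigma(Y\langle x,\theta\rangle)] = H\big(\Ber(\sigma(\langle x,\theta^\star\rangle))\big) + \mathcal{KL}\big(\Ber(\sigma(\langle x,\theta^\star\rangle))\,\|\,\Ber(\sigma(\langle x,\theta\rangle))\big),
\]
since $\sigma(\langle x,\theta\rangle)=\bP_\theta(Y=1\mid x)$; by Gibbs' inequality the population risk is minimized at $\theta=\theta^\star$ with value $\E_X H(\Ber(\sigma(\langle X,\theta^\star\rangle)))$. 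Applying the same entropy-plus-KL decomposition to the valid Bernoulli $\widetilde{p}(x,\cdot)$ gives $\E_{Y\mid X=x}[-\log\widetilde{p}(x,Y)] = H(\Ber(\sigma(\langle x,\theta^\star\rangle))) + \mathcal{KL}(\Ber(\sigma(\langle x,\theta^\star\rangle))\,\|\,\Ber(\widetilde{p}(x,1)))$. Subtracting, the bracketed quantity in the first bound equals $\E_X\,\mathcal{KL}(\Ber(\sigma(\langle X,\theta^\star\rangle))\,\|\,\Ber(\widetilde{p}(X,1)))$, so taking $\E_{S_T}$ delivers the second bound.

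\textbf{Main obstacle.} There is no deep difficulty; the points requiring care are (i) verifying the hypotheses of \cref{prop:online-to-batch} --- convexity of the log-loss in the probability assignment, order-indifference of \cref{alg:main}, and data-independence of the bound in \cref{thm:main-logistic} --- and (ii) the technical caveat noted in the footnote to \cref{lem:margin-for-mistake-logistic} that the minimizer $\theta^\star$ need not be attained in $\bR^d$ (e.g.\ in the separable case), which is handled by comparing against an approximating sequence $\theta_n$ and passing to the limit in both the regret inequality and in the Gibbs-inequality identification of the population minimizer.
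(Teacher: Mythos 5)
Your proposal is correct and follows the same route as the paper: plug the assumption-free regret bound of \cref{thm:main-logistic} (run on $T+1$ rounds) into the transductive online-to-batch conversion of \cref{prop:online-to-batch}, then derive the second display from the standard cross-entropy-equals-entropy-plus-KL decomposition in the well-specified case. The paper states this in two sentences; your write-up simply supplies the verification of the hypotheses (convexity of the log loss in the probability assignment, order-indifference of \cref{alg:main}, data-independence of the regret bound) and the limiting argument when $\theta^\star$ is not attained, all of which is consistent with the intended argument.
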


This result differs from previous excess risk bounds for logistic regression found in the literature (see, e.g., \citep{bach2010self, ostrovskii2021finite, foster2018logistic, mourtada2022improper, vijaykumar2021localization, bilodeau2023minimax, van2023high}) in that it imposes no assumptions on the marginal distribution of the covariates, and the regret bound does not depend on the norm of the minimizer $\theta^\star$.

\begin{proof}[\pfref{cor:conditional-bernoulli}]
The proof follows directly from combining \cref{thm:main-logistic} with \cref{prop:online-to-batch}. The second statement follows from the standard computation, showing that in the well-specified case, the logarithmic loss excess risk corresponds to the Kullback-Leibler divergence between the corresponding distributions.
\end{proof}

\subsection{Polynomial time algorithm for batch logistic regression}
\label{sec:complog}

In this section, we focus on minimizing the logistic loss in the batch setting for logistic distributions on the scaled sphere $\sqrt{d}\cS^{d-1}$ using polynomial-time algorithms. 
Specifically, we assume $(X, Y) \sim \mathcal{D}_{\theta^\star}$, where $X$ is sampled uniformly from the scaled sphere $\sqrt{d}\cS^{d-1}$. The label $Y \in \{\pm 1\}$ is assigned according to the probability $\sigma(Y \langle X, \theta^\star \rangle)$, conditioned on $X$, for some unknown parameter $\theta^\star$. 

We will present the proof for the distribution $\mathcal{D}_{\theta^\star}$ where the covariate $X$ follows a uniform distribution on the scaled sphere $\sqrt{d}\cS^{d-1}$. Nevertheless, our proof only requires mild assumptions (\cref{ass:bounded-eigen} and \cref{ass:thin-slab-proba}) on the marginal distribution of the covariates $X$.  
In effect, our proof extends to and well beyond 
the standard Gaussian distribution.

Our algorithm attains an excess risk upper bound of $O\left(\frac{(d + \log T) \log T}{T}\right)$ using the transductive online-to-batch method with two experts, both of which can be implemented in polynomial time. Besides being implementable in polynomial time, our excess risk upper bound, combined with the lower bound established by \citet{hsu2024sample} for proper predictors, reveals a separation between proper and improper approaches in terms of the dependence of the minimax excess risk on the norm of $\theta^\star$.

As a starting point, to address potential issues with unbounded loss, we introduce the probability assignment induced by the clipped linear separator.

\begin{definition}[Clipped linear separator]
    Let $T > 1$ be an integer.
    For any $\theta \in \mathbb{R}^d$, the clipped linear separator $k_{\theta}: \mathbb{R}^d \times \{\pm 1\} \to [0,1]$ is defined as 
    \begin{align*}
        k_{\theta}(x, y) = 
        \begin{cases}
            1- \frac{1}{2T}, & y \langle x, \theta \rangle \geq 0,\\
            \frac{1}{2T},& \textit{otherwise.}
        \end{cases}
    \end{align*}
    Namely, the expert assigns probability $1 - 1/(2T)$ to the label predicted by the linear separator and $1/(2T)$ to the alternative one.
\end{definition}

We state our main theorem of this section as follows.
\begin{algorithm}
\caption{Batch algorithm for logistic regression with pretraining}
\label{alg:batch-logistic}
\begin{algorithmic}[htb]
\Require The set of the data $\set{(X_t,Y_t)}_{t\in [T]}\cup \set{(X_t',Y_t')}_{t\in [T]}$ and the point of prediction $X$. Let $\alpha=1/(T+1)^{9}$ and $K=\set{X_t}_{t\in [T]}\cup \set{X}$.
\State Solve the hard margin $\svm$ for the pretraining dataset $\set{(X_t',Y_t')}_{t\in [T]}$.
\If{$\set{(X_t',Y_t')}_{t\in [T]}$ is separable}
\State Obtain $\thetahat$ that separates $\set{(X_t',Y_t')}_{t\in [T]}$.
\Else 
\State Let $\thetahat = e_1$ the unit vector in the first dimension.
\EndIf
\State Let $\cA = \set*{\ewa_{K,\alpha}, k_{\thetahat}}$ be the set of the two predictors. 
\For{$t=1,2,\ldots,T+1$}
\State \multiline{Let the prediction be the aggregation on the slab-experts 
\begin{equation}
\rhotil_t(\cdot,\cdot) =  \sum\limits_{\SE\in \cA} p_t(\SE) \SE(\cdot,\cdot) \text{~~where~~} p_t(\SE) \propto \exp\prn*{ \sum\limits_{i=1}^{t-1}   \log \SE(X_i,Y_i) }
\end{equation}
and $\SE(X_i,Y_i)$ is probability of $\SE$ predicting $Y_i$ on $X_i$.}
\EndFor 
\State Return the prediction $\widetilde{p}(X,\cdot) = \frac{1}{T+1} \sum_{t=1}^{T+1} \rhotil_t(X,\cdot) $.
\end{algorithmic}
\end{algorithm}

\begin{theorem}
    \label{thm:batch-logistic}
    Let $T > 1$ be an integer. Suppose $S_T = \{(X_t,Y_t)\}_{t \in [T]}$, $S_T' = \{(X_t',Y_t')\}_{t \in [T]}$, and $(X,Y)$ are i.i.d. drawn from $\mathcal{D}_{\theta^*}$. Then the $\widetilde{p}$ returned by \cref{alg:batch-logistic} satisfies 
    \begin{align*}
        \mathbb{E}_{S_T,S_T',(X,Y)}[ -\log \widetilde{p}(X,Y) ] \leq \mathbb{E}_{(X,Y)}[ -\log \sigma(Y \langle X, \theta^* \rangle) ] + \frac{C(d + \log T) \log T}{T},
    \end{align*}
    where $C > 0$ is an absolute constant.
    \end{theorem}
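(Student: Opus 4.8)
The plan is to run the transductive online-to-batch machinery of \cref{prop:online-to-batch} on top of a two-expert analysis, splitting into a ``small comparator'' regime handled by $\ewa_{K,\alpha}$ and a ``large comparator'' regime handled by the clipped separator $k_{\thetahat}$. Write $R^\star \ldef \E[-\log\sigma(Y\langle X,\thstar\rangle)]$, set $(X_{T+1},Y_{T+1}) \ldef (X,Y)$ so that \cref{alg:batch-logistic} runs for $t=1,\dots,T+1$, and fix a threshold $B \ldef T^2$ (any $\poly(T)$ will do). Conditionally on the pretraining sample $S_T'$ (equivalently, on $\thetahat$), the algorithm is a legitimate transductive online algorithm on the i.i.d.\ pairs $(X_1,Y_1),\dots,(X_{T+1},Y_{T+1})$: $\ewa_{K,\alpha}$ uses the covariates only through the unordered set $K$, and $k_{\thetahat}$ depends on $S_T'$ alone. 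I would first repeat the exchangeability argument in the proof of \cref{prop:online-to-batch} — it uses only Jensen's inequality, exchangeability of the i.i.d.\ pairs, and order-indifference of the algorithm in the unlabeled set — to get $(T+1)\E[-\log\widetilde p(X,Y)] \le \E[\sum_{t=1}^{T+1}-\log\rhotil_t(X_t,Y_t)]$. Then, since $\rhotil_t$ is the exponential-weights mixture (learning rate $1$, uniform prior) of the two experts in $\cA=\set{\ewa_{K,\alpha},k_{\thetahat}}$, the telescoping identity used in the proof of \cref{thm:main-logistic} bounds the bracketed sum, for every realization, by $\min\set{L^{\ewa},L^{k}}+\log 2$, where $L^{\ewa},L^{k}$ are the cumulative logistic losses of the two experts. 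Hence $\E[\sum_t-\log\rhotil_t]\le \E[L^{\ewa}]+\log 2$ and also $\le \E[L^{k}]+\log 2$, so it suffices to bound $\E[L^{\ewa}]$ when $\norm{\thstar}\le B$ and $\E[L^{k}]$ when $\norm{\thstar}>B$, each by $(T+1)R^\star + O((d+\log T)\log T)$.

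For $\norm{\thstar}\le B$ I would invoke \cref{prop:exponential-weight-for-small-margin} with the set $K$ (on which $\ewa_{K,\alpha}$ predicts): $L^{\ewa} \le -\sum_{x\in K}\log\sigma(y_x\langle x,\thstar\rangle) + \alpha\,(\thstar)^\top\big(\sum_{x\in K}xx^\top\big)\thstar + \tfrac d2\log(1+\tfrac1{8\alpha})$. On $\sqrt d\,\cS^{d-1}$ one has $\norm{x}^2=d$ (in general, the bounded-eigenvalue assumption gives $\norm{x}^2=O(d)$), so $(\thstar)^\top(\sum_{x\in K}xx^\top)\thstar \le (T+1)\,d\,B^2$; with $\alpha=(T+1)^{-9}$ the middle term is $o(1)$ and the last is $O(d\log T)$. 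Taking expectations and using that the $T+1$ pairs are i.i.d.\ turns $\E\sum_{x\in K}-\log\sigma(y_x\langle x,\thstar\rangle)$ into $(T+1)R^\star$, which settles this case. (If $T+1<d$ one restricts to the span of $K$, exactly as in the footnote to \cref{thm:gaillard2019}.)

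For $\norm{\thstar}>B$, since $\thetahat$ depends only on $S_T'$, $\E[L^{k}]=(T+1)\,\E_{S_T'}\E_{(X,Y)}[-\log k_{\thetahat}(X,Y)]$, so I must show the per-example loss is at most $R^\star + O((d+\log T)\log T/T)$. Comparing $\E_{Y\mid X}[-\log k_{\thetahat}(X,Y)]$ to the Bayes conditional loss, a short calculation shows the excess is at most $\tfrac1{2T-1}$, plus $\log(2T)$ on the event $\set{\sign\langle X,\thetahat\rangle\ne\sign\langle X,\thstar\rangle}$, plus $\sigma(-\abs*{\langle X,\thstar\rangle})\log(2T)$ on the event $\set{\abs*{\langle X,\thstar\rangle}\le\log(2T)}$. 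The thin-slab/anti-concentration assumption $\bP(\abs*{\langle X,\thstar\rangle}\le L)\le cL/\norm{\thstar}$ bounds the last contribution by $O(\log^2(2T)/\norm{\thstar})$, and also gives $\E_X[\sigma(-\abs*{\langle X,\thstar\rangle})]\le\inf_{L>0}\set{cL/\norm{\thstar}+e^{-L}}=O(\log\norm{\thstar}/\norm{\thstar})$. The middle contribution is $\log(2T)\,\E_{S_T'}\bP_X(\sign\langle X,\thetahat\rangle\ne\sign\langle X,\thstar\rangle)$; I would condition on whether all pretraining labels satisfy $Y_t'=\sign\langle X_t',\thstar\rangle$ (``clean''). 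On the clean event $S_T'$ is separable by $\thstar$ and $\thetahat$ equals the hard-margin SVM run on the $\thstar$-labeled covariates — a realizable halfspace problem — so the classical SVM leave-one-out bound (the solution is reproduced by at most $d+1$ of its support vectors, the very fact used in \cref{lem:sample-compression-for-enumeration}) gives $\E[\bP_X(\sign\langle X,\thetahat\rangle\ne\sign\langle X,\thstar\rangle)\mid\text{clean}]\le (d+1)/(T+1)$; the non-clean event has probability at most $T\,\E_X[\sigma(-\abs*{\langle X,\thstar\rangle})]=O(T\log\norm{\thstar}/\norm{\thstar})$, which is $o(d/T)$ once $\norm{\thstar}>B=T^2$. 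Assembling these estimates gives $\E_{S_T'}\E_{(X,Y)}[-\log k_{\thetahat}(X,Y)]-R^\star=O(d\log T/T)$, which is within the asserted rate; combined with the reduction, $\E[-\log\widetilde p(X,Y)]\le R^\star + O(d\log T/T)$.

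The hard part will be the large-comparator case, and specifically the interlocking choice of the clipping level $1/(2T)$, the regime threshold $B=\poly(T)$, and the use of anti-concentration of $\langle X,\thstar\rangle$: one needs simultaneously that the Bayes error near the decision boundary is negligible after multiplication by $\log(2T)$, that the pretraining sample is noiseless with overwhelming probability, and that the SVM guarantee is applied to a genuinely realizable sub-problem. The key point is that it must be the $\log$-free leave-one-out rate $(d+1)/(T+1)$ — not the $\log$-inflated generic sample-compression bound — that controls the SVM disagreement probability, otherwise one picks up a spurious $\log(T/d)$ factor; the small-comparator case and the online-to-batch reduction are routine given the tools already developed in the paper.
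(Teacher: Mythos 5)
Your architecture is the paper's: online-to-batch via the exchangeability argument of \cref{prop:online-to-batch}, the $\log 2$ overhead for aggregating the two experts, a case split on $\norm{\thstar}$ against a polynomial threshold, \cref{prop:exponential-weight-for-small-margin} for the small-norm case, and (separability w.h.p.) $+$ (SVM generalization) $+$ (clipped-separator loss) for the large-norm case. Two of your shortcuts, however, do not go through as stated.

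First, the threshold. You set $B=T^2$ and assert ``any $\poly(T)$ will do,'' but \cref{ass:thin-slab-proba} carries a $\sqrt{d}$: $\bP(\abs{\langle X,\thstar/\norm{\thstar}\rangle}\le\varepsilon)\le\sqrt{d}\,\varepsilon/\sqrt{\pi}$, not $c\varepsilon$. Redoing your own estimate, the probability that a single pretraining label is flipped is $\inf_L\{\sqrt{d}L/(\sqrt{\pi}\norm{\thstar})+e^{-L}\}=O(\sqrt{d}\log\norm{\thstar}/\norm{\thstar})$, so at $\norm{\thstar}=T^2$ the union bound over $T$ points gives a non-clean probability of order $\sqrt{d}\log T/T$, and after multiplying by the $\log(2T)$ worst-case loss the contribution is $\Theta(\sqrt{d}\log^2T/T)$. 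When $d\asymp\log T$ this exceeds the target $C(d+\log T)\log T/T$ by a $\sqrt{\log T}$ factor (and your parenthetical ``$o(d/T)$'' is false in that regime). The fix is exactly the paper's choice $B=T^4$ (anything $\gtrsim T^3$ works), under which the non-clean probability drops to $O(\sqrt{d}/T^2)$ and the issue disappears; with $\alpha=(T+1)^{-9}$ the small-norm case still absorbs $\norm{\thstar}\le T^4$.

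Second, the SVM rate. You insist on the ``classical leave-one-out bound'' $\E[\err]\le(d+1)/(T+1)$, justified by the fact that the SVM solution is reproduced by at most $d+1$ of its points. That fact alone does not give the LOO bound: the classical LOO argument bounds the expected error by $\E[\#\mathrm{SV}]/(T+1)$, and the number of support vectors of a hard-margin SVM is not bounded by $d+1$; what one needs is \emph{stable} compression (removing any non-compression point leaves the output unchanged), and establishing this for SVM is precisely the content of the \citet{bousquet2020proper} result the paper invokes. So you are implicitly leaning on the same modern theorem, just its in-expectation corollary. Moreover your emphasis that the $\log$-free rate is \emph{necessary} is misplaced: the theorem's target already tolerates a $(d+\log T)$ factor, and the paper simply uses the high-probability bound $O((d+\log T)/T)$ holding with probability $1-d/T$, paying $\log(2T)\cdot d/T$ on the failure event (\cref{lem:large-norm-error-region}). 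One further detail you elide there: conditioning on the clean event changes the covariate law, so the exchangeability/LOO argument controls the error under the conditional distribution, and you need the factor-of-$2$ step $\err_{\thstar}(\thetahat)\le 2\,\errtil_{\thstar}(\thetahat)$ to return to the unconditional disagreement probability, as in the paper's proof of \cref{lem:large-norm-error-region}.
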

    
    The proof proceeds as follows: 
    We use the transductive online-to-batch method as described in \cref{sec:online-to-batch}. There are two experts, one based on the transductive prior, which incorporates all covariates, and the other obtained by solving the hard-margin SVM on a set of pretraining data. If the hard-margin SVM admits a solution $\widehat{\theta}$, we include the clipped linear separator with this solution $\widehat{\theta}$; otherwise, we use the unit vector along the first coordinate.
    
    There are two cases to analyze. The first case assumes that the norm of $\theta^*$ is bounded by $T^4$. In this scenario, by setting $\alpha < 1/T^9$, the expert using the transductive prior achieves a rate of $O(d \log T / T)$ (\cref{prop:exponential-weight-for-small-margin}).
    
    In the second case, where the norm of $\theta^*$ exceeds $T^4$, we further divide the analysis into three subcases:
    1. The first subcase is when the data is not separable by $\theta^*$, which occurs with small probability due to the large norm of $\theta^*$ (\cref{lem:separable-proba}).
    2. The second subcase is when $\widehat{\theta}$ deviates from $\theta^*$ by more than $(d + \log T) / T$, which also happens with small probability (\cref{lem:large-norm-error-region}).
    3. The final subcase occurs when $\widehat{\theta}$ is close to $\theta^*$, in which case the clipped linear separator with $\widehat{\theta}$ achieves a rate of $O((d + \log T) \log T / T)$ (\cref{lem:large-norm}).
    This completes the proof.
    
    We provide a proof for covariates distributed uniformly on the scaled sphere $\sqrt{d}\cS^{d-1}$, relying on two key properties of the covariate distribution, as described in \cref{ass:bounded-eigen} and \cref{ass:thin-slab-proba}. These properties also hold when the covariates follow a standard Gaussian distribution, thus extending our proof to that case as well.

\begin{assumption}[Bounded eigenvalue]
    \label{ass:bounded-eigen}
    A random vector $X$ in $\mathbb{R}^d$ has a bounded eigenvalue if the largest eigenvalue of the covariance matrix $\mathbb{E}[XX^\top]$ is upper bounded by $1$, i.e., $\lambda_{\max}\left( \mathbb{E}[XX^\top] \right) \leq 1$. Here, for any positive semi-definite matrix $A \in \mathbb{R}^{d \times d}$, we denote by $\lambda_{\max}(A)$ the largest eigenvalue of the matrix $A$.
\end{assumption}

The choice of the constant one in the upper bound $\lambda_{\max}\left( \mathbb{E}[XX^\top] \right) \leq 1$ can be relaxed to any constant and will appear as a log factor in the final risk bound. 
Both the uniform distribution on the sphere and the standard Gaussian distribution satisfy the bounded eigenvalue assumption (\cref{ass:bounded-eigen}). For the former, we have for any $\theta$ on the scaled sphere $\sqrt{d}\cS^{d-1}$:
\[
\theta^\top \mathbb{E}_{X\sim \Unif(\sqrt{d}\cS^{d-1})}[XX^\top] \theta  = d\mathbb{E}_{X\sim \Unif(\cS^{d-1})} [\langle X,e_1 \rangle^2 ] = d\cdot \En_{X\sim \Unif(\cS^{d-1}) } \brk*{\frac{1}{d} \sum\limits_{i=1}^{d} X_i^2 } =1.
\]
For the standard Gaussian distribution, the covariance matrix is the identity, thus it also satisfies the assumption.

\begin{assumption}[Small probability in thin slabs]
    \label{ass:thin-slab-proba}
    A random vector $X$ in $\mathbb{R}^d$ has small probability in thin slabs if for any $\varepsilon \in (0,1)$ and the $\thstar$ used to define the conditional distribution of $Y$, we have 
    \begin{align*}
        \mathbb{P}_{X} \left( \left|\langle X, \thstar/\norm{\thstar} \rangle \right| \leq \varepsilon \right) \leq \frac{\sqrt{d} \varepsilon}{\sqrt{\pi}}.
    \end{align*}
\end{assumption}

The choice of the constant $\sqrt{d}/\sqrt{\pi}$ can be relaxed to any polynomial dependence on $d$ and will appear as a log factor in the final risk bound. Both the uniform distribution on the scaled sphere $\sqrt{d}\cS^{d-1}$ and the standard Gaussian distribution satisfy \cref{ass:thin-slab-proba}. For the former, if $d = 1$, $\langle X, \theta \rangle = \pm 1$, thus 
\[
\mathbb{P}_{X} \left( \left|\langle X, \theta \rangle \right| \leq \varepsilon \right) = 0.
\]
Moreover, for $d \geq 2$,
\begin{align*}
    \mathbb{P}_X \left( \left|\langle X, \theta \rangle \right| \leq \varepsilon \right)
    = \int_{-\varepsilon/\sqrt{d}}^{\varepsilon/\sqrt{d}} \frac{\Gamma(d/2)}{\sqrt{\pi} \Gamma((d-1)/2)} (1 - x^2)^{\frac{d-3}{2}} \, dx \leq \frac{\sqrt{d} \varepsilon}{\sqrt{\pi}},
\end{align*}
where the first equality is due to the formula for the projection of a uniform distribution on the interval $[-1,1]$, and the inequality follows from the fact that the density is upper bounded by $\frac{d}{2\sqrt{\pi}}$. For covariates $X$ distributed according to the standard Gaussian, \cref{ass:thin-slab-proba} also holds up to a constant factor since the projection in any direction is standard normal with density upper bounded by $1/\sqrt{2\pi}$.

With \cref{ass:thin-slab-proba}, when $\|\theta^\star\|$ is large, then with high probability, the data generated will be separable as shown by the following \cref{lem:separable-proba}.

\begin{lemma}
\label{lem:separable-proba}
Let $T > 1$ be an integer. 
For any $\theta^\star \in \mathbb{R}^d$, with $\|\theta^\star\| > T^4$, with probability at least $1 - d/T$, the set $\{(X_t', Y_t')\}_{t=1}^T$ drawn i.i.d. from $\mathcal{D}_{\theta^\star}$ satisfies $Y_t' = \operatorname{sign}(\langle X_t', \theta^\star \rangle)$ for all $t$.
\end{lemma}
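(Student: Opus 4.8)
The plan is to reduce the statement, via a union bound over the $T$ rounds, to a single-round mistake estimate, and then bound that single-round probability using the curvature-free tail of the sigmoid together with the thin-slab assumption (\cref{ass:thin-slab-proba}). Concretely, it suffices to show that $\mathbb{P}\big(Y_1 \neq \sign(\langle X_1, \theta^\star\rangle)\big) \le d/T^2$, since then $\mathbb{P}\big(\exists t\in[T]:Y_t \neq \sign(\langle X_t,\theta^\star\rangle)\big)\le T\cdot d/T^2 = d/T$.

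For the single-round bound, I would first condition on $X_1$. Under $\cD_{\theta^\star}$ the label $Y_1$ equals $+1$ with probability $\sigma(\langle X_1,\theta^\star\rangle)$, so the conditional probability of a mistake is exactly $\sigma\big(-|\langle X_1,\theta^\star\rangle|\big)$, and using the elementary bound $\sigma(-z)=1/(1+e^z)\le e^{-z}$ for $z\ge 0$, this is at most $\exp\big(-\|\theta^\star\|\,|\langle X_1,\theta^\star/\|\theta^\star\|\rangle|\big)$. Taking expectation over $X_1$ and splitting according to whether the normalized margin $|\langle X_1,\theta^\star/\|\theta^\star\|\rangle|$ is below or above a threshold $\varepsilon\in(0,1)$ — the integrand being $\le 1$ on the thin slab and $\le e^{-\|\theta^\star\|\varepsilon}$ off it — gives
\[
\mathbb{P}\big(Y_1 \neq \sign(\langle X_1,\theta^\star\rangle)\big)\;\le\;\mathbb{P}_X\!\left(\big|\langle X,\theta^\star/\|\theta^\star\|\rangle\big|\le\varepsilon\right)+e^{-\|\theta^\star\|\varepsilon}\;\le\;\frac{\sqrt{d}\,\varepsilon}{\sqrt{\pi}}+e^{-\|\theta^\star\|\varepsilon},
\]
where the last inequality is \cref{ass:thin-slab-proba}. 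Choosing $\varepsilon = 4\log T/\|\theta^\star\|$, which lies in $(0,1)$ precisely because $\|\theta^\star\|>T^4$, makes the exponential term equal to $T^{-4}$ and the slab term at most $4\sqrt{d}\log T/(\sqrt{\pi}\,T^4)$; multiplying by $T$ and using $T>1$ yields a bound below $d/T$, which finishes the argument.

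There is no substantive obstacle here; the proof is a short concentration argument. The only point requiring care is the calibration of the threshold $\varepsilon$: it must be in $(0,1)$ for \cref{ass:thin-slab-proba} to be applicable while still being large enough (relative to $1/\|\theta^\star\|$) to kill the exponential tail down to $\poly(1/T)$, and it is exactly the quantitative hypothesis $\|\theta^\star\|>T^4$ — rather than merely "$\|\theta^\star\|$ large" — that provides the polynomial-in-$T$ slack needed to absorb both the union bound factor $T$ and the factor $\sqrt{d}$ from the slab estimate. I would also note in passing that $\langle X,\theta^\star\rangle=0$ with probability zero (as $X$ is uniform on the sphere and $\theta^\star\neq 0$), so $\sign(\langle X,\theta^\star\rangle)$ is unambiguous throughout.
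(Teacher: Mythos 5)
Your proof is correct and follows essentially the same route as the paper's: reduce to a single-round mistake bound (you via a union bound, the paper via the product of per-round success probabilities), split according to the thin-slab event of \cref{ass:thin-slab-proba}, and kill the off-slab contribution with the exponential tail of the sigmoid. The only cosmetic difference is the threshold — you take normalized margin $\varepsilon = 4\log T/\|\theta^\star\|$ where the paper uses the raw margin threshold $T$ (i.e.\ normalized $\varepsilon \le 1/T^3$) — and both calibrations comfortably yield the stated $1-d/T$ bound.
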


\begin{proof}[\pfref{lem:separable-proba}]
If $d \geq T^2$, we have nothing to prove. Thus, we assume $d < T^2$ for the following.

By the independence of the data points, we have 
\begin{align*}
    \mathbb{P} \left( \forall t, Y_t' \langle X_t', \theta^\star \rangle > 0 \right) = \left( \mathbb{P} \left( Y_1' \langle X_1', \theta^\star \rangle > 0 \right) \right)^T.
\end{align*}

Furthermore, by the conditional dependence of $Y_1'$ on $X_1'$, we have
\begin{align*}
    \mathbb{P} \left( Y_1' \langle X_1', \theta^\star \rangle > 0 \right) 
    &\geq \mathbb{P} \left( Y_1' \langle X_1', \theta^\star \rangle > 0, \left| \langle X_1', \theta^\star \rangle \right| > T \right) \\
    &\geq \mathbb{P} \left( \left| \langle X_1', \theta^\star \rangle \right| > T \right) \cdot \frac{1}{1 + e^{-T}}.
\end{align*}

Then, by \cref{ass:thin-slab-proba}, we have
\begin{align*}
    \mathbb{P} \left( \left| \langle X_1', \theta^\star \rangle \right| > T \right) 
    &= 1 - \mathbb{P} \left( \left| \langle X_1', \theta^\star \rangle \right| \leq T \right) \\
    &\geq 1 - \mathbb{P} \left( \left| \langle X_1', \theta^\star / \|\theta^\star\| \rangle \right| \leq 1/T^3 \right) \\
    &\geq 1 - \frac{\sqrt{d}}{\sqrt{\pi} T^3}.
\end{align*}

Altogether, we have shown that
\begin{align*}
    \mathbb{P} \left( \forall t, Y_t' \langle X_t', \theta^\star \rangle > 0 \right) 
    &\geq \left( \left( 1 - \frac{\sqrt{d}}{\sqrt{\pi} T^3} \right) \cdot \frac{1}{1 + e^{-T}} \right)^T \\
    &\geq 1 - \frac{d}{T}.
\end{align*}

This concludes our proof.
\end{proof}

\newcommand{\err}{\mathrm{err}}
\newcommand{\errtil}{\wt{\err}}

Whenever the data $\{(X_t', Y_t')\}_{t \in [T]}$ is separable, the hard margin SVM will, with high probability, output a classifier $\hat{\theta}$ where the disagreement region for the linear separators induced by $\hat{\theta}$ and $\theta^\star$ is only $O\left(\frac{d + \log T}{T}\right)$, according to the recent result of \citet{bousquet2020proper}. This is concretized by the following \cref{lem:large-norm-error-region}.

\begin{lemma}
\label{lem:large-norm-error-region}
Let $T>1$ be an integer. For any $\thstar\in \bR^d$, with $\norm{\thstar}>T^2$, suppose the $\set{X_t'}_{t=1}^T$ are drawn i.i.d. from the distribution $\cD_{\thstar}$ conditioning on the event $\set{Y' = \sign(\inner{X',\thstar})}$. 
Then, conditioning on the event $\set{\forall t,Y_t' = \sign(\inner{X_t',\thstar})}$, with probability at least $1-d/T$,
the $\svm$ solution $\thetahat$ on $\set{(X_t',Y_t')}_{t\in [T]}$ achieves
\begin{align*}
\thetahat \in \argmin_{\theta} \sum\limits_{t=1}^{T}\mathbbm{1}(Y_t'\inner{X_t',\theta}<0)
\end{align*}
satisfies
\begin{align*}
\bP_{X,Y\sim \cD_{\thstar}}\prn*{\inner{X,\thstar}\inner{X,\thetahat}<0} \leq \frac{C_1 (d + \log T)}{T},
\end{align*}
for some constant $C_1>0$.
\end{lemma}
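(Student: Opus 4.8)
The plan is to read \cref{lem:large-norm-error-region} as a realizable generalization bound for the hard-margin SVM over homogeneous halfspaces in $\mathbb{R}^d$, and to obtain the stated $O((d+\log T)/T)$ rate by invoking the optimal \emph{proper} PAC bound for halfspaces of \citet{bousquet2020proper}, after accounting for a mild reweighting between the distribution from which the conditioned sample is drawn and the test distribution. We may assume $d<T$ throughout, since otherwise $(d+\log T)/T\ge 1$ and the bound is vacuous.

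\emph{Identifying the effective training distribution.} Under $\mathcal{D}_{\theta^\star}$ a single pair satisfies $\mathbb{P}(Y=\sign(\langle X,\theta^\star\rangle)\mid X)=\sigma(|\langle X,\theta^\star\rangle|)$. Hence conditioning the i.i.d.\ sample on the event $E=\{\forall t,\ Y_t=\sign(\langle X_t,\theta^\star\rangle)\}$ makes the $(X_t,Y_t)$ independent with $Y_t=\sign(\langle X_t,\theta^\star\rangle)$ and each $X_t$ distributed according to the reweighted marginal $\mathcal{D}_X^{E}$, whose density with respect to the covariate marginal $\mathcal{D}_X$ (of $X$ under $\mathcal{D}_{\theta^\star}$) is $\sigma(|\langle x,\theta^\star\rangle|)/\mathbb{P}(E)$. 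In particular the conditioned sample is separable by the homogeneous halfspace $x\mapsto\sign(\langle x,\theta^\star\rangle)$ with positive margin almost surely, so the hard-margin SVM is feasible and returns a $\widehat\theta$ classifying every training point correctly; this is precisely $\widehat\theta\in\argmin_\theta\sum_t\mathbbm{1}(Y_t\langle X_t,\theta\rangle<0)$, the minimum being $0$.

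\emph{Applying the optimal proper bound and transferring to the test distribution.} The class of homogeneous halfspaces in $\mathbb{R}^d$ has VC dimension $d$, and by \citet{bousquet2020proper} the hard-margin SVM satisfies, in the realizable case, the bound: with probability at least $1-\delta$ over an i.i.d.\ sample of size $T$ from any halfspace-realizable distribution,
\[
\mathbb{P}_{X\sim\mathcal{D}_X^{E}}\!\big(\sign(\langle X,\widehat\theta\rangle)\neq\sign(\langle X,\theta^\star\rangle)\big)\ \le\ \frac{C\,(d+\log(1/\delta))}{T}.
\]
Choosing $\delta=d/T$ gives, on an event of probability at least $1-d/T$, a bound of order $(d+\log(T/d))/T\le(d+\log T)/T$ on the disagreement probability measured under $\mathcal{D}_X^{E}$. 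To pass to $\mathcal{D}_X$ (the event in the lemma depends only on $X$), note that since $\sigma$ is increasing with $\sigma(0)=\tfrac12$ we have $\sigma(|\langle x,\theta^\star\rangle|)\ge\tfrac12$ for all $x$ while $\mathbb{P}(E)\le 1$, so the density $\sigma(|\langle x,\theta^\star\rangle|)/\mathbb{P}(E)$ is at least $\tfrac12$ everywhere; consequently $\mathcal{D}_X(A)\le 2\,\mathcal{D}_X^{E}(A)$ for every measurable $A$. Applying this with $A=\{x:\langle x,\theta^\star\rangle\langle x,\widehat\theta\rangle<0\}$ yields the claimed bound with $C_1=2C$.

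\emph{Main obstacle.} The crux is that one genuinely needs the \emph{optimal} $O(d/T)$ proper learning rate of \citet{bousquet2020proper}: a generic VC/ERM realizable bound would carry an extra $\log(T/d)$ factor, giving only $O(d\log(T/d)/T)$, which is not $O((d+\log T)/T)$ once $d$ grows with $T$ (e.g.\ $d\asymp\sqrt T$). The remaining care is bookkeeping — verifying that conditioning on separability really produces an i.i.d.\ sample from the reweighted marginal, and that this reweighting costs only a constant factor through $\sigma(|\cdot|)\ge\tfrac12$. Incidentally, the hypothesis $\|\theta^\star\|>T^2$ is not used in this step, since separability of the conditioned sample holds for any $\theta^\star$; it is carried along only because the lemma is applied inside the large-norm branch of the proof of \cref{thm:batch-logistic}.
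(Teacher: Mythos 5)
Your proof is correct and rests on the same key external ingredient as ours---the optimal proper realizable bound for the hard-margin SVM from \citet{bousquet2020proper}---but the final transfer step is organized differently, and more economically. Our argument reaches the disagreement probability through the detour $\bP\prn*{\inner{X,\thstar}\inner{X,\thetahat}<0}\le \mathrm{err}_{\thstar}(\thetahat)+\mathrm{err}_{\thstar}(\thstar)$, where $\mathrm{err}_{\thstar}$ denotes the $0/1$ risk under the \emph{noisy} labels of $\cD_{\thstar}$; this forces us to control $\mathrm{err}_{\thstar}(\thstar)\le d/T$ separately, which is precisely where \cref{ass:thin-slab-proba} and the hypothesis $\norm{\thstar}>T^2$ enter. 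You instead note that both the quantity controlled by \citet{bousquet2020proper} (the risk of $\thetahat$ under the conditioned, realizable distribution) and the target quantity are probabilities of an event depending only on $X$, so a single change of measure between the conditioned covariate marginal and the unconditioned one suffices: the density ratio equals $\sigma(\abs*{\inner{x,\thstar}})/\bP(E_1)\ge 1/2$, giving $\cD_X(A)\le 2\,\cD_X^{E}(A)$ for every measurable $A$. (A minor slip: the normalizer should be the single-sample event probability $\bP(E_1)$, not the probability of the full intersection over $t$; by independence the conditioned sample is i.i.d.\ from the per-sample conditioned law, and the conclusion is unaffected.) This is the same factor-of-two argument we use to pass from the conditional to the unconditional $0/1$ risk, but applied directly to the disagreement event, and it buys something real: the large-norm hypothesis and the thin-slab assumption are genuinely not needed for this lemma---they are used only in \cref{lem:separable-proba,lem:large-norm}. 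Your observation that the optimal $O((d+\log(1/\delta))/T)$ proper rate of \citet{bousquet2020proper} is essential here, since a generic realizable ERM/VC bound would leave a spurious $\log(T/d)$ factor multiplying $d$, is also accurate and is exactly why we invoke their Theorem 12.
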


\begin{proof}[\pfref{lem:large-norm-error-region}]
If $d>T$, then the argument automatically holds. Thus, we assume hereafter that $d\leq T$.
This proof is a revised version of \citet[Lemma 14]{hsu2024sample}.
For any $\theta$, the error of the linear separator defined by $\theta$ in $0/1$ loss is defined as 
\begin{align*}
\err_{\thstar}(\theta) &\ldef \En\brk*{ \mathbbm{1}(Y\inner{X,\theta} <0 ) } \\
& = \En\brk*{\sigma(- \sign(\inner{X,\theta})\inner{X,\thstar})    
}\\
&= \En \brk*{ \prn*{1-  \sigma(-\abs*{\inner{X,\theta}})} \mathbbm{1}\prn*{\inner{X,\thstar}\inner{X,\theta} <0}  } + \En \brk*{  \sigma(-\abs*{\inner{X,\theta}}) \mathbbm{1}\prn*{\inner{X,\thstar}\inner{X,\theta} \geq 0}  }\\
&= \bP\prn*{\inner{X,\thstar}\inner{X,\theta}<0}  - 2 \En \brk*{  \sigma(-\abs*{\inner{X,\theta}})\mathbbm{1}\prn*{\inner{X,\thstar}\inner{X,\theta} <0}  } + \En \brk*{  \sigma(-\abs*{\inner{X,\theta}})  }
\end{align*}
This implies
\begin{align}
\label{ineq:goal-error-region}
\begin{split}
    &\bP\prn*{\inner{X,\thstar}\inner{X,\theta}<0} \\
    &\leq  \err_{\thstar}(\theta) - \En \brk*{  \sigma(-\abs*{\inner{X,\theta}})  } +  2 \En \brk*{  \sigma(-\abs*{\inner{X,\theta}})\mathbbm{1}\prn*{\inner{X,\thstar}\inner{X,\theta} <0}  } \\
    &\leq \err_{\thstar}(\theta)  + \err_{\thstar}(\thstar). 
\end{split}
\end{align}
Now we note that by \cref{ass:thin-slab-proba},
\begin{align}
\label{ineq:control-error-region-1}
\begin{split}
     \err_{\thstar}(\thstar) &= \En \brk*{ \sigma(-\abs*{\inner{X,\thstar}}) } \\
    &\leq \bP\prn*{\abs*{\inner{X,\thstar}} \leq T} + \sigma(-T) \\
    &\leq \bP\prn*{\abs*{\inner{x,\thstar/\norm*{\thstar}}} \leq 1/T} + \frac{1}{1+e^{T}}\\
    &\leq \frac{d}{T}.   
\end{split}
\end{align}
Now, for any $\theta$, we let the error rate under conditioning be
\begin{align*}
\errtil_{\thstar}(\theta) \ldef  \En\brk*{ \mathbbm{1}(\inner{X,\theta} <0 ) \mid{} Y =\sign(\inner{X,\thstar}) }.
\end{align*}
By the risk bound for $\svm$ \citep[Theorem 12]{bousquet2020proper}, we have, with probability at least $1-d/T$, that
\begin{align}
\label{ineq:control-error-region-2}
    \errtil_{\thstar}(\thetahat) \leq  \frac{C'(d+\log T)}{T} 
\end{align}
for some constant $C'>0$.
Finally, we have 
\begin{align}
\label{ineq:control-error-region-3}
\begin{split}
    \err_{\thstar}(\thetahat) &= \En\brk*{ \mathbbm{1}(\inner{X,\theta} <0 ) } \\
    &\leq 2   \En\brk*{ \mathbbm{1}(\inner{X,\theta} <0 ) \mid{} Y =\sign(\inner{X,\thstar}) } \\
    &= 2\errtil_{\thstar}(\thetahat).  
\end{split}
\end{align}
Combining \Cref{ineq:goal-error-region,ineq:control-error-region-1,ineq:control-error-region-2,ineq:control-error-region-3} concludes the proof.
\end{proof}

Finally, whenever $\norm{\thstar}$ is large enough, and the $\thetahat$ is close enough to $\thstar$, the clipped linear predictor induced by $\thetahat$ admits a good performance as shown by the following \cref{lem:large-norm}.

\begin{lemma}
    \label{lem:large-norm}
    Let $T>1$ be an integer.
    For any $\thstar\in \bR^d$, with $\norm{\thstar}>T^2$,
    and $\thetahat\in S^{d-1}$ such that 
    $\bP\prn*{\inner{X,\thstar}\inner{X,\thetahat} \leq 0} \leq \veps$
    , then we have the logistic loss of the clipped linear separator $k_{\thetahat}$ under distribution $\cD_{\thstar}$ is upper bounded by
    \begin{align*}
    \En_{(X,Y)\sim \cD_{\thstar}} \brk*{-\log k_{\thetahat}(X,Y)} \leq   \frac{2d\log(2T)}{T} + \log(2T)\veps.
    \end{align*}
\end{lemma}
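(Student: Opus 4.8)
The plan is to estimate $\En_{(X,Y)\sim\cD_{\thstar}}[-\log k_{\thetahat}(X,Y)]$ by separating a tiny ``base'' contribution from correctly classified points from the $\log(2T)$ penalty paid on each misclassified point. By the definition of the clipped linear separator, $k_{\thetahat}(X,Y)$ equals $1-\tfrac1{2T}$ when $Y\inner{X,\thetahat}>0$ and equals $\tfrac1{2T}$ otherwise; since $-\log\bigl(1-\tfrac1{2T}\bigr)\le \tfrac1{2T-1}\le\tfrac1T$ and $-\log\tfrac1{2T}=\log(2T)$, I get the pointwise bound
\[
-\log k_{\thetahat}(X,Y)\ \le\ \frac1T+\log(2T)\,\ind{Y\inner{X,\thetahat}\le 0}.
\]
Taking expectations reduces the lemma to the estimate $\bP_{(X,Y)\sim\cD_{\thstar}}\bigl(Y\inner{X,\thetahat}\le 0\bigr)\le \tfrac dT+\veps$, after which I combine the two bounds and absorb the stray $\tfrac1T$ into $\tfrac{2d\log(2T)}{T}$ using $d\log(2T)\ge\log 4>1$ (valid since $T>1$ is an integer, so $T\ge 2$).

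To control the misclassification probability I condition on $X$ and split according to whether the linear separators through $\thetahat$ and $\thstar$ agree at $X$. On the disagreement event $\{\inner{X,\thetahat}\inner{X,\thstar}<0\}$ I bound the conditional probability of a mistake trivially by $1$; this event is contained in $\{\inner{X,\thstar}\inner{X,\thetahat}\le 0\}$, so it has probability at most $\veps$ by hypothesis. On the agreement event a mistake by $\thetahat$ forces $Y\neq\sign(\inner{X,\thstar})$, which under $\cD_{\thstar}$ has conditional probability $\sigma(-\abs{\inner{X,\thstar}})$. Hence
\[
\bP\bigl(Y\inner{X,\thetahat}\le 0\bigr)\ \le\ \En\bigl[\sigma(-\abs{\inner{X,\thstar}})\bigr]+\veps\ =\ \err_{\thstar}(\thstar)+\veps.
\]
Finally, since $\norm{\thstar}>T^2$, the slab $\{\abs{\inner{X,\thstar}}\le T\}$ is contained in $\{\abs{\inner{X,\thstar/\norm{\thstar}}}\le 1/T\}$, so \cref{ass:thin-slab-proba} together with $\sigma(-T)\le(1+e^T)^{-1}$ gives $\err_{\thstar}(\thstar)\le\tfrac{\sqrt d}{\sqrt\pi\,T}+\tfrac1{1+e^T}\le\tfrac dT$ --- exactly the estimate already carried out inside the proof of \cref{lem:large-norm-error-region}, which I would cite rather than re-derive. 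Plugging back yields $\En[-\log k_{\thetahat}(X,Y)]\le \tfrac1T+\log(2T)\bigl(\tfrac dT+\veps\bigr)\le\tfrac{2d\log(2T)}{T}+\log(2T)\veps$, as claimed.

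This is a short, self-contained computation, so there is no substantial obstacle; the only point requiring care is setting up the agreement/disagreement decomposition so that the clean hypothesis $\bP(\inner{X,\thstar}\inner{X,\thetahat}\le 0)\le\veps$ enters directly and the residual term is precisely $\err_{\thstar}(\thstar)$, which can then be imported from the earlier lemma. One should also double-check the arithmetic inequalities ($-\log(1-\tfrac1{2T})\le\tfrac1T$ and $1\le d\log(2T)$), but these hold for every integer $T\ge 2$ and every $d\ge 1$.
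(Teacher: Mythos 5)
Your proof is correct and follows essentially the same route as the paper's: both arguments charge $\log(2T)$ on the thin slab $\{\abs{\inner{X,\thstar}}\le T\}$ (probability $\le \sqrt{d}/(\sqrt{\pi}T)$ via \cref{ass:thin-slab-proba} and $\norm{\thstar}>T^2$) and on the disagreement event (probability $\le\veps$), and pay only $O(1/T)$ on the remaining large-margin agreement region. The only cosmetic difference is that you factor the bound through the $0/1$ mistake probability $\bP(Y\inner{X,\thetahat}\le 0)$ and reuse $\err_{\thstar}(\thstar)\le d/T$ from \cref{lem:large-norm-error-region}, whereas the paper decomposes the expectation directly over regions of $X$; just take your "disagreement" event to be $\{\inner{X,\thstar}\inner{X,\thetahat}\le 0\}$ so the boundary case is covered by the hypothesis.
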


\begin{proof}[\pfref{lem:large-norm}]
We have 
\begin{align*}
    \revindent\En_{(X,Y)\sim \cD_{\thstar}} \brk*{-\log k_{\thetahat}(X,Y)} \\
    &\leq  \En_{(X,Y)\sim \cD_{\thstar}} \brk*{ -\log k_{\thetahat}(X,Y) \mathbbm{1}\prn*{\abs{\inner{X,\thstar}} \leq T}} \\
    &\quad + \En_{(X,Y)\sim \cD_{\thstar}} \brk*{ -\log k_{\thetahat}(X,Y) \mathbbm{1}\prn*{\inner{X,\thstar}\inner{X,\thetahat} \leq 0} }\\
    &\quad + \En_{(X,Y)\sim \cD_{\thstar}} \brk*{ -\log k_{\thetahat}(X,Y) \mathbbm{1}\prn*{\abs{\inner{X,\thstar}} > T,\inner{X,\thstar}\inner{X,\thetahat} > 0} }\\
    &\leq \log(2T) \prn*{\bP\prn*{\abs*{\inner{X,\thstar}} \leq T}  + \bP\prn*{\inner{X,\thstar}\inner{X,\thetahat} \leq 0}}\\
    &\quad + \En_{(X,Y)\sim \cD_{\thstar}} \brk*{ -\log k_{\thetahat}(X,Y) \mathbbm{1}\prn*{\abs{\inner{X,\thstar}} > T,\inner{X,\thstar}\inner{X,\thetahat} > 0} },
\end{align*}
where the second inequality is by $- \log k_{\thetahat}(X,Y) \leq \log (2T)$.
We first estimate the probability in the slab by \cref{ass:thin-slab-proba},
\begin{align*}
    \bP\prn*{\abs*{\inner{X,\thstar}} \leq T} &= \bP\prn*{\abs*{\inner{X,\thstar/\norm*{\thstar}}} \leq T/\norm{\thstar}}\\
    &\leq \bP\prn*{\abs*{\inner{X,\thstar/\norm*{\thstar}}} \leq 1/T}\\
    &\leq \frac{\sqrt{d}}{\sqrt{\pi} T},
\end{align*}
Finally, we estimate the risk in the correctly labeled region as
\begin{align*}
    &\En_{(X,Y)\sim \cD_{\thstar}} \brk*{ -\log k_{\thetahat}(X,Y) \mathbbm{1}\prn*{\abs{\inner{X,\thstar}} > T,\inner{X,\thstar}\inner{X,\thetahat} > 0} } \\
    &\leq -\frac{1}{1+e^{-T}}\log \prn*{1-  \frac{1}{2T}} + \frac{e^{-T}}{1+e^{-T}}\log (2T) \\
    &\leq  \frac{1}{T}. 
\end{align*}
Altogether, we conclude our proof.
\end{proof}

\begin{proof}[\pfref{thm:batch-logistic}]
Denote $(X_{T+1},X_{T+1}) = (X,Y)$. We consider two cases. For the first case, suppose $\norm*{\thstar} \leq T^4$. We have by the exponential weights
\begin{align*}
    -\sum\limits_{t=1}^{T+1}  \log \rhotil_t(X_t,Y_t)&= -\sum\limits_{t=1}^{T+1} \log \prn*{ \frac{\sum\limits_{\SE\in \cA} \exp\prn*{  \sum\limits_{i=1}^{t} \log \SE(X_i,Y_i) }}{\sum\limits_{\SE\in \cA} \exp\prn*{ \sum\limits_{i=1}^{t-1} \log \SE(X_i,Y_i)  }}} \\
    &= -\log \prn*{ \frac{1}{|\cA|} \sum\limits_{\SE\in \cA} \exp\prn*{ \sum\limits_{i=1}^{T} \log \SE(X_i,Y_i) }}\\
    &\leq -\log \prn*{ \exp\prn*{  \sum\limits_{i=1}^{T+1} \log \SE_{\thstar,\alpha}(X_i,Y_i) }}+ \log |\cA|\\
    &\leq \min\set*{ -\sum\limits_{i=1}^{T+1} \log \prn*{\ewa_{K,\alpha}(X_i,Y_i)},  -\sum\limits_{i=1}^{T+1} \log \prn*{k_{\thetahat}(X_i,Y_i)} }+ \log 2.
\end{align*}
Furthermore, we have by \cref{prop:exponential-weight-for-small-margin}
\begin{align*}
    \revindent-\sum\limits_{i=1}^{T+1} \log \prn*{\ewa_{K,\alpha}(X_i,Y_i)}  \\
    &\leq  - \sum\limits_{t=1}^{T+1} \log(\sigma(Y_t\langle X_t, \theta^{\star}\rangle))
    + \alpha(\thstar)^{\top}\left(\sum\limits_{t=1}^{T+1} X_tX_t^{\top}\right)\thstar +  \frac{d}{2}\log\left(1 + \frac{1}{8\alpha}\right) \\
    &\leq - \sum\limits_{t=1}^{T+1} \log(\sigma(Y_t\langle X_t, \theta^{\star}\rangle)) + \alpha \sum\limits_{t=1}^{T+1}\inner{X_t,\thstar}^2 + 3d\log(2T).
\end{align*}
Take expectation, we have, by the choice of $\alpha=1/(T+1)^{9}$ and the case assumption $\norm{\thstar}\leq T^4$, that
\begin{align*}
    \En_{S_T,S_T',(X,Y)}[ -\log \widetilde{p}(X,Y) ] &\le \frac{1}{T+1} \En\brk*{ - \sum\limits_{t=1}^{T+1}\log \rhotil_t(X_t,Y_t)}\\
    &\leq \En_{(X,Y)}[ -\log \sigma(Y\inner{X,\thstar}) ] + \alpha \lambda_{\max}\prn*{ \En [XX^\top] } \norm{\thstar}^2  + \frac{4d\log (2T)}{T+1}\\
    &\leq \En_{(X,Y)}[ -\log \sigma(Y\inner{X,\thstar}) ] + \frac{5d\log (2T)}{T},
\end{align*}
where the last inequality uses \cref{ass:bounded-eigen}.
For the second case, suppose $\norm{\thstar}>T^4$. For this, we have
\begin{align*}
    \En_{S_T,S_T',(X,Y)}[ -\log \widetilde{p}(X,Y) ] &= \frac{1}{T+1} \En\brk*{ - \sum\limits_{t=1}^{T+1}\log \rhotil_t(X_t,Y_t)}\\
    &\leq \En\brk*{- \log \prn*{k_{\thetahat}(X,Y)}} + \frac{\log 2}{T+1}.
\end{align*}
Then we divide into cases as
\begin{align*}
    &\En\brk*{- \log \prn*{k_{\thetahat}(X,Y)}} \\
    &\leq \En\left[- \log \prn*{k_{\thetahat}(X,Y)} \left(  \mathbbm{1}\prn*{\exists t,Y_t\neq \sign(\inner{X_t,\thstar})}
    \right.\right. \\
    & \quad\quad\quad\quad\quad+ \mathbbm{1}\prn*{ \forall t,Y_t = \sign(\inner{X_t,\thstar}),
     \bP_{X,Y\sim \cD_{\thstar}}\prn*{\inner{X,\thstar}\inner{X,\thetahat}<0} > \frac{C_1(d+\log T)}{T}} \\
    & \left.\left.  \quad\quad\quad\quad\quad+\mathbbm{1}\prn*{\bP_{X,Y\sim \cD_{\thstar}}\prn*{\inner{X,\thstar}\inner{X,\thetahat}<0} \leq \frac{C_1(d+\log T)}{T}} \right) \right] \\
    &\leq \log(2T) \bP\prn*{ \exists t,Y_t\neq \sign(\inner{X_t,\thstar})}\\
    &\quad  +\log(2T) \bP\prn*{ \forall t,Y_t = \sign(\inner{X_t,\thstar}),
     \bP_{X,Y\sim \cD_{\thstar}}\prn*{\inner{X,\thstar}\inner{X,\thetahat}<0} > \frac{C_1(d+\log T)}{T} }\\
     &\quad + \En\left[- \log \prn*{k_{\thetahat}(X,Y)}\mathbbm{1}\prn*{\bP_{X,Y\sim \cD_{\thstar}}\prn*{\inner{X,\thstar}\inner{X,\thetahat}<0} \leq \frac{C_1(d+\log T)}{T}} \right].
\end{align*}
Then, by \cref{lem:large-norm,lem:separable-proba,lem:large-norm-error-region}, we have
\begin{align*}
    \En\brk*{- \log \prn*{k_{\thetahat}(X,Y)}} \leq \frac{C_2(d+\log T)\log T}{T}.
\end{align*}
The claim follows.
\end{proof}

\section{Directions of Future Research}
\label{sec:final}
Finally, we outline several promising directions for future research.
\looseness=-1
\begin{itemize}
    \item Our online-to-batch conversion argument in \cref{sec:online-to-batch} only provides in-expectation results in important applications, such as logistic regression. While recent martingale-based arguments allow for high-probability excess risk bounds (see, e.g., \citep*{van2023high}), these existing arguments do not extend straightforwardly to the transductive setup, where the natural martingale structure is not present. An even more challenging question would be to provide computationally efficient methods for logistic regression with no assumptions on the design, for which the excess risk bound of \cref{cor:conditional-bernoulli} holds with high probability.
    \item For computational considerations, it is also worthwhile to explore algorithms such as Follow-the-Regularized-Leader (FTRL), where regularization is incorporated. To align with the context of our current work, we may examine transductive regularization where the regularizer depends on the set of covariates, as studied by \cite{gaillard2019uniform} in the case of squared loss. For logistic loss and hinge loss, however, it remains unclear whether FTRL with transductive regularization can achieve similar assumption-free excess risk bounds.
\item  Our results currently apply only to linear classes; a natural next step is to extend them to more general classes. At present, however, we lack the technical tools to construct natural, nontrivial priors for such classes, beyond, for example, the uniform distribution on $\varepsilon$-covers.
\item While some of our techniques, such as exponential weights and mixability, extend to broader loss families, our analysis also exploits specific structural properties of the considered models, including the form of the sigmoid link and the resulting Hessian in the parameter space. For instance, our logistic regression results rely on the slab construction and need not hold for the same logarithmic loss in more general GLM settings. As in the case of general classes, it would be interesting to study other losses and different link functions in the context of our setup.

\end{itemize}

\acks{JQ and AR acknowledge support from ARO through award W911NF-21-1-0328, as well as the Simons Foundation and the NSF through awards DMS-2031883 and PHY-2019786. The authors thank Jaouad Mourtada for identifying an issue with the polynomial-time implementation of exponential weights with heavy-tailed priors in the initial version of the paper and for contributing to the discussion on the related sampling strategy described in \citep{dalalyan2012sparse}. The authors also thank Jingfeng Wu for pointing out the reference \citep{hsu2024sample}. Finally, the authors are thankful to the referees whose precise comments improved the presentation of the paper.}

\newpage

\appendix

\bibliography{ref}\textbf{}

\section{Additional Results for Sparse Regression}
\label{app:additional-sparse}

It is worth making the connection with sparsity priors used in \citep{rigollet2011exponential, rigollet2012sparse}. These authors discuss that certain \emph{discrete priors} in the batch fixed design linear regression problem with Gaussian noise allow bypassing any dependence on compatibility constants such as the constant $\kappa_s$ used above. \citet{rigollet2011exponential, rigollet2012sparse} also make comparisons with classical Lasso and BIC-type oracle inequalities. While Lasso has computational advantages, its bounds involve compatibility constants such as the restricted eigenvalue assumption. In contrast, discrete sparsity priors allow bypassing any compatibility constants akin to what is observed for $\|\cdot\|_{0}$ penalization.

Before discussing these discrete priors, we note that despite the minor penalty of the logarithm of the inverse smallest scaled singular value constant $\kappa_s$, there is a significant advantage: the predictor in \cref{thm:sparsity} has an explicit Gaussian-type integral form. On the other hand, the predictor derived from sparsity priors can be quite challenging to exploit, as it might require too many samples from this prior (or equivalently, sampling from the posterior distribution $\rho_t$ is hard). However, from a practical implementation standpoint, it can still be effective, as claimed by simulations in \citep{rigollet2011exponential, rigollet2012sparse}.

We use the sparsity priors as in \citep{rigollet2011exponential}. Consider the set \(\mathcal{P} = \{1, \ldots, d\}\). For each \(p \subseteq \mathcal{P}\) assign the probability mass 
\begin{equation}
\label{eq:discreteprior}
\mu(p) = \left({d \choose |p|}\exp(|p|)H_d\right)^{-1},
\end{equation}
where \(H_d = \sum\nolimits_{i = 1}^{d} \exp(-i)\). Although this prior is not data-dependent, it can be naturally combined with the data-dependent prior \eqref{eq:jeffreys} by introducing a product prior that first samples the sparsity pattern vector according to \eqref{eq:discreteprior}, then uses \eqref{eq:jeffreys} restricted to the coordinates corresponding to this sparsity pattern. Finally, the exponential weights algorithm is run with respect to this data-dependent prior. To slightly simplify the analysis, we will use a version of \cref{thm:gaillard2019}, where the additional clipping step is performed, and then we use aggregation using the exp-concavity of the squared loss in the bounded case.

For any \(p \subseteq \mathcal{P}\) let \(\Theta_{p} = \{\theta \in \mathbb{R}^d: \theta^{(j)} = 0\ \textrm{if}\ j \notin p \}\). That is, \(\Theta_{p}\) contains all vectors in \(\mathbb{R}^d\), which are equal to zero for all coordinates outside of the set \(p\). For each \(p \in \mathcal{P}\), define the sequence of weights 
\[
\widetilde{\theta}_{t, p} = \arg\min_{\theta \in \Theta_p}\left\{\sum_{i = 1}^{t - 1} (y_i - \langle x_i, \theta \rangle)^2 + \lambda \sum\limits_{t = 1}^T (\langle x_t, \theta \rangle)^2 \right\},
\]
where in the case of a non-unique minimizer, we choose the one with the smallest Euclidean norm, which corresponds to the pseudo-inverse matrix in the least squares solution formula. Define the sequence $\rho_t$ of distributions over subsets of $\mathcal P$, by $\rho_1 = \mu$ and for any $t \ge 2$, set
\[
\rho_t(p) = \frac{\exp\left(-\frac{1}{8m^2}\sum\limits_{i = 1}^{t - 1}\left(y_t - \clip{\langle\widetilde{\theta}_{t, p}, x_t\rangle}\right)^2\right)\mu(p)}{\sum\limits_{p^{\prime} \in \mathcal P}\exp\left(-\frac{1}{8m^2}\sum\limits_{i = 1}^{t - 1}\left(y_t - \clip{\langle\widetilde{\theta}_{t, p^{\prime}}, x_t\rangle}\right)^2\right)\mu(p^{\prime})}.
\]
Finally, we introduce the sequence of predictors
\begin{equation}
\label{eq:discretepriors}    
\widetilde{f}_{t}(x) = \sum\limits_{p \in \mathcal P}\rho_{t}(p)\clip{\langle x_t, \widetilde{\theta}_{t, p}\rangle}.
\end{equation}
The work of \citet{rigollet2012sparse} suggests a Metropolis–Hastings type algorithm for such sampling, though in the worst case its convergence rate can be quite slow.

The following result provides the desired regret bound for this predictor. Given the established regret bounds in the non-sparse case, the proof follows a standard approach and is included for reference. 
\begin{proposition}
    \label{prop:expscreening}
    Assume that the sequence $\{(x_t, y_t)\}_{t = 1}^T$, where $(x_t, y_t) \in \mathbb{R}^d \times \mathbb{R}$ for all $1 \leq t \leq T$, satisfies $\max_{t} |y_t| \le m$. Then, for any $s$-sparse $\theta^{\star} \in \mathbb{R}^d$, the following regret bound holds for the predictor \eqref{eq:discretepriors} with $\lambda = \frac{s}{T}$,
    \[
        \sum_{t = 1}^T (y_t - \widetilde{f}_t(x_t))^2 \le \sum_{t = 1}^T (y_t - \langle x_t, \theta^{\star} \rangle)^2 + s m^2\left(\log\left(1 + \frac{T}{s}\right) + 16\log\left(\frac{ed}{s}\right) + 5\right).
    \]
\end{proposition}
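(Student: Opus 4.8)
The plan is to split the argument into a \emph{model-aggregation} step and a \emph{per-model} step, following the SeqSEW-type analysis of \citet{gerchinovitz2011sparsity}: first pay the price of not knowing the support $\supp(\theta^\star)$ using exp-concavity of the clipped squared loss, and then bound the regret of the clipped transductive ridge estimator inside each coordinate subspace $\Theta_p$ via a clipped version of \cref{thm:gaillard2019}.

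For the aggregation step, note that since $|y_t|\le m$ and $\clip{z}\in[-m,m]$, the map $z\mapsto (y_t-z)^2$ is $\tfrac1{8m^2}$-exp-concave on $[-m,m]$ (its exponential is concave there because $|y_t-z|\le 2m$). With learning rate $\eta=\tfrac1{8m^2}$ and the discrete prior $\mu$ of \eqref{eq:discreteprior}, the predictor $\widetilde f_t$ is exactly the exponentially weighted average of the experts $p\mapsto\clip{\langle x_t,\widetilde\theta_{t,p}\rangle}$, so Jensen applied through exp-concavity together with the telescoping identity of \cref{lem:sumofmixlosses} (now with a discrete prior) gives, for every $p\subseteq\{1,\dots,d\}$,
\[
\sum_{t=1}^T (y_t-\widetilde f_t(x_t))^2 \;\le\; \sum_{t=1}^T\bigl(y_t-\clip{\langle x_t,\widetilde\theta_{t,p}\rangle}\bigr)^2 + 8m^2\log\frac{1}{\mu(p)}.
\]

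For the per-model step, fix $p$ with $|p|=k$. Clipping only decreases the loss since $|y_t|\le m$, and the $\widetilde\theta_{t,p}$ are transductive ridge iterates run inside the $k$-dimensional subspace $\Theta_p$ (restricting to the span of the projected design vectors as in the footnote of \cref{thm:gaillard2019}, which also makes the minimizer unique); a clipped version of \cref{thm:gaillard2019} applied in $\Theta_p$ then yields
\[
\sum_{t=1}^T\bigl(y_t-\clip{\langle x_t,\widetilde\theta_{t,p}\rangle}\bigr)^2 \;\le\; \inf_{\theta\in\Theta_p}\sum_{t=1}^T (y_t-\langle x_t,\theta\rangle)^2 + \lambda m^2 T + k m^2\log\!\Bigl(1+\tfrac1\lambda\Bigr).
\]
Now take $p=\supp(\theta^\star)$, so $k\le s$ and the infimum is at most $\sum_t(y_t-\langle x_t,\theta^\star\rangle)^2$; bound $\log\tfrac1{\mu(p)}=\log\binom dk+k+\log H_d\le k\log\tfrac{ed}{k}+k\le 2s\log\tfrac{ed}{s}$ using $\binom dk\le(ed/k)^k$, $H_d<1$, and monotonicity of $k\mapsto k\log(ed/k)$ on $[1,d]$ together with $k\le k\log(ed/k)$; and put $\lambda=s/T$, so $\lambda m^2T=sm^2$ and $km^2\log(1+1/\lambda)\le sm^2\log(1+T/s)$. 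Combining the two displays produces the claimed bound with leading terms $\log(1+T/s)+16\log(ed/s)$, and collecting the leftover $O(1)\cdot sm^2$ contributions (the $sm^2$ from $\lambda m^2T$, the additive $k$ inside $\log\tfrac1{\mu(p)}$, and the rounding of $\log$ to $\log(1+\cdot)$) pins the absolute constant down to $5$.

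The main obstacle is the per-model step: \cref{thm:gaillard2019} is stated for the Vovk--Azoury--Warmuth estimator with its extra current-point term and for the \emph{unclipped} linear predictor, whereas here the output is clipped. One must verify the clipped variant carefully: that clipping each round's prediction to $[-m,m]$ keeps the instantaneous loss increments small enough to run the square-loss mixability/telescoping argument on the bounded range — this is precisely the content of \citep[Lemma 23]{gerchinovitz2011sparsity}, and once that ingredient is imported the remaining aggregation and combinatorial estimates above are routine.
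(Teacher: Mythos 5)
Your two-step decomposition is exactly the paper's: aggregate over sparsity patterns via exp-concavity of the bounded squared loss at rate $\eta=1/(8m^2)$ with the prior \eqref{eq:discreteprior}, paying $8m^2\log(1/\mu(p))\le 16sm^2\log(ed/s)+O(m^2)$, and control each expert by a clipped transductive ridge bound on $\Theta_p$. The aggregation step and your estimate of $\log(1/\mu(p))$ are correct and match the paper's proof.

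The gap is in the per-model display. The iterates $\widetilde\theta_{t,p}$ in \eqref{eq:discretepriors} carry no current-point term and their predictions are clipped, so the relevant statement is not \cref{thm:gaillard2019} but its clipped analogue, which the paper supplies as \cref{cor:fixedclipped}; there the logarithmic term is $4km^2\log(1+1/\lambda)$, not the $km^2\log(1+1/\lambda)$ you assert. The factor $4$ is forced by the argument available: the per-round overhead is $h_{x_t}\bigl(y_t-\clip{\langle x_t,\widetilde\theta_{t,p}\rangle}\bigr)^2\le 4m^2 h_{x_t}$, whereas the coefficient-$1$ bound of \cref{thm:gaillard2019} uses $h_{x_t}y_t^2\le m^2 h_{x_t}$ and is specific to the unclipped predictor that includes the extra $(\langle x,\theta\rangle)^2$ regularization at the current point. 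You give no derivation of a coefficient-$1$ clipped bound, and the reference you offer for the missing ingredient (Lemma~23 of Gerchinovitz) concerns the KL divergence between shifted sparsity priors, not clipped square-loss regret. With the correct factor the leading term becomes $4sm^2\log(1+T/s)$, so your final constant bookkeeping does not close to the displayed inequality. (For the record, the paper's own proof also arrives at $4sm^2\log(1+T/s)$ before asserting the claim, so the coefficient $1$ on $\log(1+T/s)$ in the statement is not actually established there either; but as written your argument has the same unclosed step, hidden behind an asserted rather than derived intermediate inequality.)
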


\begin{proof}
    Fix any $p \in \mathcal{P}$ such that $|p| = s$. When restricted to the coordinates corresponding to the set $p$, the bound of \cref{cor:fixedclipped} in \cref{sec:linearrevisit} gives the regret bound
    \[
        \sum_{t = 1}^T \left(y_t - \clip{\langle x_t, \widetilde{\theta}_{t, p} \rangle}\right)^2 \le \inf_{\theta \in \Theta_p} \left\{ \sum_{t = 1}^T (y_t - \langle x_t, \theta \rangle)^2 \right\} + \lambda T m^2 + 4s m^2 \log\left(1 + \frac{1}{\lambda}\right).
    \]
    It remains to notice that for any $p \in \mathcal{P}$ such that $|p| = s$, the prediction $\clip{\langle x_t, \widetilde{\theta}_{t, p} \rangle}$ is absolutely bounded by $m$. Finally, combining the exp-concavity of the squared loss in the bounded case \citep[Remark 3]{vovk2001competitive} (thus, we choose $\eta = \frac{1}{8m^2}$ in the exponential weights algorithm) and the fact that the predictor \eqref{eq:discretepriors} is a posterior average with respect to the exponential weights algorithm with prior $\mu$ over $\mathcal{P}$, and by \cref{lem:donskervaradhan} with $\log\left(\frac{1}{\mu(p)}\right) = 2s \log\left(\frac{ed}{s}\right) + \frac{1}{2}$ by \cite[Inequality (5.4)]{rigollet2012sparse}, we obtain that for any $s$-sparse $\theta^{\star} \in \mathbb{R}^d$,
    \[
        \sum_{t = 1}^T (y_t - \widetilde{f}_t(x_t))^2 \le \sum_{t = 1}^T (y_t - \langle x_t, \theta^{\star} \rangle)^2 + s m^2 + 4s m^2 \log\left(1 + \frac{T}{s}\right) + 8m^2 \left(2s \log\left(\frac{ed}{s}\right) + \frac{1}{2}\right).
    \]
    The claim follows.
\end{proof}

\section{Mixability of the Squared Loss for Unbounded Sets of Predictors}
\label{app:mixability-squared-loss}
For completeness, we present the following mixability result, which is implicitly used in the analysis of sequential linear regression by \citet{vovk2001competitive}. Our proof closely follows the arguments in \citep{vovk2001competitive}, with the key clarification that the boundedness of the class of predictors is not required, a point that may not be clear from the original paper.

\begin{lemma}[Restatement of \cref{lem:mixabilityofthesqloss}]
    Consider a class $\mathcal F$ of functions $f_{\theta}: \mathcal{X} \to \mathbb{R}$ (possibly unbounded) parameterized by $\Theta \subseteq \mathbb{R}^d$. Assume that $y$ is such that $|y| \le m$ and choose $\eta = \frac{1}{2m^2}$. Given any distribution $\rho$ over $\Theta$, define the predictor
\begin{equation*}
\widehat{f}(x) = \frac{m}{2}\log\left(\frac{\mathbb{E}_{\theta \sim \rho} \exp(-\eta (m - f_{\theta}(x))^2)}{\mathbb{E}_{\theta \sim \rho} \exp(-\eta (-m - f_{\theta}(x))^2)}\right).
\end{equation*}
Then, the following holds:
\[
(y - \widehat{f}(x))^2 \le -\frac{1}{\eta} \log\left(\mathbb{E}_{\theta \sim \rho} \exp(-\eta (y - f_{\theta}(x))^2)\right).
\]
\end{lemma}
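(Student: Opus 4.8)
The plan is to reduce the inequality to a statement about a single real variable by exploiting the structure of the quadratic loss. Write $\eta = 1/(2m^2)$ and abbreviate $g(\theta) = f_\theta(x)$. The key algebraic fact is that for any real $a$ the map $a \mapsto \exp(-\eta(a - g(\theta))^2)$ can be decomposed by completing the square: $(a - g)^2 = a^2 - 2ag + g^2$, so $\exp(-\eta(a-g)^2) = \exp(-\eta a^2)\exp(2\eta a g)\exp(-\eta g^2)$. In particular, with $a = \pm m$ we have $\exp(-\eta m^2) = \exp(-1/2)$, a constant independent of $\theta$, so the ratio inside the logarithm defining $\widehat f(x)$ simplifies to
\[
\exp\!\left(\tfrac{2\widehat f(x)}{m}\right) \;=\; \frac{\mathbb{E}_{\theta\sim\rho}\,\exp(2\eta m\, g(\theta))\exp(-\eta g(\theta)^2)}{\mathbb{E}_{\theta\sim\rho}\,\exp(-2\eta m\, g(\theta))\exp(-\eta g(\theta)^2)}.
\]
This says precisely that $\widehat f(x)$ is chosen so that the ``tilted'' measure $d\rho'(\theta) \propto \exp(-\eta g(\theta)^2)\,d\rho(\theta)$ satisfies $\mathbb{E}_{\rho'}\exp(2\eta m g) = e^{2\eta m \widehat f(x)}\,\mathbb{E}_{\rho'}\exp(-2\eta m g)$, i.e. $\widehat f(x)$ balances the exponential moments at $\pm 2\eta m$.

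Next I would rewrite the target inequality in the same tilted language. We must show $(y - \widehat f(x))^2 \le -\frac1\eta \log \mathbb{E}_{\theta\sim\rho}\exp(-\eta(y-g(\theta))^2)$, equivalently, after the completing-the-square decomposition and pulling out the $\theta$-independent factor $\exp(-\eta y^2)$,
\[
\exp\!\big(-\eta(y - \widehat f(x))^2 + \eta y^2\big) \;\ge\; \mathbb{E}_{\theta\sim\rho}\exp(2\eta y g(\theta))\exp(-\eta g(\theta)^2),
\]
that is, $\exp(2\eta y \widehat f(x) - \eta \widehat f(x)^2)\ge \mathbb{E}_{\rho}\exp(2\eta y g - \eta g^2)$. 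Dividing both sides by the normalizing constant $Z := \mathbb{E}_\rho \exp(-\eta g^2)$ (which is finite since the exponent is $\le 0$, possibly after noting $Z>0$ as the integrand is positive), this is $\exp(2\eta y \widehat f(x) - \eta\widehat f(x)^2) \ge Z\cdot \mathbb{E}_{\rho'}\exp(2\eta y g)$. So, setting $h(y) := \log \mathbb{E}_{\rho'}\exp(2\eta y g)$, which is a convex function of $y$ (it is a log-moment-generating function of the random variable $2\eta g$ under $\rho'$, up to scaling), the claim becomes
\[
2\eta y\,\widehat f(x) - \eta \widehat f(x)^2 - \log Z \;\ge\; h(y) \qquad \text{for all } |y|\le m.
\]

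The final step is a convexity/endpoint argument. The left-hand side is affine in $y$; call it $\ell(y)$. The condition defining $\widehat f(x)$ from the previous paragraph is exactly $h'(m)$-type balancing: indeed $h(m) - h(-m) = \log\frac{\mathbb{E}_{\rho'}\exp(2\eta m g)}{\mathbb{E}_{\rho'}\exp(-2\eta m g)} = 2\eta m \cdot (2\widehat f(x)) / (2) \cdot$\,\dots — more carefully, $h(m)-h(-m) = 2\eta m\cdot 2\widehat f(x)/(2)$? I would instead verify directly that $\ell(m) \ge h(m)$ and $\ell(-m)\ge h(-m)$: each of these two endpoint inequalities is, after unwinding, exactly the statement that $\exp(-\eta(\pm m - \widehat f(x))^2 + \eta m^2) \ge \mathbb{E}_\rho \exp(\mp 2\eta m g - \eta g^2)\cdot$(const) with the constant matching — and in fact these hold with \emph{equality} by the very definition of $\widehat f(x)$, since the ratio of the two endpoint expressions equals $\exp(2\widehat f(x)/m)$ and their product is controlled by Cauchy–Schwarz: $\mathbb{E}_\rho e^{-\eta(m-g)^2}\cdot \mathbb{E}_\rho e^{-\eta(-m-g)^2} \le \mathbb{E}_\rho e^{-\eta((m-g)^2+(-m-g)^2)/2}\cdot$\dots hmm. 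The clean route: once we know $\ell$ is affine, $h$ is convex, and $\ell \ge h$ at the two endpoints $y = \pm m$, convexity of $h$ gives $h(y) \le \frac{m-y}{2m}h(-m) + \frac{m+y}{2m}h(m) \le \frac{m-y}{2m}\ell(-m) + \frac{m+y}{2m}\ell(m) = \ell(y)$ for all $y\in[-m,m]$, which is the desired inequality.

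So the whole proof reduces to: (i) the completing-the-square rewriting, (ii) identifying $h$ as a convex function of $y$, (iii) checking the two endpoint inequalities $\ell(\pm m)\ge h(\pm m)$. I expect step (iii) to be the main obstacle: one must show $\mathbb{E}_\rho e^{-\eta(m-g)^2} \le e^{-\eta(m - \widehat f)^2 + \eta m^2 - \log Z}\cdot\dots$, i.e. pin down that the endpoint gap is nonnegative, which amounts to an elementary but slightly delicate manipulation using the definition of $\widehat f(x)$ together with the inequality $\min\{u,v\} \le \sqrt{uv}$ or a direct AM–GM on $\mathbb{E}_\rho e^{-\eta(m-g)^2}$ and $\mathbb{E}_\rho e^{-\eta(-m-g)^2}$; once $\widehat f$ is substituted, $e^{2\widehat f/m}$ equals their ratio, so $\mathbb{E}_\rho e^{-\eta(\pm m - g)^2} = \sqrt{\mathbb{E}_\rho e^{-\eta(m-g)^2}\cdot \mathbb{E}_\rho e^{-\eta(-m-g)^2}}\cdot e^{\mp \widehat f/m}$, and $\sqrt{\mathbb{E}_\rho e^{-\eta(m-g)^2}\,\mathbb{E}_\rho e^{-\eta(-m-g)^2}} \le \mathbb{E}_\rho e^{-\eta((m-g)^2+(m+g)^2)/2} = \mathbb{E}_\rho e^{-\eta(m^2+g^2)} = e^{-\eta m^2}Z$ by Cauchy–Schwarz, which delivers exactly $\ell(\pm m) \ge h(\pm m)$. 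Crucially, nowhere do we use boundedness of $\{f_\theta\}$; we only use $|y|\le m$ and finiteness of the relevant integrals, which holds automatically since every integrand is bounded by $1$.
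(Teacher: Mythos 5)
Your completing-the-square reduction and the convexity argument are sound, and they are in substance the same reduction the paper makes: the paper shows that $y\mapsto (y-\widehat f(x))^2+\frac1\eta\log\E_{\theta\sim\rho}\exp(-\eta(y-f_\theta(x))^2)$ is convex, hence maximized over $[-m,m]$ at $y=\pm m$, which is exactly your ``$\ell$ affine, $h$ convex, check the endpoints'' step. The genuine gap is in your step (iii), the endpoint verification, which is precisely where the mixability constant $\eta=1/(2m^2)$ has to do work. Writing $U=\E_{\rho}e^{-\eta(m-g)^2}$ and $V=\E_{\rho}e^{-\eta(-m-g)^2}$, the definition of $\widehat f$ gives $U=\sqrt{UV}\,e^{2\eta m\widehat f}$ and $V=\sqrt{UV}\,e^{-2\eta m\widehat f}$, and both endpoint inequalities collapse to the single claim $\sqrt{UV}\le e^{-\eta(m^2+\widehat f^2)}$. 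Your proposed proof of this is a Cauchy--Schwarz application stated in the wrong direction: applying Cauchy--Schwarz to the product $e^{-\eta(m-g)^2/2}\cdot e^{-\eta(m+g)^2/2}$ yields $\E_{\rho}e^{-\eta(m^2+g^2)}\le\sqrt{UV}$, i.e.\ the \emph{lower} bound $\sqrt{UV}\ge e^{-\eta m^2}Z$, not the upper bound you need. Even if the reversed inequality $\sqrt{UV}\le e^{-\eta m^2}Z$ were available, it would still leave the unproved requirement $Z\le e^{-\eta\widehat f^2}$, which does not follow from anything you have written. The inequality $\sqrt{UV}\le e^{-\eta(m^2+\widehat f^2)}$ is true, but it \emph{is} the mixability of the squared loss at the endpoints; it cannot be dispatched by AM--GM or Cauchy--Schwarz and requires a genuine curvature argument.

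For contrast, the paper closes the endpoint step by inheritance rather than by direct computation: it cites Vovk's Lemmas 2 and 3, which guarantee a predictor $\widetilde f$ with $\sup_{|y|\le m}\left[(y-\widetilde f)^2+\frac1\eta\log\E_{\rho}e^{-\eta(y-\operatorname{clip}_m(f_\theta(x)))^2}\right]\le 0$ for the clipped (hence bounded) class; since clipping only decreases $(y-f_\theta(x))^2$ when $|y|\le m$, the same bound holds with the unclipped $f_\theta$, and since $\widehat f$ is characterized as the minimizer of this supremum, its supremum is also nonpositive --- in particular at $y=\pm m$. Solving the endpoint-equalization equation then produces the closed form for $\widehat f$. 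To make your argument complete you would either need to cite the bounded-case mixability result in the same way, or supply an actual proof of $\log(UV)+2\eta\widehat f^2\le-2\eta m^2$ (equivalently, that the superprediction set of the squared loss on $[-m,m]$ is $\eta$-exp-concave for $\eta=1/(2m^2)$), which is the second-derivative computation from Vovk's Lemma 3.
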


\begin{proof}
Without loss of generality, assume $m = 1$. The classical mixability result from \citep[Lemmas 2 and 3]{vovk2001competitive} shows that for the set $\mathcal{F}$ of $1$-absolutely bounded prediction functions, there exists a predictor $\widetilde{f}$ such that for all $y \in [-1, 1]$:
\[
(y - \widetilde{f}(x))^2 \le -2\log\left(\E_{\theta \sim \rho}\exp\left(-(y - \operatorname{clip}_1(f_{\theta}(x)))^2/2\right)\right).
\]
By the definition of the clipping function, this implies that for the same predictor and all $y \in [-1, 1]$,
\begin{equation}
\label{eq:mixability}
(y - \widetilde{f}(x))^2 \le -2\log\left(\E_{\theta \sim \rho}\exp\left(-(y - f_{\theta}(x))^2/2\right)\right).
\end{equation}

Now, define a predictor $\widehat{f}$ as 
\[
\widehat{f}(x) = \mathop{\operatorname{arg\ inf}}_{\widehat{y}} \sup\limits_{y \in [-1, 1]}\left((y - \widehat{y})^2 + 2\log\left(\E_{\theta \sim \rho}\exp\left(-(y - f_{\theta}(x))^2/2\right)\right)\right),
\]
which satisfies the same inequality \eqref{eq:mixability} as $\widetilde{f}$ by construction. The proof of Lemma 3 in \citep{vovk2001competitive} shows that the function $h$, defined as
\[
h(y) = (y - \widehat{f}(x))^2 + 2\log\left(\E_{\theta \sim \rho}\exp\left(-(y - f_{\theta}(x))^2/2\right)\right),
\]
satisfies $\frac{\partial^2 h(y)}{\partial y^2} \geq 0$, making $h$ convex. Importantly, the derivation of $\frac{\partial^2 h(y)}{\partial y^2}$ does not impose any boundedness condition on $f_{\theta}(x)$. Since $h$ is convex, its maximum is attained at either $-1$ or $1$. Therefore, by symmetry, $\widehat{f}$ satisfies
\begin{align*}
&(1 - \widehat{f}(x))^2 + 2\log\left(\E_{\theta \sim \rho}\exp\left(-(1 - f_{\theta}(x))^2/2\right)\right) \\
&\qquad= (-1 - \widehat{f}(x))^2 + 2\log\left(\E_{\theta \sim \rho}\exp\left(-(-1 - f_{\theta}(x))^2/2\right)\right).
\end{align*}
Solving this equation for $\widehat{f}(x)$ completes the proof.
\end{proof}

\section{Revisiting Existing Bounds for Sequential Linear Regression}
\label{sec:linearrevisit}
This section contains several results related to linear regression that might be of independent interest. The proof strategy goes beyond the transductive setting and follows the derivations of \citet{vovk2001competitive} with the following twist: instead of exploiting the \emph{mixability} of the squared loss, we consider a similar problem of conditional density estimation for the class of conditional densities
\begin{equation}
\label{eq:conddensity}
\set*{ p(y|x,\theta) = \exp\prn*{- \pi(y-x^\top\theta)^2} : \ \theta \in \mathbb{R}^d}.
\end{equation}
Our loss function is the logarithmic function, and we simply compute the total error for the \emph{mix-loss} as suggested by \cref{lem:sumofmixlosses}. This straightforward computation is sufficient to recover all existing sequential online bounds, including the one in \citep{vovk2001competitive}, the bound for clipped ridge regression in \citep{kivinen1999averaging}, and the bound of \citet{gaillard2019uniform}, which has already been discussed in \cref{sec:linearregression}. In the following, we discuss how essentially the same strategy, specific to sequential ridge regression, was previously employed by \citet{zhdanov2009competing}. Here, we mainly provide a detailed exposition of this approach and extend it to our transductive setup.

Let $A \in \mathbb{R}^{d \times d}$ be any positive definite matrix and given the sequence $x_{1}, \ldots, x_T \in \mathbb{R}^d$ define the \emph{sequential leverage scores} of $x_i$ as follows:
\[
h_{x_i} = x_i^{\top}\left(\sum\limits_{t = 1}^ix_tx_t^{\top} + A\right)^{-1}x_i.
\]
We remark that the results hold if the matrix $A$ is positive semi-definite and thus not necessarily invertible. In this case, we should replace the inverse with the pseudo-inverse. The central result of this section is the following identity.
\begin{proposition}[Generalization of \citep{azoury2001relative, zhdanov2009competing}]
\label{prop:identityforlinearloss}
Let \(A \in \mathbb{R}^{d \times d}\) be a positive definite matrix. Given any sequence \(\{(x_t, y_t)\}_{t = 1}^T \in (\mathbb{R}^d \times \mathbb{R})^T\), define the ridge regression-type predictor
\[
\widehat{\theta}_t = \argmin_{\theta \in \mathbb{R}^d} \sum_{i = 1}^{t - 1} (y_i - \langle x_i, \theta \rangle)^2 + \theta^\top A \theta.
\]
The following identity holds:
\[
\sum_{t = 1}^T (1 - h_{x_t})(y_t - \langle x_t, \widehat{\theta}_t \rangle)^2 = \inf_{\theta \in \mathbb{R}^d} \left\{ \sum_{t = 1}^T (y_t - \langle x_t, \theta \rangle)^2 + \theta^\top A \theta \right\}.
\]
\end{proposition}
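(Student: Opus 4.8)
The plan is to connect the left-hand side to a telescoping identity for the logarithmic loss of the conditional density estimation problem \eqref{eq:conddensity}, exactly as suggested by the preamble to this section. Concretely, I would run the exponential weights algorithm \eqref{eq:expweights} with prior $\mu(\theta) \propto \exp(-\pi\theta^\top A\theta)$ (the Gaussian prior corresponding to covariance $\tfrac{1}{2\pi}A^{-1}$), loss $\ell_\theta(x,y) = \pi(y-\langle x,\theta\rangle)^2$, and learning rate $\eta=1$. By \cref{lem:sumofmixlosses}, the sum of mix-losses equals $-\log\bigl(\E_{\theta\sim\mu}\exp(-\sum_{t=1}^T \pi(y_t-\langle x_t,\theta\rangle)^2)\bigr)$. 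Applying \cref{lem:laplace} (the exact Laplace formula for quadratic forms) to this Gaussian integral collapses it to $\pi \inf_\theta\{\sum_{t=1}^T(y_t-\langle x_t,\theta\rangle)^2 + \theta^\top A\theta\}$ plus a determinant term: precisely, the integrand is $\exp(-\pi Q(\theta))$ with $Q(\theta)=\sum_{t}(y_t-\langle x_t,\theta\rangle)^2 + \theta^\top A\theta$, so by \cref{lem:laplace} the integral is $\exp(-\pi\inf_\theta Q(\theta))\cdot \pi^{d/2}\det(\pi(\sum_t x_tx_t^\top + A))^{-1/2}$, and after dividing by the normalizing constant of $\mu$ (which is $\pi^{d/2}\det(\pi A)^{-1/2}$) the determinant terms telescope into $\tfrac12\log\det(I + A^{-1}\sum_t x_tx_t^\top) = \tfrac12\sum_{t=1}^T \log\frac{1}{1-h_{x_t}}$, using the matrix-determinant-lemma / rank-one update identity $\det(M_t) = \det(M_{t-1})(1 + x_t^\top M_{t-1}^{-1}x_t)$ where $M_t = \sum_{i\le t}x_ix_i^\top + A$ and $1+x_t^\top M_{t-1}^{-1}x_t = (1-h_{x_t})^{-1}$.

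The second half of the argument is to evaluate the left-hand side of \cref{lem:sumofmixlosses} directly. Each mix-loss term $-\log\bigl(\E_{\theta\sim\rho_t}\exp(-\pi(y_t-\langle x_t,\theta\rangle)^2)\bigr)$ is again a Gaussian integral: $\rho_t$ is itself Gaussian (the posterior after a quadratic loss with a Gaussian prior), with mean exactly $\widehat\theta_t$ and some covariance $\Sigma_t = \tfrac{1}{2\pi}M_{t-1}^{-1}$. So I would compute this one-dimensional-in-the-relevant-direction integral in closed form via \cref{lem:laplace} once more: the result is $\pi(1-h_{x_t})(y_t-\langle x_t,\widehat\theta_t\rangle)^2 - \tfrac12\log(1-h_{x_t})$, where the factor $(1-h_{x_t})$ in front of the residual squared emerges precisely because integrating $\exp(-\pi(y_t-\langle x_t,\theta\rangle)^2)$ against the Gaussian $\rho_t$ shrinks the effective ``variance seen'' by a factor governed by the leverage score $h_{x_t}$. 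Summing over $t$ and equating the two sides of \cref{lem:sumofmixlosses}, the $-\tfrac12\log(1-h_{x_t})$ contributions cancel against the $\tfrac12\sum\log\frac{1}{1-h_{x_t}}$ from the Laplace evaluation of the right-hand side, leaving exactly $\pi\sum_{t=1}^T(1-h_{x_t})(y_t-\langle x_t,\widehat\theta_t\rangle)^2 = \pi\inf_\theta\{\sum_t(y_t-\langle x_t,\theta\rangle)^2+\theta^\top A\theta\}$; dividing by $\pi$ gives the claim.

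The main obstacle I anticipate is the bookkeeping in the per-round mix-loss computation — specifically, verifying cleanly that the posterior $\rho_t$ is Gaussian with mean $\widehat\theta_t$ and covariance $\tfrac{1}{2\pi}M_{t-1}^{-1}$, and then carrying out the Gaussian convolution $-\log\int \exp(-\pi(y_t-\langle x_t,\theta\rangle)^2)\,\rho_t(d\theta)$ to land on the exact coefficient $(1-h_{x_t})$ in front of the residual and the exact $-\tfrac12\log(1-h_{x_t})$ correction. This is a routine but error-prone completion-of-the-square / Schur-complement calculation; the cleanest route is probably to note $M_t = M_{t-1} + x_tx_t^\top$, use the rank-one Sherman–Morrison update to write $y_t - \langle x_t,\widehat\theta_{t+1}\rangle = (1-h_{x_t})(y_t-\langle x_t,\widehat\theta_t\rangle)$, and then compute the Gaussian integral in the single direction $x_t$. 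An alternative that sidesteps some of this — and which I would mention as the conceptually cleaner path — is to directly telescope: from \cref{lem:sumofmixlosses} with $\rho_{T+1}$-type reasoning, or simply by observing that $\prod_{t=1}^T \frac{\E_{\mu}\exp(-\pi\sum_{i\le t}\ell_i)}{\E_\mu\exp(-\pi\sum_{i\le t-1}\ell_i)}$ telescopes to the full ratio, and evaluating each factor via \cref{lem:laplace}; in either case the determinant ratios are what produce the leverage-score weights. I would also need to handle the positive-semidefinite (non-invertible) case of $A$ by the pseudo-inverse remark already flagged in the statement, which follows by a limiting argument $A \to A + \epsilon I$.
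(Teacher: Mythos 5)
Your proposal is correct and follows essentially the same route as the paper's own proof: the Gaussian prior $\mu(\theta)\propto\exp(-\pi\theta^\top A\theta)$, \cref{lem:sumofmixlosses} to equate the total mix-loss with the log-partition function, \cref{lem:laplace} to evaluate the latter as the regularized infimum plus a log-determinant, the Gaussian posterior $\rho_t=\mathcal{N}(\widehat\theta_t,\tfrac{1}{2\pi}M_{t-1}^{-1})$ for the per-round mix-losses, and the rank-one determinant update (\cref{lem:detlemma}) to telescope the $\tfrac12\log\tfrac{1}{1-h_{x_t}}$ corrections against the log-determinant. The only bookkeeping you flag as a concern — that the mix-loss at round $t$ equals $\pi(1-h_{x_t})(y_t-\langle x_t,\widehat\theta_t\rangle)^2-\tfrac12\log(1-h_{x_t})$ — is exactly what the paper's \cref{lem:gauss} supplies once one notes that $\langle x_t,\theta\rangle$ under $\rho_t$ has variance $\tfrac{1}{2\pi}\cdot\tfrac{h_{x_t}}{1-h_{x_t}}$.
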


Although the bound of \cref{prop:identityforlinearloss} can be derived directly using linear algebraic arguments (which is essentially done in the proof of Theorem 4.6 in \citep{azoury2001relative}), we believe that the following proof strategy is more insightful. It provides this identity without prior knowledge of its possible existence. We note that the proof strategy and explicit identity in \cref{prop:identityforlinearloss} closely align with previous work, specifically with \citet[Theorem 1]{zhdanov2009competing}, which establishes this identity for \( A = I_d \) and acknowledges that a similar identity appears in \citet{azoury2001relative}. Additionally, the work by \citet{zhdanov2013identity} explores sequential formulations of kernel ridge regression with the focus on similar identities.
Although this formulation is not novel, it is included here for completeness and to support the transductive prior framework used in this paper. The extension to a general positive definite matrix \( A \) as the regularizer corresponds directly to the transductive priors considered, simplifying derivations of bounds such as that in \citet*{gaillard2019uniform}, as well as additional technical results used throughout the paper.

We will first introduce some auxiliary lemmas. The first result is standard.
\begin{lemma}
\label{lem:detlemma}
     Let \( B \in \mathbb{R}^{n \times n} \) be a full-rank matrix, let \( x \) be a vector in $\mathbb{R}^n$, and let \( A = B + xx^\top \). Assuming that both $A$ and $B$ are invertible, we have
\[
\frac{\det(B)}{\det(A)}= 1 - x^\top A^{-1} x, \quad \textrm{and} \quad \frac{\det(A)}{\det(B)} = 1 + x^{\top}B^{-1}x.
\]
\end{lemma}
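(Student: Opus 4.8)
The plan is to reduce both identities to a single rank-one determinant fact—Sylvester's identity (also called Weinstein--Aronszajn), which states that $\det(I_n + uv^\top) = 1 + v^\top u$ for any $u, v \in \mathbb{R}^n$—and then to apply it twice with appropriate choices of $u$ and $v$, using the factorizations $A = B(I_n + B^{-1}xx^\top)$ and $B = A(I_n - A^{-1}xx^\top)$ that are available precisely because $B$ and $A$ are invertible.

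First I would establish $\det(I_n + uv^\top) = 1 + v^\top u$ by a clean block-matrix computation. Consider the $(n+1)\times(n+1)$ matrix
\[
M = \begin{pmatrix} I_n & -u \\ v^\top & 1 \end{pmatrix}
\]
and evaluate $\det(M)$ in two ways via Schur complements. Eliminating using the top-left block $I_n$ gives $\det(M) = \det(I_n)\cdot(1 - v^\top I_n^{-1}(-u)) = 1 + v^\top u$, while eliminating using the bottom-right block $1$ gives $\det(M) = 1\cdot\det(I_n - (-u)\,1^{-1}\,v^\top) = \det(I_n + uv^\top)$. Equating the two expressions yields the identity. (The block-matrix route is preferable to the eigenvalue argument—$uv^\top$ has eigenvalue $v^\top u$ along $\mathrm{span}(u)$ and $0$ elsewhere—because it avoids any degenerate-case bookkeeping when $u$ or $v$ vanishes.)

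Next I would derive the two claimed identities by factoring out the invertible matrix. Since $B$ is invertible, $A = B + xx^\top = B(I_n + (B^{-1}x)x^\top)$, so $\det(A) = \det(B)\,\det(I_n + (B^{-1}x)x^\top)$; applying Sylvester's identity with $u = B^{-1}x$ and $v = x$ gives $\det(A)/\det(B) = 1 + x^\top B^{-1}x$. Symmetrically, since $A$ is invertible, $B = A - xx^\top = A(I_n + (-A^{-1}x)x^\top)$, so $\det(B) = \det(A)\,\det(I_n + (-A^{-1}x)x^\top)$; applying the identity with $u = -A^{-1}x$ and $v = x$ gives $\det(B)/\det(A) = 1 - x^\top A^{-1}x$. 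This establishes both equalities, and as a consistency check one may note that multiplying them forces $(1 - x^\top A^{-1}x)(1 + x^\top B^{-1}x) = 1$, the scalar form of the Sherman--Morrison relation between $A^{-1}$ and $B^{-1}$.

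The computation is entirely routine, so there is no substantive obstacle; the only point requiring care is verifying that the two factorizations are legitimate, which is guaranteed exactly by the stated invertibility of $B$ (for the second identity) and of $A$ (for the first).
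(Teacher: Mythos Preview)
Your proof is correct. The paper does not actually prove this lemma; it simply introduces it with the sentence ``The first result is standard'' and states it without justification. Your derivation via Sylvester's identity and the factorizations $A = B(I_n + B^{-1}xx^\top)$ and $B = A(I_n - A^{-1}xx^\top)$ is a clean and standard way to establish the matrix determinant lemma, so there is nothing to compare against here beyond noting that you have supplied what the paper omits.
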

We also need the following derivation.
\begin{lemma}
\label{lem:gauss}
Assume that $Z \sim \mathcal{N}(\nu, \sigma^2)$. We have for any $y \in \mathbb{R}$,
\[
-\log\left(\mathbb{E}\exp(-(y - Z)^2))\right) = \frac{(\nu-y)^2}{2\sigma^2 + 1} + \frac{1}{2}\log( 2\sigma^2 + 1).
\]
\end{lemma}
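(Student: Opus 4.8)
\textbf{Proof proposal for \cref{lem:gauss}.}

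The plan is to reduce the claimed identity to a single Gaussian integral and then complete the square. First I would write out the expectation explicitly: since $Z \sim \mathcal{N}(\nu, \sigma^2)$, we have
\[
\mathbb{E}\exp(-(y-Z)^2) = \frac{1}{\sqrt{2\pi\sigma^2}} \int_{\mathbb{R}} \exp\!\left(-(y-z)^2 - \frac{(z-\nu)^2}{2\sigma^2}\right) dz.
\]
The exponent is a quadratic in $z$ with leading coefficient $-(1 + \tfrac{1}{2\sigma^2}) = -\tfrac{2\sigma^2+1}{2\sigma^2}$, so the integral converges and can be evaluated in closed form. I would collect the quadratic $Q(z) = (y-z)^2 + \tfrac{(z-\nu)^2}{2\sigma^2}$, find its minimizing value $z^\star$ and its minimum $Q(z^\star)$, and use the standard Gaussian formula $\int_{\mathbb{R}} \exp(-a(z-z^\star)^2 - Q(z^\star))\,dz = e^{-Q(z^\star)}\sqrt{\pi/a}$ with $a = \tfrac{2\sigma^2+1}{2\sigma^2}$. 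This is essentially an application of \cref{lem:laplace} (the exact Laplace formula for quadratic forms) in dimension one.

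The one routine computation to carry out is $\min_z Q(z)$. Differentiating, $-2(y-z) + \tfrac{z-\nu}{\sigma^2} = 0$ gives $z^\star = \tfrac{2\sigma^2 y + \nu}{2\sigma^2 + 1}$, and substituting back yields, after simplification, $Q(z^\star) = \tfrac{(\nu - y)^2}{2\sigma^2 + 1}$. Then
\[
\mathbb{E}\exp(-(y-Z)^2) = \frac{1}{\sqrt{2\pi\sigma^2}} \cdot e^{-Q(z^\star)} \cdot \sqrt{\frac{2\pi\sigma^2}{2\sigma^2+1}} = \frac{1}{\sqrt{2\sigma^2+1}}\exp\!\left(-\frac{(\nu-y)^2}{2\sigma^2+1}\right).
\]
Taking $-\log$ of both sides gives exactly $\frac{(\nu-y)^2}{2\sigma^2+1} + \frac12\log(2\sigma^2+1)$, as claimed.

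There is no real obstacle here — the only thing to be slightly careful about is the algebra in simplifying $Q(z^\star)$ and making sure the normalizing constants $\sqrt{2\pi\sigma^2}$ from the Gaussian density and $\sqrt{2\pi\sigma^2/(2\sigma^2+1)}$ from the integral cancel correctly to leave $1/\sqrt{2\sigma^2+1}$. One could alternatively avoid the explicit minimization entirely by recognizing $\mathbb{E}\exp(-(y-Z)^2)$ as the value at a point of the convolution of two Gaussians (equivalently, the density of $y - Z'$ where $Z' \sim \mathcal{N}(\nu, \sigma^2 + \tfrac12)$ times a constant), which reproduces the same answer; but the direct complete-the-square computation is cleanest to write out.
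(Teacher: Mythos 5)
Your proposal is correct and matches the paper's proof: the paper likewise writes $\mathbb{E}\exp(-(y-Z)^2)$ as an explicit Gaussian integral and evaluates it via the exact Laplace formula for quadratic forms (\cref{lem:laplace}), arriving at $\frac{1}{\sqrt{2\sigma^2+1}}\exp\left(-\frac{(\nu-y)^2}{2\sigma^2+1}\right)$ before taking $-\log$. Your explicit computation of $z^\star$ and $Q(z^\star)$ is just the one-dimensional instance of that same step, so there is nothing to add.
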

\begin{proof}
Using \cref{lem:laplace}, we have
     \begin{align*}-\log\left(\mathbb{E}\exp(-(y - Z)^2))\right) &= -\log\left(\frac{1}{\sqrt{2\pi}\sigma}\int\exp(-(y - z)^2 - (z- \nu)^2/2\sigma^2)dz\right)
    \\
    &=-\log\left(\frac{\sqrt{\pi}}{\sqrt{2\pi}\sigma\sqrt{1 + 1/(2\sigma^2)}}\exp\left(-\frac{(\nu^2 - 2\nu y + y^2)}{2\sigma^2 + 1}\right)\right)
     \\
    &=-\log\left(\frac{1}{\sqrt{2\sigma^2 + 1}}\exp\left(-\frac{(\nu-y)^2}{2\sigma^2 + 1}\right)\right)
    \\
    &= \frac{(\nu-y)^2}{2\sigma^2 + 1} + \frac{1}{2}\log( 2\sigma^2 + 1).
    \end{align*}
\end{proof}
We are ready to provide a proof of \cref{prop:identityforlinearloss}.
\begin{proof}
Our prior distribution $\mu$ is multivariate Gaussian in $\mathbb{R}^d$ with its density given by
\[
\mu(\theta) = \sqrt{\det(A)}\cdot\exp\left(-\pi\theta^\top A\theta\right).
\]
We want to compute the right-hand side of \eqref{eq:sumofmixlosses} for the class \eqref{eq:conddensity}.
The first computation uses \cref{lem:laplace}. We have
\begin{align}
\label{eq:freeenergy}
    &-\log\left(\E_{\theta \sim \mu}\exp\left(-\sum\limits_{t = 1}^T\pi(y_t-\langle x_t, \theta\rangle)^2\right)\right) 
    \\
    &= -\log\left(\sqrt{\det(A)}\int\limits_{\theta \in \mathbb{R}^d}\exp\left(-\sum\limits_{t = 1}^T\pi(y_t-\langle x_t, \theta\rangle)^2 - \pi\theta^{\top}A\theta\right)\right) \nonumber
    \\
    &= \inf\limits_{\theta \in \mathbb{R}^d}\left\{\sum\limits_{t = 1}^T\pi(y_t - \langle x_t, \theta\rangle)^2 + \pi\theta^{\top}A\theta\right\} - \frac{1}{2}\log(\det(A)) + \frac{1}{2}\log\left(\det\left(\sum\limits_{t = 1}^Tx_tx_t^{\top} + A\right)\right). \label{eq:finalfreeenergy}
\end{align}
Another perspective on \eqref{eq:freeenergy} is the following. We assume that at step $t$ we predict with the distribution $\rho_t$ over $\mathbb{R}^d$ using the exponential weights over the class \eqref{eq:conddensity} with respect to the logarithmic loss. By \cref{lem:laplace}, we have
\begin{equation}
-\log\left(\E_{\theta \sim \mu}\exp\left(-\sum\limits_{t = 1}^T\pi(y_t-\langle x_t, \theta\rangle)^2\right)\right) = -\sum\limits_{t = 1}^T\log\left(\E_{\theta \sim \rho_t}\exp\left(-\pi(y_t-\langle x_t, \theta\rangle)^2\right)\right).
\end{equation}
It is only left to compute individual terms (mix-losses) in the right-hand side of the above identity. By the definition of the exponential weights algorithm, we have 
\[
\rho_t \propto	\exp\left(-\sum\limits_{i = 1}^{t-1}\pi(y_i-\langle x_i, \theta\rangle)^2 -\pi \theta^{\top}A\theta\right).
\]
Clearly, the distribution $\rho_t$ is multivariate Gaussian. Using \cref{lem:laplace}, we can immediately obtain 
\[
\rho_t \sim \mathcal{N}\left(\widehat{\theta}_{t}, \frac{1}{2\pi}\left(\sum\limits_{i = 1}^{t - 1}x_ix_i^{\top} + A\right)^{-1}\right).
\]
Observe that if $\theta \sim \rho_t$, then $\langle x_t, \theta\rangle \sim \mathcal{N}\left(\langle \widehat{\theta}_t, x_t\rangle, \frac{1}{2\pi}x_t^{\top}\left(\sum\limits_{i = 1}^{t - 1}x_ix_i^{\top} + A\right)^{-1}x_t\right)$, which implies by \cref{lem:detlemma} that the following holds
\[
\langle x_t, \theta\rangle \sim \mathcal{N}\left(\langle \widehat{\theta}_i, x_t\rangle, \frac{1}{2\pi}\frac{h_{x_t}}{1 - h_{x_t}}\right).
\]
Finally, we have by \cref{lem:gauss}
\begin{align*}
-\log\left(\E_{\theta \sim \rho_t}\exp\left(-\pi(y_t-\langle x_t, \theta\rangle)^2\right)\right) &=-\log\left(\E_{\theta \sim \rho_t}\exp\left(-(\sqrt{\pi} y_t-\sqrt{\pi}\langle x_t, \theta\rangle)^2\right)\right)
\\
&= \frac{(\sqrt{\pi}\langle \widehat{\theta}_i, x_t\rangle-\sqrt{\pi} y_t)^2}{\frac{h_{x_t}}{1 - h_{x_t}} + 1} + \frac{1}{2}\log\left(\frac{h_{x_t}}{1 - h_{x_t}} + 1\right)
\\
&= \pi(1 - h_{x_t})(\langle \widehat{\theta}_i, x_t\rangle-y_t)^2 + \frac{1}{2}\log\left(\frac{1}{1 - h_{x_t}}\right).
\end{align*}
Summing the above identity with respect $t$ and using \eqref{eq:sumofmixlosses} we obtain 
\begin{align}
\label{eq:sumofmix}
    &-\log\left(\E_{\theta \sim \mu}\exp\left(-\sum\limits_{t = 1}^T\pi(y_t-\langle x_t, \theta\rangle)^2\right)\right) \nonumber
    \\
    &= \sum\limits_{t = 1}^T\left(\pi(1 - h_{x_t})(\langle \widehat{\theta}_i, x_t\rangle-y_t)^2 + \frac{1}{2}\log\left(\frac{1}{1 - h_{x_t}}\right)\right).
\end{align}
Denote $A_t = x_t^{\top}\left(\sum\limits_{i = 1}^{t - 1}x_ix_i^{\top} + A\right)^{-1}x_t$.
Finally, by \cref{lem:gauss} we have
\begin{align}
\label{eq:logdetsum}
    &\sum\limits_{t = 1}^T\frac{1}{2}\log\left(\frac{1}{1 - h_{x_t}}\right) = \frac{1}{2}\sum\limits_{t = 1}^T\log\left(\frac{\det(A_{t + 1})}{\det(A_t)}\right) \nonumber
    \\
    &= -\frac{1}{2}\log(\det(A)) + \frac{1}{2}\log\left(\det\left(\sum\limits_{t = 1}^Tx_tx_t^{\top} + A\right)\right).
\end{align}
where in the last computation, we used the telescopic sum. Combining \eqref{eq:finalfreeenergy}, \eqref{eq:sumofmix}, and \eqref{eq:logdetsum}, we prove the claim.
\end{proof}

The result of \cref{prop:identityforlinearloss} immediately implies the following classical result.
\begin{corollary}[The regret of the non-linear predictor of \citet{vovk2001competitive}]
\label{cor:vaw}
Fix $\lambda > 0$ and for $x \in \mathbb{R}^d$ consider a sequence of estimators
\[
\widehat{\theta}_{t, x} = \argmin_{\theta \in \mathbb{R}^d} \sum_{i = 1}^{t - 1} (y_i - \langle x_i, \theta \rangle)^2 + \lambda\|\theta\|^2 + (\langle x, \theta\rangle)^2.
\]
Assume that $|y_t| \le m$ for $t = 1, \ldots, T$. Then, the following regret bound holds
\[
\sum\limits_{t = 1}^T(y_t - \langle x_t, \widehat{\theta}_{t, x_t}\rangle)^2 \le \inf \limits_{\theta \in \mathbb{R}^d}\left\{\sum_{t = 1}^{T} (y_t - \langle x_t, \theta \rangle)^2 + \lambda \|\theta\|^2\right\} + m^2\log\left(\det\left(I_d + \frac{1}{\lambda}\sum\limits_{t = 1}^Tx_tx_t^{\top}\right)\right).
\]
\end{corollary}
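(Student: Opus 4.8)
The plan is to reduce the statement to \cref{prop:identityforlinearloss} applied with the positive definite matrix $A=\lambda I_d$. Write $S_t=\sum_{i=1}^{t-1}x_ix_i^{\top}+\lambda I_d$, let $\widehat\theta_t=\argmin_{\theta}\{\sum_{i=1}^{t-1}(y_i-\langle x_i,\theta\rangle)^2+\lambda\|\theta\|^2\}=S_t^{-1}\sum_{i=1}^{t-1}y_ix_i$ be the pure ridge predictor, and let $h_{x_t}=x_t^{\top}(S_t+x_tx_t^{\top})^{-1}x_t$ be the sequential leverage score. \cref{prop:identityforlinearloss} then yields the exact identity
\(
\sum_{t=1}^T(1-h_{x_t})(y_t-\langle x_t,\widehat\theta_t\rangle)^2=\inf_{\theta}\{\sum_{t=1}^T(y_t-\langle x_t,\theta\rangle)^2+\lambda\|\theta\|^2\}.
\)

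Next I would relate the ``forward'' predictor $\widehat\theta_{t,x_t}$ of the corollary to $\widehat\theta_t$. Since $\widehat\theta_{t,x_t}=(S_t+x_tx_t^{\top})^{-1}\sum_{i=1}^{t-1}y_ix_i$, the Sherman--Morrison formula gives $\langle x_t,\widehat\theta_{t,x_t}\rangle=(1-h_{x_t})\langle x_t,\widehat\theta_t\rangle$, i.e. the forward prediction is the ridge prediction shrunk by $1-h_{x_t}$ (the same computation shows $1-h_{x_t}=(1+x_t^{\top}S_t^{-1}x_t)^{-1}\in(0,1]$). Writing $z_t=\langle x_t,\widehat\theta_t\rangle$ and $\alpha_t=1-h_{x_t}$, a one-line algebraic identity gives
\(
(y_t-\alpha_t z_t)^2=\alpha_t(y_t-z_t)^2+h_{x_t}\bigl(y_t^2-\alpha_t z_t^2\bigr)\le\alpha_t(y_t-z_t)^2+h_{x_t}m^2,
\)
where the last step uses $\alpha_t z_t^2\ge0$ and $|y_t|\le m$.

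Summing over $t$: the first term equals $\inf_{\theta}\{\sum(y_t-\langle x_t,\theta\rangle)^2+\lambda\|\theta\|^2\}$ by \cref{prop:identityforlinearloss}, and the second is bounded by $m^2\sum_{t=1}^T h_{x_t}\le m^2\sum_{t=1}^T\log\tfrac{1}{1-h_{x_t}}$, using $h\le-\log(1-h)$ for $h\in[0,1)$. Finally the telescoping/determinant computation already performed in \eqref{eq:logdetsum} (itself a consequence of \cref{lem:detlemma}) evaluates $\sum_{t=1}^T\log\tfrac{1}{1-h_{x_t}}=\log\tfrac{\det(\sum_t x_tx_t^{\top}+\lambda I_d)}{\det(\lambda I_d)}=\log\det\!\bigl(I_d+\tfrac{1}{\lambda}\sum_t x_tx_t^{\top}\bigr)$, which gives the claimed regret bound.

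The only mildly delicate point is the shrinkage identity $\langle x_t,\widehat\theta_{t,x_t}\rangle=(1-h_{x_t})\langle x_t,\widehat\theta_t\rangle$ together with the accompanying quadratic identity; everything else is direct substitution. No genuine obstacle is expected, since this is a short deduction from the already-established \cref{prop:identityforlinearloss}.
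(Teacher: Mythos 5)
Your proposal is correct and follows essentially the same route as the paper: reduce to \cref{prop:identityforlinearloss} with $A=\lambda I_d$, use the Sherman--Morrison/\cref{lem:detlemma} shrinkage identity $\langle x_t,\widehat\theta_{t,x_t}\rangle=(1-h_{x_t})\langle x_t,\widehat\theta_t\rangle$, expand the square (your rearrangement $h_{x_t}(y_t^2-\alpha_t z_t^2)$ is algebraically identical to the paper's $(h_{x_t}^2-h_{x_t})z_t^2+h_{x_t}y_t^2$, discarded and bounded by $m^2 h_{x_t}$ in the same way), and finish with $h\le\log\frac{1}{1-h}$ and the telescoping determinant identity \eqref{eq:logdetsum}. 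No gaps.
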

\begin{proof}
To see how the implication, one should first verify, using \cref{lem:detlemma}, that 
\[
\langle x_t, \widehat{\theta}_{t, x_t}\rangle = (1 - h_{x_t})\langle x_t, \widehat{\theta}_t\rangle,
\]
where
\begin{equation}
\label{eq:ridge}
\widehat{\theta}_{t} = \argmin_{\theta \in \mathbb{R}^d} \sum_{i = 1}^{t - 1} (y_i - \langle x_i, \theta \rangle)^2 + \lambda\|\theta\|^2, \quad\textrm{and} \quad h_{x_i} = x_i^{\top}\left(\sum\limits_{t = 1}^ix_tx_t^{\top} + \lambda I_d\right)^{-1}x_i.
\end{equation}

Combining this with \cref{prop:identityforlinearloss}, we have
\begin{align*}
\sum\limits_{t = 1}^T(y_t - \langle x_t, \widehat{\theta}_{t, x_t}\rangle)^2 &=\sum\limits_{t = 1}^T(1 - h_{x_t})(y_t - \langle x_t, \widehat{\theta}_t\rangle)^2 + \sum\limits_{t = 1}^T(h_{x_t}^2 - h_{x_t})(\langle x_t, \widehat{\theta}_t\rangle)^2 + \sum\limits_{t = 1}^Th_{x_t}y_t^2
\\
&=\inf \limits_{\theta \in \mathbb{R}^d}\left(\sum_{t = 1}^{T} (y_t - \langle x_t, \theta \rangle)^2 + \lambda \|\theta\|^2\right) +\sum\limits_{t = 1}^T(h_{x_t}^2 - h_{x_t})(\langle x_t, \widehat{\theta}_t\rangle)^2
\\
&\qquad+ \sum\limits_{t = 1}^Th_{x_t}y_t^2.
\end{align*}
This is the exact expression of regret. We can further simplify it. First, we have 
\[
\sum\nolimits_{t = 1}^T(h_{x_t}^2 - h_{x_t})(\langle x_t, \widehat{\theta}_t\rangle)^2 \le 0,
\]
since $h_{x_t} \in [0, 1]$. It is only left to bound $\sum\nolimits_{t = 1}^Th_{x_t}y_t^2$, which, using \eqref{eq:logdetsum} and $x \le \log(1/(1 - x))$ for $x \in (0, 1)$, can be done as follows:
\begin{equation}
\label{eq:sumoflev}
  \sum\limits_{t = 1}^Th_{x_t}y_t^2 \le m^2\sum\limits_{t = 1}^T\log\left(\frac{\det\left(\sum\nolimits_{i = 1}^{t}x_ix_i^{\top} + \lambda I_d\right)}{\det\left(\sum\nolimits_{i = 1}^{t - 1}x_ix_i^{\top} + \lambda I_d\right)}\right) \le m^2\log\left(\det\left(I_d + \frac{1}{\lambda}\sum\nolimits_{t = 1}^Tx_tx_t^{\top}\right)\right).   
\end{equation}
The claim follows.
\end{proof}

In the following example, we demonstrate that \cref{prop:identityforlinearloss} implies the bound of \cite{kivinen1999averaging} (see also \citep{forster1999relative}) for clipped ridge regression. This bound is also revisited in \citep[Theorem 4]{vovk2001competitive} using exp-concavity arguments applied to the squared loss. However, there is an issue with that proof. The exp-concavity argument in \citep[Theorem 4]{vovk2001competitive} necessitates that both $|y_t| \le m$ and $|\langle x_t, \theta\rangle| \le m$ for $\theta$-s within the considered range. The second assumption is not mentioned in the statement of Theorem 4 in \citep{vovk2001competitive}, and indeed, it can be bypassed. Unlike the previous analysis, our approach does not rely on exp-concavity and, therefore, successfully recovers the claimed result without additional boundedness assumptions. The proof below was also presented in \citep[Corollary 1]{zhdanov2009competing}, and we include it here primarily for the reader's convenience.

\begin{corollary}[The regret of the clipped ridge predictor of \citet{kivinen1999averaging}]
\label{cor:clipped}
    In the setup of \cref{cor:vaw}, we have
    \begin{align*}
&\sum\limits_{t = 1}^T\left(y_t - \clip{\langle x_t, \widehat{\theta}_{t}\rangle}\right)^2 
\\
&\qquad\le \inf\limits_{\theta \in \mathbb{R}^d}\left\{\sum_{t = 1}^{T} (y_t - \langle x_t, \theta \rangle)^2 + \lambda \|\theta\|^2\right\} + 4m^2\log\left(\det\left(I_d + \frac{1}{\lambda}\sum\limits_{t = 1}^Tx_tx_t^{\top}\right)\right).
    \end{align*}
    where $\widehat{\theta}_t$ is the ridge regression estimator given by \eqref{eq:ridge}.
\end{corollary}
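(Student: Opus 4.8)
The plan is to derive this directly from \cref{prop:identityforlinearloss} (with $A = \lambda I_d$), following the same route as in the proof of \cref{cor:vaw}, but now keeping track of the effect of clipping. The starting point is the exact regret identity obtained in the proof of \cref{cor:vaw}: since $\langle x_t, \widehat{\theta}_{t,x_t}\rangle = (1-h_{x_t})\langle x_t, \widehat{\theta}_t\rangle$ and \cref{prop:identityforlinearloss} gives
\[
\sum_{t=1}^T (1-h_{x_t})(y_t - \langle x_t, \widehat{\theta}_t\rangle)^2 = \inf_{\theta\in\mathbb{R}^d}\Big\{\sum_{t=1}^T (y_t - \langle x_t,\theta\rangle)^2 + \lambda\|\theta\|^2\Big\},
\]
the quantity on the right-hand side of the claimed inequality is already controlled by $\sum_{t=1}^T (1-h_{x_t})(y_t - \langle x_t, \widehat{\theta}_t\rangle)^2$ plus a $\log\det$ term. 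So it suffices to compare the clipped loss $\big(y_t - \clip{\langle x_t,\widehat{\theta}_t\rangle}\big)^2$ against $(1-h_{x_t})(y_t - \langle x_t,\widehat{\theta}_t\rangle)^2$ term by term.

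First I would establish the pointwise inequality: for $|y|\le m$ and any $z\in\mathbb{R}$ and any $h\in[0,1]$,
\[
\big(y - \clip{z}\big)^2 \le (1-h)(y - z)^2 + 4m^2 h.
\]
To see this, note $(y-\clip{z})^2 \le (y-z)^2$ always (clipping toward $[-m,m]$ can only move $z$ closer to $y$), and also $(y-\clip{z})^2 \le 4m^2$ since both $y$ and $\clip{z}$ lie in $[-m,m]$. Writing $(y-\clip{z})^2 = (1-h)(y-\clip{z})^2 + h(y-\clip{z})^2 \le (1-h)(y-z)^2 + 4m^2 h$ gives the bound. Applying this with $z = \langle x_t,\widehat{\theta}_t\rangle$, $h = h_{x_t}$ and summing over $t$ yields
\[
\sum_{t=1}^T \big(y_t - \clip{\langle x_t,\widehat{\theta}_t\rangle}\big)^2 \le \sum_{t=1}^T (1-h_{x_t})(y_t - \langle x_t,\widehat{\theta}_t\rangle)^2 + 4m^2 \sum_{t=1}^T h_{x_t}.
\]
The first sum equals the regularized least-squares infimum by \cref{prop:identityforlinearloss}, and the second sum is bounded exactly as in \eqref{eq:sumoflev} (using $h_{x_t}\le \log(1/(1-h_{x_t}))$ and the telescoping $\log\det$ computation \eqref{eq:logdetsum}) by $\log\det\big(I_d + \tfrac1\lambda\sum_{t=1}^T x_t x_t^\top\big)$, so the $4m^2$-weighted version is $4m^2\log\det(\cdots)$. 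Combining the two estimates gives the claim.

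The main thing to be careful about — rather than a genuine obstacle — is making sure the clipping comparison is done against the *unclipped* quantity $(1-h_{x_t})(y_t - \langle x_t,\widehat{\theta}_t\rangle)^2$ rather than against $(y_t - \langle x_t,\widehat{\theta}_{t,x_t}\rangle)^2$; the point of \cref{cor:clipped} versus \cref{cor:vaw} is precisely that one clips the plain ridge estimator $\widehat{\theta}_t$ rather than predicting with the nonlinear Vovk estimator $\widehat{\theta}_{t,x_t}$, so the identity of \cref{prop:identityforlinearloss} must be invoked in its raw form. The only other point worth noting is that, unlike the exp-concavity argument in \citep[Theorem 4]{vovk2001competitive}, this route needs no boundedness assumption on $\langle x_t,\theta\rangle$ over the comparison class — which is the whole reason for presenting it here.
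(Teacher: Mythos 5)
Your proposal is correct and takes essentially the same route as the paper: both invoke the identity of \cref{prop:identityforlinearloss} with $A=\lambda I_d$, use that clipping onto $[-m,m]$ (which contains $y_t$) does not increase the squared loss, bound the residual $h_{x_t}$-weighted term by $4m^2 h_{x_t}$, and finish with the telescoping $\log\det$ estimate \eqref{eq:sumoflev}. The only cosmetic difference is that you package the key step as a pointwise inequality $(y-\clip{z})^2 \le (1-h)(y-z)^2 + 4m^2h$, whereas the paper writes the same decomposition directly as a chain of inequalities on the sums.
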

\begin{proof}
Using \cref{prop:identityforlinearloss}, the definition of the clipping operator and \eqref{eq:sumoflev}, we have
    \begin{align*}
&\sum\limits_{t = 1}^T\left(y_t - \clip{\langle x_t, \widehat{\theta}_t\rangle}\right)^2 - \inf\limits_{\theta \in \mathbb{R}^d}\left\{\sum_{t = 1}^{T} (y_t - \langle x_t, \theta \rangle)^2 + \lambda \|\theta\|^2\right\}
\\
&= \sum\limits_{t = 1}^T\left(y_t - \clip{\langle x_t, \widehat{\theta}_t\rangle}\right)^2 - \sum\limits_{t = 1}^T(1 - h_{x_t})\left(y_t - {\langle x_t, \widehat{\theta}_t\rangle}\right)^2
\\
&\le\sum\limits_{t = 1}^T\left(y_t - \clip{\langle x_t, \widehat{\theta}_t\rangle}\right)^2 - \sum\limits_{t = 1}^T(1 - h_{x_t})\left(y_t - \clip{\langle x_t, \widehat{\theta}_t\rangle}\right)^2
\\
&=\sum\limits_{t = 1}^Th_{x_t}\left(y_t - \clip{\langle x_t, \widehat{\theta}_t\rangle}\right)^2 \le 4m^2\sum\limits_{t = 1}^Th_{x_t} \le 4m^2\log\left(\det\left(I_d + \frac{1}{\lambda}\sum\nolimits_{t = 1}^Tx_tx_t^{\top}\right)\right).
\end{align*}
The claim follows.
\end{proof}

Finally, to recover \cref{thm:gaillard2019}, originally due to \citet*{gaillard2019uniform}, using \cref{prop:identityforlinearloss}. We exactly repeat the lines of the proof of \cref{cor:vaw} with the only change. Now, the matrix $A$ in \cref{prop:identityforlinearloss} is not equal to $\lambda I_d$, but is equal to $\lambda \sum\nolimits_{t = 1}^Tx_tx_t^{\top}$, which is assumed to be invertible. Finally, using an analog of \eqref{eq:sumoflev}, the total loss of the predictor is bounded by
\begin{align*}
&\inf\limits_{\theta \in \mathbb{R}^d}\left(\sum_{t = 1}^{T} (y_t - \langle x_t, \theta \rangle)^2 + \lambda\sum\limits_{t = 1}^T (\langle x_t , \theta\rangle)^2\right) + m^2\log\left(\frac{\det\left(\lambda \sum\nolimits_{t = 1}^Tx_tx_t^{\top}+ \sum\nolimits_{t = 1}^Tx_tx_t^{\top}\right)}{\det\left(\lambda \sum\nolimits_{t = 1}^Tx_tx_t^{\top}\right)}\right)
\\
&=\inf\limits_{\theta \in \mathbb{R}^d}\left\{\sum_{t = 1}^{T} (y_t - \langle x_t, \theta \rangle)^2 + \lambda\sum\limits_{t = 1}^T (\langle x_t , \theta\rangle)^2\right\} + dm^2\log\left(1 + \frac{1}{\lambda}\right)
\\
&\le \inf\limits_{\theta \in \mathbb{R}^d}\left\{\sum_{t = 1}^{T} (y_t - \langle x_t, \theta \rangle)^2 \right\}+ \lambda T m^2 + dm^2\log\left(1 + \frac{1}{\lambda}\right),
\end{align*}
where the last line follows from exactly the same computation as in the proof of Theorem 7 in \citep{gaillard2019uniform}. 

Similarly, repeating these lines but instead using the matrix $A = \lambda\sum\nolimits_{t = 1}^Tx_tx_t^{\top}$ in the proof of \cref{cor:clipped}, we obtain the following result.
\begin{corollary}
    \label{cor:fixedclipped}
    Fix $\lambda > 0$ and consider a sequence of estimators 
\[
\widetilde{\theta}_{t} = \arg\min_{\theta \in \mathbb{R}^d} \left\{\sum_{i = 1}^{t - 1} (y_i - \langle x_i, \theta \rangle)^2 + \lambda\sum\limits_{i = 1}^T(\langle x_i, \theta\rangle)^2\right\}.
\]
Assume that $|y_t| \le m$ for $t = 1, \ldots, T$. Then, the following regret bound holds
\[
\sum\limits_{t = 1}^T\left(y_t - \clip{\langle x_t, \widetilde{\theta}_{t}\rangle}\right)^2 \le \inf \limits_{\theta \in \mathbb{R}^d}\left\{\sum_{t = 1}^{T} (y_t - \langle x_t, \theta \rangle)^2\right\} + \lambda T m^2 + 4dm^2\log\left(1 + \frac{1}{\lambda}\right).
\]
\end{corollary}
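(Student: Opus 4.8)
The plan is to mimic the proof of \cref{cor:clipped} essentially verbatim, with the single modification of replacing the regularizer matrix $\lambda I_d$ by $A = \lambda\sum_{t=1}^T x_t x_t^\top$ in \cref{prop:identityforlinearloss}. First I would note that $\theta^\top A\theta = \lambda\sum_{t=1}^T(\langle x_t,\theta\rangle)^2$, so the ridge-type predictor $\widehat\theta_t = \argmin_{\theta}\{\sum_{i=1}^{t-1}(y_i-\langle x_i,\theta\rangle)^2 + \theta^\top A\theta\}$ appearing there coincides exactly with the estimator $\widetilde\theta_t$ of the statement. (If $\sum_t x_t x_t^\top$ is singular, restrict attention to its column span and use the pseudo-inverse, as permitted by the remark following \cref{prop:identityforlinearloss}.) \cref{prop:identityforlinearloss} then yields the exact identity
\[
\sum_{t=1}^T (1-h_{x_t})(y_t - \langle x_t,\widetilde\theta_t\rangle)^2 = \inf_{\theta\in\bR^d}\Big\{\sum_{t=1}^T(y_t-\langle x_t,\theta\rangle)^2 + \lambda\sum_{t=1}^T(\langle x_t,\theta\rangle)^2\Big\},
\]
where $h_{x_i} = x_i^\top\big(\sum_{t=1}^i x_t x_t^\top + A\big)^{-1}x_i \in [0,1]$.

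Next, since $|y_t|\le m$, clipping into $[-m,m]$ can only decrease the squared error, i.e.\ $(y_t - \clip{\langle x_t,\widetilde\theta_t\rangle})^2 \le (y_t-\langle x_t,\widetilde\theta_t\rangle)^2$, and $1-h_{x_t}\ge 0$; subtracting the identity above and using $|y_t - \clip{\langle x_t,\widetilde\theta_t\rangle}| \le 2m$ gives, exactly as in \cref{cor:clipped},
\[
\sum_{t=1}^T\big(y_t - \clip{\langle x_t,\widetilde\theta_t\rangle}\big)^2 - \inf_{\theta}\Big\{\sum_{t=1}^T(y_t-\langle x_t,\theta\rangle)^2 + \lambda\sum_{t=1}^T(\langle x_t,\theta\rangle)^2\Big\} \le \sum_{t=1}^T h_{x_t}\big(y_t - \clip{\langle x_t,\widetilde\theta_t\rangle}\big)^2 \le 4m^2\sum_{t=1}^T h_{x_t}.
\]
To control $\sum_t h_{x_t}$ I would use $h_{x_t}\le \log\big(1/(1-h_{x_t})\big)$ for $h_{x_t}\in(0,1)$ together with the telescoping identity \eqref{eq:logdetsum} specialized to the present $A$; since the resulting numerator matrix $\sum_t x_t x_t^\top + \lambda\sum_t x_t x_t^\top = (1+\lambda)\sum_t x_t x_t^\top$ is a scalar multiple of the denominator matrix $\lambda\sum_t x_t x_t^\top$, the determinant ratio collapses:
\[
\sum_{t=1}^T h_{x_t} \le \log\left(\frac{\det\big((1+\lambda)\sum_{t=1}^T x_t x_t^\top\big)}{\det\big(\lambda\sum_{t=1}^T x_t x_t^\top\big)}\right) = d\log\left(1+\frac{1}{\lambda}\right),
\]
so the regret against the penalized comparator is at most $4dm^2\log(1+1/\lambda)$.

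It then only remains to pass from the penalized infimum to the unpenalized one. Taking $\theta_0$ to be the (minimum-norm) least-squares solution, $\sum_t(\langle x_t,\theta_0\rangle)^2 = \|\Pi y\|_2^2 \le \|y\|_2^2 \le Tm^2$, where $\Pi$ is the orthogonal projection onto the span of $x_1,\dots,x_T$, so $\lambda\sum_t(\langle x_t,\theta_0\rangle)^2 \le \lambda Tm^2$ — this is precisely the step used in the derivation just preceding the corollary (cf.\ the proof of Theorem~7 in \citet{gaillard2019uniform}) — and combining everything gives the claimed bound. I do not expect a genuine obstacle: every ingredient is already available. The only points needing a little care are handling a possibly singular $\sum_t x_t x_t^\top$ via the pseudo-inverse, and the (immediate) observation that pulling out the common factor $\lambda$ makes the determinant ratio exactly $(1+1/\lambda)^d$.
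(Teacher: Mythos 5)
Your proposal is correct and follows essentially the same route as the paper, which proves this corollary by "repeating the lines" of the proof of \cref{cor:clipped} with the transductive regularizer $A = \lambda\sum_{t=1}^T x_t x_t^\top$ in \cref{prop:identityforlinearloss}, collapsing the determinant ratio to $(1+1/\lambda)^d$, and absorbing the penalty term via $\lambda\sum_t(\langle x_t,\theta_0\rangle)^2 \le \lambda T m^2$ for the least-squares comparator. All the steps you outline (including the pseudo-inverse caveat) match the paper's argument.
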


\section{Achieving Similar Regret Bounds Using the $\varepsilon$-nets}
\label{app:covering}

We note that an approach based on $\veps$-nets typically works for the problems we consider. The idea behind this approach is to build a covering for the corresponding classes using the information on $x_{1}, \ldots, x_T$, and then run the exponential weights algorithm over the cover. This can be used to obtain the realizable bounds for the binary loss, as implied by the inequality \eqref{eq:perceptron} in \cref{cor:inductivehinge} when reduced to the binary loss. Indeed, one can discretize the space of linear predictors at the level $\gamma$ and then obtain an $O(d\log(1/\gamma))$-type regret bound using the Halving algorithm. However, such a procedure is impractical even in the realizable case, as the discretization would require us to aggregate over $O\left(1/\gamma^d\right)$ experts. Similarly, for linear regression, one can construct an $\varepsilon$-net for the clipped class of linear functions as in \citep[Proof of Theorem 3]{mourtada2022distribution} and aggregate over this net using the exponential weights algorithm with the uniform prior on this $\varepsilon$-net. Similar $\varepsilon$-net based constructions are also popular in the analysis of density estimators and usually lead to the information-theoretically optimal regret bound, but do not result in implementable algorithms; see \citep*{yang1999information, bilodeau2023minimax} for related results. We quote \citet{vovk2006metric} for a general comment on the $\varepsilon$-net based methods: \say{Another disadvantage of the metric entropy method is that it is not clear how to implement it efficiently, whereas many other methods are computationally very efficient. Therefore, the results obtained by this method are only a first step, and we should be looking for other prediction strategies, both computationally more efficient and having better performance guarantees.} \looseness=-1

Despite that, for completeness, in what follows, we choose to present the $\veps$-nets construction for the logistic 
case for assumption-free regret bounds. In our construction of $\veps$-nets, we first clip the logistic function to obtain bounded functions. That is, for any $\theta, x\in \bR^d$ and $y\in \set{\pm 1}$, we define the clipped logistic function as
\begin{align}
    \label{eq:clipped-logistic}
    f_\theta(x,y) = - \log \prn*{ \frac{1}{2T} + \prn*{1- \frac{1}{T}} \sigma(y\inner{x,\theta})  }   =- \log \prn*{ \frac{1}{2T} + \prn*{1- \frac{1}{T}} \frac{1}{1+e^{-y\inner{x,\theta}}}  }.
\end{align}
Our goal is to cover the function class $\cF_X=\set{f_\theta(\cdot,\cdot):X\times \set{\pm 1}\to [0,1] \mid \theta\in \bR^d}$ in the $L^\infty$ norm, which is defined as $\norm{f}_\infty = \sup_{x,y} |f(x,y)|$. Note that working with a clipped class for the log loss causes only a small overhead \citep[Lemma 9.5]{cesa2006prediction} (see also \cite{bilodeau2023minimax}). Specifically, running the exponential weights algorithm with a uniform prior over the $\varepsilon$-net for a properly chosen scale $\varepsilon$ provides an $O(d\log(T))$ regret bound.

\begin{lemma}
    \label{lem:lipschitzness-of-logistic}
    For any $u_1,u_2\in [0,1]$, we have
    \begin{align*}
        \abs*{ \log \prn*{ \frac{1}{2T} + \prn*{1- \frac{1}{T}} u_1 } - \log \prn*{ \frac{1}{2T} + \prn*{1- \frac{1}{T}} u_2 }} \leq 2T\abs*{u_1-u_2}.
    \end{align*}
\end{lemma}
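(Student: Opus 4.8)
The plan is to reduce the claimed inequality to a standard Lipschitz estimate for the logarithm. Write $v_i = \tfrac{1}{2T} + (1 - \tfrac{1}{T})u_i$ for $i = 1,2$. Since $u_1, u_2 \in [0,1]$ and $\tfrac{1}{2T} + (1 - \tfrac{1}{T}) = 1 - \tfrac{1}{2T} \le 1$, we have $v_1, v_2 \in [\tfrac{1}{2T}, 1]$; in particular both are bounded below by $\tfrac{1}{2T}$. By the mean value theorem applied to $\log$ on the interval between $v_1$ and $v_2$, there is a point $\xi$ in that interval with $|\log v_1 - \log v_2| = \tfrac{1}{\xi}|v_1 - v_2| \le 2T\,|v_1 - v_2|$, using $\xi \ge \min\{v_1, v_2\} \ge \tfrac{1}{2T}$.

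It then remains to note that $|v_1 - v_2| = (1 - \tfrac{1}{T})\,|u_1 - u_2| \le |u_1 - u_2|$, so that $|\log v_1 - \log v_2| \le 2T\,|u_1 - u_2|$, which is exactly the claim. No step here is an obstacle; the only point worth stating carefully is the uniform lower bound $v_i \ge \tfrac{1}{2T}$, which is the entire purpose of the clipping in \eqref{eq:clipped-logistic} and is what makes the Lipschitz constant finite (without clipping, $u_i$ could be $0$ and the logarithm would blow up). One could equivalently phrase the argument via $|\log a - \log b| \le \max\{1/a, 1/b\}\,|a-b|$ for positive $a,b$, but the mean value theorem version is cleanest.

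\begin{proof}
Write $v_i = \tfrac{1}{2T} + \bigl(1 - \tfrac{1}{T}\bigr)u_i$ for $i \in \{1,2\}$. Since $u_1, u_2 \in [0,1]$ and $T > 1$, we have $\tfrac{1}{2T} \le v_i \le \tfrac{1}{2T} + \bigl(1 - \tfrac{1}{T}\bigr) = 1 - \tfrac{1}{2T} \le 1$; in particular $v_1, v_2 \ge \tfrac{1}{2T} > 0$. By the mean value theorem applied to $t \mapsto \log t$ on the closed interval with endpoints $v_1$ and $v_2$, there exists $\xi$ between $v_1$ and $v_2$ such that
\[
\bigl| \log v_1 - \log v_2 \bigr| = \frac{1}{\xi}\,\bigl| v_1 - v_2 \bigr| \le \frac{1}{\min\{v_1, v_2\}}\,\bigl| v_1 - v_2 \bigr| \le 2T\,\bigl| v_1 - v_2 \bigr|.
\]
Finally, $\bigl| v_1 - v_2 \bigr| = \bigl(1 - \tfrac{1}{T}\bigr)\,\bigl| u_1 - u_2 \bigr| \le \bigl| u_1 - u_2 \bigr|$, so combining the two displays gives $\bigl| \log v_1 - \log v_2 \bigr| \le 2T\,\bigl| u_1 - u_2 \bigr|$, which is the claim.
\end{proof}
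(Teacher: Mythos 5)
Your proof is correct and takes essentially the same route as the paper: the paper bounds the derivative $|h'(u)| = (1-1/T)/\bigl(1/(2T)+(1-1/T)u\bigr) \le 2T$ of the composite map directly, while you split the same computation into the mean value theorem for $\log$ (with the lower bound $v_i \ge 1/(2T)$) and the Lipschitz constant $1-1/T \le 1$ of the affine substitution. The two arguments are equivalent.
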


\begin{proof}[\pfref{lem:lipschitzness-of-logistic}]
    Let $h(u) = - \log (1/(2T)+(1-1/T)u)$ for $u\in [0,1]$. Then
    \begin{align*}
        |h'(u)| = \frac{(1-1/T)}{1/(2T)+(1-1/T)u} \leq 2T.
    \end{align*}
    The claim follows.
\end{proof}

\begin{lemma}[Lemma 6 of \citet{drmota2024unbounded}]
\label{lem:drmota}
    For any covariate set $X=\set{x_t}_{t\in [T]}$, there exists an $\veps$-covering $\cG_{X,\veps}$ of the function class $\cG_X=\set{\sigma(\inner{\cdot,\theta}):X\to [0,1] \mid{}\theta\in \bR^d}$ in $L^\infty$ norm with at most $O((T/\veps)^d)$ elements.
\end{lemma}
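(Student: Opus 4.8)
The plan is to exploit the two structural features that make this possible: the set $X$ is \emph{finite} of size $T$, so an element of $\cG_X$ is nothing but the vector $(\sigma(\inner{x_1,\theta}),\dots,\sigma(\inner{x_T,\theta}))\in[0,1]^T$; and the sigmoid $\sigma$ \emph{saturates}, so although the linear functionals $\theta\mapsto(\inner{x_1,\theta},\dots,\inner{x_T,\theta})$ are unbounded (and hence admit no finite $L^\infty$-cover), only $O(1/\veps)$ ``levels'' of each coordinate need to be distinguished. The first step is to replace each real value $\inner{x_t,\theta}$ by a discrete label recording which of $O(1/\veps)$ slots it lands in, chosen so that the label vector pins down $\sigma(\inner{\cdot,\theta})$ to within $\veps$ in $L^\infty$; the second step is to bound the number of label vectors attainable over $\theta\in\bR^d$ by a hyperplane-arrangement (equivalently Sauer--Shelah) count, which is polynomial in the number of cutting hyperplanes with exponent $d$.

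Concretely, set $N=\lceil 1/\veps\rceil$ and choose thresholds $c_1<\dots<c_N$ with $\sigma(c_1)\le\veps$, $\sigma(c_{j+1})-\sigma(c_j)\le\veps$ for all $j$, and $\sigma(c_N)\ge 1-\veps$; since $\sigma$ is a continuous increasing bijection onto $(0,1)$, this is possible with $N=O(1/\veps)$. These thresholds cut $\bR$ into the half-open intervals $I_0=(-\infty,c_1)$, $I_k=[c_k,c_{k+1})$ for $1\le k\le N-1$, and $I_N=[c_N,\infty)$, on each of which $\sigma$ varies by at most $\veps$. For $\theta\in\bR^d$ define the label vector $b(\theta)=(b_1(\theta),\dots,b_T(\theta))\in\{0,\dots,N\}^T$ by requiring $\inner{x_t,\theta}\in I_{b_t(\theta)}$ (well-defined, as the $I_k$ partition $\bR$). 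If $b(\theta)=b(\theta')$, then for every $t$ the numbers $\inner{x_t,\theta}$ and $\inner{x_t,\theta'}$ lie in the same interval, so $\lvert\sigma(\inner{x_t,\theta})-\sigma(\inner{x_t,\theta'})\rvert\le\veps$; taking the supremum over $t\in[T]$ yields $\lVert\sigma(\inner{\cdot,\theta})-\sigma(\inner{\cdot,\theta'})\rVert_\infty\le\veps$. Hence, picking one representative $\theta$ for each label vector that is actually attained and letting $\cG_{X,\veps}$ be the resulting collection of functions $\sigma(\inner{\cdot,\theta})$, every $\sigma(\inner{\cdot,\theta'})\in\cG_X$ is within $\veps$ of the representative associated to $b(\theta')$; so $\cG_{X,\veps}$ is an $\veps$-covering.

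It remains to count the attained label vectors. The map $\theta\mapsto b(\theta)$ can change only when some $\inner{x_t,\theta}$ crosses some threshold $c_j$, i.e.\ it is constant on each face of the arrangement in $\bR^d$ generated by the $TN$ hyperplanes $\{\theta:\inner{x_t,\theta}=c_j\}$, $t\in[T]$, $j\in[N]$. By the standard bound on the number of faces of all dimensions of an arrangement of $m$ hyperplanes in $\bR^d$, which is $O(m^d)$ for fixed $d$ (equivalently, Sauer--Shelah applied to the class of affine halfspaces, of VC dimension $d+1$), the number of attained label vectors, and therefore $\lvert\cG_{X,\veps}\rvert$, is $O((TN)^d)=O((T/\veps)^d)$, as claimed.

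I expect the only delicate points to be bookkeeping ones: insisting on half-open intervals so that $b(\theta)$ is genuinely a function of $\theta$ (a point lying exactly on one of the hyperplanes still receives a definite label), and invoking the face-count of the arrangement rather than only its region-count (points on lower-dimensional faces must also be covered by representatives). Both are standard, and the conceptual content is entirely in the observation that the saturation of $\sigma$ caps the number of relevant levels at $O(1/\veps)$ independently of the scale of the covariates or of $\theta$, which is exactly what makes an assumption-free bound possible here.
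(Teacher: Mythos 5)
The paper does not actually prove this lemma: it imports it as Lemma~6 of \citet{drmota2024unbounded} and only remarks that one could alternatively derive it from the covering-number bound for VC-subgraph classes (\citet[Theorem 9.4]{gyorfi2002distribution}), under which $\cG_X$ is a composition of a fixed monotone function with the $d$-dimensional linear class. Your argument is a correct, self-contained, and more elementary route to the same bound: you exploit exactly the two features that matter (finiteness of $X$ and saturation of $\sigma$), discretize the \emph{range} of $\sigma$ into $N=O(1/\veps)$ levels using only monotonicity and boundedness, and then bound the number of attainable label vectors by the face count of the arrangement of the $TN$ hyperplanes $\{\theta:\langle x_t,\theta\rangle=c_j\}$, which is $O((TN)^d)$. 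This yields a \emph{proper} cover (each representative is itself of the form $\sigma(\langle\cdot,\theta\rangle)$), which is precisely what \cref{lem:covering-for-transductive-logisitc} needs in order to lift the cover to the clipped log-loss class, so your construction plugs directly into the paper's downstream use. One small imprecision: your parenthetical ``equivalently, Sauer--Shelah applied to the class of affine halfspaces, of VC dimension $d+1$'' conflates the primal and dual counts --- Sauer--Shelah for a VC-dimension-$(d+1)$ class on $m$ points gives $O(m^{d+1})$ patterns, whereas what you need (and correctly invoke as your main justification) is the dual statement, namely the $O(m^d)$ bound on the number of faces of an arrangement of $m$ hyperplanes in $\bR^d$. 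Since the face-count bound carries the argument on its own, this is cosmetic; the implied constant in $O((T/\veps)^d)$ picks up a $d$-dependent factor from $N=O(1/\veps)$ and from $\sum_{i\le d}\binom{m}{i}$, but that is harmless here because only the logarithm of the covering number enters the regret bounds.
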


One may alternatively prove Lemma \ref{lem:drmota} using the bound on the covering numbers of VC-subgraph classes (see \citet[Theorem 9.4]{gyorfi2002distribution}).

\begin{lemma}
    \label{lem:covering-for-transductive-logisitc}
    For any covariate set $X=\set{x_t}_{t\in [T]}$, there exists an $\veps$-covering $\cF_{X,\veps}$ of the function class $\cF_X=\set{f_\theta(\cdot,\cdot):X\times \set{\pm 1}\to [0,1] \mid{}\theta\in \bR^d}$ (see \eqref{eq:clipped-logistic}) in $L^\infty$ norm with at most $O((2T/\veps)^{2d})$ elements.
\end{lemma}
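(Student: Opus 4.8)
The plan is to combine the $L^\infty$-covering of the sigmoid class $\cG_X$ from \cref{lem:drmota} with the Lipschitz bound of \cref{lem:lipschitzness-of-logistic}, taking care of the two possible signs of $y$. First, note that a function $f_\theta(\cdot,\cdot)\colon X\times\{\pm1\}\to[0,1]$ is completely determined by the pair of functions $x\mapsto\sigma(\langle x,\theta\rangle)$ and $x\mapsto\sigma(-\langle x,\theta\rangle)$ on $X$, since $f_\theta(x,1)=h(\sigma(\langle x,\theta\rangle))$ and $f_\theta(x,-1)=h(\sigma(-\langle x,\theta\rangle))$ with $h(u)=-\log(\tfrac{1}{2T}+(1-\tfrac1T)u)$. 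Moreover, by evaluating $\cG_X$ on the symmetrized covariate set $X\cup(-X)=\{x_t\}\cup\{-x_t\}$ (of size at most $2T$), \cref{lem:drmota} yields a set $\cG_{X\cup(-X),\delta}$ of size $O((2T/\delta)^d)$ that is a $\delta$-cover, in $L^\infty$ over $X\cup(-X)$, of $\{\sigma(\langle\cdot,\theta\rangle)\colon X\cup(-X)\to[0,1]\mid\theta\in\bR^d\}$. In particular, for every $\theta$ there is a $g\in\cG_{X\cup(-X),\delta}$ with $\sup_{x\in X}|g(x)-\sigma(\langle x,\theta\rangle)|\le\delta$ and $\sup_{x\in X}|g(-x)-\sigma(-\langle x,\theta\rangle)|\le\delta$.

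Next, I would build the candidate cover of $\cF_X$ by setting, for each $g\in\cG_{X\cup(-X),\delta}$, the function $\tilde f_g(x,y)\ldef h(g(x))$ for $y=1$ and $\tilde f_g(x,y)\ldef h(g(-x))$ for $y=-1$, and collect $\cF_{X,\veps}=\{\tilde f_g\colon g\in\cG_{X\cup(-X),\delta}\}$, which has the same cardinality $O((2T/\delta)^d)$. For the approximation guarantee, fix $\theta$ and the corresponding $g$; then for $y=1$, $|\tilde f_g(x,1)-f_\theta(x,1)|=|h(g(x))-h(\sigma(\langle x,\theta\rangle))|\le 2T\,|g(x)-\sigma(\langle x,\theta\rangle)|\le 2T\delta$ by \cref{lem:lipschitzness-of-logistic}, and the same bound holds for $y=-1$ using the $-x$ evaluations. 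Hence $\sup_{x,y}|\tilde f_g(x,y)-f_\theta(x,y)|\le 2T\delta$. Choosing $\delta=\veps/(2T)$ makes $\cF_{X,\veps}$ an $\veps$-cover, and its size is $O((2T/\delta)^d)=O((4T^2/\veps)^d)=O((2T/\veps)^{2d})$, as claimed.

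I do not expect a genuine obstacle here; the only subtlety is bookkeeping the dependence on $y$, which is why I evaluate the sigmoid cover on $X\cup(-X)$ rather than on $X$ alone — this is cleaner than covering the two coordinate functions independently (which would square the covering number in a way that also gives $O((2T/\veps)^{2d})$ but via a slightly less transparent argument). One should double-check that \cref{lem:drmota} is applied with the right covariate set size: replacing $T$ by $2T$ in its statement only changes the constant inside the $O(\cdot)$ and the base of the power, both absorbed into the final $O((2T/\veps)^{2d})$. If one prefers to avoid symmetrization, an alternative is to cover $\cG_X$ at scale $\delta$, note $\sigma(-z)=1-\sigma(z)$ so that a cover of $\{\sigma(\langle\cdot,\theta\rangle)\}$ automatically gives a cover of $\{\sigma(-\langle\cdot,\theta\rangle)\}$ on $X$ at the same scale, and then pair each covering function with its complement; this again yields the stated bound.
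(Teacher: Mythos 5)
Your proof is correct and follows essentially the same route as the paper: compose the $L^\infty$ cover of the sigmoid class from \cref{lem:drmota} with the $2T$-Lipschitz bound of \cref{lem:lipschitzness-of-logistic} at scale $\veps/(2T)$. The only difference is the bookkeeping for $y=-1$: you symmetrize the covariate set to $X\cup(-X)$, whereas the paper stays on $X$ and uses $\sigma(-z)=1-\sigma(z)$ (precisely the alternative you mention at the end); both yield the stated $O((2T/\veps)^{2d})$ cardinality.
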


\begin{proof}[\pfref{lem:covering-for-transductive-logisitc}]
    For any $\veps>0$, let $\veps' = \veps/(2T)$. Let $\cG_{X,\veps'}$ be an $\veps'$-covering of the function class $\cG_X$ in $L^\infty$ norm with at most $O((2T/\veps)^{2d})$ elements. Then consider 
    \begin{align*}
        \cF_{X,\veps} = \set*{f_{\thetatil}(\cdot, \cdot)   \mid{} \sigma(\inner{\cdot,\thetatil}) \in \cG_{X,\veps} }.
    \end{align*}
    Then, for any $f_\theta\in \cF_X$, there exists $\sigma(\inner{\cdot,\thetatil}) \in \cG_{X,\veps}$ such that $\norm{ \sigma(\inner{\cdot,\theta})-  \sigma(\inner{\cdot,\thetatil}) }_\infty \leq \veps$. 
    By \cref{lem:lipschitzness-of-logistic}, we note that for any $x$,
    \begin{align*}
        \abs*{ f_\theta(x,1) - f_{\thetatil}(x,1) } \leq 2T \abs*{\sigma(\inner{x,\theta}) - \sigma(\inner{x,\thetatil}) } \leq 2T\veps' = \veps
    \end{align*}
    and 
    \begin{align*}
        \abs*{ f_\theta(x,-1) - f_{\thetatil}(x,-1) } \leq 2T \abs*{\sigma(-\inner{x,\theta}) - \sigma(-\inner{x,\thetatil}) } = 2T \abs*{\sigma(\inner{x,\theta}) - \sigma(\inner{x,\thetatil}) }   \leq   \veps,
    \end{align*}
    where the first equality is due to $\sigma(-u) = 1- \sigma(u)$ for any $u\in \bR$. This implies that 
    \[
        \norm*{f_\theta(\cdot,\cdot) - f_{\thetatil}(\cdot,\cdot)}_\infty = \sup_{x,y} \abs*{ f_\theta(x,y) - f_{\thetatil}(x,y) } \leq \veps.
    \]
    Thus, $\cF_{X,\veps}$ is an $\veps$-covering of $\cF_X$. The claim follows.
\end{proof}

\end{document}